\documentclass{article}

\PassOptionsToPackage{numbers,compress}{natbib}
\usepackage[preprint]{neurips_2026}

\usepackage[utf8]{inputenc} 
\usepackage[T1]{fontenc}    
\usepackage[hypertexnames=false]{hyperref}       
\usepackage{url}            
\usepackage{booktabs}       
\usepackage{amsfonts}       
\usepackage{nicefrac}       
\usepackage{microtype}      
\usepackage{xcolor}         

\usepackage{amsmath}
\usepackage{amssymb}
\usepackage{bm}
\usepackage{mathrsfs}
\usepackage{enumitem}
\usepackage{makecell}
\usepackage{graphicx}
\usepackage{subfigure}
\usepackage{Definitions}
\usepackage{color}
\usepackage{color, colortbl}
\usepackage{placeins}
\usepackage{tablefootnote}

\ifx\theorem\undefined
\newtheorem{theorem}{Theorem}

\newtheorem{lemma}[theorem]{Lemma}
\newtheorem{proposition}[theorem]{Proposition}

\newtheorem{corollary}[theorem]{Corollary}
\newtheorem{definition}[theorem]{Definition}

\newtheorem{assumption}{Assumption}
\fi

\title{Toward Online Robust Zero-Sum Markov Games with Function Approximation}

\author{%
  Chenhao Zhou \\
  College of Computer Science and Technology \\
  Zhejiang University
  \And
  Jiyu Wei\thanks{Corresponding author. Email: \texttt{weijiyu@zju.edu.cn}} \\
  College of Computer Science and Technology \\
  Zhejiang University
}

\begin{document}

\maketitle

\begin{abstract}
  Online robust zero-sum Markov games pose a coupled operator-learning and game-solving problem:
  from nominal interaction data, the learner must recover a worst-case minimax Bellman operator
  while maintaining strategic coverage and controlling stage-game approximation error. Existing
  robust reinforcement-learning theory does not yet provide such an operator-learning perspective
  beyond specialized multi-agent settings. We develop an optimistic dual robust Bellman framework that
  combines dual robust Bellman fitting, optimistic minimax planning, backward-validated confidence
  sets, and exploiter-mix data collection. Our analysis is organized around the robust Bellman
  residuals induced by the candidate value class and yields a regret reduction that separates
  exploration complexity, robust operator-estimation error, stage-game solution error, and
  misspecification. Under explicit interpretable assumptions, we obtain conditional theorem
  interfaces for general and projected function approximation, together with closed results in
  several structured regimes, including tabular, epochwise/cross-fit linear, self-normalized
  pure-online linear / finite-rank kernel, and projected RKHS truncation. Small-scale
  numerical experiments illustrate robustness calibration, strategic coverage, and the
  return--gap trade-off predicted by the framework.
\end{abstract}

\section{Introduction}

Robust reinforcement learning studies deployment-time mismatch between nominal training dynamics
and test dynamics \citep{iyengar2005robust,nilim2005robust,wiesemann2013robust}. In multi-agent
settings the mismatch is also strategic: transitions depend on both players, robustness is defined
relative to minimax behavior, and a worst-case Bellman operator must be learned from nominal data
\citep{zhang2020robustmarl,zheng2025drmarkov}. This leads to online robust zero-sum Markov games.

The single-agent side of this picture has recently advanced to online robust MDPs with general
function approximation \citep{ghosh2026online}, but the same extension is substantially harder in
zero-sum Markov games. First, the robust Bellman operator now combines minimax optimization with
distributional robustness, so both closure and statistical estimation must be carried out for a
robust minimax backup rather than a single-agent robust backup. Second, strategic coverage is more
delicate: the occupancies relevant for regret are induced by minimax execution, whereas the data
used to fit robust operators are generated by adaptive play, so exploration and validation must
control a genuinely multi-agent coverage gap. Third, the non-robust large-state Markov-game tools
of \citet{huang2022towards,jin2022power} do not by themselves address robust operator learning,
while existing robust Markov-game results remain concentrated in tabular, generative-model, or
linear regimes \citep{ma2023decentralized,shi2024sampleefficient,shi2024breaking,zheng2025drmarkov}.
These obstacles explain why a broad robust zero-sum Markov-game theory with function approximation
has remained open.

This paper develops an optimistic dual robust Bellman (ODRB) framework for online robust zero-sum
Markov games. At a high level, ODRB fits a worst-case Bellman backup from nominal transitions through
a dual representation, tests candidate value functions on fresh data using a frozen backup, plans
optimistically by solving the induced zero-sum stage games, and occasionally executes one-sided
response branches to cover deviations that the target pair may not visit. Epochwise freezing and
cross-fitting keep the data used to fit the robust backup separate from the data used to validate
its residual. The analysis is organized around the robust residual class \(
(I-\Tcal^\sigma)\Fcal
\) and a square-loss robust Markov-game Bellman--Eluder complexity, which together separate
exploration, robust operator estimation, stage-game solution error, and misspecification. On the
general side, we obtain conditional regret guarantees for general function approximation and
projected completeness under interpretable assumptions. On the structured side, we prove tabular,
epochwise/cross-fit linear, modified pure-online linear/finite-rank kernel, and projected
finite-dimensional or RKHS guarantees. The epochwise/cross-fit linear theorem is the main fully
specified scalable result under its stated structural conditions; the appendix also gives a
self-normalized pure-online linear/finite-rank specialization.

Our main contributions are:
\begin{itemize}[leftmargin=*]
\item We formulate an optimistic dual robust Bellman framework for online robust zero-sum Markov
games with function approximation, combining robust Bellman fitting, minimax planning,
backward-validated confidence sets, and strategic exploration. This gives a single algorithmic
template connecting robust-MDP operator learning with minimax planning in large-state zero-sum
Markov games.
\item We introduce a robust multi-agent residual perspective based on the Bellman residuals induced
by the candidate value class, namely the class
\(
(I-\Tcal^\sigma)\Fcal
\).
This yields a square-loss robust MG-BE complexity measure, a deterministic regret reduction, and
conditional interface theorems for general function approximation and projected completeness. It
separates approximation, operator learning, coverage, and stage-game errors in a form that can be
instantiated across multiple function classes.
\item We prove a structured theorem package including tabular,
epochwise/cross-fit linear, self-normalized pure-online linear/finite-rank kernel, and projected
finite-dimensional or RKHS-truncation regimes, together with concrete closure and coverage
conditions such as TV-linear-mixture closure and primitive BR-based coverage. The
epochwise/cross-fit linear theorem is the main fully specified scalable guarantee, and the projected
results show how the same framework extends to finite-dimensional lifts and kernel truncations.
\end{itemize}

We report theorem-aligned experiments illustrating robustness calibration, strategic coverage, and
the return--gap trade-off, without claiming uniform empirical superiority across operating points.

\subsection{Related Work}

\paragraph{Robust RL and robust MDPs.}
Classical robust dynamic programming and distributionally robust MDPs were developed by
\cite{iyengar2005robust,nilim2005robust,xu2010drmdp,wiesemann2013robust,mannor2016krectangular};
recent surveys include \cite{moos2022review}. More recent work studies robust function
approximation, online robust learning, and off-dynamics or offline estimation
\cite{tamar2014robustfa,ho2021partial,panaganti2022sample,shi2023curious,wang2021onlinerobust,lu2024interactive,shi2024offline,liu2024offdynamics}.
Our starting point is \cite{ghosh2026online}, which gives the robust-MDP methodology based on
dual Bellman fitting, global confidence sets, and intrinsic robust complexity.

\paragraph{Function approximation in Markov games.}
For non-robust zero-sum Markov games, recent work develops optimistic elimination, minimax
complexity measures, exploiter-assisted exploration, and spectral embeddings for continuous-state
games under rich function approximation
\cite{huang2022towards,jin2022power,xie2020learning,cisneros2023nashq,wang2023breaking,zhou2024spectral}. More
generally, the modern RL literature supplies the complexity language behind our analysis, including
Eluder-style dimensions, Bellman rank, and coverage-sensitive offline or approximate dynamic
programming viewpoints \cite{russo2013eluder,li2022eluder,jiang2017cdp,sun2019modelbased,munos2003api,munos2008fvi,chen2019batch,jin2021pessimism,xie2021bellman,jin2021bed,zanette2020ibe,wang2020gvfa,xie2022coverage}.

\paragraph{Robust Markov games and robust MARL.}
Existing robust multi-agent results are mostly tabular, generative-model, or linear
\cite{zhang2020robustmarl,ma2023decentralized,shi2024sampleefficient,shi2024breaking,farhat2026mornavi,zheng2025drmarkov}.
The recent model-based online DRMG line, represented by \cite{farhat2026mornavi}, works with
explicit TV/KL ambiguity sets and bonus-style optimistic planning in tabular settings, with
guarantees expressed in state- and joint-action-dependent terms. The closest large-state
comparison is the linear robust Markov-game result of \cite{zheng2025drmarkov}, whose algorithm
DR-CCE-LSI targets robust coarse correlated equilibrium in general-sum games under linear function
approximation, minimum-value style assumptions, and a different equilibrium notion from the
zero-sum robust minimax value studied here. Our comparison to these works is therefore assumption-
and theorem-level, not only narrative: tabular RONAVI-style methods are model-based and
bonus-driven, whereas our method learns a robust Bellman operator through dual fitting; DR-CCE-LSI
is a linear equilibrium-learning result, whereas our linear theorem is the main scalable
specialization of a broader residual-based operator-learning framework. Accordingly, our tabular,
linear, and projected guarantees should be read as structured instances of one theorem hierarchy,
rather than as isolated regime-specific algorithms, as summarized in
Table~\ref{tab:theory_positioning}.

\begin{table}[t]
\centering
\scriptsize
\setlength{\tabcolsep}{3pt}
\renewcommand{\arraystretch}{1.08}
\caption{Positioning relative to representative prior work. ``General FA theorem'' indicates that a
paper states a theorem-level guarantee for general or projected function approximation. ``Closed
scalable theorem'' indicates a fully verified large-state result inside the paper, rather than only
a tabular or generative specialization.}
\label{tab:theory_positioning}
\resizebox{\linewidth}{!}{%
\begin{tabular}{lccccc}
\toprule
Work & Problem class & Robust & Approx.\ regime & General FA theorem & Closed scalable theorem \\
\midrule
\citet{ghosh2026online}
& robust MDP
& $\checkmark$
& GFA
& $\checkmark$
& $\checkmark$ \\
\citet{huang2022towards,jin2022power}
& zero-sum MG
& $\times$
& GFA
& $\checkmark$
& $\checkmark$ \\
\citet{farhat2026mornavi}
& robust MG
& $\checkmark$
& tabular / model-based
& $\times$
& $\checkmark$ \\
\citet{ma2023decentralized,shi2024sampleefficient,shi2024breaking}
& robust MG / MARL
& $\checkmark$
& tabular / generative
& $\times$
& problem-specific \\
\citet{zheng2025drmarkov}
& robust MG
& $\checkmark$
& linear
& $\times$
& $\checkmark$ \\
 \textbf{Ours}
& \textbf{robust zero-sum MG}
& $\checkmark$
& \textbf{GFA, projected, linear / kernel}
& $\checkmark$
& $\checkmark$ \\
\bottomrule
\end{tabular}}
\end{table}

\section{Preliminaries}

We study episodic two-player zero-sum robust Markov games
\(
\Mcal=(\Scal,\Acal,\Bcal,H,\{r_h\},\{P_h^0\},\{\Ucal_h\})
\),
where player~1 maximizes reward and player~2 minimizes it. At stage
\(
h\in[H]
\),
the game is in state
\(
s_h\in\Scal
\),
player~1 chooses
\(
a_h\in\Acal
\),
player~2 chooses
\(
b_h\in\Bcal
\),
and player~1 receives reward
\(
r_h(s_h,a_h,b_h)\in[0,1]
\).
The nominal next-state law is
\(
P_h^0(\cdot\mid s_h,a_h,b_h)
\),
but the true transition is allowed to vary inside an uncertainty set
\(
\Ucal_h(s,a,b)
\)
around
\(
P_h^0(\cdot\mid s,a,b)
\),
typically defined by TV, KL, or more general $\phi$-divergence balls
\citep{iyengar2005robust,nilim2005robust,ghosh2026online}. A policy pair
\(
(\pi,\nu)
\)
consists of Markov decision rules
\(
\pi_h(\cdot\mid s)\in\Delta(\Acal)
\)
and
\(
\nu_h(\cdot\mid s)\in\Delta(\Bcal)
\),
where
\(
\Delta(\Acal)
\)
and
\(
\Delta(\Bcal)
\)
denote the simplices over the two action sets.
We regard
\(
(a_h,b_h)
\)
as a joint action in
\(
\Acal\times\Bcal
\),
while keeping the two-player notation explicit because minimax structure is central throughout the
paper.
Throughout the paper, \(\sigma\) denotes the robustness radius indexing the ambiguity sets; a
representative form is
\[
\Ucal_h(s,a,b)
=
\Bigl\{
P:\ D\bigl(P,P_h^0(\cdot\mid s,a,b)\bigr)\le \sigma
\Bigr\},
\]
where \(D\) may be total variation, KL, or \(\chi^2\) divergence. We use \(\sigma\) in the value
and Bellman notation to indicate dependence on this robustness level, and \(\sigma=0\) recovers
the nominal, non-robust game.

\subsection{Robust Value Functions and Bellman Equation}

Given a policy pair
\(
(\pi,\nu)
\),
the robust action-value and value functions are defined backward with terminal condition
\(
V_{H+1}^{\pi,\nu,\sigma}\equiv 0
\)
by
\begin{equation*}
Q_h^{\pi,\nu,\sigma}(s,a,b)
=
r_h(s,a,b)+\inf_{P\in\Ucal_h(s,a,b)} \EE_{s'\sim P}\!\left[V_{h+1}^{\pi,\nu,\sigma}(s')\right],
\end{equation*}
with
\(
V_h^{\pi,\nu,\sigma}(s)
=
\EE_{a\sim\pi_h(\cdot\mid s),\,b\sim\nu_h(\cdot\mid s)}
\left[Q_h^{\pi,\nu,\sigma}(s,a,b)\right].
\)
Thus
\(
Q_h^{\pi,\nu,\sigma}(s,a,b)
\)
is the robust continuation value after fixing the current action pair, whereas
\(
V_h^{\pi,\nu,\sigma}(s)
\)
averages that quantity under the two players' mixed actions at state
\(
s
\).
Equivalently, the pair
\(
(Q_h^{\pi,\nu,\sigma},V_h^{\pi,\nu,\sigma})
\)
satisfies the robust Bellman recursion for the joint Markov policy
\(
(\pi,\nu)
\).
The optimal robust value at stage
\(
h
\)
is the minimax quantity
\(
V_h^{\star,\sigma}(s):=\max_{\pi}\min_{\nu}V_h^{\pi,\nu,\sigma}(s)
\),
with the max and min interpreted over Markov policy pairs.
This is the zero-sum robust analogue of the usual game value: player~1 optimizes against both
player~2 and the worst-case transition selector, while player~2 minimizes the resulting robust
return. In the online setting considered here, regret is measured relative to this robust minimax
benchmark at the initial state.

The model uses one shared zero-sum payoff
\(
J^\sigma(\pi,\nu):=V_1^{\pi,\nu,\sigma}(s_1)
\): player~1 maximizes it, whereas player~2 and Nature minimize it. Thus the apparent asymmetry is
operational---the maximizing controller must withstand both a strategic opponent and transition
misspecification---rather than a mismatch between two payoff definitions. This covers, for example,
a defender facing a strategic attacker under uncertain failures. If each player instead separately
robustifies its own subjective payoff, then in general
\(
\inf_P J_P\neq-\inf_P(-J_P)=\sup_P J_P
\), so the common zero-sum payoff and the Bellman saddle recursion no longer apply.

\subsection{Robust Minimax Bellman Operator}

For a candidate $Q$-sequence $f=\{f_h\}_{h=1}^H$, define the stage-game value
\begin{equation*}
V_{f,h}(s)
:=
\max_{p\in\Delta(\Acal)}\min_{q\in\Delta(\Bcal)}
\sum_{a,b} p(a)q(b) f_h(s,a,b).
\end{equation*}
This is the value of the zero-sum matrix game obtained by freezing the state at
\(
s
\)
and using
\(
f_h(s,\cdot,\cdot)
\)
as the payoff matrix. In particular,
\(
V_{f,h}(s)
\)
compresses a state-dependent action-value table into the corresponding minimax state value.
The corresponding robust minimax Bellman operator is
\begin{equation*}
(\Tcal_h^\sigma f_{h+1})(s,a,b)
=
r_h(s,a,b)+\inf_{P\in\Ucal_h(s,a,b)} \EE_{s'\sim P}\!\left[V_{f,h+1}(s')\right].
\end{equation*}
This is the multi-agent analogue of the robust Bellman operator in robust MDPs, except that the
continuation value is now itself a minimax object.
For the optimal robust value, the Bellman equation takes the fixed-point form
\(
Q_h^{\star,\sigma}=\Tcal_h^\sigma Q_{h+1}^{\star,\sigma}
\)
and
\(
V_h^{\star,\sigma}(s)=V_{Q^\star,h}(s)
\).
Our algorithmic and theoretical developments are therefore organized around estimating and
controlling this robust minimax Bellman operator.

\subsection{Online Objective and Occupancies}

In episode
\(
k
\),
the learner selects a policy pair
\(
(\pi^k,\nu^k)
\)
and is evaluated against the corresponding worst-case selector inside the uncertainty set. The
performance criterion in the main text is robust regret relative to the optimal robust minimax
value at the initial state. The analysis therefore uses the executed occupancy induced jointly by
the policy pair and the worst-case selector. By contrast, robust Bellman fitting and backward
validation are performed from nominal interaction data generated by the data-collection policy.
This distinction is important in the multi-agent robust setting: regret is controlled on the
worst-case execution trajectory, whereas statistical estimation is carried out on nominal data, and
the coverage conditions in the theory are precisely what connect these two occupancy families.
More precisely, if
\(
s_1
\)
is the fixed initial state, then the cumulative performance criterion studied in the paper is
\begin{equation*}
\mathrm{Reg}(K)
:=
\sum_{k=1}^K
\Bigl(
V_1^{\star,\sigma}(s_1)-V_1^{\pi^k,\nu^k,\sigma}(s_1)
\Bigr).
\end{equation*}
The comparator is the robust minimax value, but the summand above is a signed, one-sided
value-comparator difference for the jointly generated pair. It is not the two-sided saddle
exploitability
\begin{equation*}
\max_{\pi'}V_1^{\pi',\nu^k,\sigma}(s_1)
-\min_{\nu'}V_1^{\pi^k,\nu',\sigma}(s_1).
\end{equation*}
Consequently, a sublinear bound on \(\mathrm{Reg}(K)\) controls the stated robust value comparator
but does not by itself certify that an output pair is an approximate Nash equilibrium; the latter
would require separate control of both unilateral-deviation terms.

\subsection{Function Classes and Residual Complexity}

Let
\(
\Fcal=\Fcal_1\times\cdots\times\Fcal_H
\)
be the candidate $Q$-function class and let $\Gcal_h$ denote the dual class used to fit the robust
backup at stage $h$. Our analysis is organized around the robust residual class
\begin{equation*}
\Rscr_h
:=
\left\{f_h-\Tcal_h^\sigma f_{h+1}: f\in\Fcal\right\}.
\end{equation*}
Following the Bellman--Eluder viewpoint of \citet{ghosh2026online} and the Markov-game GFA
literature \citep{huang2022towards,jin2022power}, we measure exploration difficulty through a
square-loss robust MG-BE dimension attached to this residual process. The exact counting definition
and its projected variants appear in the appendix; in the main text it suffices to interpret the
dimension as the number of statistically distinguishable robust residual directions encountered
along learning.
When the ambiguity set collapses to the singleton nominal kernel, \(\Tcal_h^\sigma\) reduces to
the usual Markov-game Bellman operator and the robust MG-BE notion reduces to its non-robust
counterpart. The resulting bounds recover the standard \(\sqrt K\)-type episode dependence of
non-robust zero-sum Markov games with general or linear function approximation
\citep{huang2022towards,jin2022power}, up to the different complexity, coverage, horizon, and
logarithmic factors; we do not claim an improved nominal rate. Under nontrivial ambiguity sets, the
residual class may be richer because the backup also contains worst-case transition directions.

The residual class is the right object because it is exactly what appears after Bellman
telescoping in the regret decomposition. In other words, the analysis does not need to identify the
entire robust Bellman operator uniformly over the ambient function class; it only needs to control
those residual directions that are actually induced by candidate functions in
\(
\Fcal
\).
This viewpoint is what makes it possible to separate exploration complexity from the lower-order
errors due to dual approximation, projection bias, and inexact stage-game solution.
Two concrete examples are useful. In tabular games, the robust residual class lives in a space
with one coordinate per joint state-action triple, so the robust MG-BE complexity is
\(
O(H|\Scal||\Acal||\Bcal|)
\)
up to logarithmic factors. In linear classes, if the residual process admits witness features of
dimension
\(
d
\),
then the appendix shows that the robust MG-BE dimension scales on the order of
\(
d\log(1+1/\epsilon^2)
\),
equivalently
\(
O(d\log K)
\)
when evaluated at
\(
\epsilon=1/\sqrt K
\),
matching the usual linear Bellman--Eluder behavior.

\section{Method}

We study a family of optimistic algorithms for online robust zero-sum Markov games. The method has
four components: robust Bellman fitting through a dual representation, global confidence sets over
the value-function class, minimax policy extraction, and strategically mixed data collection. We
refer to this template as \emph{optimistic dual robust Bellman} (ODRB). The main theorem analyzes
an epochwise implementation with cross-fit validation; the appendix gives additional structured
variants, including a self-normalized pure-online specialization.

\subsection{Dual Robust Bellman Fitting}

For a candidate action-value function
\(
f=(f_1,\ldots,f_H)\in\Fcal
\),
the robust Bellman target at stage
\(
h
\)
is
\begin{equation}
(\Tcal_h^\sigma f_{h+1})(s,a,b)
=
r_h(s,a,b)
+
\inf_{P\in\Ucal_h(s,a,b)} \EE_{s'\sim P}\!\left[V_{f,h+1}(s')\right].
\label{eq:robust_mg_backup}
\end{equation}
Here
\(
V_{f,h+1}(s')
\)
is the value of the zero-sum stage game with payoff matrix
\(
f_{h+1}(s',\cdot,\cdot)
\).

For $\phi$-divergence uncertainty sets, the inner robust expectation admits a dual representation.
Writing
\(
g=(g_\eta,g_\nu)\in\Gcal_h
\)
for a stagewise dual variable, and writing
\(
\phi^\star(u):=\sup_{t}\{ut-\phi(t)\}
\)
for the convex conjugate of \(\phi\), the corresponding pointwise dual loss is
\begin{equation*}
\ell_\phi(f_{h+1};s,a,b,s';g)
=
g_\eta(s,a,b)\sigma-g_\nu(s,a,b)
+
g_\eta(s,a,b)\phi^\star\!\left(
\frac{g_\nu(s,a,b)-V_{f,h+1}(s')}{g_\eta(s,a,b)}
\right).
\end{equation*}
With this loss convention, the robust continuation value equals
\(
-\inf_g\EE_{P_h^0}[\ell_\phi(f_{h+1};s,a,b,s';g)]
\); the leading minus sign is essential.
This converts robust Bellman fitting into empirical risk minimization over the dual class
\(
\Gcal_h
\)
using nominal interaction data. Concretely, for a stagewise sample buffer
\(
\Dcal_h=\{(s_i,a_i,b_i,s_i')\}_{i=1}^{N_h}
\),
we define the empirical dual objective
\begin{equation*}
\widehat{\Lcal}_{h}(g;f)
:=
\frac{1}{N_h}
\sum_{i=1}^{N_h}
\ell_\phi(f_{h+1};s_i,a_i,b_i,s_i';g).
\end{equation*}
In the epochwise implementation, \(
\widehat{\Lcal}_{h,e}^{\mathrm{pre}}(g;f)
\) denotes the same empirical average computed on the pre-epoch buffer. The reward term does not
appear in \(\widehat{\Lcal}_h\) because the dual fit estimates only the robust expectation term in
\eqref{eq:robust_mg_backup}; the immediate reward is added back when the plug-in Bellman operator
is formed. The resulting plug-in operator is the robust analogue of fitted Bellman evaluation, and
the theory isolates its dual approximation error explicitly.

\subsection{Confidence Sets and Optimistic Planning}

After collecting data up to the current update time, the algorithm refits the dual variable on
pooled stage-$h$ data:
\begin{equation*}
g_{h,f}\in
\argmin_{g\in\Gcal_h}\widehat{\Lcal}_{h}(g;f).
\end{equation*}
This induces a plug-in robust Bellman operator, obtained by taking the dual representation of the
robust Bellman backup and replacing the population dual optimizer by the empirical refit
\(
g_{h,f}
\):
\(
\widehat{\Tcal}_{h,f}^{\sigma}
\),
defined by
\begin{equation*}
\bigl(\widehat{\Tcal}_{h,f}^{\sigma}f_{h+1}\bigr)(s,a,b)
:=
r_h(s,a,b)
-
\EE_{s'\sim P_h^0(\cdot\mid s,a,b)}
\sbr{\ell_\phi(f_{h+1};s,a,b,s';g_{h,f})}.
\end{equation*}
The learner then maintains a global confidence set
\begin{equation}
\mathfrak F
:=
\left\{
f\in\Fcal:
\widehat{\mathrm{BV}}_{h}(f)\le \beta_h
\text{ for every stage }h
\right\},
\label{eq:backward_conf_set}
\end{equation}
where
\(
\widehat{\mathrm{BV}}_h(f)
\)
is built from the plug-in robust Bellman residual
\(
\widehat\delta_h(f;s,a,b):=
f_h(s,a,b)-(\widehat{\Tcal}_{h,f}^{\sigma}f_{h+1})(s,a,b).
\)
Concretely, if the past stage-$h$ data are partitioned into batches
\(
\{\Dcal_h^{(j)}\}_{j<k}
\)
with
\(
n_{h,j}:=|\Dcal_h^{(j)}|,
\)
the one-update-per-episode statistic is
\begin{equation*}
\widehat{\mathrm{BV}}_{h,k}(f)
:=
\sum_{j=1}^{k-1}
\frac{1}{n_{h,j}}
\sum_{(s,a,b,r,s')\in\Dcal_h^{(j)}}
\sbr{\widehat\delta_{h,k}(f;s,a,b)}^2.
\end{equation*}
The epochwise/cross-fit version uses the same score on the fresh within-epoch fold with the frozen operator
\(
\widehat{\Tcal}_{h,f}^{\sigma,(e)}.
\)

The optimistic value function is chosen from
\(
\mathfrak F
\)
by maximizing the value at the initial state. The role of backward validation is to remove
functions whose plug-in robust Bellman residual is inconsistent with the observed data while, with
high probability, retaining a Bellman-consistent comparator. This is the key mechanism linking
statistical calibration to the regret analysis.
Here ``backward'' means that consistency is checked against the stagewise robust Bellman recursion,
starting from the terminal continuation. In the epochwise version, ``cross-fit'' means that the
candidate-dependent dual and plug-in operator are fit and frozen on pre-epoch data, whereas the
induced residual witness is evaluated on a fresh within-epoch fold.

\subsection{Policy Extraction and Data Collection}

Given an optimistic function
\(
f\in\mathfrak F
\),
the policy pair for stage
\(
h
\)
is obtained by solving the zero-sum matrix game with payoff matrix
\(
f_h(s,\cdot,\cdot)
\).
This yields a stagewise minimax policy pair
\(
(\pi_h^f,\nu_h^f)
\)
used both for planning and for defining the robust value
\(
V_{f,h}
\).

To guarantee strategic coverage, data are not collected solely from the current target policy.
Let \(\widehat V_1^k(\pi,\nu)\) be the policy-pair value under the current epoch's frozen plug-in
robust operator, and define the fitted-game response branches
\begin{equation*}
\operatorname{BR}_1^k\in\argmax_{\pi}\widehat V_1^k(\pi,\nu^k),
\qquad
\operatorname{BR}_2^k\in\argmin_{\nu}\widehat V_1^k(\pi^k,\nu).
\end{equation*}
In each episode, the exploiter-mix rule executes
\begin{equation}
\begin{aligned}
(\pi^k,\nu^k)&\quad\text{with probability }1-p,\\
(\operatorname{BR}_1^k,\nu^k)&\quad\text{with probability }p/2,\qquad
(\pi^k,\operatorname{BR}_2^k)\quad\text{with probability }p/2.
\end{aligned}
\label{eq:main_exploiter_mix}
\end{equation}
Approximate fitted-game responses may be used. Exact response optimality is not an additional
premise of the regret conversion, which uses history measurability and bounded rewards; the
coverage proof instead invokes the stated branch-induced reachability and excitation conditions.
The two response branches expose occupancies reachable under one unilateral strategic deviation
that the target pair may not visit.

\subsection{Implementation Used in the Main Theory}

The main theorem studies an epochwise implementation with cross-fit validation. Here ``epochwise''
means that the dual operator is refit only at epoch boundaries, while ``cross-fit validation''
means that the resulting frozen witness family is validated on fresh within-epoch data rather than
on the same samples used for refitting. This construction removes the most delicate
adaptive-dependence issue in the raw online validation rule while preserving the same optimistic
planning and exploiter-mix structure.

\begin{algorithm}[t]
\caption{Optimistic Dual Robust Bellman (ODRB) with Epochwise Refitting and Cross-Fit Validation}
\label{alg:odrb_epochwise}
\begin{algorithmic}[1]
\REQUIRE Value class $\Fcal$, dual classes $\{\Gcal_h\}_{h=1}^H$, epoch partition $\{E_e\}_{e\ge 1}$,
validation radii $\{\beta_{h,e}\}$, exploiter-mix parameter $p$
\STATE \textbf{Output:} deployed sequence $\{(f^e,\pi^e,\nu^e)\}_{e\ge 1}$ of optimistic functions and stagewise minimax policy pairs
\STATE Initialize pooled stagewise data buffers $\Dcal_h\gets\emptyset$ and confidence set $\mathfrak F_1\gets\Fcal$
\FOR{epochs $e=1,2,\ldots$}
\STATE Using pre-epoch data $\Dcal_h$, fit frozen dual variables $g_{h,f}^{(e)}\in\arg\min_{g\in\Gcal_h}\widehat{\Lcal}_{h,e}^{\mathrm{pre}}(g;f)$
\STATE Form frozen plug-in operators $\widehat{\Tcal}_{h,f}^{\sigma,(e)}$ and the epoch confidence set
\[
\mathfrak F_e
:=
\left\{
f\in\Fcal:\widehat{\mathrm{BV}}_{h,e}(f)\le \beta_{h,e},\ \forall h
\right\}
\]
\STATE Choose $f^e\in\arg\max_{f\in\mathfrak F_e} V_{f,1}(s_1)$
\STATE Extract the stagewise minimax policy pair $(\pi^e,\nu^e)$ from $\{f_h^e\}_{h=1}^H$
\FOR{episodes $k\in E_e$}
\STATE Sample and execute one of the three branches in \eqref{eq:main_exploiter_mix}; collect trajectory $\tau^k$
\STATE Add $\tau^k$ to the within-epoch validation fold and to the pooled buffers $\{\Dcal_h\}_{h=1}^H$
\ENDFOR
\STATE Use the fresh within-epoch fold together with the frozen witnesses induced by $\widehat{\Tcal}_{h,f}^{\sigma,(e)}$ to define the next validation statistics $\widehat{\mathrm{BV}}_{h,e+1}(f)$
\ENDFOR
\STATE \textbf{return} $\{(f^e,\pi^e,\nu^e)\}_{e\ge 1}$
\end{algorithmic}
\end{algorithm}

Algorithm~\ref{alg:odrb_epochwise} is the template analyzed by the main structured scalable linear
theorem. The appendix further studies two structured extensions: a projected version, where exact
closure is replaced by stagewise projection and explicit bias terms, and a self-normalized
pure-online version, where the raw validation score is replaced by a self-normalized statistic with
ridge dual refits.

Epochwise/cross-fit validation is introduced for statistical rather than algorithmic reasons. If
the same data are used both to refit the robust plug-in operator and to validate the induced
witness, the resulting witness class adapts too tightly to the data stream. Freezing the operator
at the start of each epoch and validating it on fresh within-epoch samples breaks this feedback
loop while preserving the same optimistic planning rule and exploiter-mix data collection.

The main hyperparameters have transparent roles: increasing \(p\) can improve strategic coverage
and witness excitation, but incurs an explicit exploration cost and the factor \(1/(1-p)\) in the
regret bound; the validation radii \(\beta_{h,e}\) control how conservatively functions are retained
in the confidence set, and the epoch schedule controls the tradeoff between frequent operator
refits and statistically separated validation.
From a computational perspective, the extra work relative to a matched non-robust backbone comes from stagewise dual
refits and repeated zero-sum matrix-game solves; in the linear specialization, the former reduces
to ridge problems and the latter to \(|\Acal|\times|\Bcal|\) matrix games.

\section{Theory}

This section states the main assumptions and the principal regret guarantees. Full proofs are
deferred to the appendix. In addition to the results stated here, the appendix establishes a
self-normalized pure-online theorem for linear and finite-rank kernel classes.
Throughout, \(\mathrm{Reg}(K)\) denotes the signed cumulative one-sided comparator difference
relative to the robust minimax value defined in the preliminaries.

\subsection{Main-Text Assumptions}

\begin{assumption}[Robust Bellman closure]
\label{asmp:main_closure}
For the exact-closure results, the robust Bellman operator preserves the working class:
\(
\Tcal_h^\sigma f_{h+1}\in\Fcal_h
\)
for every
\(
h\in[H]
\)
and
\(
f\in\Fcal
\).
For projected results, this requirement is replaced by stagewise projectors together with explicit
projection and operator-remainder terms.
\end{assumption}

\begin{assumption}[Dual learnability]
\label{asmp:main_dual}
For a probability measure \(\zeta\) on \(\Scal\times\Acal\times\Bcal\), let
\begin{equation*}
\Lcal_h^\zeta(g;f)
:=
\EE_{(s,a,b)\sim\zeta}\EE_{s'\sim P_h^0(\cdot\mid s,a,b)}
\sbr{\ell_\phi(f_{h+1};s,a,b,s';g)}.
\end{equation*}
For every \(h\), \(f\in\Fcal\), and relevant \(\zeta\), the working dual class satisfies
\begin{equation}
\inf_{g\in\Gcal_h}\Lcal_h^\zeta(g;f)
\leq
\inf_{\bar g\in\bar\Gcal_h}\Lcal_h^\zeta(\bar g;f)
+\epsilon_{\mathrm{dual}},
\label{eq:main_dual_learnability}
\end{equation}
where \(\bar\Gcal_h\) contains the exact robust dual optimizer. The dual loss is bounded and
Lipschitz and the working classes admit the finite covers used for uniform concentration; the
corresponding constants are stated in Assumptions~\ref{asmp:bounded_dual}--\ref{asmp:cover_dual}.
\end{assumption}

\begin{assumption}[Support preservation and occupancy transfer]
\label{asmp:main_support}
Measurable worst-case selectors exist, remain inside nominally learnable support, and induce the
stage-\(h\) executed occupancy \(\widetilde\mu_h^k\). Let the pooled past data-collection occupancy
be
\(
\bar{\widetilde\eta}_{h,k}
:=\sum_{j<k}\lambda_{h,j}^{(k)}\widetilde\eta_h^j
\), where \(\lambda_{h,j}^{(k)}\) is proportional to the stage-\(h\) sample count from episode
\(j\). For every \(h\) and \(k\ge2\),
\begin{equation}
\widetilde\mu_h^k\ll\bar{\widetilde\eta}_{h,k},
\qquad
\frac{d\widetilde\mu_h^k}{d\bar{\widetilde\eta}_{h,k}}
\le C_h^{\mathrm{mix}}
\quad \bar{\widetilde\eta}_{h,k}\text{-almost surely}.
\label{eq:main_occupancy_transfer}
\end{equation}
\end{assumption}

A concrete sufficient condition is useful for interpreting
Assumption~\ref{asmp:main_support}. If each executed policy contains a smoothing distribution
\(u_h(a,b\mid s)\) with weight at least \(\gamma_{\min}\),
\(\underline u_h:=\inf_{s,a,b}u_h(a,b\mid s)>0\), and the current state marginal is dominated by
the pooled past marginal with factor \(C_{S,h}\), then
\begin{equation}
C_h^{\mathrm{mix}}
\le \frac{C_{S,h}}{\gamma_{\min}\underline u_h}.
\label{eq:main_cmix_sufficient}
\end{equation}
The robustness radius \(\sigma\) affects this coefficient through the worst-case state marginal
and hence \(C_{S,h}=C_{S,h}(\sigma)\). Without an additional lower-mass or mixing condition there
is no universal radius-only or monotone law for \(C_h^{\mathrm{mix}}\).

\begin{assumption}[Inexact stage-game solution]
\label{asmp:main_stage}
For every episode \(k\), stage \(h\), and state \(s\), the extracted pair satisfies
\begin{equation}
V_{f^k,h}(s)
\le
\EE_{a\sim\pi_h^k(\cdot\mid s),\,b\sim\nu_h^k(\cdot\mid s)}
\sbr{f_h^k(s,a,b)}
+\epsilon_{\mathrm{eq}}.
\label{eq:main_stage_game_accuracy}
\end{equation}
The matrix-game value is \(1\)-Lipschitz under the entrywise sup norm. Thus extraction contributes
at most \(H\epsilon_{\mathrm{eq}}\) per episode.
\end{assumption}

Assumptions~\ref{asmp:main_closure}--\ref{asmp:main_stage} separate structural requirements
(closure and stage-game solution) from statistical ones (dual learning and occupancy transfer).
In spirit, they are robust zero-sum counterparts of the
realizability/completeness/regularity assumptions standard in general-function-approximation RL
and Markov games \citep{huang2022towards,jin2022power,ghosh2026online}. The additional ingredients
here are occupancy transfer from nominal interaction to worst-case evaluation and Lipschitz control
of approximate stage-game solves.
The abstract reduction depends only on residual and operator budgets; concrete classes differ in how
they verify them. Theorems~\ref{thm:main_regret}--\ref{thm:gfa_projected_main} are conditional GFA
interfaces, Theorem~\ref{thm:linear_case} closes the budgets under linear coverage/burn-in
conditions, and Theorem~\ref{thm:rkhs_projected_main} exposes the RKHS rank--bias trade-off.

\subsection{Main Results}

We first state the general regret guarantee. Let
\(
d_{\mathrm{rob\text{-}MG}}
=
\dim_{\mathrm{BE},2}^{\mathrm{rob\text{-}MG}}(\Fcal,\Pi,1/\sqrt{K})
\)
denote the square-loss robust Markov-game Bellman--Eluder dimension, where
\(
\Pi=(\Pi_h)_{h=1}^H
\)
is the stagewise occupancy family induced by the executed trajectories.
As summarized in the preliminaries and verified in the appendix, this reduces to tabular counting
in finite games and to a linear witness dimension, up to logarithmic factors, in linear classes.
Here \(B_\phi(\sigma)\) is the dual-loss envelope (specialized in the appendix for TV, KL, and
\(\chi^2\)). Throughout, \(
\epsilon_{\mathrm{dual}}
\) denotes dual approximation error, \(
\epsilon_{\mathrm{eq}}
\) stage-game solution error, and \(
\epsilon_{\mathrm{appr}}
\) the comparator or model-misspecification term, which vanishes under exact realizability.

\begin{theorem}[General regret guarantee]
\label{thm:main_regret}\label{thm:gfa_case}
Suppose Assumptions~\ref{asmp:main_closure}--\ref{asmp:main_stage} hold, and that the
backward-validation and dual-learning arguments provide optimism, a residual budget of order
\(
\widetilde O(HB_\phi^2(\sigma)d_{\mathrm{rob\text{-}MG}})
\),
and an operator-learning budget of order
\(
\widetilde O(HB_\phi(\sigma)\sqrt K)
\).
Then, with probability at least
\(
1-\delta
\),
\begin{equation*}
\mathrm{Reg}(K)
\lesssim
H
+
\frac{1}{1-p}
\left[
\widetilde O\!\left(
H B_\phi(\sigma)\sqrt{d_{\mathrm{rob\text{-}MG}}K}
\right)
+
HK\epsilon_{\mathrm{dual}}
+
HK\epsilon_{\mathrm{eq}}
+
K\epsilon_{\mathrm{appr}}
+
pHK
\right].
\end{equation*}
\end{theorem}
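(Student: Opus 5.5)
The proof is a deterministic regret decomposition on the good event, followed by plugging in the assumed budgets. First fix the high-probability event $\mathcal E$ (probability at least $1-\delta$) on which three things hold. The backward-validation and dual-learning arguments give optimism: a Bellman-consistent comparator (the exact robust $Q^{\star,\sigma}$, up to $\epsilon_{\mathrm{appr}}$) stays in every confidence set $\mathfrak F_e$. The residual budget holds. The operator-learning budget holds. On $\mathcal E$, optimism of $f^k$ gives
\[
V_1^{\star,\sigma}(s_1)-V_1^{\pi^k,\nu^k,\sigma}(s_1)\le V_{f^k,1}(s_1)-V_1^{\pi^k,\nu^k,\sigma}(s_1)+\epsilon_{\mathrm{appr}},
\]
which after summing produces the $K\epsilon_{\mathrm{appr}}$ term. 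The first episode (or first epoch, before any data exist) is bounded trivially by $H$, which accounts for the additive $H$.

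Next, telescope $V_{f^k,1}(s_1)-V_1^{\pi^k,\nu^k,\sigma}(s_1)$ along the executed worst-case trajectory. At stage $h$, Assumption~\ref{asmp:main_stage} gives $V_{f^k,h}(s)\le \EE_{\pi^k_h,\nu^k_h}[f^k_h]+\epsilon_{\mathrm{eq}}$. Write
\[
f_h^k-Q_h^{\pi^k,\nu^k,\sigma}=(f_h^k-\Tcal_h^\sigma f_{h+1}^k)+(\Tcal_h^\sigma f^k_{h+1}-Q_h^{\pi^k,\nu^k,\sigma}).
\]
Let $P^{k}_h$ be the worst-case kernel for $V^{\pi^k,\nu^k,\sigma}_{h+1}$, which exists and is measurable by Assumption~\ref{asmp:main_support}. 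The infimum in $\Tcal^\sigma_h$ is then at most the expectation under $P^k_h$, so the second term is at most $\EE_{P^k_h}[V_{f^k,h+1}-V^{\pi^k,\nu^k,\sigma}_{h+1}]$. Iterating this over $h$ yields
\[
\sum_h \EE_{\widetilde\mu_h^k}[f^k_h-\Tcal^\sigma_h f^k_{h+1}]+H\epsilon_{\mathrm{eq}}.
\]
Then split each residual as
\[
\bigl(f_h^k-\widehat{\Tcal}^{\sigma}_{h,f^k}f^k_{h+1}\bigr)+\bigl(\widehat{\Tcal}^{\sigma}_{h,f^k}-\Tcal^\sigma_h\bigr)f^k_{h+1}.
\]
The second piece is controlled pointwise by dual learnability (Assumption~\ref{asmp:main_dual}). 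It contributes $\epsilon_{\mathrm{dual}}$ per stage, giving the $HK\epsilon_{\mathrm{dual}}$ term, plus an estimation remainder whose sum over $k$ is the assumed operator budget $\widetilde O(HB_\phi(\sigma)\sqrt K)$.

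The main obstacle is bounding $\sum_k\sum_h\EE_{\widetilde\mu^k_h}[f_h^k-\Tcal^\sigma_h f_{h+1}^k]$ by $\widetilde O(HB_\phi\sqrt{d_{\mathrm{rob\text{-}MG}}K})$. The difficulty is that validation controls the squared residuals only on the past data occupancies, whereas regret is measured on the executed worst-case occupancies. My plan here is to use Assumption~\ref{asmp:main_support} to move between these measures. The validation constraint $\widehat{\mathrm{BV}}_{h}(f^k)\le\beta_h$, together with concentration, gives $\sum_{j<k}\EE_{\widetilde\eta^j_h}[\delta_h(f^k)^2]\lesssim\beta_h$. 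The square-loss Bellman--Eluder pigeonhole argument, applied to the residual class $\Rscr_h$ with the occupancy family $\Pi$, then converts this into $\sum_k|\EE_{\widetilde\eta^k_h}\delta_h(f^k)|\lesssim\sqrt{d_{\mathrm{rob\text{-}MG}}\beta_h K}$. This is exactly where the assumed residual budget $\widetilde O(HB^2_\phi d_{\mathrm{rob\text{-}MG}})$ enters. Because the residual is signed rather than squared, the density ratio cannot be applied to it directly. I would instead pass through Cauchy--Schwarz and apply the bound $d\widetilde\mu/d\bar{\widetilde\eta}\le C^{\mathrm{mix}}_h$, absorbing $C^{\mathrm{mix}}_h$ into the $\widetilde O$.

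Finally, I would account for the exploiter mix. The executed data-collection occupancy $\widetilde\eta^k_h$ equals $(1-p)$ times the target-pair occupancy plus $p$ times the response-branch occupancies. Expectations under the target-pair occupancy are therefore bounded by $\frac{1}{1-p}$ times expectations under $\widetilde\eta^k_h$. This produces the $\frac1{1-p}$ prefactor. The branch episodes each cost at most $H$, and there are about $pK$ of them in expectation. Applying Azuma to these history-measurable, bounded-reward increments gives $pHK$ plus lower-order terms, which are absorbed into the $\widetilde O$. Collecting all the terms on $\mathcal E$ gives the stated bound.
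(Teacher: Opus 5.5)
Your skeleton is the paper's: optimism, the Bellman telescope with $\epsilon_{\mathrm{eq}}$ per stage, the split $\delta_h^k=\widehat\delta_h^k+e_h^k$, Cauchy--Schwarz on the budgets, the exploiter-mix conversion, and $H$ for the first episode (Theorem~\ref{thm:deterministic_reduction}, Corollary~\ref{cor:appendix_main}, Proposition~\ref{prop:mix_regret}). The gap is in the step you call the main obstacle. The theorem does not need that step, and it fails as you wrote it.

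The residual budget is a \emph{hypothesis} of the theorem, stated directly on the executed worst-case occupancies (Assumption~\ref{asmp:res_appendix}; Corollary~\ref{cor:adaptive_routeA} requires it to be verified there): $\sum_{k,h}\EE_{\widetilde\mu_h^k}[(\widehat\delta_h^k)^2]\le\Gamma_{\mathrm{res}}(K)=\widetilde O(HB_\phi^2(\sigma)d_{\mathrm{rob\text{-}MG}})$. The leading term is then simply $\sum_{k,h}\EE_{\widetilde\mu_h^k}[\widehat\delta_h^k]\le\sqrt{KH\,\Gamma_{\mathrm{res}}(K)}=\widetilde O(HB_\phi(\sigma)\sqrt{d_{\mathrm{rob\text{-}MG}}K})$. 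No Eluder pigeonhole or occupancy transfer is involved.

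Your re-derivation does not compose, for three reasons.
\begin{enumerate}
\item The pigeonhole bounds sums of $\EE_{\widetilde\eta_h^k}$ over the \emph{current} data-collection occupancies. Assumption~\ref{asmp:main_support} only compares $\widetilde\mu_h^k$ with the \emph{pooled past} mixture $\bar{\widetilde\eta}_{h,k}$. Nothing converts a bound under $\widetilde\eta_h^k$ into one under $\widetilde\mu_h^k$.
\item $C_h^{\mathrm{mix}}$ is a coverage constant, not a logarithm, so it cannot be hidden in $\widetilde O$.
\item Your pigeonhole needs a $k$-uniform radius $\beta_h$. Proposition~\ref{prop:online_bv_gap} shows the available concentration does not deliver one for the per-episode rule, because $\xi_{h,k}$ and $\rho_{h,k}$ grow with $k$. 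This is exactly why the paper keeps the residual budget as an input.
\end{enumerate}
You also run the Eluder argument on the exact residual after already peeling off $e_h^k$. What remains after the split is $\widehat\delta_h^k$.

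On the exploiter mix, $\mathrm{Reg}(K)$ is built from the target values $V_1^{\pi^k,\nu^k,\sigma}(s_1)$. These are $\mathcal H_{k-1}$-measurable and do not depend on which branch is drawn in episode $k$. So branch episodes do not ``cost $H$'' in $\mathrm{Reg}(K)$, and no Azuma step over realized branch counts is needed. The paper gets the $\frac{1}{1-p}$ factor and $pHK$ together from $\EE[V_1^{\widetilde\pi^k,\widetilde\nu^k,\sigma}(s_1)\mid\mathcal H_{k-1}]\le(1-p)V_1^{\pi^k,\nu^k,\sigma}(s_1)+pH$, after bounding the executed regret by the reduction on $\widetilde\mu_h^k$.

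Your occupancy-domination idea is a valid alternative, but only on the worst-case occupancies. When $\widetilde\mu_h^k$ is the branch mixture, the target pair's worst-case occupancy is at most $\frac{1}{1-p}\widetilde\mu_h^k$. Stated for the nominal $\widetilde\eta_h^k$, the domination is about the wrong measure, since $\widetilde\eta_h^k$ is not the law under which you telescope. Used correctly, it puts $\frac{1}{1-p}$ on the residual and operator terms with no $pHK$ term at all. It also applies the telescope, and hence Assumption~\ref{asmp:main_stage}, only to the target pair, which is the pair that assumption actually covers.
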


Theorem~\ref{thm:main_regret} is the compressed main-text form of the appendix statistical-input
corollary: residual complexity drives the leading statistical term, while dual approximation,
stage-game error, and model misspecification remain additive and scale linearly in \(K\). A
sufficient regime for the displayed upper bound to be \(o(K)\) is
\begin{equation}
\begin{gathered}
p=p_K=o(1),\qquad
\epsilon_{\mathrm{dual}},\epsilon_{\mathrm{eq}},\epsilon_{\mathrm{appr}}=o(1),\\
B_\phi(\sigma)\sqrt{d_{\mathrm{rob\text{-}MG}}K}=o(K).
\end{gathered}
\label{eq:main_sublinear_regime}
\end{equation}
Under exact realizability, exact dual specification, and exact stage-game solving, the three error
terms vanish. If the coverage and excitation constants stay bounded independently of \(p\), then
\(p_K\asymp K^{-1/2}\) preserves a \(\widetilde O(\sqrt K)\)-type rate.

For broader classes, exact closure may be replaced by projected closure. We write
\(
R_{\mathrm{BV,proj}}(K)
\)
and
\(
R_{\mathrm{dual,proj}}(K,\delta)
\)
for the projected residual-validation and projected dual/transfer contributions, respectively; the
latter absorbs the projected operator-learning term together with the dual approximation component
that remains explicit in the appendix.

\begin{theorem}[Projected function-approximation guarantee]
\label{thm:gfa_projected_main}
Suppose Assumptions~\ref{asmp:main_dual}--\ref{asmp:main_stage} hold, and replace
Assumption~\ref{asmp:main_closure} by stagewise projectors
\(
\{\mathsf P_h\}_{h=1}^H
\)
with projection bias
\(
\epsilon_{\mathrm{proj}}
\)
and operator remainder
\(
\epsilon_{\mathrm{op,proj}}
\).
If the corresponding projected residual and operator budgets are available, then, with probability
at least
\(
1-\delta
\),
\begin{equation*}
\begin{aligned}
\mathrm{Reg}(K)
\le\;&
H
+
\frac{1}{1-p}
\Bigl[
R_{\mathrm{BV,proj}}(K)
+
R_{\mathrm{dual,proj}}(K,\delta)
+
HK\epsilon_{\mathrm{proj}}
+
HK\epsilon_{\mathrm{op,proj}}
\\
&\qquad
+
K\epsilon_{\mathrm{appr}}
+
HK\epsilon_{\mathrm{eq}}
+
pHK
\Bigr].
\end{aligned}
\end{equation*}
\end{theorem}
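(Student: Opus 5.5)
The plan is to run the same deterministic regret reduction that gives Theorem~\ref{thm:main_regret}, with the exact robust operator replaced by its projected version. The extra error is then charged to the two new budgets \(\epsilon_{\mathrm{proj}}\) and \(\epsilon_{\mathrm{op,proj}}\).

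\textbf{Optimism and the per-episode bound.} Define the projected robust operator \(\mathsf P_h\Tcal_h^\sigma\) and a projected comparator \(\tilde Q^\star\) obtained by backward projected recursion, so that \(\tilde Q^\star_h = \mathsf P_h\Tcal_h^\sigma \tilde Q^\star_{h+1}\). By the definition of \(\epsilon_{\mathrm{appr}}\) together with the projection bias,
\[
\bigl|V_{\tilde Q^\star,1}(s_1)-V_1^{\star,\sigma}(s_1)\bigr|\le \epsilon_{\mathrm{appr}}+H\epsilon_{\mathrm{proj}}.
\]
On the high-probability event supplied by the assumed projected residual budget, \(\tilde Q^\star\) survives backward validation, because its plug-in residual is at most the dual and transfer error, which is absorbed into \(\beta_{h,e}\). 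The optimistic choice of \(f^k\) then gives \(V_1^{\star,\sigma}(s_1)\le V_{f^k,1}(s_1)+\epsilon_{\mathrm{appr}}+H\epsilon_{\mathrm{proj}}\).

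\textbf{Telescoping on the executed trajectory.} Next I bound \(V_{f^k,1}(s_1)-V_1^{\pi^k,\nu^k,\sigma}(s_1)\). Write \(V_{f^k,h}\le \EE_{\pi^k,\nu^k}[f^k_h]+\epsilon_{\mathrm{eq}}\) (Assumption~\ref{asmp:main_stage}) and subtract the robust recursion for \(Q^{\pi^k,\nu^k,\sigma}\). The continuation terms compare \(\inf_P\EE_P V_{f^k,h+1}\) with \(\inf_P\EE_P V^{\pi^k,\nu^k,\sigma}_{h+1}\); this difference is at most the expectation under the worst-case selector of the second function. Using the measurable selectors of Assumption~\ref{asmp:main_support}, the sum unrolls along \(\widetilde\mu_h^k\) and yields
\[
\sum_h \EE_{\widetilde\mu_h^k}\bigl[f_h^k-\Tcal_h^\sigma f_{h+1}^k\bigr]+H\epsilon_{\mathrm{eq}}.
\]
I then split each residual as
\[
\bigl(f_h^k-\mathsf P_h\widehat\Tcal f_{h+1}^k\bigr)+\bigl(\mathsf P_h\widehat\Tcal-\mathsf P_h\Tcal\bigr)f_{h+1}^k+\bigl(\mathsf P_h\Tcal-\Tcal\bigr)f^k_{h+1}.
\]
\begin{itemize}
\item The first piece is the validated projected residual. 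Occupancy transfer (factor \(C^{\mathrm{mix}}_h\), via Cauchy--Schwarz) and the projected BE argument bound its sum over \(k\) by \(R_{\mathrm{BV,proj}}(K)\).
\item The second piece is the projected dual/transfer error, summing to \(R_{\mathrm{dual,proj}}(K,\delta)\).
\item The third piece is at most \(\epsilon_{\mathrm{proj}}+\epsilon_{\mathrm{op,proj}}\) per stage, which accounts for the \(HK\) terms.
\end{itemize}

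\textbf{Exploiter mix and assembly.} The episodes that execute a response branch differ from the target pair only in their collected data. The target pair's regret is the quantity being bounded, so I account for the mix by bounding the expected per-episode residual under the mixture. The target branch has weight \(1-p\), so the target-branch residual is at most \(1/(1-p)\) times the mixed one. The deviation from the target, caused by the branch-played episodes and their reward range, is at most \(pH\) per episode. An Azuma step converts expectations into realized sums and contributes \(H\) plus lower-order terms absorbed into \(\widetilde O\). The \(H\) term outside the bracket covers the first episode/epoch, where no data exist.

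\textbf{Main obstacle.} The hard step is optimism under projection. It requires that the projected comparator, not \(Q^\star\), passes the cross-fit validation with the frozen operator. This only holds if \(\beta_{h,e}\) absorbs the projection bias times the epoch length, since that bias is not zero-mean noise. I would first try to set \(\beta_{h,e}\) to include an \(n\epsilon_{\mathrm{proj}}^2\) term. I would then show that this inflation enters the Eluder-type bound only through \(R_{\mathrm{BV,proj}}\), using the square-root-of-sum-of-squares conversion, which yields a linear-in-\(K\) term of order \(HK\epsilon_{\mathrm{proj}}\).
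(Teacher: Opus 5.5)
Your skeleton matches the paper: optimism from a retained reference, telescoping along worst-case occupancies, splitting off the projection bias, the $1/(1-p)$ conversion, and $H$ for the first episode. The gap is in the optimism step, which you yourself flag as the crux.

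\textbf{The comparator bound is not implied.} The claim $|V_{\tilde Q^\star,1}(s_1)-V_1^{\star,\sigma}(s_1)|\le\epsilon_{\mathrm{appr}}+H\epsilon_{\mathrm{proj}}$ does not follow from the hypotheses. $\epsilon_{\mathrm{proj}}$ only controls $\|\mathsf P_h\Tcal_h^\sigma f_{h+1}-\Tcal_h^\sigma f_{h+1}\|_{1,\nu}$ for executed occupancies $\nu\in\Pi_h$. Lower-bounding $V_{\tilde Q^\star,1}(s_1)-V_1^{\star,\sigma}(s_1)$ instead telescopes along a comparator trajectory: the optimal max-player strategy against a best response in the $\tilde Q^\star$ stage games, under the worst-case kernel for $V_{\tilde Q^\star}$. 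That trajectory need not lie in $\Pi_h$. You also never ensure $\tilde Q^\star\in\Fcal$, which retention in the confidence set requires.

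\textbf{The ``main obstacle'' paragraph contradicts your first paragraph.} If the validated score is the projected plug-in residual $f_h-\mathsf P_h\widehat{\Tcal}f_{h+1}$ (your first piece), a projected fixed point has zero projected exact residual. Its score is then exactly the projected operator error, and no bias needs to be absorbed into $\beta_{h,e}$. If instead you validate an unprojected residual and inflate $\beta_{h,e}$ by $n\epsilon_{\mathrm{proj}}^2$, that inflation enters $\epsilon_{\mathrm{BV}}^2$. Corollary~\ref{cor:counting_budget} charges it through the $2HK\epsilon_{\mathrm{BV}}^2$ term, so with $n$ up to order $K$ the $\sqrt{KH\,\Gamma}$ conversion gives order $HK^{3/2}\epsilon_{\mathrm{proj}}$, not $HK\epsilon_{\mathrm{proj}}$. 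In addition, the squared score needs an $L_2$ bias, while $\epsilon_{\mathrm{proj}}$ is only an $L_1$ bound.

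\textbf{How the paper handles this.} It assumes rather than derives. Hypothesis (a) of Theorem~\ref{thm:gfa_projected} postulates Assumption~\ref{asmp:ref_opt_projected}: a reference $f^\dagger\in\Fcal$ with $f^\dagger_h=\mathsf P_h\Tcal_h^\sigma f^\dagger_{h+1}$ and $V_1^{\star,\sigma}(s_1)\le V_{f^\dagger,1}(s_1)+\epsilon_{\mathrm{appr}}$. The projection-induced comparator gap is thus folded into $\epsilon_{\mathrm{appr}}$. Because the confidence set validates the projected score, Proposition~\ref{prop:ref_retention_projected} retains $f^\dagger$ once $\beta^{\mathsf P}_{h,k}\ge\xi^{\mathsf P}_{h,k}+\rho^{\mathsf P}_{h,k}$. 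Corollary~\ref{cor:opt_from_conf_projected} then gives optimism with slack $\epsilon_{\mathrm{appr}}$ only. The bias is paid exactly once, in Theorem~\ref{thm:deterministic_reduction_projected} via $\delta_h^k=\delta_{h,\mathsf P}^k+b_{h,\mathsf P}^k$. There the residual budget is imposed directly on the projected exact residual, and the operator terms are added back as nonnegative bookkeeping.

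With that replacement the rest of your argument goes through. Your handling of the exploiter mix differs from the paper's and is valid, provided the residual budgets are stated for the branch mixture $\bar\mu^k_h$. You telescope only along the target pair and use $\mu^k_{h,\mathrm{tar}}\le\bar\mu_h^k/(1-p)$ on $|\delta_h^k|$. The paper instead telescopes along the executed branch pairs and then converts values via Proposition~\ref{prop:mix_regret}. Your route is cleaner, since Assumption~\ref{asmp:main_stage} is stated only for the extracted pair, not for the response branches, and it makes the $pHK$ term superfluous. Finally, $\epsilon_{\mathrm{op,proj}}$ belongs to your second piece, as in Proposition~\ref{prop:projected_operator_transfer}, not to the pure projection-bias piece.
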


Theorem~\ref{thm:gfa_projected_main} is the compressed projected counterpart of the appendix
projected corollary: \(R_{\mathrm{BV,proj}}\) and \(R_{\mathrm{dual,proj}}\) abbreviate the
explicit projected residual and operator/dual bookkeeping terms expanded there, while departures
from exact closure are recorded through \(\epsilon_{\mathrm{proj}}\) and
\(\epsilon_{\mathrm{op,proj}}\).

\begin{theorem}[Epochwise/cross-fit linear theorem]
\label{thm:linear_case}
Assume TV uncertainty, a linear-mixture nominal kernel with augmented features, affine
continuation coupling for the dual fit, and a predictable covariance budget after burn-in. Here
\(d_{\mathrm{lin}}\) denotes the resulting linear robust MG-BE dimension, \(B_F\) is the uniform
bound on the value class, \(\kappa_{\mathrm{lin}}\) is the witness feature norm bound,
\(\Psi_{\mathrm{dual}}\) is the dual estimation scale from the ridge/self-normalized analysis,
\(\underline\nu_{\min}:=\min_h \underline\nu_h/2\) is the post-burn-in covariance lower bound, and
\(c_{\mathrm{ep}}\) is the geometric-epoch constant. If \(d_{W,h}\) and \(\kappa_{W,h}\) are the
stagewise witness dimension and feature-norm bound and \(M\) is the number of epochs, define
\begin{equation}
N_{\mathrm{burn}}(\delta)
:=
\max_{h\in[H]}
\left\lceil
\frac{64\kappa_{W,h}^4\log(2d_{W,h}HM/\delta)}{\underline\nu_h^2}
\right\rceil.
\label{eq:main_linear_burnin}
\end{equation}
Then the epochwise/cross-fit algorithm satisfies, with probability at least
\(
1-\delta
\),
\begin{equation*}
\begin{aligned}
\mathrm{Reg}(K)
\le\;&
c_{\mathrm{ep}}H N_{\mathrm{burn}}(\delta)
+
\frac{1}{1-p}
\Biggl[
\widetilde O\!\left(
H\Bigl(
B_F+\kappa_{\mathrm{lin}}\Psi_{\mathrm{dual}}/\sqrt{\underline\nu_{\min}}
\Bigr)
\sqrt{d_{\mathrm{lin}}K}
\right)
\\
&\qquad
+
K\epsilon_{\mathrm{appr}}
+
HK\epsilon_{\mathrm{eq}}
+
pHK
\Biggr].
\end{aligned}
\end{equation*}
\end{theorem}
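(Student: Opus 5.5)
Theorem~\ref{thm:linear_case} follows by instantiating the conditional interface of Theorem~\ref{thm:main_regret}. Under the linear assumptions we verify its three premises (optimism, the residual budget, and the operator-learning budget) with explicit linear constants. The epochs before burn-in are charged at the trivial rate. The overall structure is: split the episodes into a burn-in prefix and a post-burn-in regime. Bound each burn-in episode's regret by \(H\). The geometric epoch schedule ensures that the burn-in phase spans at most \(c_{\mathrm{ep}}N_{\mathrm{burn}}(\delta)\) episodes, because at most a constant factor of overshoot occurs when an epoch boundary is crossed. This gives the first term \(c_{\mathrm{ep}}HN_{\mathrm{burn}}(\delta)\).

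\textbf{Covariance lower bound.} After burn-in, we first show that each stagewise witness covariance satisfies \(\lambda_{\min}\ge\underline\nu_h/2\) with probability \(1-\delta/2\). Apply a matrix Freedman/Bernstein bound to the martingale difference \(\sum_i(w_iw_i^\top-\EE_{i-1}[w_iw_i^\top])\), where the \(w_i\) have norm at most \(\kappa_{W,h}\). The predictable covariance budget gives \(\sum_i\EE_{i-1}[w_iw_i^\top]\succeq n\underline\nu_h I\). With deviation \(n\underline\nu_h/2\), the failure probability is \(2d_{W,h}\exp(-n\underline\nu_h^2/(64\kappa_{W,h}^4))\), which is exactly where \eqref{eq:main_linear_burnin} comes from. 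A union bound over \(h\) and the \(M\) epochs gives the \(\log(2d_{W,h}HM/\delta)\) factor.

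\textbf{Operator error.} On this event, consider the TV dual fit with affine continuation coupling and the linear-mixture kernel. Its dual optimizer reduces to a ridge regression whose self-normalized error is \(\Psi_{\mathrm{dual}}\) in the \(\Sigma\)-norm. Converting to pointwise error via \(|w^\top\Delta|\le\|w\|_{\Sigma^{-1}}\|\Delta\|_\Sigma\le\kappa_{\mathrm{lin}}\|\Delta\|_\Sigma/\sqrt{n\underline\nu_{\min}}\) gives a per-sample operator error of order \(\kappa_{\mathrm{lin}}\Psi_{\mathrm{dual}}/\sqrt{\underline\nu_{\min}n}\). Summing over epochs with \(n\) growing geometrically yields \(\widetilde O(H\kappa_{\mathrm{lin}}\Psi_{\mathrm{dual}}\sqrt{K/\underline\nu_{\min}})\). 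Because the dual class is exactly specified, \(\epsilon_{\mathrm{dual}}=0\), consistent with its absence from the bound.

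\textbf{Confidence set and residual budget.} Within epoch \(e\), the plug-in operators are frozen and measurable with respect to the pre-epoch data. The within-epoch fold is therefore fresh, and a Freedman bound over a cover of the linear witness class shows two things. First, the Bellman-consistent comparator (which exists by TV-linear-mixture closure, up to \(\epsilon_{\mathrm{appr}}\)) lies in \(\mathfrak F_e\) when \(\beta_{h,e}=\widetilde O(B_F^2d_{\mathrm{lin}}+\text{operator error}^2)\); this gives optimism. Second, every survivor has a small cumulative in-sample squared residual. The linear MG-BE bound (dimension \(d_{\mathrm{lin}}\sim d\log K\), as noted in the preliminaries) and a Cauchy--Schwarz/elliptical-potential step convert the in-sample residuals into \(\sum_k\EE_{\widetilde\mu_h^k}|f_h^k-\Tcal_h^\sigma f_{h+1}^k|\le\widetilde O(B_F\sqrt{d_{\mathrm{lin}}K})\). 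The exploiter mix costs \(pHK\), and conditioning on the target branch produces the factor \(1/(1-p)\). The stage-game error contributes \(HK\epsilon_{\mathrm{eq}}\) via Assumption~\ref{asmp:main_stage}.

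\textbf{Main obstacle.} The hardest step is showing that the covariance budget, which is stated for the data-collection occupancy, controls residuals on the \emph{worst-case executed} occupancy. I would route this through Assumption~\ref{asmp:main_support}, absorbing \(C^{\mathrm{mix}}_h\) into the \(\widetilde O\), and through the augmented features that make the TV worst-case continuation affine in the witness. I would try first to show that the executed-occupancy witness lies in the span controlled by \(\underline\nu_h\). A second delicate point is independence across folds: the confidence radii for epoch \(e\) must be set using only the covering number of the frozen witness family.
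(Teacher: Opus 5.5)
Your outline follows the paper's proof of Corollary~\ref{cor:linear_formal_burnin}, which builds on Theorem~\ref{thm:linear_formal_appendix} and Proposition~\ref{prop:epochwise_linear_bv}, almost step for step:
\begin{itemize}
\item pre-warm episodes are charged at rate $H$ through $c_{\mathrm{ep}}$;
\item matrix Azuma on the predictable budget gives $\Lambda_{W,h,e}^-\succeq(\lambda+\underline\nu_h N_e^-/2)I$ exactly at the threshold $N_{\mathrm{burn}}(\delta)$ (Proposition~\ref{prop:predictive_excitation_budget});
\item the ridge self-normalized radius becomes an operator budget summed over geometric epochs (Corollary~\ref{cor:predictive_ridge_operator});
\item frozen-witness, fresh-fold validation gives reference retention and the residual budget;
\item the deterministic reduction and the $1/(1-p)$ conversion finish the argument.
\end{itemize}
One small correction: the ridge radius $\beta_{h,e}^{\mathrm{ridge}}$ is of order $\Psi_{\mathrm{dual}}\sqrt{d_{\mathrm{lin}}}$ up to logarithms (Proposition~\ref{prop:ridge_selfnorm}), not $\Psi_{\mathrm{dual}}$. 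That $\sqrt{d_{\mathrm{lin}}}$ factor is where the operator term's $\sqrt{d_{\mathrm{lin}}K}$ comes from.

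The step that does not go through is your treatment of the ``main obstacle.'' Absorbing $C_h^{\mathrm{mix}}$ into $\widetilde O$ is not legitimate, for three reasons:
\begin{itemize}
\item it is a problem-dependent coverage constant (e.g.\ $C_{S,h}/(\gamma_{\min}\underline u_h)$ in \eqref{eq:main_cmix_sufficient}), not a logarithm;
\item the density-ratio condition is not a hypothesis of this theorem;
\item the stated bound contains no $C_h^{\mathrm{mix}}$.
\end{itemize}
It is also unnecessary. The predictable budget is isotropic, so the ``span controlled by $\underline\nu_h$'' is all of $\RR^{d_{W,h}}$. On the Gram event every linear witness obeys
$\EE_\mu[\langle\theta,\phi_{W,h}\rangle^2]\le\kappa_{W,h}^2\|\theta\|_2^2\le\kappa_{W,h}^2\|\theta\|_{\Lambda_{W,h,e}^-}^2/(\lambda+\underline\nu_h N_e^-/2)$
for \emph{every} measure $\mu$, in particular the worst-case executed occupancy (Proposition~\ref{prop:feature_gram}(ii), Corollary~\ref{cor:gram_operator_transfer}). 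Your own pointwise inequality in the operator step is exactly this, so the operator term needs no occupancy transfer at all.

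For the residual term the paper does not transfer either. Proposition~\ref{prop:epochwise_linear_bv} draws the fresh fold from the same occupancy sequence $\mu_h^k$ that enters the telescope, and applies Proposition~\ref{prop:backward_validation} and the population-level counting lemma (Corollary~\ref{cor:counting_budget} with Proposition~\ref{prop:linear_de2}) to that sequence. This is the on-policy idealization of the deterministic setup. Your elliptical-potential count on realized fresh-fold features is a different route, and by itself it controls only realized, data-occupancy residuals; the paper records this limitation in Proposition~\ref{prop:pureonline_selfnorm_partial}. If you want to genuinely separate the two occupancies, bound the exact residual parameter in a Gram norm and use the same occupancy-free inequality, as in Proposition~\ref{prop:pureonline_linear_calibration}(iii), rather than a density ratio.
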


Here \(N_{\mathrm{burn}}\) is an analysis threshold over the actual run, not a separate warm-up;
early episodes cost at most \(c_{\mathrm{ep}}HN_{\mathrm{burn}}\). Under BR-only coverage,
\(\underline\nu_h\propto p\) gives costs \(\sqrt{K/p}\), \(p^{-2}\), and \(pK\), balanced by
\(p_K\asymp K^{-1/3}\) for a \(\widetilde O(K^{2/3})\)-type bound. Adapting to unknown coverage or
excitation constants is outside the analysis.

\begin{theorem}[RKHS spectral-truncation projected theorem]
\label{thm:rkhs_projected_main}
Let the continuation, dual, and residual classes lie in stagewise RKHSs, and let \(\Pi_{h,m}\)
denote the rank-$m$ spectral projector at stage \(h\). Applying
Theorem~\ref{thm:gfa_projected_main} to the truncated classes yields a truncation-dependent burn-in
threshold
\begin{equation*}
N_{\mathrm{burn}}^{\mathrm{RKHS}}(m,\delta)
:=
\max_{h\in[H]}
\left\lceil
\frac{64(\kappa_{W,h}^{(m)})^4}{(\gamma_{\min}\bar\omega_h^{(m)})^2}
\log(2mHM/\delta)
\right\rceil,
\end{equation*}
where \(\kappa_{W,h}^{(m)}\) and \(\bar\omega_h^{(m)}\) are the truncated witness-norm and
excitation constants. Then
\begin{equation*}
\begin{aligned}
\mathrm{Reg}(K)
\le\;&
H + c_{\mathrm{ep}}H N_{\mathrm{burn}}^{\mathrm{RKHS}}(m,\delta)
\\
&\quad +
\frac{1}{1-p}
\Bigl[
\widetilde O\!\left(H\sqrt{mK}\right)
+
\\
&\qquad
HK\bigl(
\epsilon_{\mathrm{proj}}(m)+\epsilon_{\mathrm{op,proj}}(m)
+\epsilon_{\mathrm{dual}}+\epsilon_{\mathrm{eq}}
\bigr)
+
K\epsilon_{\mathrm{appr}}+pHK
\Bigr],
\end{aligned}
\end{equation*}
where the \(\widetilde O(H\sqrt{mK})\) term hides the truncated RKHS envelopes and
self-normalized dual constants, while the projection terms are controlled by the spectral tail.
\end{theorem}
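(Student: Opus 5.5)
The plan is to obtain the RKHS statement as a direct instance of Theorem~\ref{thm:gfa_projected_main}, using the rank-$m$ spectral projectors $\mathsf P_h:=\Pi_{h,m}$. The work then splits into three parts: bounding the projection terms, bounding the projected residual budget $R_{\mathrm{BV,proj}}(K)$, and bounding the projected dual budget $R_{\mathrm{dual,proj}}(K,\delta)$.

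\textbf{Projection terms.} Closure is checked in the truncated class. For any $f$, write $\Tcal_h^\sigma f_{h+1}=\Pi_{h,m}\Tcal_h^\sigma f_{h+1}+(I-\Pi_{h,m})\Tcal_h^\sigma f_{h+1}$. The second piece lies in the spectral tail. Bounding it in sup norm through the kernel envelope gives $\epsilon_{\mathrm{proj}}(m)$. The mismatch between the operator applied to projected continuations and the projected operator gives $\epsilon_{\mathrm{op,proj}}(m)$. This mismatch is controlled by the $1$-Lipschitz property of the matrix-game value (Assumption~\ref{asmp:main_stage}) together with the Lipschitz property of the dual loss (Assumption~\ref{asmp:main_dual}). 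These two terms enter the bound of Theorem~\ref{thm:gfa_projected_main} unchanged.

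\textbf{Residual budget.} The truncated residual class lives in an $m$-dimensional feature space. The linear argument behind Theorem~\ref{thm:linear_case} then applies with $d_{W,h}=m$ and witness norm $\kappa_{W,h}^{(m)}$. The robust MG-BE dimension of the truncated residual process is $O(m\log K)$ by the elliptical-potential bound. The square-loss validation argument then gives $R_{\mathrm{BV,proj}}(K)=\widetilde O(H\sqrt{mK})$, with the truncated envelopes hidden in the $\widetilde O$.

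\textbf{Dual budget and burn-in.} For $R_{\mathrm{dual,proj}}$ I would reuse the self-normalized ridge dual analysis in dimension $m$. That analysis needs a lower bound on the minimum eigenvalue of the empirical witness covariance. Under the exploiter-mix smoothing, the population covariance has minimum eigenvalue at least $\gamma_{\min}\bar\omega_h^{(m)}$. A matrix Bernstein/Azuma bound with a union over $h\in[H]$ and the $M$ epochs shows the empirical covariance stays above half that value once the sample count exceeds $N^{\mathrm{RKHS}}_{\mathrm{burn}}(m,\delta)$. This is exactly the displayed threshold, with $\log(2mHM/\delta)$ coming from the dimension-$m$ matrix tail. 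Before the threshold, each episode has regret at most $H$. With geometric epochs, the run reaches the threshold within a constant-factor overshoot, which gives the term $c_{\mathrm{ep}}HN^{\mathrm{RKHS}}_{\mathrm{burn}}$. After the threshold, the ridge error scales like $\kappa^{(m)}\Psi_{\mathrm{dual}}/\sqrt{\gamma_{\min}\bar\omega_h^{(m)}}$ per unit of the $\sqrt{mK}$ potential. This term is absorbed into $\widetilde O(H\sqrt{mK})$, and the remaining model mismatch is $HK\epsilon_{\mathrm{dual}}$.

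\textbf{Assembly and main obstacle.} Summing these pieces inside the bracket of Theorem~\ref{thm:gfa_projected_main} gives the claimed bound, with the leading $H$ and the factors $1/(1-p)$ and $pHK$ carried over. The main obstacle is checking that the hypotheses of Theorem~\ref{thm:gfa_projected_main} still hold after truncation. Two points need care. First, the truncated minimax continuation $V_{\Pi f,h+1}$ must stay in the truncated continuation class up to $\epsilon_{\mathrm{op,proj}}(m)$; the stage-game max–min is nonlinear, so this is not automatic. Second, the occupancy transfer of Assumption~\ref{asmp:main_support} must still hold for the projected witnesses. I would try to handle the first point by a sup-norm tail bound on the RKHS eigenfunctions, which is uniform over $s'$, before applying the Lipschitz step.
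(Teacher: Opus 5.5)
Your plan is essentially the paper's proof, which likewise views the rank-$m$ truncations as $m$-dimensional linear lifts and runs them through the structured projected theorem: the linear DE bound in dimension $m$, the self-normalized ridge transfer after the predictable-excitation matrix-Azuma burn-in, and the geometric-epoch cost $c_{\mathrm{ep}}HN_{\mathrm{burn}}^{\mathrm{RKHS}}$. The one difference is that the paper never derives $\epsilon_{\mathrm{proj}}(m)$, $\epsilon_{\mathrm{op,proj}}(m)$, $\bar\omega_h^{(m)}$, or the projected backward-validation calibration \eqref{eq:gfa_projected_bv}: these are post-truncation hypotheses (item (iii) of Theorem~\ref{thm:rkhs_projected_appendix}, with spectral-tail control imposed separately via \eqref{eq:rkhs_tail_assumption}), so your tail/Lipschitz arguments and the two obstacles you flag are not needed, and in the paper $\epsilon_{\mathrm{op,proj}}$ is the remainder $r_{h,e}$ of the fitted-minus-true operator gap beyond its ridge-representable part (Proposition~\ref{prop:projected_operator_transfer}), not a $\Tcal_h^\sigma\Pi$-versus-$\Pi\Tcal_h^\sigma$ mismatch, so matrix-game Lipschitzness plays no role in it.
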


\subsection{Proof Sketch}

The confidence set retains a Bellman-consistent comparator, after which optimism and telescoping
under the worst-case-selector occupancy \(\mu_{h,\mathrm{wc}}^k\) give
\begin{equation}
\mathrm{Reg}(K)
\lesssim
\sum_{k=1}^K\sum_{h=1}^H
\left\|
f_h^k-\Tcal_h^\sigma f_{h+1}^k
\right\|_{1,\mu_{h,\mathrm{wc}}^k}
+
HK\epsilon_{\mathrm{eq}}.
\label{eq:regret_telescoping}
\end{equation}
Adding and subtracting the plug-in operator separates a residual (backward validation and
Bellman--Eluder counting) from an operator gap (dual concentration and occupancy transfer). The
game-specific proof fits the dual before each epoch, freezes and cross-validates its witnesses,
transfers to worst-case occupancies, and derives covariance growth from mixed response branches.

\section{Experiments}

We compare ODRB with \textsc{Sigma0} and a no-exploiter ablation on linear-mixture, tabular, and
support-shift games. The main \(d=8\) comparison uses ten paired seeds; details are in Appendix~\ref{sec:appendix_experiments}.

\begin{table}[t]
\centering
\small
\setlength{\tabcolsep}{4pt}
\caption{Main \texttt{linear\_mixture} results at \(d=8\) and \(5000\) episodes.}
\label{tab:main_linear_results}
\begin{tabular}{@{}lccc@{}}
\toprule
Method & Proxy & Deploy & Proxy gap \\
\midrule
\textsc{Sigma0} & \(0.230 \pm 0.088\) & \(0.230 \pm 0.088\) & \(0.0080 \pm 0.0047\) \\
\textsc{ODRB} \((\beta=0.30,\tau=0.02)\) & \(0.204 \pm 0.069\) & \(0.239 \pm 0.089\) & \(0.0072 \pm 0.0074\) \\
\textsc{ODRB} \((\beta=0.20,\tau=0.02)\) & \(0.162 \pm 0.057\) & \(0.236 \pm 0.089\) & \(0.0069 \pm 0.0066\) \\
\textsc{NoEx} \((\beta=0.30)\) & \(0.254 \pm 0.102\) & \(0.288 \pm 0.115\) & \(0.0644 \pm 0.0687\) \\
\bottomrule
\end{tabular}
\end{table}

For \textsc{ODRB}\((\beta=0.30,\tau=0.02)\) versus \textsc{Sigma0}, the deployment-return
difference is \(0.0094\) (paired-\(t\) 95\% CI \([0.0007,0.0182]\); exact sign-flip Holm
\(p=0.0469\)). For \((0.20,0.02)\), it is \(0.0065\), but CI \([-0.0017,0.0148]\) includes zero.
This supports a modest primary-setting improvement, not uniform superiority; the higher-return
\(\tau=0\) setting has a much larger gap.

\section{Conclusion}

ODRB gives conditional GFA interfaces and structured linear/RKHS guarantees for online robust
zero-sum games under shared-payoff, support, and coverage assumptions. Its signed comparator is not
an exploitability certificate, and the experiments are diagnostic. Player-specific ambiguity,
adaptive tuning, and equilibrium convergence remain open.

\begin{ack}
\textbf{Funding.} There was no funding in direct support of this work.\\
\textbf{Competing interests.} The authors declare no competing interests.
\end{ack}

\clearpage
\small
\bibliographystyle{unsrtnat}
\bibliography{neurips_2026}
\normalsize
\clearpage


\appendix

\section{Appendix Overview}
\label{sec:appendix_overview}

This appendix expands the proof sketch from the main text into a step-by-step derivation. For
clarity, the analysis is separated into two parts:
\begin{enumerate}[leftmargin=*]
\item a deterministic regret decomposition, which we prove in full;
\item two statistical verification blocks, which we prove under explicit backward-validation and
entropy assumptions.
\end{enumerate}
The deterministic part already makes precise how the robust minimax Bellman operator, the dual
plug-in backup, and the stage-game solver interact. The only missing ingredients are the
high-probability control needed to instantiate the residual budget and the operator-learning budget
for the exact algorithm and target function classes.

\paragraph{Appendix organization.}
Section~\ref{sec:appendix_deterministic} develops the deterministic reduction and its core lemmas.
Section~\ref{sec:appendix_statistical} collects the residual-counting, backward-validation,
optimism-closure, and dual-concentration ingredients. Section~\ref{sec:appendix_structured}
turns these inputs into closed structured specializations, including tabular, linear, pure-online
structured, projected, and RKHS-truncation results. Section~\ref{sec:appendix_future} briefly
summarizes the main future directions suggested by the present proof package. Finally,
Section~\ref{sec:appendix_experiments} records the benchmark definitions, compared methods,
evaluation metrics, statistical protocol, and additional result tables used by the theorem-aligned
benchmark suite in the main text.

\paragraph{Status of the formal results.}
The appendix contains two different levels of claims, and we state this distinction explicitly. On
the one hand,
Theorem~\ref{thm:tabular_closed_appendix},
Theorem~\ref{thm:linear_formal_appendix},
Theorem~\ref{thm:pureonline_linear_kernel},
Theorem~\ref{thm:structured_projected_appendix},
and Theorem~\ref{thm:rkhs_projected_appendix} are structured results whose statistical budgets are
verified inside the paper once their stated structural assumptions hold. In particular,
Theorem~\ref{thm:linear_formal_appendix} closes the epochwise/cross-fit linear route only after
Assumption~\ref{asmp:linear_ridge_dual} is imposed; Proposition~\ref{prop:affine_ridge_specialization}
provides a primitive sufficient condition for that hypothesis. On the other hand,
Theorem~\ref{thm:gfa_interpretable} and Theorem~\ref{thm:gfa_projected} remain conditional
broad-function-class interfaces: they record the regret consequences of explicit optimism,
residual-budget, operator-learning, and coverage hypotheses, but they do not claim that those
hypotheses have already been verified for the original one-update-per-episode backward-validation
general-GFA route. That unresolved point is isolated later in Proposition~\ref{prop:online_bv_gap}.
The paper therefore claims a structured theory package with explicit
general-function-approximation interfaces, rather than a fully assumption-free broad closure.

\paragraph{Main-text theorem map.}
For ease of cross-checking, the four main-text theorem statements are compressed from the following
appendix results: the General regret guarantee
\(
\textup{(Theorem~\ref{thm:main_regret})}
\)
from Corollary~\ref{cor:appendix_main}, the Projected function-approximation guarantee
\(
\textup{(Theorem~\ref{thm:gfa_projected_main})}
\)
from Corollary~\ref{cor:appendix_projected_main}, the Epochwise/cross-fit linear theorem
\(
\textup{(Theorem~\ref{thm:linear_case})}
\)
from Corollary~\ref{cor:linear_formal_burnin}, and the RKHS spectral-truncation projected theorem
\(
\textup{(Theorem~\ref{thm:rkhs_projected_main})}
\)
from Theorem~\ref{thm:rkhs_projected_appendix}. The appendix-level interface theorems
Theorem~\ref{thm:gfa_interpretable} and Theorem~\ref{thm:gfa_projected} provide the more explicit
residual-budget and operator-budget decompositions from which the two main-text interface
statements are obtained.

\paragraph{Nominal-game reduction and comparison.}
When \(\sigma=0\), the ambiguity set is the singleton nominal kernel, the dual robust backup
reduces to ordinary conditional expectation, and the robust residual class becomes the nominal
Markov-game Bellman-residual class. Table~\ref{tab:nominal_comparison} positions this reduction
relative to the closest non-robust function-approximation results; the comparison concerns the
problem structure and does not claim a better nominal rate.

\begin{table}[t]
\centering
\small
\setlength{\tabcolsep}{3pt}
\caption{Relationship to representative non-robust zero-sum Markov-game results.}
\label{tab:nominal_comparison}
\begin{tabular}{@{}p{0.19\linewidth}p{0.19\linewidth}p{0.22\linewidth}p{0.30\linewidth}@{}}
\toprule
Work & Transition model & Complexity / exploration & Distinguishing target \\
\midrule
\citet{huang2022towards} & Nominal & Minimax-Eluder complexity & Value regret under general function approximation \\
\citet{jin2022power} & Nominal & Multi-agent BE dimension and exploiter & Nash-policy learning in large state spaces \\
This paper, \(\sigma=0\) & Nominal & Robust MG-BE reduces to nominal counterpart & Same \(\sqrt K\)-type episode dependence, up to complexity, coverage, horizon, and logs \\
This paper, \(\sigma>0\) & Ambiguity set around nominal kernel & Response-branch coverage plus operator transfer & Candidate-dependent robust dual learned from nominal transitions and evaluated on worst-case occupancies \\
\bottomrule
\end{tabular}
\end{table}

\section{Deterministic Reduction Setup}
\label{sec:appendix_deterministic}

\subsection{Setup and Additional Standing Conditions}

For clarity, we analyze the on-policy core of the algorithm. That is, in episode $k$ the learner
computes $f^{(k)}\in\Fcal$, extracts a policy pair $(\pi^k,\nu^k)$ from $f^{(k)}$, and executes
$(\pi^k,\nu^k)$ for one episode. The exploiter-mix variant changes only the occupancy family that
appears in the statistical counting lemma; the algebra below is unchanged.

\paragraph{Standing selector hypothesis.}
Whenever the analysis refers to a worst-case executed occupancy, we assume that for every Markov
policy pair and stage $h$ the inner infimum in the robust Bellman backup admits a measurable
selector attaining the robust value. This makes the occupancy process
\(
\mu_h^k
\)
well defined. In finite / tabular settings this is automatic; more generally, one could replace
attainment by an $\epsilon_{\mathrm{sel}}$-optimal measurable selector at the cost of an
additional $O(HK\epsilon_{\mathrm{sel}})$ term in the final regret bounds.

Throughout the deterministic Bellman telescope,
\(
\mu_h^k
\)
denotes the stage-$h$ occupancy of the executed policy pair under a worst-case selector that
attains the robust value of that pair. For the statistical lemmas in the present on-policy core, we
idealize the stage-$h$ data batch as having state-action marginal
\(
\mu_h^k
\)
and next-state conditional law
\(
P_h^0
\).
Section~A.11 later separates these two roles: it keeps
\(
\widetilde\mu_h^k
\)
for the worst-case executed occupancy used by the regret telescope, introduces a distinct
data-collection occupancy for the empirical-process arguments, and adds an explicit transfer
assumption between them.

For each episode $k$ and stage $h$, define the exact robust residual and the operator error
\begin{align*}
\delta_h^k(s,a,b)
&:= f_h^{(k)}(s,a,b) - (\Tcal_h^\sigma f_{h+1}^{(k)})(s,a,b), \\
\widehat\delta_h^k(s,a,b)
&:= f_h^{(k)}(s,a,b) -
\big(\widehat{\Tcal}_{h,f^{(k)}}^{\sigma,(k)} f_{h+1}^{(k)}\big)(s,a,b), \\
e_h^k(s,a,b)
&:= \big(\widehat{\Tcal}_{h,f^{(k)}}^{\sigma,(k)} f_{h+1}^{(k)}\big)(s,a,b)
- (\Tcal_h^\sigma f_{h+1}^{(k)})(s,a,b),
\end{align*}
so that $\delta_h^k=\widehat\delta_h^k+e_h^k$ pointwise.

We will use the following three input conditions.

\begin{assumption}[Approximate optimism]
\label{asmp:opt_appendix}
For every episode $k$,
\begin{equation*}
V_1^{\star,\sigma}(s_1) \leq V_{f^{(k)},1}(s_1) + \epsilon_{\mathrm{appr}}.
\end{equation*}
\end{assumption}

\begin{assumption}[Operator-learning budget]
\label{asmp:op_appendix}
There exists a nonnegative array $\{\alpha_{h,k}\}$ such that for every $k$ and $h$,
\begin{equation*}
\bigl\|e_h^k\bigr\|_{1,\mu_h^k} \leq \alpha_{h,k} + \epsilon_{\mathrm{dual}},
\end{equation*}
where $\mu_h^k$ is the stage-$h$ occupancy measure induced by the executed policy pair in episode
$k$ under a worst-case kernel selector for that pair.
\end{assumption}

\begin{assumption}[Residual budget]
\label{asmp:res_appendix}
There exists $\Gamma_{\mathrm{res}}(K)\ge 0$ such that
\begin{equation*}
\sum_{k=1}^K \sum_{h=1}^H
\EE_{\mu_h^k}\sbr{\bigl(\widehat\delta_h^k(s,a,b)\bigr)^2}
\leq \Gamma_{\mathrm{res}}(K).
\end{equation*}
\end{assumption}

Assumptions~\ref{asmp:op_appendix} and \ref{asmp:res_appendix} are exactly the two places where
the robust Markov-game extension of the statistical machinery enters. The approximate-optimism
assumption is also not meant to be primitive: later in this appendix we show how it follows from
retaining a suitable reference function inside the backward-validation confidence set.

\subsection{Auxiliary Lemmas}

\begin{lemma}[Minimax value is $1$-Lipschitz]
\label{lem:minimax_lipschitz}
Let $M,N\in\RR^{|\Acal|\times|\Bcal|}$ be two payoff matrices and define
\(
\mathrm{val}(M):=\max_{p\in\Delta(\Acal)}\min_{q\in\Delta(\Bcal)} p^\top M q
\).
Then
\begin{equation*}
\bigl|\mathrm{val}(M)-\mathrm{val}(N)\bigr| \leq \|M-N\|_{\infty}.
\end{equation*}
\end{lemma}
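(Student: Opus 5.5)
The argument compares the bilinear payoffs of $M$ and $N$ entrywise and then pushes that pointwise bound through the max and the min, which are both monotone. Here $\|M-N\|_\infty$ denotes the entrywise sup norm $\max_{a,b}|M_{ab}-N_{ab}|$, as in Assumption~\ref{asmp:main_stage}.

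\textbf{Step 1 (pointwise bound).} Fix any $p\in\Delta(\Acal)$ and $q\in\Delta(\Bcal)$. Since $p$ and $q$ are probability vectors,
\[
\bigl|p^\top M q - p^\top N q\bigr|
=\Bigl|\sum_{a,b}p(a)q(b)(M_{ab}-N_{ab})\Bigr|
\le \sum_{a,b}p(a)q(b)\,\|M-N\|_\infty
=\|M-N\|_\infty .
\]
Writing $\varepsilon:=\|M-N\|_\infty$, this gives $p^\top M q\le p^\top N q+\varepsilon$ for every pair $(p,q)$.

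\textbf{Step 2 (pass through the min).} For fixed $p$, take the minimum over $q$ on both sides. This yields $\min_q p^\top M q\le \min_q p^\top N q+\varepsilon$.

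\textbf{Step 3 (pass through the max).} Take the maximum over $p$ on both sides. This yields $\mathrm{val}(M)\le \mathrm{val}(N)+\varepsilon$.

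\textbf{Step 4 (symmetry).} Exchanging the roles of $M$ and $N$ gives $\mathrm{val}(N)\le\mathrm{val}(M)+\varepsilon$. Combining the two inequalities proves the claim.

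The only point needing care is that the max and min are attained. This holds because the simplices are compact (the action sets are finite) and $(p,q)\mapsto p^\top Mq$ is continuous. Even without attainment, the monotonicity argument goes through verbatim with sup and inf, so no step is a genuine obstacle.
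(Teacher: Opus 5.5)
Your proposal is correct and follows essentially the same argument as the paper's proof. Both bound $|p^\top(M-N)q|$ by $\|M-N\|_\infty$, pass the resulting inequality through the inner minimum and then the outer maximum, and conclude by exchanging the roles of $M$ and $N$.
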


\begin{proof}
Fix arbitrary mixed strategies $p\in\Delta(\Acal)$ and $q\in\Delta(\Bcal)$. Since $p,q$ are
probability vectors,
\begin{equation*}
\bigl|p^\top(M-N)q\bigr|
\leq \sum_{a,b} p(a)q(b)\,|M_{ab}-N_{ab}|
\leq \|M-N\|_\infty.
\end{equation*}
Therefore, for every $p$,
\begin{equation*}
\min_q p^\top M q
\leq \min_q p^\top N q + \|M-N\|_\infty.
\end{equation*}
Taking the maximum over $p$ yields
\begin{equation*}
\mathrm{val}(M) \leq \mathrm{val}(N)+\|M-N\|_\infty.
\end{equation*}
Interchanging the roles of $M$ and $N$ gives the reverse inequality.
\end{proof}

\begin{lemma}[Approximate stage-game extraction]
\label{lem:approx_saddle}
Suppose that for every $(h,s)$, the extracted policy pair $(\pi_h^k(s),\nu_h^k(s))$ is
$\epsilon_{\mathrm{eq}}$-optimal for the stage game induced by $f_h^{(k)}$, in the sense that
\begin{equation}
V_{f^{(k)},h}(s)
\leq
\EE_{a\sim\pi_h^k(\cdot\mid s),\,b\sim\nu_h^k(\cdot\mid s)}
\sbr{f_h^{(k)}(s,a,b)}
+ \epsilon_{\mathrm{eq}}.
\label{eq:approx_extract}
\end{equation}
Then the stage-game extraction error contributes at most $H\epsilon_{\mathrm{eq}}$ over one
episode.
\end{lemma}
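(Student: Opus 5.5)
The claim "contributes at most $H\epsilon_{\mathrm{eq}}$ over one episode" needs a precise reading first. The extraction error enters the regret telescope at each stage $h$ as the gap between the stage-game value $V_{f^{(k)},h}(s_h)$ and the expected payoff $\EE_{a\sim\pi_h^k,b\sim\nu_h^k}[f_h^{(k)}(s_h,a,b)]$ of the extracted pair. I will therefore prove that the sum of these gaps over $h=1,\dots,H$, evaluated along the executed worst-case occupancy $\mu_h^k$, is at most $H\epsilon_{\mathrm{eq}}$.

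\textbf{Step 1: the one-step decomposition.} Write the value-difference telescope. For any stage $h$ and state $s$,
\[
V_{f^{(k)},h}(s)-V_h^{\pi^k,\nu^k,\sigma}(s)
=\Bigl(V_{f^{(k)},h}(s)-\EE_{\pi_h^k,\nu_h^k}\bigl[f_h^{(k)}(s,a,b)\bigr]\Bigr)
+\EE_{\pi_h^k,\nu_h^k}\Bigl[f_h^{(k)}(s,a,b)-Q_h^{\pi^k,\nu^k,\sigma}(s,a,b)\Bigr].
\]
The first bracket is the extraction term. By \eqref{eq:approx_extract} it is at most $\epsilon_{\mathrm{eq}}$ pointwise in $s$.

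\textbf{Step 2: propagate through the robust recursion.} Split the second bracket further:
\[
f_h^{(k)}-Q_h^{\pi^k,\nu^k,\sigma}=\delta_h^k+\Bigl(\Tcal_h^\sigma f_{h+1}^{(k)}-Q_h^{\pi^k,\nu^k,\sigma}\Bigr).
\]
The second term compares an infimum over $\Ucal_h$ of $V_{f^{(k)},h+1}$ with an infimum of $V_{h+1}^{\pi^k,\nu^k,\sigma}$. Evaluating both at the worst-case kernel $P^{\mathrm{wc}}$ selected for the executed pair (the standing selector hypothesis) gives
\[
\inf_P\EE_P\bigl[V_{f^{(k)},h+1}\bigr]-\inf_P\EE_P\bigl[V_{h+1}^{\pi^k,\nu^k,\sigma}\bigr]
\le \EE_{P^{\mathrm{wc}}}\bigl[V_{f^{(k)},h+1}-V_{h+1}^{\pi^k,\nu^k,\sigma}\bigr].
\]
This inequality points in the right direction for an upper bound on regret. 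Unrolling from $h=1$ to $H$ with terminal condition $V_{H+1}\equiv0$ gives
\[
V_{f^{(k)},1}(s_1)-V_1^{\pi^k,\nu^k,\sigma}(s_1)\le\sum_{h=1}^H \EE_{\mu_h^k}\bigl[\delta_h^k\bigr]+\sum_{h=1}^H\EE_{s\sim\mu_h^k}\Bigl[V_{f^{(k)},h}(s)-\EE_{\pi_h^k,\nu_h^k}f_h^{(k)}(s,a,b)\Bigr].
\]
The last sum is exactly the extraction contribution.

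\textbf{Step 3: conclude.} Each summand in the last sum is the expectation under the probability measure $\mu_h^k$ of a function bounded above by $\epsilon_{\mathrm{eq}}$. Each summand is therefore at most $\epsilon_{\mathrm{eq}}$, and the total over $H$ stages is at most $H\epsilon_{\mathrm{eq}}$. Lemma~\ref{lem:minimax_lipschitz} is not needed here. It would only be used if one additionally wanted to transfer an $\epsilon_{\mathrm{eq}}$-approximate solution of a perturbed payoff matrix, whose entries are within $\epsilon'$ in sup norm, into an $(\epsilon_{\mathrm{eq}}+\epsilon')$ bound.

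\textbf{Main obstacle.} The only delicate point is the direction of the robust infimum comparison in Step 2. It requires that the worst-case selector for the executed pair exists, which the standing selector hypothesis supplies, and that the occupancy $\mu_h^k$ is defined with respect to this selector. Once that is fixed, the extraction term separates additively and needs only the pointwise bound together with the fact that $\mu_h^k$ is a probability measure.
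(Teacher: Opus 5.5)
Your proposal is correct and uses the same argument as the paper: hypothesis \eqref{eq:approx_extract} bounds the extraction gap by $\epsilon_{\mathrm{eq}}$ pointwise at each stage, and summing over the $H$ stages gives $H\epsilon_{\mathrm{eq}}$. Your Steps 1--2 carry out in advance the one-step comparison and Bellman telescope that the paper proves separately in Lemma~\ref{lem:one_step} and Lemma~\ref{lem:bellman_telescope}; that telescope is what gives the phrase ``contributes over one episode'' its precise meaning.
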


\begin{proof}
Equation \eqref{eq:approx_extract} gives at most $\epsilon_{\mathrm{eq}}$ loss at each stage. Summing
over the $H$ stages of one episode gives the result.
\end{proof}

\begin{lemma}[One-step robust Bellman comparison]
\label{lem:one_step}
Fix an episode $k$ and abbreviate
$f=f^{(k)}$, $\pi=\pi^k$, and $\nu=\nu^k$.
Let $P_h^\dagger(\cdot\mid s,a,b)$ be a measurable selector that attains the robust value of the
executed policy pair $(\pi,\nu)$ at stage $h$. Define
\begin{equation*}
\Delta_h(s) := V_{f,h}(s) - V_h^{\pi,\nu,\sigma}(s).
\end{equation*}
Then for every state $s$ and stage $h$,
\begin{equation}
\Delta_h(s)
\leq
\epsilon_{\mathrm{eq}}
+
\EE\!\left[
\delta_h^k(s_h,a_h,b_h)+\Delta_{h+1}(s_{h+1})
\mid s_h=s
\right],
\label{eq:one_step_recursion}
\end{equation}
where the conditional expectation is taken under
$a_h\sim\pi_h(\cdot\mid s_h)$,
$b_h\sim\nu_h(\cdot\mid s_h)$,
and $s_{h+1}\sim P_h^\dagger(\cdot\mid s_h,a_h,b_h)$.
\end{lemma}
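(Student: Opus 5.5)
The plan is a direct one-step expansion combining the stage-game accuracy condition \eqref{eq:approx_extract} with the robust Bellman recursion for the executed pair. Fix $s$ and $h$. First, \eqref{eq:approx_extract} gives
\[
V_{f,h}(s)\le \EE_{a\sim\pi_h(\cdot\mid s),\,b\sim\nu_h(\cdot\mid s)}\sbr{f_h(s,a,b)}+\epsilon_{\mathrm{eq}}.
\]
Next, I decompose $f_h=\delta_h^k+\Tcal_h^\sigma f_{h+1}$ pointwise, using the definition of $\delta_h^k$. This reduces the claim to showing that, for every $(s,a,b)$,
\[
(\Tcal_h^\sigma f_{h+1})(s,a,b)-Q_h^{\pi,\nu,\sigma}(s,a,b)\le \EE_{s'\sim P_h^\dagger(\cdot\mid s,a,b)}\sbr{\Delta_{h+1}(s')}.
\]
Once this holds, averaging over $a\sim\pi_h$ and $b\sim\nu_h$ and subtracting $V_h^{\pi,\nu,\sigma}(s)=\EE_{\pi_h,\nu_h}[Q_h^{\pi,\nu,\sigma}(s,a,b)]$ gives \eqref{eq:one_step_recursion}.

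The key step is this pointwise inequality, and it rests on the direction of the infimum. The reward $r_h$ cancels. By definition of $\Tcal_h^\sigma$, $(\Tcal_h^\sigma f_{h+1})(s,a,b)-r_h=\inf_{P\in\Ucal_h(s,a,b)}\EE_P[V_{f,h+1}]$. Since $P_h^\dagger(\cdot\mid s,a,b)\in\Ucal_h(s,a,b)$, this infimum is at most $\EE_{P_h^\dagger}[V_{f,h+1}]$. On the other side, $P_h^\dagger$ attains the robust value of the executed pair, so $Q_h^{\pi,\nu,\sigma}(s,a,b)-r_h=\EE_{P_h^\dagger}[V_{h+1}^{\pi,\nu,\sigma}]$ holds with equality. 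Subtracting yields $\EE_{P_h^\dagger}[V_{f,h+1}-V_{h+1}^{\pi,\nu,\sigma}]=\EE_{P_h^\dagger}[\Delta_{h+1}]$.

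Only an upper bound is needed, and the inequality runs in our favor. The selector is chosen to be worst case for the policy's own value $V^{\pi,\nu,\sigma}_{h+1}$, not for $V_{f,h+1}$. That is exactly why we evaluate the $f$-term at a feasible but suboptimal kernel and the policy term at its attaining kernel. I would note this asymmetry explicitly, since reversing the roles would fail.

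The main technical point is the measurability and attainment of $P_h^\dagger$, which the standing selector hypothesis supplies. Integrability is immediate because all values are bounded. At $h=H$ the convention $V_{f,H+1}\equiv V_{H+1}^{\pi,\nu,\sigma}\equiv0$ makes $\Delta_{H+1}\equiv0$, so the recursion is consistent at the terminal step. Finally, I would remark that unrolling \eqref{eq:one_step_recursion} from $h=1$ produces the telescoped bound \eqref{eq:regret_telescoping} under $\mu_h^k$, with the $H\epsilon_{\mathrm{eq}}$ term matching Lemma~\ref{lem:approx_saddle}.
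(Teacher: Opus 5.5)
Your proof is correct and follows essentially the same route as the paper. Both arguments use the extraction condition \eqref{eq:approx_extract}, upper-bound the infimum in $\Tcal_h^\sigma f_{h+1}$ by its value at the feasible kernel $P_h^\dagger$, and use attainment of $P_h^\dagger$ only for the policy's own continuation $V_{h+1}^{\pi,\nu,\sigma}$; grouping the terms via $f_h=\delta_h^k+\Tcal_h^\sigma f_{h+1}$, instead of adding and subtracting $\EE_{P_h^\dagger}[V_{f,h+1}]$ as the paper does, is only a cosmetic difference.
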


\begin{proof}
By Lemma~\ref{lem:approx_saddle},
\begin{equation*}
V_{f,h}(s)
\leq
\EE_{a,b}\sbr{f_h(s,a,b)} + \epsilon_{\mathrm{eq}}.
\end{equation*}
Subtracting the robust value of the executed policy gives
\begin{align*}
\Delta_h(s)
&\leq
\epsilon_{\mathrm{eq}}
+
\EE_{a,b}\sbr{
f_h(s,a,b)
- r_h(s,a,b)
- \EE_{s'\sim P_h^\dagger(\cdot\mid s,a,b)}
\sbr{V_{h+1}^{\pi,\nu,\sigma}(s')}
}.
\end{align*}
Add and subtract
\(
\EE_{s'\sim P_h^\dagger(\cdot\mid s,a,b)}\sbr{V_{f,h+1}(s')}
\)
inside the bracket:
\begin{align*}
\Delta_h(s)
&\leq
\epsilon_{\mathrm{eq}}
+
\EE_{a,b}\sbr{
f_h(s,a,b)-r_h(s,a,b)
-\EE_{s'\sim P_h^\dagger}\sbr{V_{f,h+1}(s')}
}
\\
&\qquad
+
\EE_{a,b,s'}\sbr{V_{f,h+1}(s')-V_{h+1}^{\pi,\nu,\sigma}(s')}.
\end{align*}
Because
\(
(\Tcal_h^\sigma f_{h+1})(s,a,b)
=r_h(s,a,b)+\inf_{P\in\Ucal_h(s,a,b)}\EE_{s'\sim P}\sbr{V_{f,h+1}(s')}
\),
we have
\begin{equation*}
r_h(s,a,b)+\inf_{P\in\Ucal_h(s,a,b)}\EE_{s'\sim P}\sbr{V_{f,h+1}(s')}
\leq
r_h(s,a,b)+\EE_{s'\sim P_h^\dagger}\sbr{V_{f,h+1}(s')}.
\end{equation*}
Hence
\begin{equation*}
f_h(s,a,b)-r_h(s,a,b)-\EE_{s'\sim P_h^\dagger}\sbr{V_{f,h+1}(s')}
\leq
f_h(s,a,b)-(\Tcal_h^\sigma f_{h+1})(s,a,b)
=
\delta_h^k(s,a,b).
\end{equation*}
Substituting this bound back proves \eqref{eq:one_step_recursion}.
\end{proof}

\begin{lemma}[Bellman telescoping inequality]
\label{lem:bellman_telescope}
Under the setting of Lemma~\ref{lem:one_step},
\begin{equation*}
V_{f,1}(s_1)-V_1^{\pi,\nu,\sigma}(s_1)
\leq
\sum_{h=1}^H \EE_{\mu_h^k}\sbr{\delta_h^k(s,a,b)}
+ H\epsilon_{\mathrm{eq}}.
\end{equation*}
\end{lemma}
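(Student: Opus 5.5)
The plan is to unroll the one-step recursion of Lemma~\ref{lem:one_step} from $h=1$ to $h=H$ and take total expectations along the worst-case executed trajectory. Fix episode $k$ and abbreviate $f=f^{(k)}$, $\pi=\pi^k$, $\nu=\nu^k$. Let $P^\dagger=(P_h^\dagger)_{h}$ be the measurable worst-case selector for $(\pi,\nu)$ from the standing selector hypothesis. The trajectory starts at $s_1$, draws $a_h\sim\pi_h(\cdot\mid s_h)$ and $b_h\sim\nu_h(\cdot\mid s_h)$, and then $s_{h+1}\sim P_h^\dagger(\cdot\mid s_h,a_h,b_h)$. By definition, $\mu_h^k$ is the law of $(s_h,a_h,b_h)$ under this process. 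I will also use that $V_h^{\pi,\nu,\sigma}$ is exactly the value of $(\pi,\nu)$ under the fixed kernel $P^\dagger$, since $P^\dagger$ attains each inner infimum. This is precisely the step used inside the proof of Lemma~\ref{lem:one_step}.

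First I pin down the terminal condition. We have $V_{H+1}^{\pi,\nu,\sigma}\equiv 0$, and with the convention $f_{H+1}\equiv 0$ we also have $V_{f,H+1}\equiv 0$. Hence $\Delta_{H+1}\equiv 0$. Next I apply \eqref{eq:one_step_recursion} at $s=s_h$ and take expectation over the law of $s_h$ under the executed process. The tower property then gives
\[
\EE[\Delta_h(s_h)]\le \epsilon_{\mathrm{eq}}+\EE_{\mu_h^k}\bigl[\delta_h^k(s,a,b)\bigr]+\EE[\Delta_{h+1}(s_{h+1})].
\]
Here I use that the conditional law of $(a_h,b_h,s_{h+1})$ given $s_h$ is exactly the one in Lemma~\ref{lem:one_step}. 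I then sum this inequality over $h=1,\dots,H$ (equivalently, induct backward from $h=H$). The $\Delta$ terms telescope, $\Delta_{H+1}$ vanishes, and $\Delta_1(s_1)$ is deterministic because $s_1$ is fixed. This yields the claimed bound $\sum_h\EE_{\mu_h^k}[\delta_h^k]+H\epsilon_{\mathrm{eq}}$.

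The main obstacle is the measure-theoretic bookkeeping rather than the algebra. Each expectation must be well defined and finite, and the per-step recursion uses the same selector $P_h^\dagger$ that defines $\mu_h^k$. Finiteness follows because $f$, rewards and values are bounded, so $\delta_h^k$ and $\Delta_h$ are bounded measurable functions. The selector consistency holds because $\mu_h^k$ was defined through that same worst-case selector for the executed pair. One further point deserves a remark. The selector attains the robust value of $(\pi,\nu)$, not of $V_{f,h+1}$. This is harmless: Lemma~\ref{lem:one_step} only needs the inequality $\inf_{P\in\Ucal_h}\EE_P[V_{f,h+1}]\le \EE_{P_h^\dagger}[V_{f,h+1}]$, which holds for any member of $\Ucal_h$.
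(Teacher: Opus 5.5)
Your proposal is correct and follows the paper's proof: apply the one-step recursion at each stage, take expectations along the trajectory generated by $(\pi,\nu)$ and the worst-case selector, telescope using $\Delta_{H+1}\equiv 0$, and use that $s_1$ is deterministic. Your extra remarks only make explicit bookkeeping that the paper leaves implicit: the convention $f_{H+1}\equiv 0$, using the same selector for both the recursion and $\mu_h^k$, and boundedness to justify cancelling the telescoped terms.
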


\begin{proof}
Starting from \eqref{eq:one_step_recursion} at $h=1$ and taking expectations along the trajectory
under the executed policy pair and the worst-case selector yields
\begin{equation*}
\EE\sbr{\Delta_1(s_1)}
\leq
\epsilon_{\mathrm{eq}}
+
\EE_{\mu_1^k}\sbr{\delta_1^k}
+
\EE\sbr{\Delta_2(s_2)}.
\end{equation*}
Iterating the same argument for $h=2,\ldots,H$ and using $\Delta_{H+1}\equiv 0$ gives
\begin{equation*}
\EE\sbr{\Delta_1(s_1)}
\leq
\sum_{h=1}^H \EE_{\mu_h^k}\sbr{\delta_h^k}
+ H\epsilon_{\mathrm{eq}}.
\end{equation*}
Since $s_1$ is deterministic, $\EE[\Delta_1(s_1)]=V_{f,1}(s_1)-V_1^{\pi,\nu,\sigma}(s_1)$.
\end{proof}

\begin{lemma}[Residual and operator decomposition]
\label{lem:decomposition}
For every episode $k$,
\begin{equation*}
\sum_{h=1}^H \EE_{\mu_h^k}\sbr{\delta_h^k}
\leq
\sum_{h=1}^H \EE_{\mu_h^k}\sbr{\widehat\delta_h^k}
+
\sum_{h=1}^H \EE_{\mu_h^k}\sbr{|e_h^k|}.
\end{equation*}
\end{lemma}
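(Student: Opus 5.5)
The plan is to use the pointwise identity $\delta_h^k=\widehat\delta_h^k+e_h^k$, which holds because the plug-in term $\widehat{\Tcal}_{h,f^{(k)}}^{\sigma,(k)} f_{h+1}^{(k)}$ is added and subtracted in the definitions. Concretely, for every $(s,a,b)$,
\begin{equation*}
\delta_h^k(s,a,b)
=
\bigl[f_h^{(k)}-\widehat{\Tcal}_{h,f^{(k)}}^{\sigma,(k)} f_{h+1}^{(k)}\bigr](s,a,b)
+
\bigl[\widehat{\Tcal}_{h,f^{(k)}}^{\sigma,(k)} f_{h+1}^{(k)}-\Tcal_h^\sigma f_{h+1}^{(k)}\bigr](s,a,b),
\end{equation*}
and this is exactly $\widehat\delta_h^k+e_h^k$.

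Next I bound the operator term pointwise by $e_h^k\le |e_h^k|$, which gives $\delta_h^k\le \widehat\delta_h^k+|e_h^k|$ everywhere.

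Then I integrate against the stage-$h$ worst-case executed occupancy $\mu_h^k$. Since integration preserves pointwise inequalities, this gives $\EE_{\mu_h^k}[\delta_h^k]\le \EE_{\mu_h^k}[\widehat\delta_h^k]+\EE_{\mu_h^k}[|e_h^k|]$ for each $h$. Summing over $h=1,\ldots,H$ yields the claim.

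The only point needing care is integrability, so that linearity of expectation is legitimate. Each $f_h^{(k)}$ is uniformly bounded by $B_F$, and the reward lies in $[0,1]$. By Assumption~\ref{asmp:bounded_dual}, the dual loss is bounded, so the plug-in operator is bounded. The continuation value of $\Tcal_h^\sigma f_{h+1}^{(k)}$ lies in the range of $V_{f,h+1}$, which Lemma~\ref{lem:minimax_lipschitz} bounds by $\|f_{h+1}\|_\infty$. Hence all three functions are bounded and measurable, and every expectation is finite. I do not expect any real obstacle here.
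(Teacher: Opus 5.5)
Your proposal is correct and follows the paper's own proof: the pointwise identity $\delta_h^k=\widehat\delta_h^k+e_h^k$, the bound $e_h^k\le|e_h^k|$, integration against $\mu_h^k$, and summation over $h$. Your integrability remark is extra care that the paper omits, though it relies on a uniform bound on the value class and on Assumption~\ref{asmp:bounded_dual}, neither of which is a hypothesis of the lemma itself; this is harmless, since the paper implicitly assumes these expectations are finite.
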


\begin{proof}
The identity $\delta_h^k=\widehat\delta_h^k+e_h^k$ holds pointwise. Taking expectations under
$\mu_h^k$ and applying the triangle inequality to the operator-error term gives the claim.
\end{proof}

\subsection{Main Deterministic Reduction}

\begin{theorem}[Deterministic regret reduction]
\label{thm:deterministic_reduction}
Under Assumptions~\ref{asmp:opt_appendix}--\ref{asmp:res_appendix},
\begin{equation}
\mathrm{Reg}(K)
\leq
K\epsilon_{\mathrm{appr}}
+ HK\epsilon_{\mathrm{eq}}
+ \sqrt{KH\,\Gamma_{\mathrm{res}}(K)}
+ \sum_{k=1}^K\sum_{h=1}^H \alpha_{h,k}
+ HK\epsilon_{\mathrm{dual}}.
\label{eq:deterministic_bound}
\end{equation}
\end{theorem}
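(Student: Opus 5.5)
The plan is to handle each episode separately and then sum over $k$. Fix $k$ and split the per-episode regret as
\[
V_1^{\star,\sigma}(s_1)-V_1^{\pi^k,\nu^k,\sigma}(s_1)
=\bigl(V_1^{\star,\sigma}(s_1)-V_{f^{(k)},1}(s_1)\bigr)
+\bigl(V_{f^{(k)},1}(s_1)-V_1^{\pi^k,\nu^k,\sigma}(s_1)\bigr).
\]
Assumption~\ref{asmp:opt_appendix} bounds the first bracket by $\epsilon_{\mathrm{appr}}$; summed over episodes this gives $K\epsilon_{\mathrm{appr}}$. For the second bracket I apply Lemma~\ref{lem:bellman_telescope}, which gives $\sum_h \EE_{\mu_h^k}[\delta_h^k]+H\epsilon_{\mathrm{eq}}$. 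Summed over $k$, the extraction terms contribute $HK\epsilon_{\mathrm{eq}}$. The occupancy $\mu_h^k$ in the telescope is the worst-case-selector occupancy for $(\pi^k,\nu^k)$. This is the same measure that appears in Assumptions~\ref{asmp:op_appendix} and \ref{asmp:res_appendix}, which the standing selector hypothesis guarantees.

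Next I use Lemma~\ref{lem:decomposition} to bound $\sum_h\EE_{\mu_h^k}[\delta_h^k]$ by
\[
\sum_h \EE_{\mu_h^k}[\widehat\delta_h^k]+\sum_h\|e_h^k\|_{1,\mu_h^k}.
\]
The operator term is at most $\alpha_{h,k}+\epsilon_{\mathrm{dual}}$ by Assumption~\ref{asmp:op_appendix}. Summing over $h$ and $k$ gives $\sum_{k,h}\alpha_{h,k}+HK\epsilon_{\mathrm{dual}}$.

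The only step needing an inequality is converting the signed first moments of $\widehat\delta_h^k$ into the squared budget. I first use $\EE_{\mu_h^k}[\widehat\delta_h^k]\le(\EE_{\mu_h^k}[(\widehat\delta_h^k)^2])^{1/2}$, which is Jensen's inequality for the concave square root. The signed quantity is dominated by its absolute value, so this holds even when $\widehat\delta_h^k$ is negative. Then Cauchy--Schwarz over the $KH$ index pairs $(k,h)$ gives
\[
\sum_{k,h}\EE_{\mu_h^k}[\widehat\delta_h^k]\le\sqrt{KH\sum_{k,h}\EE_{\mu_h^k}[(\widehat\delta_h^k)^2]}\le\sqrt{KH\,\Gamma_{\mathrm{res}}(K)}
\]
by Assumption~\ref{asmp:res_appendix}.

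Adding the five contributions yields \eqref{eq:deterministic_bound}. There is no real obstacle, since the argument is deterministic given the three input assumptions. The one point to check carefully is that all three assumptions and Lemma~\ref{lem:bellman_telescope} are stated under the same executed worst-case occupancy $\mu_h^k$, so that no transfer factor such as $C_h^{\mathrm{mix}}$ enters the bound.
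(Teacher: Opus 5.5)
Your proposal is correct and follows the paper's proof step for step: optimism, the Bellman telescope (Lemma~\ref{lem:bellman_telescope}), the residual/operator split (Lemma~\ref{lem:decomposition}), Cauchy--Schwarz over the $KH$ index pairs for the plug-in residual, and Assumption~\ref{asmp:op_appendix} for the operator term. You also check explicitly that $\EE_{\mu_h^k}[\widehat\delta_h^k]\le(\EE_{\mu_h^k}[(\widehat\delta_h^k)^2])^{1/2}$ holds for signed residuals, which is a correct detail the paper uses without comment.
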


\begin{proof}
Fix an episode $k$. By Assumption~\ref{asmp:opt_appendix},
\begin{equation*}
V_1^{\star,\sigma}(s_1)-V_1^{\pi^k,\nu^k,\sigma}(s_1)
\leq
\epsilon_{\mathrm{appr}}
+
V_{f^{(k)},1}(s_1)-V_1^{\pi^k,\nu^k,\sigma}(s_1).
\end{equation*}
Applying Lemma~\ref{lem:bellman_telescope} and then Lemma~\ref{lem:decomposition} yields
\begin{equation*}
V_1^{\star,\sigma}(s_1)-V_1^{\pi^k,\nu^k,\sigma}(s_1)
\leq
\epsilon_{\mathrm{appr}}
+ H\epsilon_{\mathrm{eq}}
+
\sum_{h=1}^H \EE_{\mu_h^k}\sbr{\widehat\delta_h^k}
+
\sum_{h=1}^H \EE_{\mu_h^k}\sbr{|e_h^k|}.
\end{equation*}
Summing over $k=1,\ldots,K$ gives
\begin{equation}
\mathrm{Reg}(K)
\leq
K\epsilon_{\mathrm{appr}}
+ HK\epsilon_{\mathrm{eq}}
+
\sum_{k,h}\EE_{\mu_h^k}\sbr{\widehat\delta_h^k}
+
\sum_{k,h}\EE_{\mu_h^k}\sbr{|e_h^k|}.
\label{eq:two_term_bound}
\end{equation}

We now control the two sums in \eqref{eq:two_term_bound}. By Cauchy--Schwarz,
\begin{align*}
\sum_{k,h}\EE_{\mu_h^k}\sbr{\widehat\delta_h^k}
&\leq
\sum_{k,h}\sqrt{\EE_{\mu_h^k}\sbr{(\widehat\delta_h^k)^2}}
\leq
\sqrt{KH \sum_{k,h}\EE_{\mu_h^k}\sbr{(\widehat\delta_h^k)^2}}
\leq
\sqrt{KH\,\Gamma_{\mathrm{res}}(K)},
\end{align*}
where the last step uses Assumption~\ref{asmp:res_appendix}. For the operator error,
Assumption~\ref{asmp:op_appendix} gives
\begin{equation*}
\sum_{k,h}\EE_{\mu_h^k}\sbr{|e_h^k|}
\leq
\sum_{k,h}\bigl(\alpha_{h,k}+\epsilon_{\mathrm{dual}}\bigr)
=
\sum_{k,h}\alpha_{h,k}
+ HK\epsilon_{\mathrm{dual}}.
\end{equation*}
Substituting both bounds into \eqref{eq:two_term_bound} proves
\eqref{eq:deterministic_bound}.
\end{proof}

We will also need a projected variant of the same reduction. Fix arbitrary stagewise maps
\(
\mathsf P_h:L_\infty(\Scal\times\Acal\times\Bcal)\to L_\infty(\Scal\times\Acal\times\Bcal)
\).
For each episode
\(
k
\)
and stage
\(
h
\),
define the projected residual, projected plug-in residual, projected operator gap, and projection
bias by
\begin{align*}
\delta_{h,\mathsf P}^k
&:=
f_h^{(k)}-\mathsf P_h\Tcal_h^\sigma f_{h+1}^{(k)},
\\
\widehat\delta_{h,\mathsf P}^k
&:=
f_h^{(k)}-\mathsf P_h\widehat{\Tcal}_{h,f^{(k)}}^{\sigma,(k)} f_{h+1}^{(k)},
\\
e_{h,\mathsf P}^k
&:=
\mathsf P_h\widehat{\Tcal}_{h,f^{(k)}}^{\sigma,(k)} f_{h+1}^{(k)}
-
\mathsf P_h\Tcal_h^\sigma f_{h+1}^{(k)},
\\
b_{h,\mathsf P}^k
&:=
\mathsf P_h\Tcal_h^\sigma f_{h+1}^{(k)}
-
\Tcal_h^\sigma f_{h+1}^{(k)}.
\end{align*}
Then
\(
\delta_h^k=\delta_{h,\mathsf P}^k+b_{h,\mathsf P}^k
\)
and
\(
\delta_{h,\mathsf P}^k=\widehat\delta_{h,\mathsf P}^k+e_{h,\mathsf P}^k
\)
pointwise.

\begin{theorem}[Deterministic regret reduction with projected completeness bias]
\label{thm:deterministic_reduction_projected}
Fix stagewise maps
\(
\{\mathsf P_h\}_{h=1}^H
\).
Assume approximate optimism, and suppose there exists a residual budget
\(
\Gamma_{\mathrm{res}}^{\mathsf P}(K)\ge 0
\),
and a projection-bias constant
\(
\epsilon_{\mathrm{proj}}\ge 0
\)
such that
\begin{align}
\sum_{k=1}^K\sum_{h=1}^H
\EE_{\mu_h^k}\sbr{\bigl(\delta_{h,\mathsf P}^k\bigr)^2}
&\le
\Gamma_{\mathrm{res}}^{\mathsf P}(K),
\label{eq:projected_res_budget}
\\
\left\|b_{h,\mathsf P}^k\right\|_{1,\mu_h^k}
&\le
\epsilon_{\mathrm{proj}}
\qquad
\text{for every }(h,k).
\label{eq:projection_bias_budget}
\end{align}
Then
\begin{equation}
\mathrm{Reg}(K)
\leq
K\epsilon_{\mathrm{appr}}
+
HK\epsilon_{\mathrm{eq}}
+
\sqrt{KH\,\Gamma_{\mathrm{res}}^{\mathsf P}(K)}
+
HK\epsilon_{\mathrm{proj}}.
\label{eq:deterministic_projected_bound}
\end{equation}
\end{theorem}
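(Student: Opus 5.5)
The proof repeats the exact-closure reduction of Theorem~\ref{thm:deterministic_reduction}, with the exact residual split through the projected pieces instead of the plug-in ones. Fix an episode $k$. Approximate optimism (Assumption~\ref{asmp:opt_appendix}) bounds the per-episode regret by $\epsilon_{\mathrm{appr}}+V_{f^{(k)},1}(s_1)-V_1^{\pi^k,\nu^k,\sigma}(s_1)$. Lemma~\ref{lem:bellman_telescope} then bounds the value gap by $\sum_{h}\EE_{\mu_h^k}[\delta_h^k]+H\epsilon_{\mathrm{eq}}$. That lemma uses only the stage-game accuracy, the infimum inequality for the worst-case selector, and the terminal condition $\Delta_{H+1}\equiv 0$. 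It therefore holds verbatim for arbitrary maps $\mathsf P_h$, since no closure property enters.

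Next I use the pointwise identity $\delta_h^k=\delta_{h,\mathsf P}^k+b_{h,\mathsf P}^k$ and bound
\[
\EE_{\mu_h^k}[\delta_h^k]\le \EE_{\mu_h^k}[\delta_{h,\mathsf P}^k]+\|b_{h,\mathsf P}^k\|_{1,\mu_h^k}\le \EE_{\mu_h^k}[\delta_{h,\mathsf P}^k]+\epsilon_{\mathrm{proj}},
\]
where the last step is \eqref{eq:projection_bias_budget}. This is the analogue of Lemma~\ref{lem:decomposition}. Summing over $h$ and $k$ gives
\[
\mathrm{Reg}(K)\le K\epsilon_{\mathrm{appr}}+HK\epsilon_{\mathrm{eq}}+\sum_{k,h}\EE_{\mu_h^k}[\delta_{h,\mathsf P}^k]+HK\epsilon_{\mathrm{proj}}.
\]

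For the remaining sum, I apply Jensen stagewise, $\EE_{\mu_h^k}[\delta_{h,\mathsf P}^k]\le\sqrt{\EE_{\mu_h^k}[(\delta_{h,\mathsf P}^k)^2]}$, and then Cauchy--Schwarz over the $KH$ index pairs. Together with \eqref{eq:projected_res_budget} this yields $\sqrt{KH\,\Gamma^{\mathsf P}_{\mathrm{res}}(K)}$, which proves \eqref{eq:deterministic_projected_bound}.

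There is no genuine obstacle, because the budget \eqref{eq:projected_res_budget} is stated directly on $\delta_{h,\mathsf P}^k$, which already absorbs the operator gap $e_{h,\mathsf P}^k$. The plug-in/operator split $\delta_{h,\mathsf P}^k=\widehat\delta_{h,\mathsf P}^k+e_{h,\mathsf P}^k$ is therefore not needed here and only matters when the budget itself is verified. The one point to check is that the telescoping lemma does not implicitly require $\mathsf P_h$ to be linear, contractive, or to map into $\Fcal_h$. It does not: the projection appears only in the final pointwise decomposition of $\delta_h^k$, after the telescope has been carried out with the exact operator $\Tcal_h^\sigma$.
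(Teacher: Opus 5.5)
Your proposal is correct and matches the paper's proof step for step: approximate optimism, then the Bellman telescope of Lemma~\ref{lem:bellman_telescope} with the exact operator, then the pointwise split $\delta_h^k=\delta_{h,\mathsf P}^k+b_{h,\mathsf P}^k$ with the bias controlled in $L_1(\mu_h^k)$, and finally Jensen plus Cauchy--Schwarz against the projected residual budget. Your remarks that the plug-in/operator split $\delta_{h,\mathsf P}^k=\widehat\delta_{h,\mathsf P}^k+e_{h,\mathsf P}^k$ is not needed here, and that no property of $\mathsf P_h$ is used, are also consistent with the paper's argument.
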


\begin{proof}
Fix an episode
\(
k
\).
By Assumption~\ref{asmp:opt_appendix},
\begin{equation*}
V_1^{\star,\sigma}(s_1)-V_1^{\pi^k,\nu^k,\sigma}(s_1)
\le
\epsilon_{\mathrm{appr}}
+
V_{f^{(k)},1}(s_1)-V_1^{\pi^k,\nu^k,\sigma}(s_1).
\end{equation*}
Applying Lemma~\ref{lem:bellman_telescope} gives
\begin{equation*}
V_1^{\star,\sigma}(s_1)-V_1^{\pi^k,\nu^k,\sigma}(s_1)
\le
\epsilon_{\mathrm{appr}}
+
H\epsilon_{\mathrm{eq}}
+
\sum_{h=1}^H \EE_{\mu_h^k}\sbr{\delta_h^k}.
\end{equation*}
Using the pointwise decomposition
\(
\delta_h^k=\delta_{h,\mathsf P}^k+b_{h,\mathsf P}^k
\),
we obtain
\begin{equation*}
V_1^{\star,\sigma}(s_1)-V_1^{\pi^k,\nu^k,\sigma}(s_1)
\le
\epsilon_{\mathrm{appr}}
+
H\epsilon_{\mathrm{eq}}
+
+
\sum_{h=1}^H \EE_{\mu_h^k}\sbr{\delta_{h,\mathsf P}^k}
+
\sum_{h=1}^H \EE_{\mu_h^k}\sbr{|b_{h,\mathsf P}^k|}.
\end{equation*}
Summing over
\(
k=1,\ldots,K
\)
and applying Cauchy--Schwarz to the projected residual term yields
\begin{equation*}
\sum_{k=1}^K\sum_{h=1}^H \EE_{\mu_h^k}\sbr{\delta_{h,\mathsf P}^k}
\le
\sqrt{
KH
\sum_{k=1}^K\sum_{h=1}^H
\EE_{\mu_h^k}\sbr{\bigl(\delta_{h,\mathsf P}^k\bigr)^2}
}
\le
\sqrt{KH\,\Gamma_{\mathrm{res}}^{\mathsf P}(K)}
\end{equation*}
by \eqref{eq:projected_res_budget}. Equation \eqref{eq:projection_bias_budget} yields
\begin{equation*}
\sum_{k=1}^K\sum_{h=1}^H \EE_{\mu_h^k}\sbr{|b_{h,\mathsf P}^k|}
\le
HK\epsilon_{\mathrm{proj}}.
\end{equation*}
Combining the last two displays proves \eqref{eq:deterministic_projected_bound}.
\end{proof}

\begin{corollary}[Main-text regret bound from statistical inputs]
\label{cor:appendix_main}
Suppose in addition that there exist absolute constants $C_1,C_2>0$ such that
\begin{align*}
\Gamma_{\mathrm{res}}(K)
&\leq
C_1 H B_\phi^2(\sigma)\,
\dim_{\mathrm{BE},2}^{\mathrm{rob\text{-}MG}}(\Fcal,\Pi,1/\sqrt{K})\,
\log(KH/\delta), \\
\sum_{k=1}^K\sum_{h=1}^H \alpha_{h,k}
&\leq
C_2 H B_\phi(\sigma)\sqrt{K}\,\log(KH/\delta).
\end{align*}
Then Theorem~\ref{thm:main_regret} follows up to absolute constants and logarithmic factors.
\end{corollary}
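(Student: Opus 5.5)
The plan is to substitute the two assumed statistical bounds into the deterministic reduction of Theorem~\ref{thm:deterministic_reduction}, and then account for the exploiter-mix mechanism, which produces the prefactor $1/(1-p)$, the term $pHK$, and the additive $H$. Write $d:=\dim_{\mathrm{BE},2}^{\mathrm{rob\text{-}MG}}(\Fcal,\Pi,1/\sqrt K)$ and $L:=\log(KH/\delta)$.

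\textbf{Step 1: the on-policy core.} By \eqref{eq:deterministic_bound},
\begin{equation*}
\mathrm{Reg}(K)\le K\epsilon_{\mathrm{appr}}+HK\epsilon_{\mathrm{eq}}+\sqrt{KH\,\Gamma_{\mathrm{res}}(K)}+\sum_{k,h}\alpha_{h,k}+HK\epsilon_{\mathrm{dual}}.
\end{equation*}
Insert the first hypothesis of the corollary:
\begin{equation*}
\sqrt{KH\cdot C_1HB_\phi^2(\sigma)\,d\,L}=\sqrt{C_1}\,HB_\phi(\sigma)\sqrt{dK}\,\sqrt{L}.
\end{equation*}
The second hypothesis gives $\sum_{k,h}\alpha_{h,k}\le C_2HB_\phi(\sigma)\sqrt K\,L$. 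Since $d\ge1$, this is at most $C_2HB_\phi(\sigma)\sqrt{dK}\,L$. Both terms are therefore $\widetilde O(HB_\phi(\sigma)\sqrt{dK})$ once constants and logarithms are absorbed. The bound so far is the bracket of Theorem~\ref{thm:main_regret} without the $pHK$ term and without the $1/(1-p)$ prefactor.

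\textbf{Step 2: lifting to the exploiter-mix algorithm.} Let $Z_k\in\{0,1\}$ indicate that the target branch is executed in episode $k$, so $\Pr(Z_k=1\mid\text{history})=1-p$. The regret summand $V_1^{\star,\sigma}(s_1)-V_1^{\pi^k,\nu^k,\sigma}(s_1)$ depends only on the deployed target pair, which is measurable with respect to the history. In the statistical inputs, the residual and operator budgets are stated on target-branch occupancies. Summing those budgets over target episodes only, and converting $\sum_k Z_kX_k$ to $(1-p)\sum_kX_k$, therefore yields the factor $1/(1-p)$ in front of the bracket.

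\textbf{Step 3: the remaining terms.} The $pHK$ term accounts for the response-branch episodes. Each such episode has per-episode value gap at most $H$ because rewards lie in $[0,1]$, and there are about $pK$ of them. This term can also be kept as slack inside the bracket. The additive $H$ is the cost of the first episode ($k=1$), where Assumption~\ref{asmp:main_support} is only imposed for $k\ge2$ and the pooled data are empty. The concentration error from replacing $\sum_kZ_k$ by its conditional mean is of order $H\sqrt{K\log(1/\delta)}$ by Azuma's inequality. It is absorbed into the $\widetilde O$ term, and a union bound over the two statistical events adjusts $\delta$.

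\textbf{Main obstacle.} Step 2 is the delicate one: making rigorous that the regret, evaluated on the target pair, is controlled by budgets accrued under mixed data collection. The martingale conversion is where history measurability is needed, and the occupancy transfer of Assumption~\ref{asmp:main_support} is needed to relate the occupancies. I will first try to treat the conversion purely through conditional expectations, using only measurability. If that is insufficient, I will fall back on bounding target-branch occupancies by the mixture occupancy times $1/(1-p)$, which gives the same prefactor. Since the claim holds only up to absolute constants and logarithmic factors, this coarse accounting suffices.
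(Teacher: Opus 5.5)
Your Step~1 is exactly the paper's proof: substitute the two hypotheses into Theorem~\ref{thm:deterministic_reduction}, obtain $\sqrt{C_1}\,HB_\phi(\sigma)\sqrt{dK\log(KH/\delta)}$ for the residual term, and absorb the $\widetilde O(HB_\phi(\sigma)\sqrt K)$ operator term using $d\ge 1$. Steps~2--3 and the ``main obstacle'' are unnecessary here: the hypotheses are already stated on the target-pair occupancies $\mu_h^k$ used in the telescope, and since $1/(1-p)\ge 1$ and $H,\,pHK\ge 0$, the Step~1 bound already implies the main-text display (your Azuma detour only adds an avoidable $H\sqrt{K\log(1/\delta)}/(1-p)$ term, and in the paper those slack terms arise via Proposition~\ref{prop:mix_regret} only when the budgets are verified on executed occupancies).
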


\begin{proof}
Plug the two displayed bounds into Theorem~\ref{thm:deterministic_reduction}. The residual term
becomes
\begin{equation*}
\sqrt{KH\,\Gamma_{\mathrm{res}}(K)}
\leq
\widetilde O\!\left(
H B_\phi(\sigma)\sqrt{
\dim_{\mathrm{BE},2}^{\mathrm{rob\text{-}MG}}(\Fcal,\Pi,1/\sqrt{K})\,K
}
\right).
\end{equation*}
The operator-learning term
\(
\sum_{k,h}\alpha_{h,k}
\)
is of order $\widetilde O(HB_\phi(\sigma)\sqrt K)$, which is absorbed by the previous term because
\(
\dim_{\mathrm{BE},2}^{\mathrm{rob\text{-}MG}}(\Fcal,\Pi,1/\sqrt{K})\ge 1
\)
in any nontrivial problem. The remaining terms match the main-text statement.
\end{proof}

\paragraph{Iterate sample-complexity consequence.}
Under the assumptions of Corollary~\ref{cor:appendix_main}, there exists at least one iterate
$k\in[K]$ such that
\begin{equation*}
V_1^{\star,\sigma}(s_1) - V_1^{\pi^k,\nu^k,\sigma}(s_1)
\leq
\frac{\mathrm{Reg}(K)}{K}.
\end{equation*}
Consequently, if
\(
\epsilon_{\mathrm{appr}} + H\epsilon_{\mathrm{eq}} + H\epsilon_{\mathrm{dual}}
\)
is at most order $\epsilon$, then an iterate whose one-sided comparator difference is at most
order $\epsilon$ is guaranteed once
\begin{equation*}
K
=
\widetilde O\!\left(
\frac{H^2 B_\phi^2(\sigma)
\dim_{\mathrm{BE},2}^{\mathrm{rob\text{-}MG}}(\Fcal,\Pi,1/\sqrt{K})}
\epsilon^2
\right).
\end{equation*}
If every iterate had comparator difference strictly larger than $\mathrm{Reg}(K)/K$, then summing over all
$K$ iterates would contradict the definition of regret; the displayed sample-complexity condition is
obtained by requiring each term in Theorem~\ref{thm:main_regret} divided by $K$ to be at most order
$\epsilon$.

\section{Residual Complexity and Statistical Verification}
\label{sec:appendix_statistical}

\subsection{Finite-State Fail-State Support-Control Argument}

We now prove a concrete finite-state support-preservation statement. The proof only uses the
failure-state geometry and does not depend on Bellman--Eluder arguments.

\begin{proposition}[Support-preserving optimizer under TV balls]
\label{prop:appendix_failstate}
Assume a finite state space and fix a tuple $(h,s,a,b)$. Let $P^0\in\Delta(\Scal)$ be the nominal
next-state distribution and let $s_F\in\mathrm{supp}(P^0)$ be a failure state satisfying
$V(s_F)=\min_{x\in\Scal}V(x)=0$. Consider the optimization problem
\begin{equation*}
\inf_{P\in\Delta(\Scal):\, D_{\mathrm{TV}}(P,P^0)\leq \sigma} \EE_{x\sim P}\sbr{V(x)}.
\end{equation*}
Then there exists an optimal solution $P^\star$ such that
\(
\mathrm{supp}(P^\star)\subseteq \mathrm{supp}(P^0)
\).
\end{proposition}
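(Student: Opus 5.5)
Start from an arbitrary feasible distribution and modify it by moving mass onto the failure state. Since the state space is finite, the feasible set $\{P\in\Delta(\Scal): D_{\mathrm{TV}}(P,P^0)\le\sigma\}$ is a compact polytope and $P\mapsto \EE_{x\sim P}[V(x)]$ is linear, so an optimizer $\widetilde P$ exists. The plan is to map $\widetilde P$ to a distribution $P^\star$ that is supported in $\mathrm{supp}(P^0)$, is still feasible, and has objective value no larger. Then $P^\star$ is optimal as well.

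The construction is as follows. Let $O:=\Scal\setminus\mathrm{supp}(P^0)$ be the set of off-support states, and let $m:=\widetilde P(O)$ be the mass $\widetilde P$ places there. Define $P^\star$ by deleting that mass and adding it to the failure state:
\[
P^\star(x)=\widetilde P(x)\,\mathbf 1\{x\notin O\}+m\,\mathbf 1\{x=s_F\}.
\]
The hypothesis $s_F\in\mathrm{supp}(P^0)$ guarantees that $P^\star$ is a probability vector supported in $\mathrm{supp}(P^0)$. For the objective, the change is
\[
\EE_{P^\star}[V]-\EE_{\widetilde P}[V]=m\,V(s_F)-\sum_{x\in O}\widetilde P(x)V(x)\le 0,
\]
because $V(s_F)=0\le V(x)$ for every $x$ by the minimality of $V(s_F)$.

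The remaining step, and the only one requiring care, is feasibility. Write $D_{\mathrm{TV}}(P,Q)=\tfrac12\|P-Q\|_1$. Compared with $\widetilde P$, the only coordinates that change are those in $O$ and the coordinate $s_F$.
\begin{itemize}
\item On $O$, we have $P^0=0$. Each term $|\widetilde P(x)-0|=\widetilde P(x)$ drops to $0$, so the total contribution decreases by exactly $m$.
\item At $s_F$, the term $|\widetilde P(s_F)+m-P^0(s_F)|$ increases by at most $m$, by the triangle inequality.
\end{itemize}
Hence $\|P^\star-P^0\|_1\le\|\widetilde P-P^0\|_1\le 2\sigma$, so $P^\star$ is feasible. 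If $\widetilde P(s_F)<P^0(s_F)$, the increase at $s_F$ is strictly less than $m$ and the distance strictly shrinks, but this refinement is not needed. Since $P^\star$ is feasible and its value is at most that of the optimal $\widetilde P$, it is an optimal solution with $\mathrm{supp}(P^\star)\subseteq\mathrm{supp}(P^0)$.

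I do not expect a real obstacle. The only subtlety is the bookkeeping of the $\ell_1$ norm at $s_F$, which the triangle inequality handles. The argument also needs no attainment assumption beyond finiteness: even without one, the same map applied to any minimizing sequence would show that the infimum over support-preserving distributions equals the unrestricted infimum.
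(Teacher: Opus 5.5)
Your proposal is correct and matches the paper's proof: the same map that moves all off-support mass onto $s_F$, the same objective comparison via $V(s_F)=0\le V$, and the same $\ell_1$ bookkeeping with the triangle inequality at $s_F$. Your one addition, justifying that an optimizer exists because the feasible set is a compact polytope and the objective is linear, fills in a step the paper leaves implicit when it writes ``starting from an optimal $Q$.''
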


\begin{proof}
Let $Q$ be any feasible distribution. If
\(
\mathrm{supp}(Q)\subseteq \mathrm{supp}(P^0)
\),
there is nothing to prove. Otherwise define the outside-support mass
\begin{equation*}
m := \sum_{x\notin \mathrm{supp}(P^0)} Q(x) > 0.
\end{equation*}
Construct a new distribution $\widetilde Q$ by moving all outside-support mass to the failure
state:
\begin{equation*}
\widetilde Q(x) :=
\begin{cases}
0, & x\notin \mathrm{supp}(P^0), \\
Q(s_F)+m, & x=s_F, \\
Q(x), & \text{otherwise}.
\end{cases}
\end{equation*}
Clearly $\widetilde Q\in\Delta(\Scal)$ and
\(
\mathrm{supp}(\widetilde Q)\subseteq \mathrm{supp}(P^0)
\).

We first show that the objective does not increase. Since $V(s_F)=0$ and $V(x)\ge 0$ for all $x$,
\begin{align*}
\EE_{\widetilde Q}\sbr{V}
&=
\sum_{x\in \mathrm{supp}(P^0)} \widetilde Q(x)V(x)
=
\sum_{x\in \mathrm{supp}(P^0)} Q(x)V(x) + mV(s_F) \\
&=
\sum_{x\in \mathrm{supp}(P^0)} Q(x)V(x)
\leq
\sum_{x\in\Scal} Q(x)V(x)
=
\EE_Q\sbr{V}.
\end{align*}

We next show that the TV distance does not increase. Since $P^0(x)=0$ for
$x\notin \mathrm{supp}(P^0)$,
\begin{align*}
\|\widetilde Q-P^0\|_1
&=
|\widetilde Q(s_F)-P^0(s_F)|
+
\sum_{\substack{x\in \mathrm{supp}(P^0)\\x\neq s_F}}
|Q(x)-P^0(x)| \\
&=
|Q(s_F)+m-P^0(s_F)|
+
\sum_{\substack{x\in \mathrm{supp}(P^0)\\x\neq s_F}}
|Q(x)-P^0(x)| \\
&\leq
|Q(s_F)-P^0(s_F)| + m
+
\sum_{\substack{x\in \mathrm{supp}(P^0)\\x\neq s_F}}
|Q(x)-P^0(x)| \\
&=
\sum_{x\in \mathrm{supp}(P^0)} |Q(x)-P^0(x)|
+
\sum_{x\notin \mathrm{supp}(P^0)} |Q(x)-P^0(x)| \\
&=
\|Q-P^0\|_1.
\end{align*}
Therefore
\(
D_{\mathrm{TV}}(\widetilde Q,P^0)\leq D_{\mathrm{TV}}(Q,P^0)\leq \sigma
\),
so $\widetilde Q$ is feasible and has no larger objective value. Starting from an optimal $Q$ and
applying this transformation yields an optimal solution supported inside
$\mathrm{supp}(P^0)$.
\end{proof}

\begin{corollary}
Under the assumptions of Proposition~\ref{prop:appendix_failstate}, the robust Bellman target for
the failure-state normalized value function can be realized without leaving the nominal support.
\end{corollary}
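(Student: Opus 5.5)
The corollary is essentially a restatement of Proposition~\ref{prop:appendix_failstate} in Bellman-operator language, so the plan is to apply that proposition pointwise in $(h,s,a,b)$ with $V=V_{f,h+1}$.

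Fix a candidate $f\in\Fcal$, a stage $h$, and a tuple $(s,a,b)$. The robust Bellman target in \eqref{eq:robust_mg_backup} is
\[
(\Tcal_h^\sigma f_{h+1})(s,a,b)=r_h(s,a,b)+\inf_{P\in\Ucal_h(s,a,b)}\EE_{s'\sim P}\sbr{V_{f,h+1}(s')},
\]
with $\Ucal_h(s,a,b)$ the TV ball of radius $\sigma$ around $P^0=P_h^0(\cdot\mid s,a,b)$. By ``failure-state normalized'' I understand that $V_{f,h+1}\ge 0$ everywhere, and that a failure state $s_F\in\mathrm{supp}(P^0)$ exists with $V_{f,h+1}(s_F)=\min_x V_{f,h+1}(x)=0$.

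\emph{Step 1: the premises of the proposition hold.} If the normalization is stated at the level of $f_{h+1}(s_F,\cdot,\cdot)\equiv 0$ together with $f_{h+1}\ge 0$, I would first convert it into these properties of $V_{f,h+1}$. Each value $V_{f,h+1}(x)$ is the value of a matrix game whose entries are nonnegative, so it is nonnegative. At $s_F$ the payoff matrix is identically zero, so its value is $0$. Equivalently, apply Lemma~\ref{lem:minimax_lipschitz} comparing with the zero matrix.

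\emph{Step 2: apply the proposition.} Since $\Scal$ is finite, $\Ucal_h(s,a,b)$ is a compact subset of the simplex and the objective $P\mapsto\EE_P[V]$ is linear, hence continuous. The infimum is therefore attained. Proposition~\ref{prop:appendix_failstate} then gives a minimizer $P^\star$ with $\mathrm{supp}(P^\star)\subseteq\mathrm{supp}(P^0)$. Substituting $P^\star$ into the display above shows that the target is realized by a kernel that never leaves the nominal support.

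\emph{Step 3: measurable selector.} Doing this for every tuple yields a selector; in the finite setting measurability is automatic. This is exactly the support-preservation part of Assumption~\ref{asmp:main_support}.

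There is no real obstacle. The only delicate point is Step~1: making precise what ``normalized'' means and checking that the minimum of $V_{f,h+1}$ is achieved at a state inside the nominal support. If that fails at some tuple, the support-preserving realization is not guaranteed there, and the corollary must be read as applying only to tuples whose nominal support contains $s_F$.
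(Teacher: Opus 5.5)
Your proposal is correct and follows the paper's proof, which applies Proposition~\ref{prop:appendix_failstate} at each tuple with $V=V_{f,h+1}$ to obtain an optimal kernel supported inside the nominal support. Your additional steps are sound refinements of the same argument rather than a different route: deriving $V_{f,h+1}\ge 0$ and $V_{f,h+1}(s_F)=0$ from the matrix-game structure, and using compactness to show the infimum is attained (which the proposition's proof tacitly assumes when it starts from an optimal $Q$).
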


\begin{proof}
Apply Proposition~\ref{prop:appendix_failstate} at each state-action tuple to the continuation
value $V=V_{f,h+1}$. The proposition yields an optimal robust kernel whose support is contained in
that of the nominal kernel. Hence the robust Bellman target remains an in-support expectation.
\end{proof}

\subsection{A Robust MG-BE Counting Lemma}

We now formulate a stage-wise counting lemma that captures the exploration term. Because the
main text only uses an informal notion of distributional Eluder dimension, we state here a
square-loss version tailored to the residual analysis.

\begin{definition}[Square-loss distributional independence]
\label{def:de2}
Let $\Pscr$ be a class of real-valued functions on $\Scal\times\Acal\times\Bcal$, and let $\Pi_h$
be a family of probability measures on the same domain. For $\epsilon>0$, we say that a measure
$\mu\in\Pi_h$ is \emph{$\epsilon$-dependent} on
$\{\mu^1,\ldots,\mu^m\}\subseteq \Pi_h$ with respect to $\Pscr$ if
\begin{equation*}
\EE_{\mu}\sbr{\psi^2}
\leq
\epsilon^2 + \sum_{i=1}^m \EE_{\mu^i}\sbr{\psi^2}
\qquad \text{for all } \psi\in\Pscr.
\end{equation*}
Otherwise, $\mu$ is \emph{$\epsilon$-independent}. The square-loss distributional Eluder
dimension, denoted by $\dim_{\mathrm{DE},2}(\Pscr,\Pi_h,\epsilon)$, is the length of the longest
sequence in $\Pi_h$ whose every element is $\epsilon$-independent of its predecessors.
\end{definition}

\begin{definition}[Square-loss robust MG-BE dimension]
\label{def:robust_be2}
For the exact robust residual class
\(
\Pscr_h := (I-\Tcal_h^\sigma)\Fcal
\),
define
\begin{equation*}
\dim_{\mathrm{BE},2}^{\mathrm{rob\text{-}MG}}(\Fcal,\Pi,\epsilon)
:=
\max_{h\in[H]}
\dim_{\mathrm{DE},2}(\Pscr_h,\Pi_h,\epsilon).
\end{equation*}
\end{definition}

\begin{lemma}[Robust MG-BE counting lemma]
\label{lem:counting}
Fix a stage $h$ and let $\Pscr_h=(I-\Tcal_h^\sigma)\Fcal$. Suppose we are given a sequence
\(
\{(\mu_h^k,\psi_h^k)\}_{k=1}^K
\)
such that $\mu_h^k\in\Pi_h$, $\psi_h^k\in\Pscr_h$, and for every $k$,
\begin{align}
\sum_{j<k} \EE_{\mu_h^j}\sbr{(\psi_h^k)^2} &\leq \epsilon^2,
\label{eq:count_prev_small}
\\
\EE_{\mu_h^k}\sbr{(\psi_h^k)^2} &> 2\epsilon^2.
\label{eq:count_curr_large}
\end{align}
Then
\begin{equation*}
K \leq \dim_{\mathrm{DE},2}(\Pscr_h,\Pi_h,\epsilon).
\end{equation*}
\end{lemma}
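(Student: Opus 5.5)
The plan is to show directly that the given sequence $\mu_h^1,\ldots,\mu_h^K$ is itself an admissible sequence in Definition~\ref{def:de2}. That is, each $\mu_h^k$ should be $\epsilon$-independent of its predecessors $\{\mu_h^1,\ldots,\mu_h^{k-1}\}$ with respect to $\Pscr_h$. The bound $K\le\dim_{\mathrm{DE},2}(\Pscr_h,\Pi_h,\epsilon)$ is then immediate from the definition of the dimension as the length of the longest such sequence. No Eluder-style pigeonhole argument is needed here, because the hypotheses \eqref{eq:count_prev_small}--\eqref{eq:count_curr_large} already give, for each $k$, a witness $\psi_h^k\in\Pscr_h$ that separates $\mu_h^k$ from its history.

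Fix $k$ and take $\psi=\psi_h^k$. We have $\psi\in\Pscr_h$ and $\mu_h^k\in\Pi_h$. Combining the two hypotheses gives
\begin{equation*}
\EE_{\mu_h^k}\sbr{\psi^2}
> 2\epsilon^2
\ge \epsilon^2+\sum_{j<k}\EE_{\mu_h^j}\sbr{\psi^2}.
\end{equation*}
So the dependence inequality of Definition~\ref{def:de2} fails for this particular $\psi$. Hence $\mu_h^k$ is not $\epsilon$-dependent on its predecessors, i.e.\ it is $\epsilon$-independent.

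For $k=1$ the predecessor set is empty and the sum vanishes. Condition \eqref{eq:count_curr_large} then gives $\EE_{\mu_h^1}[(\psi_h^1)^2]>2\epsilon^2>\epsilon^2$, so the first element is also independent. Since this holds for every $k\in[K]$, the whole sequence has length $K$ and each element is $\epsilon$-independent of the earlier ones, so $K\le\dim_{\mathrm{DE},2}(\Pscr_h,\Pi_h,\epsilon)$.

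The only point needing care, and the likeliest obstacle, is a reading of the definition that requires the sequence elements to be distinct measures. If so, I would add one line. Suppose $\mu_h^k=\mu_h^j$ for some $j<k$. Then
\begin{equation*}
\EE_{\mu_h^k}\sbr{(\psi_h^k)^2}
=\EE_{\mu_h^j}\sbr{(\psi_h^k)^2}
\le\sum_{i<k}\EE_{\mu_h^i}\sbr{(\psi_h^k)^2}
\le\epsilon^2,
\end{equation*}
where the middle step uses nonnegativity of the squared terms and the last step is \eqref{eq:count_prev_small}. This contradicts \eqref{eq:count_curr_large}. So under the hypotheses the measures are automatically pairwise distinct, and either reading of the definition yields the claim.
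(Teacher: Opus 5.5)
Your proof is correct and matches the paper's argument: the two hypotheses combine to show that $\psi_h^k$ witnesses the $\epsilon$-independence of $\mu_h^k$ from its predecessors, so the given sequence is admissible in Definition~\ref{def:de2} and its length $K$ cannot exceed the dimension. Your extra checks for the case $k=1$ and for pairwise distinctness of the measures are not in the paper, but they are valid and harmless.
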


\begin{proof}
Fix $k\in[K]$. By \eqref{eq:count_prev_small} and \eqref{eq:count_curr_large},
\begin{equation*}
\EE_{\mu_h^k}\sbr{(\psi_h^k)^2}
>
\epsilon^2 + \sum_{j<k}\EE_{\mu_h^j}\sbr{(\psi_h^k)^2}.
\end{equation*}
Hence the witness $\psi_h^k\in\Pscr_h$ certifies that $\mu_h^k$ is $\epsilon$-independent of
$\{\mu_h^1,\ldots,\mu_h^{k-1}\}$ in the sense of
Definition~\ref{def:de2}. Since this holds for every $k$, the sequence
$\{\mu_h^1,\ldots,\mu_h^K\}$ is an $\epsilon$-independent sequence in $\Pi_h$. By the definition
of $\dim_{\mathrm{DE},2}(\Pscr_h,\Pi_h,\epsilon)$, its length cannot exceed that dimension.
\end{proof}

\begin{corollary}[Residual-budget corollary under backward validation]
\label{cor:counting_budget}
Fix a stage $h$ and suppose there exists $B_{\mathrm{res},h}<\infty$ such that
\(
|\psi_h^k(s,a,b)|\leq B_{\mathrm{res},h}
\)
for all $(s,a,b)$, all $k$, and all $\psi_h^k\in\Pscr_h$. Assume moreover that the sequence
$\{(\mu_h^k,\psi_h^k)\}_{k=1}^K$ satisfies the backward-validation condition
\eqref{eq:count_prev_small}. Then
\begin{equation*}
\sum_{k=1}^K \EE_{\mu_h^k}\sbr{(\psi_h^k)^2}
\leq
2K\epsilon^2
+
\dim_{\mathrm{DE},2}(\Pscr_h,\Pi_h,\epsilon)\, B_{\mathrm{res},h}^2.
\end{equation*}
Consequently,
\begin{equation*}
\sum_{h=1}^H \sum_{k=1}^K \EE_{\mu_h^k}\sbr{(\psi_h^k)^2}
\leq
2HK\epsilon^2
+
\sum_{h=1}^H
\dim_{\mathrm{DE},2}(\Pscr_h,\Pi_h,\epsilon)\, B_{\mathrm{res},h}^2.
\end{equation*}
\end{corollary}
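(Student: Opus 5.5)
The argument splits the episodes at stage $h$ into ``small'' ones, where $\EE_{\mu_h^k}[(\psi_h^k)^2]\le 2\epsilon^2$, and ``large'' ones, where the current mass exceeds $2\epsilon^2$. Each small episode contributes at most $2\epsilon^2$, so together they give at most $2K\epsilon^2$. Each large episode contributes at most $B_{\mathrm{res},h}^2$ because of the uniform envelope $|\psi_h^k|\le B_{\mathrm{res},h}$. It therefore remains to show that the number of large episodes is at most $\dim_{\mathrm{DE},2}(\Pscr_h,\Pi_h,\epsilon)$. The second display then follows by summing the first over $h\in[H]$.

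To count the large episodes, let $k_1<k_2<\cdots<k_L$ be their indices. We apply Lemma~\ref{lem:counting} to the subsequence $\{(\mu_h^{k_i},\psi_h^{k_i})\}_{i=1}^L$, and we must check both of its hypotheses for the subsequence.

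Hypothesis \eqref{eq:count_curr_large} holds by the definition of a large episode.

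Hypothesis \eqref{eq:count_prev_small} is the point that needs care, because we are passing to a subsequence. For each $i$, the predecessors of $k_i$ in the subsequence form a subset of $\{j<k_i\}$, and every summand $\EE_{\mu_h^j}[(\psi_h^{k_i})^2]$ is nonnegative. Hence
\[
\sum_{i'<i}\EE_{\mu_h^{k_{i'}}}\bigl[(\psi_h^{k_i})^2\bigr]\le\sum_{j<k_i}\EE_{\mu_h^j}\bigl[(\psi_h^{k_i})^2\bigr]\le\epsilon^2,
\]
where the last inequality is the assumed backward-validation condition for the full sequence.

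With both hypotheses verified, Lemma~\ref{lem:counting} gives $L\le\dim_{\mathrm{DE},2}(\Pscr_h,\Pi_h,\epsilon)$. One can see the same conclusion directly: each $\mu_h^{k_i}$ is $\epsilon$-independent of its retained predecessors, with $\psi_h^{k_i}$ as the witness.

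I expect no real obstacle beyond this restriction step. The only thing to confirm is that dropping the small episodes cannot break the previous-small condition, and nonnegativity of the summands settles that. Combining the two parts gives
\[
\sum_k\EE_{\mu_h^k}\bigl[(\psi_h^k)^2\bigr]\le 2K\epsilon^2+\dim_{\mathrm{DE},2}(\Pscr_h,\Pi_h,\epsilon)\,B_{\mathrm{res},h}^2,
\]
which is the first claimed bound.
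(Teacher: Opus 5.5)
Your proof is correct and takes the same approach as the paper: you split the episodes according to whether the current mass exceeds $2\epsilon^2$, bound the small ones by $2K\epsilon^2$ and the large ones by the envelope $B_{\mathrm{res},h}^2$, and count the large ones with Lemma~\ref{lem:counting}. You also check explicitly that the previous-small condition survives restriction to the subsequence of large episodes, using nonnegativity of the dropped summands; the paper's proof uses this step implicitly when it applies the lemma directly to the large-index set.
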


\begin{proof}
For a fixed stage $h$, split the episode indices into
\begin{align*}
\Lcal_h &:= \cbr{k\in[K]: \EE_{\mu_h^k}\sbr{(\psi_h^k)^2}>2\epsilon^2}, \\
\Scal_h &:= [K]\setminus \Lcal_h.
\end{align*}
By Lemma~\ref{lem:counting}, we have
\(
|\Lcal_h|\leq \dim_{\mathrm{DE},2}(\Pscr_h,\Pi_h,\epsilon)
\).
Therefore,
\begin{align*}
\sum_{k=1}^K \EE_{\mu_h^k}\sbr{(\psi_h^k)^2}
&=
\sum_{k\in\Scal_h} \EE_{\mu_h^k}\sbr{(\psi_h^k)^2}
+
\sum_{k\in\Lcal_h} \EE_{\mu_h^k}\sbr{(\psi_h^k)^2} \\
&\leq
2K\epsilon^2 + |\Lcal_h| B_{\mathrm{res},h}^2 \\
&\leq
2K\epsilon^2
+
\dim_{\mathrm{DE},2}(\Pscr_h,\Pi_h,\epsilon)\, B_{\mathrm{res},h}^2.
\end{align*}
Summing this inequality over $h\in[H]$ gives the second claim.
\end{proof}

Corollary~\ref{cor:counting_budget} reduces the residual-budget input
Assumption~\ref{asmp:res_appendix} to a backward-validation condition on the exact robust
residual witnesses. We now verify the two bridge terms
\eqref{eq:bv_dev} and \eqref{eq:bv_transfer} directly for the current backward-validation update.

\subsection{Verifying the Backward-Validation Bridge for the Current Update}

For a fixed stage $h$, define the plug-in residual witness class
\begin{equation*}
\Wcal_h
:=
\left\{
w_{h,f,g} \,:\,
\begin{aligned}
&f\in\Fcal,\quad g\in\Gcal_h, \\
&w_{h,f,g}(s,a,b)
=
f_h(s,a,b)-r_h(s,a,b) \\
&\qquad\qquad
+ \EE_{s'\sim P_h^0(\cdot\mid s,a,b)}
\sbr{\ell_\phi(f_{h+1};s,a,b,s';g)}
\end{aligned}
\right\}.
\end{equation*}
By construction, for every $k\geq 2$ and every $f\in\Fcal$,
\begin{equation*}
\widehat\delta_{h,k}(f)
=
w_{h,f,g_{h,f}^{(k)}}
\in
\Wcal_h.
\end{equation*}

\begin{assumption}[Bounded plug-in residual witnesses]
\label{asmp:bounded_witness}
For the fixed stage $h$, there exists $B_{W,h}<\infty$ such that
\begin{equation*}
\sup_{w\in\Wcal_h}\|w\|_\infty \leq B_{W,h}.
\end{equation*}
\end{assumption}

\begin{assumption}[Finite witness covering number]
\label{asmp:cover_witness}
For the chosen witness scale $\eta_W>0$,
\begin{equation*}
\mathcal N_W(\eta_W)
:=
\mathcal N\bigl(\Wcal_h,\|\cdot\|_\infty,\eta_W\bigr)
<\infty.
\end{equation*}
\end{assumption}

\begin{proposition}[Witness covering from stagewise value and dual covers]
\label{prop:witness_cover_from_value_dual}
For a fixed stage
\(
h
\),
define the stagewise sup-norm covering number
\begin{equation*}
\mathcal N_{F,h}^\infty(\eta_h)
:=
\mathcal N\bigl(\Fcal_h,\|\cdot\|_\infty,\eta_h\bigr).
\end{equation*}
Assume
\(
\mathcal N_{F,h}^\infty(\eta_h)<\infty
\)
and Assumptions~\ref{asmp:lipschitz_dual} and \ref{asmp:cover_dual} hold. Then
\begin{equation}
\mathcal N\!\left(
\Wcal_h,\|\cdot\|_\infty,\eta_h+L_{\ell,h}(\eta_F+\eta_G)
\right)
\leq
\mathcal N_{F,h}^\infty(\eta_h)\,
\mathcal N_F(\eta_F)\,
\mathcal N_G(\eta_G).
\label{eq:witness_cover_reduce}
\end{equation}
In particular, Assumption~\ref{asmp:cover_witness} follows once the stagewise value slice
\(
\Fcal_h
\),
the continuation slice
\(
\Fcal_{h+1}
\),
and the dual class
\(
\Gcal_h
\)
all admit finite covers.
\end{proposition}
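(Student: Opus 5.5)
We build an explicit product cover and verify its radius term by term. Fix an $\eta_h$-cover $\{\tilde f_h^{(i)}\}$ of the stagewise slice $\Fcal_h$ in sup norm, an $\eta_F$-cover of the continuation class used in Assumptions~\ref{asmp:lipschitz_dual}--\ref{asmp:cover_dual} (whose cardinality is $\mathcal N_F(\eta_F)$), and an $\eta_G$-cover $\{\tilde g^{(l)}\}$ of $\Gcal_h$. For each triple $(i,j,l)$ we define the candidate center
\[
\tilde w_{ijl}(s,a,b):=\tilde f_h^{(i)}(s,a,b)-r_h(s,a,b)+\EE_{s'\sim P_h^0(\cdot\mid s,a,b)}\sbr{\ell_\phi(\tilde f_{h+1}^{(j)};s,a,b,s';\tilde g^{(l)})}.
\]
These centers need not lie in $\Wcal_h$. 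This is harmless for external covering numbers. If internal covers are required, the usual factor-of-two radius inflation handles it, and I would remark on this rather than build it in.

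Now take an arbitrary $w_{h,f,g}\in\Wcal_h$. Choose $i$ with $\|f_h-\tilde f_h^{(i)}\|_\infty\le\eta_h$, $j$ with $\tilde f_{h+1}^{(j)}$ within $\eta_F$ of $f_{h+1}$, and $l$ with $\|g-\tilde g^{(l)}\|\le\eta_G$. The reward terms cancel exactly. The triangle inequality then gives
\[
\|w_{h,f,g}-\tilde w_{ijl}\|_\infty\le\eta_h+\sup_{s,a,b}\Bigl|\EE_{s'}\bigl[\ell_\phi(f_{h+1};\cdot,s';g)-\ell_\phi(\tilde f_{h+1}^{(j)};\cdot,s';\tilde g^{(l)})\bigr]\Bigr|.
\]
We move the absolute value inside the nominal expectation. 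We then split the loss difference by adding and subtracting $\ell_\phi(\tilde f_{h+1}^{(j)};\cdot,s';g)$, which separates it into a continuation-perturbation piece and a dual-perturbation piece. By the Lipschitz condition of Assumption~\ref{asmp:lipschitz_dual}, these pieces are bounded by $L_{\ell,h}\eta_F$ and $L_{\ell,h}\eta_G$ pointwise in $(s,a,b,s')$. The total radius is therefore $\eta_h+L_{\ell,h}(\eta_F+\eta_G)$. Since the triples range over a set of cardinality $\mathcal N_{F,h}^\infty(\eta_h)\mathcal N_F(\eta_F)\mathcal N_G(\eta_G)$, this proves \eqref{eq:witness_cover_reduce}. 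The final "in particular" claim follows by choosing any finite scales and setting $\eta_W$ equal to the resulting radius.

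The main obstacle is matching the continuation perturbation to the exact form of the Lipschitz hypothesis. The loss depends on $f_{h+1}$ only through $V_{f,h+1}(s')$. If Assumption~\ref{asmp:lipschitz_dual} is stated in terms of the continuation value, the matching is immediate. If instead it is stated in terms of $f_{h+1}$ in sup norm, I would first apply Lemma~\ref{lem:minimax_lipschitz} state by state: $|V_{f,h+1}(s')-V_{\tilde f,h+1}(s')|\le\|f_{h+1}-\tilde f_{h+1}\|_\infty\le\eta_F$. I would then invoke Lipschitzness of $\ell_\phi$ in its continuation argument. A related subtlety is that Lipschitzness in $g$ must hold uniformly over the region where $g_\eta$ is bounded away from zero, since $\phi^\star$ is evaluated at a ratio with $g_\eta$ in the denominator. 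I would take this uniformity as part of the cited assumption rather than reprove it.
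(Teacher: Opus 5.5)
Your proposal is correct and is essentially the paper's proof. In both, you take the product of the three nets, assemble witnesses from net points, cancel $r_h$ exactly, use Jensen to move the absolute value inside $\EE_{s'}$, and apply Assumption~\ref{asmp:lipschitz_dual} to get radius $\eta_h+L_{\ell,h}(\eta_F+\eta_G)$ with cardinality $\mathcal N_{F,h}^\infty(\eta_h)\,\mathcal N_F(\eta_F)\,\mathcal N_G(\eta_G)$.

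Two small remarks. First, the paper takes all three nets inside their classes. This is forced anyway for $\Fcal_{h+1}$ and $\Gcal_h$, since $d_{\Fcal,h+1}$, $d_{\Gcal,h}$ and the Lipschitz bound are only defined there. It then uses the product structure $\Fcal=\Fcal_1\times\cdots\times\Fcal_H$ to get centers in $\Wcal_h$, so no factor-of-two inflation is needed. This matters because Lemma~\ref{lem:bv_dev} uses a net inside $\Wcal_h$ together with the envelope $B_{W,h}$ on its points. Second, your ``main obstacle'' does not arise: Assumption~\ref{asmp:lipschitz_dual} is stated jointly in the same metrics $d_{\Fcal,h+1},d_{\Gcal,h}$ used for the covers, so neither Lemma~\ref{lem:minimax_lipschitz} nor an add-and-subtract split is required.
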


\begin{proof}
Let
\(
\{u^r\}_{r=1}^{M_h}\subseteq \Fcal_h
\)
be an
\(
\eta_h
\)-net under
\(
\|\cdot\|_\infty
\),
let
\(
\{f^m\}_{m=1}^{M_F}\subseteq \Fcal_{h+1}
\)
be an
\(
\eta_F
\)-net under
\(
d_{\Fcal,h+1}
\),
and let
\(
\{g^\ell\}_{\ell=1}^{M_G}\subseteq \Gcal_h
\)
be an
\(
\eta_G
\)-net under
\(
d_{\Gcal,h}
\).

Fix any witness
\(
w_{h,f,g}\in\Wcal_h
\).
Choose
\(
\bar u\in\{u^r\}_{r=1}^{M_h}
\),
\(
\bar f\in\{f^m\}_{m=1}^{M_F}
\),
and
\(
\bar g\in\{g^\ell\}_{\ell=1}^{M_G}
\)
such that
\begin{equation*}
\|f_h-\bar u\|_\infty\leq \eta_h,
\qquad
d_{\Fcal,h+1}(f_{h+1},\bar f)\leq \eta_F,
\qquad
d_{\Gcal,h}(g,\bar g)\leq \eta_G.
\end{equation*}
Define
\begin{equation*}
\bar w(s,a,b)
:=
\bar u(s,a,b)-r_h(s,a,b)
+
\EE_{s'\sim P_h^0(\cdot\mid s,a,b)}
\sbr{
\ell_\phi(\bar f;s,a,b,s';\bar g)
}.
\end{equation*}
By the stagewise product interpretation
\(
\Fcal=\{\Fcal_t\}_{t=1}^H
\)
from the preliminaries, the pair
\(
(\bar u,\bar f)
\)
can be completed to a candidate function sequence in
\(
\Fcal
\),
so
\(
\bar w\in\Wcal_h
\).

For every
\(
(s,a,b)
\),
\begin{align*}
\bigl|
w_{h,f,g}(s,a,b)-\bar w(s,a,b)
\bigr|
&\leq
\bigl|f_h(s,a,b)-\bar u(s,a,b)\bigr|
\\
&\quad
+
\left|
\EE_{s'}\sbr{
\ell_\phi(f_{h+1};s,a,b,s';g)
-
\ell_\phi(\bar f;s,a,b,s';\bar g)
}
\right|
\\
&\leq
\eta_h + L_{\ell,h}(\eta_F+\eta_G),
\end{align*}
where the last step uses Assumption~\ref{asmp:lipschitz_dual} and Jensen's inequality. Hence the
set of all such
\(
\bar w
\)
forms an
\(
\eta_h+L_{\ell,h}(\eta_F+\eta_G)
\)-net for
\(
\Wcal_h
\)
of size at most
\(
M_hM_FM_G
\),
which proves \eqref{eq:witness_cover_reduce}.
\end{proof}

For $k\geq 2$, write
\begin{equation*}
n_{h,j}:=|\Dcal_h^{(j)}|,
\qquad
S_{h,k}:=\sum_{j=1}^{k-1}\frac{1}{n_{h,j}},
\qquad
N_{h,<k}:=\sum_{j=1}^{k-1} n_{h,j},
\qquad
n_{h,\min}^{(k)}:=\min_{1\leq j\leq k-1} n_{h,j}.
\end{equation*}
We also keep the shorthand
\begin{align*}
\mathrm{BV}_{h,k}^{\mathrm{plug}}(f)
&:=
\sum_{j=1}^{k-1}
\EE_{\mu_h^j}\sbr{\bigl(\widehat\delta_{h,k}(f)\bigr)^2},
\\
\mathrm{BV}_{h,k}^{\mathrm{exact}}(f)
&:=
\sum_{j=1}^{k-1}
\EE_{\mu_h^j}\sbr{\bigl(\delta_h(f)\bigr)^2},
\\
\widehat{\mathrm{BV}}_{h,k}(f)
&:=
\sum_{j=1}^{k-1}
\frac{1}{n_{h,j}}
\sum_{(s,a,b,r,s')\in\Dcal_h^{(j)}}
\sbr{
\widehat\delta_{h,k}(f;s,a,b)
}^2.
\end{align*}

\begin{lemma}[Uniform backward-validation deviation]
\label{lem:bv_dev}
Under Assumptions~\ref{asmp:bounded_witness} and \ref{asmp:cover_witness}, for every
$k\geq 2$ and every $\delta\in(0,1)$, with probability at least $1-\delta$,
\begin{equation}
\sup_{f\in\Fcal}
\bigl|
\widehat{\mathrm{BV}}_{h,k}(f)-\mathrm{BV}_{h,k}^{\mathrm{plug}}(f)
\bigr|
\leq
\xi_{h,k}(\delta;\eta_W),
\label{eq:bv_dev}
\end{equation}
where
\begin{equation*}
\xi_{h,k}(\delta;\eta_W)
:=
B_{W,h}^2
\sqrt{
2S_{h,k}\log\!\bigl(2\mathcal N_W(\eta_W)/\delta\bigr)
}
+
2B_{W,h}(k-1)\eta_W.
\end{equation*}
\end{lemma}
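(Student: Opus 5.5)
The plan is a standard union bound over a finite sup-norm cover of the witness class, combined with a bounded-differences concentration inequality for the weighted empirical average.

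\textbf{Step 1: rewrite both statistics through the witness class.} For each fixed $f$, the plug-in residual $\widehat\delta_{h,k}(f)$ equals $w_{h,f,g_{h,f}^{(k)}}\in\Wcal_h$. Hence
\[
\sup_{f\in\Fcal}\bigl|\widehat{\mathrm{BV}}_{h,k}(f)-\mathrm{BV}^{\mathrm{plug}}_{h,k}(f)\bigr|
\le
\sup_{w\in\Wcal_h}\bigl|Z_k(w)\bigr|,
\qquad
Z_k(w):=\sum_{j<k}\Bigl(\frac{1}{n_{h,j}}\sum_{i\in\Dcal_h^{(j)}} w(s_i,a_i,b_i)^2-\EE_{\mu_h^j}[w^2]\Bigr).
\]
This removes the data dependence of the selected dual $g_{h,f}^{(k)}$ from the argument, because we take a supremum over the whole class. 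The price is a complexity term, and this is exactly why the bound is stated in terms of $\mathcal N_W$.

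\textbf{Step 2: discretize.} Let $\{\bar w^m\}$ be an $\eta_W$-net of $\Wcal_h$ in $\|\cdot\|_\infty$, of size $\mathcal N_W(\eta_W)$. Take the net inside $\Wcal_h$, or note that clipping preserves the envelope, so that $\|\bar w^m\|_\infty\le B_{W,h}$. For $w$ within $\eta_W$ of $\bar w$, pointwise $|w^2-\bar w^2|\le |w-\bar w|\,|w+\bar w|\le 2B_{W,h}\eta_W$. Each of the $k-1$ batch averages and each of the $k-1$ population terms therefore moves by at most $2B_{W,h}\eta_W$. Since the two errors enter with opposite signs inside each batch difference, a careful pairing gives
\[
|Z_k(w)-Z_k(\bar w)|\le 2B_{W,h}(k-1)\eta_W,
\]
which is the second term of $\xi_{h,k}$. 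If the pairing does not hold, a factor of two is absorbed.

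\textbf{Step 3: concentration at each net point.} Fix a net point $\bar w$. The variable $Z_k(\bar w)$ is a sum over samples $i$ in batch $j$ of $\frac{1}{n_{h,j}}\bigl(\bar w^2(x_i)-\EE_{\mu_h^j}\bar w^2\bigr)$. Each summand has range $B_{W,h}^2/n_{h,j}$, since $\bar w^2\in[0,B_{W,h}^2]$. The sum of squared ranges is
\[
\sum_j n_{h,j}\cdot\frac{B_{W,h}^4}{n_{h,j}^2}=B_{W,h}^4 S_{h,k}.
\]
Hoeffding–Azuma then gives
\[
\Pr\bigl(|Z_k(\bar w)|>t\bigr)\le 2\exp\!\bigl(-2t^2/(B_{W,h}^4S_{h,k})\bigr).
\]
This yields $t=B_{W,h}^2\sqrt{S_{h,k}\log(2\mathcal N_W/\delta)/2}$, which is smaller than the stated $B_{W,h}^2\sqrt{2S_{h,k}\log(\cdot)}$, so there is slack. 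A union bound over the $\mathcal N_W(\eta_W)$ net points and Step 2 then finish the proof.

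\textbf{Main obstacle: the independence structure.} Online data are adaptively collected, so the samples are not i.i.d. with marginal $\mu_h^j$. I would use the idealization stated in the setup: batch $j$ has state-action marginal $\mu_h^j$, and conditional on the history before episode $j$, its samples are drawn from a measure determined by that history. The centered terms then form a martingale difference sequence with respect to the episode filtration, and the bounded-difference form of Azuma applies with the same sum of squared ranges. The net points are fixed and data-independent, so no further adaptivity issue arises. The $\widehat\delta_{h,k}$ witness itself is data-dependent, but it is handled by the uniform supremum from Step 1.
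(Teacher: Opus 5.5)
Your route is the paper's: take an internal sup-norm $\eta_W$-net of $\Wcal_h$, apply a per-sample martingale Azuma--Hoeffding bound at each net point with $\sum c^2=B_{W,h}^4S_{h,k}$, take a union bound, and discretize via $|w^2-\bar w^2|\le 2B_{W,h}\eta_W$. The data-dependence of $g_{h,f}^{(k)}$ is removed by taking the supremum over $\Wcal_h$. Your range-based Hoeffding--Azuma is legitimate, since the interval endpoints $-\EE_{\mu_h^j}[\bar w^2]/n_{h,j}$ and $(B_{W,h}^2-\EE_{\mu_h^j}[\bar w^2])/n_{h,j}$ are predictable, and it is tighter than the paper's magnitude-based bound. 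Your clipping remark also correctly handles nets whose centers lie outside $\Wcal_h$.

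The step that fails is the ``careful pairing'' in Step 2. In batch $j$ the discretization error is $\frac{1}{n_{h,j}}\sum_i d(x_i)-\EE_{\mu_h^j}[d]$ with $d=w^2-\bar w^2$, and $d$ can change sign. Suppose $\bar w=B_{W,h}-\eta_W$ and $w=B_{W,h}$ at one point, and the roles are swapped at another. Then $d=\pm(2B_{W,h}\eta_W-\eta_W^2)$, and one batch can contribute nearly $4B_{W,h}\eta_W$ when the sample lands where $d>0$ but $\mu_h^j$ sits where $d<0$. Subtracting the population term from the empirical term does not cancel errors of opposite sign; it adds them. The factor also cannot be ``absorbed'': your Hoeffding slack sits in the $\eta_W$-free term, so it cannot dominate a term proportional to $\eta_W$ for all $\eta_W$. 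The covering argument therefore gives $4B_{W,h}(k-1)\eta_W$, not $2B_{W,h}(k-1)\eta_W$.

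The paper's proof has the same slip. It combines two displays, each of size $2B_{W,h}(k-1)\eta_W$, with the net bound, which by the triangle inequality yields $4B_{W,h}(k-1)\eta_W$. So the defect is in the stated constant of $\xi_{h,k}$, not in your approach. It is harmless downstream, because $\eta_W$ is always chosen so that this term is negligible.
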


\begin{proof}
Let
\(
\{w^1,\ldots,w^M\}\subseteq \Wcal_h
\)
be an $\eta_W$-net under the sup norm, where
\(
M=\mathcal N_W(\eta_W)
\).
Fix one net witness $w^m$. For every batch $j<k$ and sample index $i\in[n_{h,j}]$, define
\begin{equation*}
Y_{j,i}^{(m)}
:=
\frac{
\bigl(w^m(s_{j,i},a_{j,i},b_{j,i})\bigr)^2
- \EE_{\mu_h^j}\sbr{(w^m)^2}
}{n_{h,j}}.
\end{equation*}
Conditioned on the history before the $(j,i)$-th stage-$h$ sample is revealed,
\(
Y_{j,i}^{(m)}
\)
has conditional mean zero, because that sample is drawn from the stage-$h$ occupancy law
$\mu_h^j$. By Assumption~\ref{asmp:bounded_witness},
\begin{equation*}
\bigl|Y_{j,i}^{(m)}\bigr|
\leq
\frac{B_{W,h}^2}{n_{h,j}}.
\end{equation*}
Hence Azuma--Hoeffding for bounded martingale differences implies
\begin{equation*}
\PP\!\left(
\left|
\sum_{j=1}^{k-1}\sum_{i=1}^{n_{h,j}} Y_{j,i}^{(m)}
\right| > t
\right)
\leq
2\exp\!\left(
-\frac{t^2}{2B_{W,h}^4 S_{h,k}}
\right).
\end{equation*}
Applying a union bound over the $M$ net witnesses gives that with probability at least $1-\delta$,
\begin{equation}
\max_{m\in[M]}
\left|
\sum_{j=1}^{k-1}\frac{1}{n_{h,j}}
\sum_{i=1}^{n_{h,j}}
\bigl(w^m(s_{j,i},a_{j,i},b_{j,i})\bigr)^2
-
\sum_{j=1}^{k-1}\EE_{\mu_h^j}\sbr{(w^m)^2}
\right|
\leq
B_{W,h}^2
\sqrt{
2S_{h,k}\log\!\bigl(2M/\delta\bigr)
}.
\label{eq:bv_net_dev}
\end{equation}

Now fix an arbitrary witness $w\in\Wcal_h$, and choose a net point $\bar w$ such that
\(
\|w-\bar w\|_\infty\leq \eta_W
\).
Since
\(
|w+\bar w|\leq 2B_{W,h}
\),
we have
\begin{equation*}
\|w^2-\bar w^2\|_\infty
\leq
\|w-\bar w\|_\infty \|w+\bar w\|_\infty
\leq
2B_{W,h}\eta_W.
\end{equation*}
Therefore
\begin{align*}
\left|
\sum_{j=1}^{k-1}\frac{1}{n_{h,j}}
\sum_{i=1}^{n_{h,j}} w(s_{j,i},a_{j,i},b_{j,i})^2
-
\sum_{j=1}^{k-1}\frac{1}{n_{h,j}}
\sum_{i=1}^{n_{h,j}} \bar w(s_{j,i},a_{j,i},b_{j,i})^2
\right|
&\leq
2B_{W,h}(k-1)\eta_W,
\\
\left|
\sum_{j=1}^{k-1}\EE_{\mu_h^j}\sbr{w^2}
-
\sum_{j=1}^{k-1}\EE_{\mu_h^j}\sbr{\bar w^2}
\right|
&\leq
2B_{W,h}(k-1)\eta_W.
\end{align*}
Combining these two inequalities with \eqref{eq:bv_net_dev} yields
\begin{equation*}
\sup_{w\in\Wcal_h}
\left|
\sum_{j=1}^{k-1}\frac{1}{n_{h,j}}
\sum_{i=1}^{n_{h,j}} w(s_{j,i},a_{j,i},b_{j,i})^2
-
\sum_{j=1}^{k-1}\EE_{\mu_h^j}\sbr{w^2}
\right|
\leq
\xi_{h,k}(\delta;\eta_W).
\end{equation*}
Since every selected witness $\widehat\delta_{h,k}(f)$ belongs to $\Wcal_h$, this proves
\eqref{eq:bv_dev}.
\end{proof}

We next introduce the dual setup shared by the pooled backward-validation analysis and the
on-policy operator-learning analysis. For a generic probability measure $\nu$ on
$\Scal\times\Acal\times\Bcal$, define
\begin{equation*}
\Lcal_h^\nu(g;f)
:=
\EE_{(s,a,b)\sim \nu}
\EE_{s'\sim P_h^0(\cdot\mid s,a,b)}
\sbr{
\ell_\phi(f_{h+1};s,a,b,s';g)
}.
\end{equation*}
We also define an ambient dual class $\bar{\Gcal}_h$ large enough that the exact robust Bellman
operator admits the pointwise representation
\begin{equation}
(\Tcal_h^\sigma f_{h+1})(s,a,b)
=
r_h(s,a,b)
-
\inf_{\bar g\in\bar{\Gcal}_h}
\EE_{s'\sim P_h^0(\cdot\mid s,a,b)}
\sbr{\ell_\phi(f_{h+1};s,a,b,s';\bar g)}.
\label{eq:ambient_dual_rep}
\end{equation}

\begin{assumption}[Bounded dual loss]
\label{asmp:bounded_dual}
For the fixed stage $h$,
\begin{equation*}
\bigl|\ell_\phi(f_{h+1};s,a,b,s';g)\bigr| \leq B_\phi(\sigma)
\end{equation*}
for every $(f,g)\in\Fcal_{h+1}\times\Gcal_h$ and every sample $(s,a,b,s')$.
\end{assumption}

\begin{assumption}[Pointwise Lipschitz dual loss]
\label{asmp:lipschitz_dual}
There exist metrics $d_{\Fcal,h+1}$ on $\Fcal_{h+1}$ and $d_{\Gcal,h}$ on $\Gcal_h$, together
with a constant $L_{\ell,h}\geq 0$, such that for every
\(
f,\tilde f\in\Fcal_{h+1}
\),
\(
g,\tilde g\in\Gcal_h
\),
and every sample $(s,a,b,s')$,
\begin{equation*}
\bigl|
\ell_\phi(f_{h+1};s,a,b,s';g)
- \ell_\phi(\tilde f_{h+1};s,a,b,s';\tilde g)
\bigr|
\leq
L_{\ell,h}\Bigl(
d_{\Fcal,h+1}(f,\tilde f) + d_{\Gcal,h}(g,\tilde g)
\Bigr).
\end{equation*}
\end{assumption}

\begin{assumption}[Finite covering numbers at the working scales]
\label{asmp:cover_dual}
For the chosen scales $\eta_F,\eta_G>0$,
\begin{align*}
\mathcal N_F(\eta_F)
&:=
\mathcal N\bigl(\Fcal_{h+1},d_{\Fcal,h+1},\eta_F\bigr)
< \infty, \\
\mathcal N_G(\eta_G)
&:=
\mathcal N\bigl(\Gcal_h,d_{\Gcal,h},\eta_G\bigr)
< \infty.
\end{align*}
\end{assumption}

\begin{assumption}[Dual approximation error]
\label{asmp:dual_gap_appendix}
For every $f\in\Fcal_{h+1}$ and every probability measure $\nu$ on
$\Scal\times\Acal\times\Bcal$,
\begin{equation*}
\inf_{g\in\Gcal_h}\Lcal_h^\nu(g;f)
\leq
\inf_{\bar g\in\bar{\Gcal}_h}\Lcal_h^\nu(\bar g;f)
+
\epsilon_{\mathrm{dual}}.
\end{equation*}
\end{assumption}

We next close the operator-transfer term generated by the same pooled dual refit used by the
algorithm. Define the weighted past-occupancy mixture
\begin{equation*}
\bar\mu_{h,k}
:=
\sum_{j=1}^{k-1} \lambda_{h,j}^{(k)} \mu_h^j,
\qquad
\lambda_{h,j}^{(k)}:=\frac{n_{h,j}}{N_{h,<k}}.
\end{equation*}
For any $(f,g)\in\Fcal_{h+1}\times\Gcal_h$, define the pooled population dual loss
\begin{align*}
\bar{\Lcal}_{h,k}(g;f)
&:=
\sum_{j=1}^{k-1}\lambda_{h,j}^{(k)}
\Lcal_h^{\mu_h^j}(g;f)
\\
&=
\EE_{(s,a,b)\sim\bar\mu_{h,k}}
\EE_{s'\sim P_h^0(\cdot\mid s,a,b)}
\sbr{
\ell_\phi(f_{h+1};s,a,b,s';g)
}.
\end{align*}
The corresponding empirical pooled loss is
\begin{equation*}
\widehat{\Lcal}_{h,k}^{\mathrm{pool}}(g;f)
:=
\frac{1}{N_{h,<k}}
\sum_{j=1}^{k-1}
\sum_{(s,a,b,s')\in\Dcal_h^{(j)}}
\ell_\phi(f_{h+1};s,a,b,s';g).
\end{equation*}
Let
\(
g_{h,f}^{(k)}\in\argmin_{g\in\Gcal_h}\widehat{\Lcal}_{h,k}^{\mathrm{pool}}(g;f)
\)
denote the pooled dual fit used by the current algorithm.

\begin{lemma}[Pooled dual concentration for the current update]
\label{lem:pooled_dual_uniform}
Under Assumptions~\ref{asmp:bounded_dual}--\ref{asmp:cover_dual}, for every $k\geq 2$ and every
$\delta\in(0,1)$, with probability at least $1-\delta$,
\begin{equation}
\sup_{f\in\Fcal_{h+1},\, g\in\Gcal_h}
\bigl|
\widehat{\Lcal}_{h,k}^{\mathrm{pool}}(g;f) - \bar{\Lcal}_{h,k}(g;f)
\bigr|
\leq
\bar\Delta_{h,k}^{\mathrm{cov}}(\delta;\eta_F,\eta_G),
\label{eq:pooled_dual_dev}
\end{equation}
where
\begin{equation*}
\bar\Delta_{h,k}^{\mathrm{cov}}(\delta;\eta_F,\eta_G)
:=
B_\phi(\sigma)\sqrt{
\frac{2\log\!\bigl(2\mathcal N_F(\eta_F)\mathcal N_G(\eta_G)/\delta\bigr)}
{N_{h,<k}}
}
+
2L_{\ell,h}(\eta_F+\eta_G).
\end{equation*}
\end{lemma}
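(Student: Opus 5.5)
The plan mirrors the net-plus-martingale argument of Lemma~\ref{lem:bv_dev}, applied now to the pooled dual loss. First I fix an $\eta_F$-net $\{f^m\}_{m=1}^{M_F}$ of $\Fcal_{h+1}$ under $d_{\Fcal,h+1}$ and an $\eta_G$-net $\{g^\ell\}_{\ell=1}^{M_G}$ of $\Gcal_h$ under $d_{\Gcal,h}$. By Assumption~\ref{asmp:cover_dual} we have $M_F=\mathcal N_F(\eta_F)$ and $M_G=\mathcal N_G(\eta_G)$. For a fixed net pair $(f^m,g^\ell)$, index the pooled stage-$h$ samples by $(j,i)$ with $j<k$ and $i\in[n_{h,j}]$, and define
\[
Z_{j,i}^{(m,\ell)}:=\frac{1}{N_{h,<k}}\Bigl(\ell_\phi(f^m;s_{j,i},a_{j,i},b_{j,i},s_{j,i}';g^\ell)-\Lcal_h^{\mu_h^j}(g^\ell;f^m)\Bigr).
\]
Conditionally on the history before sample $(j,i)$, the triple $(s,a,b)$ is drawn from $\mu_h^j$ and $s'$ from $P_h^0$. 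This is the on-policy idealization stated in the setup, and under it each $Z_{j,i}^{(m,\ell)}$ is a martingale difference. The sum $\sum_{j,i}Z_{j,i}^{(m,\ell)}$ equals $\widehat{\Lcal}_{h,k}^{\mathrm{pool}}(g^\ell;f^m)-\bar{\Lcal}_{h,k}(g^\ell;f^m)$, because the weights satisfy $\lambda_{h,j}^{(k)}=n_{h,j}/N_{h,<k}$.

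Next I bound the increments. Assumption~\ref{asmp:bounded_dual} gives $|Z_{j,i}^{(m,\ell)}|\le 2B_\phi(\sigma)/N_{h,<k}$, and there are $N_{h,<k}$ terms in the sum. To land exactly on the constant $B_\phi(\sigma)\sqrt{2\log(\cdot)/N_{h,<k}}$ stated in the lemma, I will use the sharper range form of Azuma--Hoeffding: conditionally, each loss lies in an interval of length $2B_\phi(\sigma)$. This gives a tail bound of $2\exp\bigl(-N_{h,<k}t^2/(2B_\phi^2(\sigma))\bigr)$. A union bound over the $M_FM_G$ net pairs at level $\delta$ then yields, with probability at least $1-\delta$, the first term of $\bar\Delta^{\mathrm{cov}}_{h,k}$ uniformly over the net.

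Finally I extend from the net to arbitrary $(f,g)$. Pick net points with $d_{\Fcal,h+1}(f,f^m)\le\eta_F$ and $d_{\Gcal,h}(g,g^\ell)\le\eta_G$. Assumption~\ref{asmp:lipschitz_dual} holds pointwise in the sample, so the empirical pooled losses differ by at most $L_{\ell,h}(\eta_F+\eta_G)$. Averaging the same pointwise bound shows that the population pooled losses also differ by at most $L_{\ell,h}(\eta_F+\eta_G)$. The triangle inequality then adds $2L_{\ell,h}(\eta_F+\eta_G)$, which proves \eqref{eq:pooled_dual_dev}.

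The main obstacle is the martingale step. The batch sizes $n_{h,j}$, and hence $N_{h,<k}$ and the weights, could themselves be history-dependent, and then the normalization is not predictable. I would handle this as the paper does elsewhere: treat the batch schedule as fixed, for example one stage-$h$ sample per episode, or at least determined in advance, so that $N_{h,<k}$ is deterministic for each fixed $k$. If this is not acceptable, a union bound over the possible values of $N_{h,<k}\le$ total sample count would be needed, at the cost of an extra logarithmic factor.
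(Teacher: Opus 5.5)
Your proposal is correct and follows essentially the same route as the paper: nets for $\Fcal_{h+1}$ and $\Gcal_h$, Azuma--Hoeffding on the flattened pooled martingale-difference sequence for each net pair, a union bound over the $\mathcal N_F(\eta_F)\mathcal N_G(\eta_G)$ pairs, and a pointwise Lipschitz extension that adds $2L_{\ell,h}(\eta_F+\eta_G)$. Two of your refinements make the argument more precise than the paper's. The range form of Azuma--Hoeffding (interval length $2B_\phi(\sigma)$) is exactly what the stated constant requires, whereas the paper cites only $|X_t|\le 2B_\phi(\sigma)$, which in the bounded-difference form gives $8B_\phi^2(\sigma)$ rather than $2B_\phi^2(\sigma)$ in the exponent. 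Your remark that $N_{h,<k}$ must be treated as non-random also makes explicit an assumption the paper leaves implicit.
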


\begin{proof}
Let
\(
\{f^1,\ldots,f^{M_F}\}\subseteq \Fcal_{h+1}
\)
be an $\eta_F$-net and
\(
\{g^1,\ldots,g^{M_G}\}\subseteq \Gcal_h
\)
be an $\eta_G$-net, where
\(
M_F=\mathcal N_F(\eta_F)
\)
and
\(
M_G=\mathcal N_G(\eta_G)
\).

Fix a net pair $(f^r,g^m)$. Flatten the pooled stage-$h$ samples
\(
\Dcal_h^{(1)},\ldots,\Dcal_h^{(k-1)}
\)
into an arbitrary sequence indexed by $t\in[N_{h,<k}]$. For the $t$-th sample, define
\begin{equation*}
X_t^{(r,m)}
:=
\ell_\phi(f_{h+1}^r; s_t,a_t,b_t,s_t';g^m)
-
\EE\!\left[
\ell_\phi(f_{h+1}^r; s_t,a_t,b_t,s_t';g^m)
\mid \mathcal F_{t-1}
\right],
\end{equation*}
where $\mathcal F_{t-1}$ is the history before revealing that sample. By construction,
\(
\{X_t^{(r,m)}\}_{t=1}^{N_{h,<k}}
\)
is a martingale-difference sequence. Assumption~\ref{asmp:bounded_dual} gives
\(
|X_t^{(r,m)}|\leq 2B_\phi(\sigma)
\),
and the conditional mean term equals the corresponding summand of $\bar{\Lcal}_{h,k}(g^m;f^r)$.
Azuma--Hoeffding therefore yields
\begin{equation*}
\PP\!\left(
\bigl|
\widehat{\Lcal}_{h,k}^{\mathrm{pool}}(g^m;f^r)-\bar{\Lcal}_{h,k}(g^m;f^r)
\bigr| > t
\right)
\leq
2\exp\!\left(
-\frac{N_{h,<k} t^2}{2B_\phi^2(\sigma)}
\right).
\end{equation*}
Applying a union bound over all $M_FM_G$ net pairs gives that with probability at least $1-\delta$,
\begin{equation}
\max_{r\in[M_F],\,m\in[M_G]}
\bigl|
\widehat{\Lcal}_{h,k}^{\mathrm{pool}}(g^m;f^r)-\bar{\Lcal}_{h,k}(g^m;f^r)
\bigr|
\leq
B_\phi(\sigma)\sqrt{
\frac{2\log\!\bigl(2M_FM_G/\delta\bigr)}
{N_{h,<k}}
}.
\label{eq:pooled_dual_net_dev}
\end{equation}

The Lipschitz extension from the finite net to the full class is standard. Specifically, for
arbitrary $(f,g)$ and nearest net points
$(\bar f,\bar g)$, Assumption~\ref{asmp:lipschitz_dual} implies
\begin{align*}
\bigl|
\widehat{\Lcal}_{h,k}^{\mathrm{pool}}(g;f)-\widehat{\Lcal}_{h,k}^{\mathrm{pool}}(\bar g;\bar f)
\bigr|
&\leq
L_{\ell,h}(\eta_F+\eta_G),
\\
\bigl|
\bar{\Lcal}_{h,k}(g;f)-\bar{\Lcal}_{h,k}(\bar g;\bar f)
\bigr|
&\leq
L_{\ell,h}(\eta_F+\eta_G).
\end{align*}
Combining these two inequalities with \eqref{eq:pooled_dual_net_dev} proves
\eqref{eq:pooled_dual_dev}.
\end{proof}

\begin{corollary}[Pooled ERM excess risk]
\label{cor:pooled_erm_excess}
Under Assumptions~\ref{asmp:bounded_dual}--\ref{asmp:cover_dual}, with probability at least
$1-\delta$, simultaneously for all $f\in\Fcal_{h+1}$,
\begin{equation}
\bar{\Lcal}_{h,k}(g_{h,f}^{(k)};f)
-
\inf_{g\in\Gcal_h}\bar{\Lcal}_{h,k}(g;f)
\leq
2\bar\Delta_{h,k}^{\mathrm{cov}}(\delta;\eta_F,\eta_G).
\label{eq:pooled_erm_excess}
\end{equation}
\end{corollary}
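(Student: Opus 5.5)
This is the standard uniform-convergence-to-ERM argument, built on Lemma~\ref{lem:pooled_dual_uniform}. We work on the event $\mathcal E$ of probability at least $1-\delta$ on which \eqref{eq:pooled_dual_dev} holds. Because that bound is uniform over the pair $(f,g)\in\Fcal_{h+1}\times\Gcal_h$, everything we derive on $\mathcal E$ holds simultaneously for all $f$, with no further union bound. Write $\bar\Delta:=\bar\Delta_{h,k}^{\mathrm{cov}}(\delta;\eta_F,\eta_G)$.

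Fix $f\in\Fcal_{h+1}$ and an arbitrary comparator $g\in\Gcal_h$. The excess risk of the pooled fit against $g$ splits into three pieces:
\begin{equation*}
\bar{\Lcal}_{h,k}(g_{h,f}^{(k)};f)-\bar{\Lcal}_{h,k}(g;f)
=
\bigl[\bar{\Lcal}_{h,k}(g_{h,f}^{(k)};f)-\widehat{\Lcal}_{h,k}^{\mathrm{pool}}(g_{h,f}^{(k)};f)\bigr]
+\bigl[\widehat{\Lcal}_{h,k}^{\mathrm{pool}}(g_{h,f}^{(k)};f)-\widehat{\Lcal}_{h,k}^{\mathrm{pool}}(g;f)\bigr]
+\bigl[\widehat{\Lcal}_{h,k}^{\mathrm{pool}}(g;f)-\bar{\Lcal}_{h,k}(g;f)\bigr].
\end{equation*}
The middle bracket is nonpositive, since $g_{h,f}^{(k)}$ minimizes the empirical pooled loss over $\Gcal_h$. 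On $\mathcal E$, each outer bracket is at most $\bar\Delta$. The first of these uses that $g_{h,f}^{(k)}\in\Gcal_h$, so the data-dependent choice is covered by the supremum in \eqref{eq:pooled_dual_dev}.

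Together these give $\bar{\Lcal}_{h,k}(g_{h,f}^{(k)};f)-\bar{\Lcal}_{h,k}(g;f)\le 2\bar\Delta$ for every $g\in\Gcal_h$. Taking the infimum over $g$ yields \eqref{eq:pooled_erm_excess}; this step remains valid even if the infimum over $\Gcal_h$ is not attained. If the empirical argmin does not exist, one can take an $\epsilon'$-approximate minimizer and send $\epsilon'\to0$.

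The only delicate point is that $g_{h,f}^{(k)}$ depends on the same data used to define the empirical loss. This is handled entirely by the uniformity over $\Gcal_h$ in Lemma~\ref{lem:pooled_dual_uniform}, so no additional obstacle arises.
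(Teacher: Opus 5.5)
Your proof is correct and is essentially the paper's argument. The paper runs the same chain (uniform deviation, empirical optimality of $g_{h,f}^{(k)}$, uniform deviation) against a $\tau$-approximate population minimizer in $\Gcal_h$ and then lets $\tau\downarrow 0$. That is equivalent to your comparison against an arbitrary $g\in\Gcal_h$ followed by taking the infimum.
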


\begin{proof}
Fix $f\in\Fcal_{h+1}$ and $\tau>0$. Choose
\(
g_{f,\tau}^\star\in\Gcal_h
\)
such that
\begin{equation*}
\bar{\Lcal}_{h,k}(g_{f,\tau}^\star;f)
\leq
\inf_{g\in\Gcal_h}\bar{\Lcal}_{h,k}(g;f)+\tau.
\end{equation*}
On the event \eqref{eq:pooled_dual_dev},
\begin{align*}
\bar{\Lcal}_{h,k}(g_{h,f}^{(k)};f)
&\leq
\widehat{\Lcal}_{h,k}^{\mathrm{pool}}(g_{h,f}^{(k)};f)
+ \bar\Delta_{h,k}^{\mathrm{cov}}(\delta;\eta_F,\eta_G) \\
&\leq
\widehat{\Lcal}_{h,k}^{\mathrm{pool}}(g_{f,\tau}^\star;f)
+ \bar\Delta_{h,k}^{\mathrm{cov}}(\delta;\eta_F,\eta_G) \\
&\leq
\bar{\Lcal}_{h,k}(g_{f,\tau}^\star;f)
+ 2\bar\Delta_{h,k}^{\mathrm{cov}}(\delta;\eta_F,\eta_G) \\
&\leq
\inf_{g\in\Gcal_h}\bar{\Lcal}_{h,k}(g;f)
+ 2\bar\Delta_{h,k}^{\mathrm{cov}}(\delta;\eta_F,\eta_G)
+ \tau.
\end{align*}
Letting $\tau\downarrow 0$ proves \eqref{eq:pooled_erm_excess}.
\end{proof}

\begin{corollary}[Backward-validation operator transfer]
\label{cor:bv_transfer}
Under Assumptions~\ref{asmp:bounded_dual}--\ref{asmp:dual_gap_appendix}, with probability at least
$1-\delta$, simultaneously for all $f\in\Fcal_{h+1}$,
\begin{equation}
\sum_{j=1}^{k-1}
\EE_{\mu_h^j}
\sbr{
\bigl(
\widehat{\Tcal}_{h,f}^{\sigma,(k)} f_{h+1}
- \Tcal_h^\sigma f_{h+1}
\bigr)^2
}
\leq
\rho_{h,k}(\delta;\eta_F,\eta_G),
\label{eq:bv_transfer}
\end{equation}
where
\begin{equation*}
\rho_{h,k}(\delta;\eta_F,\eta_G)
:=
\frac{2B_\phi(\sigma)\,N_{h,<k}}{n_{h,\min}^{(k)}}
\Bigl(
2\bar\Delta_{h,k}^{\mathrm{cov}}(\delta;\eta_F,\eta_G)
+ \epsilon_{\mathrm{dual}}
\Bigr).
\end{equation*}
\end{corollary}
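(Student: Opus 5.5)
Fix $f\in\Fcal_{h+1}$ and write $\widehat g:=g_{h,f}^{(k)}$. I would reduce the squared operator gap pointwise to an excess dual loss, and then control that excess loss through Corollary~\ref{cor:pooled_erm_excess} and Assumption~\ref{asmp:dual_gap_appendix}.

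\textbf{Pointwise identity and sign.} By the definition of the plug-in operator and by \eqref{eq:ambient_dual_rep}, at every $(s,a,b)$
\begin{equation*}
e(s,a,b):=\bigl(\widehat{\Tcal}_{h,f}^{\sigma,(k)}f_{h+1}-\Tcal_h^\sigma f_{h+1}\bigr)(s,a,b)
=\inf_{\bar g\in\bar\Gcal_h}\EE_{s'}\sbr{\ell_\phi(f_{h+1};s,a,b,s';\bar g)}-\EE_{s'}\sbr{\ell_\phi(f_{h+1};s,a,b,s';\widehat g)}.
\end{equation*}
I will assume $\Gcal_h\subseteq\bar\Gcal_h$, as is implicit in calling $\bar\Gcal_h$ the ambient class. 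Then $e\le 0$ pointwise, so $|e|=-e$ is the pointwise excess loss of $\widehat g$ over the pointwise ambient infimum. Both expectations are bounded by $B_\phi(\sigma)$ in absolute value (Assumption~\ref{asmp:bounded_dual}), so $|e|\le 2B_\phi(\sigma)$ and hence $e^2\le 2B_\phi(\sigma)\,|e|$.

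\textbf{Change of measure.} Since $\lambda_{h,j}^{(k)}=n_{h,j}/N_{h,<k}\ge n_{h,\min}^{(k)}/N_{h,<k}$, for any nonnegative $u$ we have
\begin{equation*}
\sum_{j<k}\EE_{\mu_h^j}[u]\le \frac{N_{h,<k}}{n_{h,\min}^{(k)}}\,\EE_{\bar\mu_{h,k}}[u].
\end{equation*}
Applying this with $u=|e|$ gives
\begin{equation*}
\sum_{j<k}\EE_{\mu_h^j}[e^2]\le \frac{2B_\phi(\sigma)N_{h,<k}}{n_{h,\min}^{(k)}}\,\EE_{\bar\mu_{h,k}}|e|,
\end{equation*}
which already has the prefactor appearing in $\rho_{h,k}$.

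\textbf{Bounding the averaged gap; the main obstacle.} It remains to show $\EE_{\bar\mu_{h,k}}|e|\le 2\bar\Delta^{\mathrm{cov}}_{h,k}+\epsilon_{\mathrm{dual}}$. Because $|e|=-e$,
\begin{equation*}
\EE_{\bar\mu_{h,k}}|e|=\bar\Lcal_{h,k}(\widehat g;f)-\EE_{\bar\mu_{h,k}}\Bigl[\inf_{\bar g}\EE_{s'}\ell_\phi(f_{h+1};s,a,b,s';\bar g)\Bigr].
\end{equation*}
The first term is bounded by Corollary~\ref{cor:pooled_erm_excess}: it is at most $\inf_{g\in\Gcal_h}\bar\Lcal_{h,k}(g;f)+2\bar\Delta^{\mathrm{cov}}_{h,k}$. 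By Assumption~\ref{asmp:dual_gap_appendix} with $\nu=\bar\mu_{h,k}$, $\inf_{g\in\Gcal_h}\bar\Lcal_{h,k}(g;f)\le\inf_{\bar g\in\bar\Gcal_h}\bar\Lcal_{h,k}(\bar g;f)+\epsilon_{\mathrm{dual}}$.

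The delicate point is the interchange $\inf_{\bar g\in\bar\Gcal_h}\EE_{\bar\mu_{h,k}}[\cdots]=\EE_{\bar\mu_{h,k}}[\inf_{\bar g}\cdots]$. In general only ``$\ge$'' holds, and that is the wrong direction for this argument. I would handle it by taking $\bar\Gcal_h$ to be a class of functions of $(s,a,b)$ that is closed under measurable pasting, i.e.\ decomposable, together with the standing measurable-selector hypothesis. Then the pointwise dual optimizer $\bar g^\star(s,a,b)$ belongs to $\bar\Gcal_h$ and attains the integrated infimum, so equality holds. This is exactly the role of $\bar\Gcal_h$ ``containing the exact robust dual optimizer'' in Assumption~\ref{asmp:main_dual}.

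\textbf{Conclusion.} Combining the three steps gives $\EE_{\bar\mu_{h,k}}|e|\le 2\bar\Delta^{\mathrm{cov}}_{h,k}+\epsilon_{\mathrm{dual}}$. Since Corollary~\ref{cor:pooled_erm_excess} holds on one event simultaneously for all $f$, \eqref{eq:bv_transfer} follows uniformly over $f$ with probability at least $1-\delta$.
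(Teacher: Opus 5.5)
Your proposal is correct and follows the paper's proof step for step: the sign and envelope bound $-2B_\phi(\sigma)\le e\le 0$, the reweighting by $N_{h,<k}/n_{h,\min}^{(k)}$ to pass from $\sum_{j<k}\EE_{\mu_h^j}$ to $\EE_{\bar\mu_{h,k}}$, and the split of the averaged gap into the pooled ERM excess (Corollary~\ref{cor:pooled_erm_excess}) and the dual approximation error (Assumption~\ref{asmp:dual_gap_appendix}). The only addition is that you make explicit the interchange $\EE_{\bar\mu_{h,k}}[\inf_{\bar g}\EE_{s'}\ell_\phi]=\inf_{\bar g}\bar\Lcal_{h,k}(\bar g;f)$. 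The paper asserts this as an equality without comment, and it is justified exactly by $\bar\Gcal_h$ containing the exact pointwise dual optimizer, as you note.
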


\begin{proof}
Fix $f\in\Fcal_{h+1}$ and define the pointwise operator gap
\begin{equation*}
e_{h,k}(f)
:=
\widehat{\Tcal}_{h,f}^{\sigma,(k)} f_{h+1}
- \Tcal_h^\sigma f_{h+1}.
\end{equation*}
By the ambient dual representation \eqref{eq:ambient_dual_rep}, the function $-e_{h,k}(f)$ is
pointwise nonnegative. Assumption~\ref{asmp:bounded_dual} further implies
\begin{equation}
-2B_\phi(\sigma)\leq e_{h,k}(f;s,a,b)\leq 0
\qquad\text{for all }(s,a,b).
\label{eq:pointwise_gap_bound}
\end{equation}
Therefore
\begin{align}
\sum_{j=1}^{k-1}\EE_{\mu_h^j}\sbr{e_{h,k}(f)^2}
&=
\sum_{j=1}^{k-1}
\frac{N_{h,<k}}{n_{h,j}}
\lambda_{h,j}^{(k)}
\EE_{\mu_h^j}\sbr{e_{h,k}(f)^2} \\
&\leq
\frac{N_{h,<k}}{n_{h,\min}^{(k)}}
\EE_{\bar\mu_{h,k}}\sbr{e_{h,k}(f)^2} \\
&\leq
\frac{2B_\phi(\sigma)\,N_{h,<k}}{n_{h,\min}^{(k)}}
\EE_{\bar\mu_{h,k}}\sbr{-e_{h,k}(f)},
\label{eq:e2_to_e1}
\end{align}
where the last step uses \eqref{eq:pointwise_gap_bound}.

Next,
\begin{equation*}
\EE_{\bar\mu_{h,k}}\sbr{-e_{h,k}(f)}
=
\bar{\Lcal}_{h,k}(g_{h,f}^{(k)};f)
-
\inf_{\bar g\in\bar{\Gcal}_h}\bar{\Lcal}_{h,k}(\bar g;f).
\end{equation*}
Adding and subtracting
\(
\inf_{g\in\Gcal_h}\bar{\Lcal}_{h,k}(g;f)
\)
and then applying Corollary~\ref{cor:pooled_erm_excess} together with
Assumption~\ref{asmp:dual_gap_appendix} yields
\begin{equation*}
\EE_{\bar\mu_{h,k}}\sbr{-e_{h,k}(f)}
\leq
2\bar\Delta_{h,k}^{\mathrm{cov}}(\delta;\eta_F,\eta_G)
+ \epsilon_{\mathrm{dual}}.
\end{equation*}
Substituting this bound into \eqref{eq:e2_to_e1} proves \eqref{eq:bv_transfer}.
\end{proof}

\begin{proposition}[Backward validation induced by the current confidence set]
\label{prop:backward_validation}
Fix a stage $h$ and an episode $k\geq 2$. On the intersection of the events from
Lemma~\ref{lem:bv_dev} and Corollary~\ref{cor:bv_transfer}, if the confidence set update satisfies
\(
\widehat{\mathrm{BV}}_{h,k}(f)\leq \beta_{h,k}
\)
for every $f\in\mathfrak F^{(k-1)}$, then every $f\in\mathfrak F^{(k-1)}$ obeys
\begin{equation}
\mathrm{BV}_{h,k}^{\mathrm{exact}}(f)
\leq
2\beta_{h,k}
+ 2\xi_{h,k}(\delta;\eta_W)
+ 2\rho_{h,k}(\delta;\eta_F,\eta_G).
\label{eq:bv_exact_bound}
\end{equation}
\end{proposition}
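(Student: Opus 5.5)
We combine the two events pointwise in $f$ through the decomposition $\delta_h(f)=\widehat\delta_{h,k}(f)+e_{h,k}(f)$, which holds identically on $\Scal\times\Acal\times\Bcal$. Here $\delta_h(f)=f_h-\Tcal_h^\sigma f_{h+1}$ is the exact residual, $\widehat\delta_{h,k}(f)=f_h-\widehat{\Tcal}_{h,f}^{\sigma,(k)}f_{h+1}$ is the plug-in residual, and $e_{h,k}(f)=\widehat{\Tcal}_{h,f}^{\sigma,(k)}f_{h+1}-\Tcal_h^\sigma f_{h+1}$ is the operator gap from Corollary~\ref{cor:bv_transfer}. The elementary inequality $(x+y)^2\le 2x^2+2y^2$ gives, pointwise,
\[
\delta_h(f)^2\le 2\,\widehat\delta_{h,k}(f)^2+2\,e_{h,k}(f)^2 .
\]
Integrating this against each past occupancy $\mu_h^j$ and summing over $j<k$ yields
\[
\mathrm{BV}_{h,k}^{\mathrm{exact}}(f)\le 2\,\mathrm{BV}_{h,k}^{\mathrm{plug}}(f)+2\sum_{j=1}^{k-1}\EE_{\mu_h^j}\sbr{e_{h,k}(f)^2}.
\]

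The second sum is controlled directly by \eqref{eq:bv_transfer}: on the event of Corollary~\ref{cor:bv_transfer} it is at most $\rho_{h,k}(\delta;\eta_F,\eta_G)$, simultaneously over all $f\in\Fcal_{h+1}$. Since $\mathfrak F^{(k-1)}\subseteq\Fcal$, and the transfer bound depends on $f$ only through $f_{h+1}$, this covers every $f$ in the confidence set.

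For the first sum, we pass from the population plug-in statistic to the empirical one using Lemma~\ref{lem:bv_dev}. On its event, \eqref{eq:bv_dev} gives $\mathrm{BV}_{h,k}^{\mathrm{plug}}(f)\le\widehat{\mathrm{BV}}_{h,k}(f)+\xi_{h,k}(\delta;\eta_W)$ uniformly over $f\in\Fcal$. Here it matters that the witness $\widehat\delta_{h,k}(f)$ is the same function $w_{h,f,g_{h,f}^{(k)}}\in\Wcal_h$ used in both statistics, so the uniform bound applies even though the dual refit is data dependent. The confidence-set hypothesis $\widehat{\mathrm{BV}}_{h,k}(f)\le\beta_{h,k}$ then gives $\mathrm{BV}_{h,k}^{\mathrm{plug}}(f)\le\beta_{h,k}+\xi_{h,k}$. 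Substituting both bounds into the display above yields exactly \eqref{eq:bv_exact_bound}.

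The one point requiring care is that both high-probability statements are uniform in $f$, while the selected dual $g_{h,f}^{(k)}$ depends on the same data. Lemma~\ref{lem:bv_dev} handles this because its supremum runs over the entire witness class $\Wcal_h$, and Corollary~\ref{cor:bv_transfer} is already stated for the pooled refit simultaneously over all $f$. So no further union bound or conditioning argument is needed beyond intersecting the two events, and the proof is otherwise purely algebraic.
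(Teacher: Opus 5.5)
Your proposal is correct and follows the paper's proof essentially verbatim. Both use the pointwise split $\delta_h(f)=\widehat\delta_{h,k}(f)+e_{h,k}(f)$ with $(x+y)^2\le 2x^2+2y^2$, bound the operator-gap term by \eqref{eq:bv_transfer}, and bound the plug-in term by Lemma~\ref{lem:bv_dev} together with the confidence-set hypothesis; your remarks on uniformity over $f$ and the data-dependent dual refit are accurate and only make explicit what the paper leaves implicit.
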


\begin{proof}
Fix any $f\in\mathfrak F^{(k-1)}$. The identity
\(
\delta_h(f)=\widehat\delta_{h,k}(f)+e_{h,k}(f)
\)
holds pointwise, so
\begin{align*}
\mathrm{BV}_{h,k}^{\mathrm{exact}}(f)
&=
\sum_{j=1}^{k-1}
\EE_{\mu_h^j}\sbr{\bigl(\widehat\delta_{h,k}(f)+e_{h,k}(f)\bigr)^2}
\\
&\leq
2\sum_{j=1}^{k-1}\EE_{\mu_h^j}\sbr{\bigl(\widehat\delta_{h,k}(f)\bigr)^2}
+
2\sum_{j=1}^{k-1}\EE_{\mu_h^j}\sbr{\bigl(e_{h,k}(f)\bigr)^2}
\\
&=
2\,\mathrm{BV}_{h,k}^{\mathrm{plug}}(f)
+
2\sum_{j=1}^{k-1}\EE_{\mu_h^j}\sbr{\bigl(e_{h,k}(f)\bigr)^2}.
\end{align*}
On the event of Corollary~\ref{cor:bv_transfer}, the second term is at most
\(
2\rho_{h,k}(\delta;\eta_F,\eta_G)
\).
On the event of Lemma~\ref{lem:bv_dev},
\begin{equation*}
\mathrm{BV}_{h,k}^{\mathrm{plug}}(f)
\leq
\widehat{\mathrm{BV}}_{h,k}(f)+\xi_{h,k}(\delta;\eta_W)
\leq
\beta_{h,k}+\xi_{h,k}(\delta;\eta_W),
\end{equation*}
where the last step uses the defining property of $\mathfrak F^{(k-1)}$. Substituting both bounds
proves \eqref{eq:bv_exact_bound}.
\end{proof}

In particular, if the selected function $f^{(k)}$ satisfies
\(
2\beta_{h,k}
+ 2\xi_{h,k}(\delta;\eta_W)
+ 2\rho_{h,k}(\delta;\eta_F,\eta_G)
\leq \epsilon^2
\),
then the exact witness
\(
\psi_h^k:=\delta_h(f^{(k)})
\)
satisfies the backward-validation premise \eqref{eq:count_prev_small} needed by
Lemma~\ref{lem:counting}.

\subsection{Closing Optimism from the Confidence Set}

\begin{assumption}[Reference realizability for optimism closure]
\label{asmp:ref_opt}
There exists a reference function $f^\dagger\in\Fcal$ such that
\begin{equation}
V_1^{\star,\sigma}(s_1)
\leq
V_{f^\dagger,1}(s_1)+\epsilon_{\mathrm{appr}}
\label{eq:ref_gap}
\end{equation}
and, for every stage $h$,
\begin{equation}
f_h^\dagger = \Tcal_h^\sigma f_{h+1}^\dagger.
\label{eq:ref_fixed_point}
\end{equation}
\end{assumption}

\begin{proposition}[Exact robust realizability implies an optimism reference]
\label{prop:exact_realizability_ref}
Suppose there exists
\(
f^\star\in\Fcal
\)
such that for every stage
\(
h\in[H]
\),
\begin{equation*}
f_h^\star=\Tcal_h^\sigma f_{h+1}^\star
\qquad\text{and}\qquad
V_{f^\star,1}(s_1)=V_1^{\star,\sigma}(s_1).
\end{equation*}
Then Assumption~\ref{asmp:ref_opt} holds with
\(
f^\dagger=f^\star
\)
and
\(
\epsilon_{\mathrm{appr}}=0
\).
\end{proposition}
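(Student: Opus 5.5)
This is a direct verification against the two requirements of Assumption~\ref{asmp:ref_opt}, taking \(f^\dagger:=f^\star\) and \(\epsilon_{\mathrm{appr}}:=0\). Membership \(f^\dagger\in\Fcal\) is part of the hypothesis, so only the gap condition \eqref{eq:ref_gap} and the fixed-point condition \eqref{eq:ref_fixed_point} need checking.

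\textbf{Fixed-point condition.} Condition \eqref{eq:ref_fixed_point} is literally the first displayed hypothesis, \(f_h^\star=\Tcal_h^\sigma f_{h+1}^\star\), holding for every \(h\in[H]\).

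We should make sure the terminal convention matches. For \(h=H\), the operator \(\Tcal_H^\sigma f_{H+1}^\star\) must be read with \(V_{f^\star,H+1}\equiv 0\). This is the same terminal convention used in the definition of the robust value functions, \(V_{H+1}^{\pi,\nu,\sigma}\equiv 0\), and in the Bellman telescope of Lemma~\ref{lem:bellman_telescope}. So nothing beyond the stated identity is required.

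\textbf{Gap condition.} The second hypothesis gives \(V_{f^\star,1}(s_1)=V_1^{\star,\sigma}(s_1)\). This yields
\[
V_1^{\star,\sigma}(s_1)\le V_{f^\star,1}(s_1)+0,
\]
which is exactly \eqref{eq:ref_gap} with \(\epsilon_{\mathrm{appr}}=0\).

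\textbf{Remark on the value identity.} The only point that could look like an obstacle is whether the value identity is implied by the fixed-point property alone. One would argue this by backward induction, using the robust Bellman fixed-point relation \(Q_h^{\star,\sigma}=\Tcal_h^\sigma Q_{h+1}^{\star,\sigma}\) from the preliminaries, and the fact that \(f\mapsto V_{f,h}\) and \(\Tcal_h^\sigma\) are monotone. The identity is assumed explicitly in the statement, however, so this induction is not needed. The proof is therefore a two-line verification.
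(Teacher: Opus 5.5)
Your proof is correct and matches the paper's argument: take \(f^\dagger:=f^\star\), observe that the first hypothesis is literally \eqref{eq:ref_fixed_point}, and note that the value identity gives \eqref{eq:ref_gap} with \(\epsilon_{\mathrm{appr}}=0\). Your remarks on the terminal convention and on deriving the value identity by backward induction are harmless but not needed, since both are already contained in the hypotheses.
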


\begin{proof}
Take
\(
f^\dagger:=f^\star
\).
The fixed-point identity is exactly \eqref{eq:ref_fixed_point}, and the value identity implies
\eqref{eq:ref_gap} with
\(
\epsilon_{\mathrm{appr}}=0
\).
\end{proof}

\begin{proposition}[Reference retention]
\label{prop:ref_retention}
Under Assumption~\ref{asmp:ref_opt}, on the intersection of the events from
Lemma~\ref{lem:bv_dev} and Corollary~\ref{cor:bv_transfer}, if
\begin{equation}
\beta_{h,k}
\geq
\xi_{h,k}(\delta;\eta_W)
+ \rho_{h,k}(\delta;\eta_F,\eta_G)
\qquad \text{for every } h\in[H],
\label{eq:retain_threshold}
\end{equation}
then
\(
f^\dagger\in\mathfrak F^{(k-1)}
\).
\end{proposition}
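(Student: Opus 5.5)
The plan is to verify the defining inequality $\widehat{\mathrm{BV}}_{h,k}(f^\dagger)\le\beta_{h,k}$ of the confidence set stage by stage, using the same residual/operator split that underlies Proposition~\ref{prop:backward_validation}, but run in the reverse direction. The reference fixed point makes the exact residual of $f^\dagger$ vanish identically. So the empirical statistic of $f^\dagger$ is driven entirely by the operator gap, which Corollary~\ref{cor:bv_transfer} controls. The deviation Lemma~\ref{lem:bv_dev} then moves us from population to empirical quantities.

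First, fix $h$ and write $\delta_h(f^\dagger)=f_h^\dagger-\Tcal_h^\sigma f_{h+1}^\dagger$. By \eqref{eq:ref_fixed_point} this is $\equiv 0$. The pointwise identity $\delta_h(f)=\widehat\delta_{h,k}(f)+e_{h,k}(f)$ then gives $\widehat\delta_{h,k}(f^\dagger)=-e_{h,k}(f^\dagger)$. Therefore
\[
\mathrm{BV}_{h,k}^{\mathrm{plug}}(f^\dagger)=\sum_{j<k}\EE_{\mu_h^j}\sbr{e_{h,k}(f^\dagger)^2}\le\rho_{h,k}(\delta;\eta_F,\eta_G)
\]
on the event of Corollary~\ref{cor:bv_transfer}. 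The supremum over $f\in\Fcal_{h+1}$ there is uniform, so it applies to the data-dependent choice $f=f^\dagger_{h+1}$. Note also that $f^\dagger$ is a fixed, non-random element, so uniformity is not even essential for it.

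Next, on the event of Lemma~\ref{lem:bv_dev}, whose supremum over $f\in\Fcal$ includes $f^\dagger$ because $\widehat\delta_{h,k}(f^\dagger)\in\Wcal_h$, we get
\[
\widehat{\mathrm{BV}}_{h,k}(f^\dagger)\le\mathrm{BV}_{h,k}^{\mathrm{plug}}(f^\dagger)+\xi_{h,k}(\delta;\eta_W)\le\xi_{h,k}+\rho_{h,k}\le\beta_{h,k}
\]
by \eqref{eq:retain_threshold}. This holds for every $h$, so $f^\dagger$ satisfies every constraint defining the confidence set and hence lies in $\mathfrak F^{(k-1)}$. If the confidence set is cumulative across episodes, the same argument applies at each earlier update index on the corresponding events, and induction gives retention throughout.

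The step I expect to need the most care is bookkeeping rather than analysis. I must make sure that the plug-in operator appearing in $\widehat\delta_{h,k}(f^\dagger)$ is the same pooled refit $g^{(k)}_{h,f^\dagger}$ that Corollary~\ref{cor:bv_transfer} controls. I also need the events for all $h$ (and all relevant $k$) to be intersected, with a union bound absorbed into $\delta$ as in the statement's phrase "on the intersection of the events". Notably, no factor of $2$ arises here, unlike in Proposition~\ref{prop:backward_validation}, because the exact residual vanishes and no $(x+y)^2$ splitting is needed.
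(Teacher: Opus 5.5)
Your proposal is correct and follows the paper's proof essentially step for step. The fixed-point identity gives $\widehat\delta_{h,k}(f^\dagger)=-e_{h,k}(f^\dagger)$, Corollary~\ref{cor:bv_transfer} then bounds $\mathrm{BV}_{h,k}^{\mathrm{plug}}(f^\dagger)$ by $\rho_{h,k}$, and Lemma~\ref{lem:bv_dev} adds $\xi_{h,k}$ before \eqref{eq:retain_threshold} is applied at every stage; your closing induction remark is not needed, since the paper's proof uses $\mathfrak F^{(k-1)}$ directly through the episode-$k$ statistics $\widehat{\mathrm{BV}}_{h,k}$.
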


\begin{proof}
Fix a stage $h$. By \eqref{eq:ref_fixed_point}, the exact residual of $f^\dagger$ is zero:
\(
\delta_h(f^\dagger)\equiv 0
\).
Hence
\(
\widehat\delta_{h,k}(f^\dagger) = -e_{h,k}(f^\dagger)
\)
pointwise, and therefore
\begin{equation*}
\mathrm{BV}_{h,k}^{\mathrm{plug}}(f^\dagger)
=
\sum_{j=1}^{k-1}\EE_{\mu_h^j}\sbr{\bigl(e_{h,k}(f^\dagger)\bigr)^2}
\leq
\rho_{h,k}(\delta;\eta_F,\eta_G)
\end{equation*}
on the event of Corollary~\ref{cor:bv_transfer}. Applying Lemma~\ref{lem:bv_dev} then gives
\begin{equation*}
\widehat{\mathrm{BV}}_{h,k}(f^\dagger)
\leq
\mathrm{BV}_{h,k}^{\mathrm{plug}}(f^\dagger)
+ \xi_{h,k}(\delta;\eta_W)
\leq
\rho_{h,k}(\delta;\eta_F,\eta_G)
+ \xi_{h,k}(\delta;\eta_W).
\end{equation*}
By \eqref{eq:retain_threshold}, the right-hand side is at most $\beta_{h,k}$. Since this holds for
every stage $h$, the definition of $\mathfrak F^{(k-1)}$ implies
\(
f^\dagger\in\mathfrak F^{(k-1)}
\).
\end{proof}

\begin{corollary}[Approximate optimism from confidence sets]
\label{cor:opt_from_conf}
Under Assumption~\ref{asmp:ref_opt} and the conditions of Proposition~\ref{prop:ref_retention},
the algorithm's optimistic choice satisfies
\begin{equation*}
V_1^{\star,\sigma}(s_1)
\leq
V_{f^{(k)},1}(s_1)+\epsilon_{\mathrm{appr}}.
\end{equation*}
In particular, Assumption~\ref{asmp:opt_appendix} is implied by the backward-validation confidence
set construction.
\end{corollary}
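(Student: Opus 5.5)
The plan is to combine reference retention with the optimistic selection rule; no new estimate is needed. Fix an episode $k$ and work on the intersection of the high-probability events of Lemma~\ref{lem:bv_dev} and Corollary~\ref{cor:bv_transfer}, taken over all stages $h$. This intersection is exactly the event on which the conditions of Proposition~\ref{prop:ref_retention} are assumed. On this event, Proposition~\ref{prop:ref_retention} together with the radius condition \eqref{eq:retain_threshold} gives $f^\dagger\in\mathfrak F^{(k-1)}$, the confidence set from which $f^{(k)}$ is chosen.

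Next I use the optimistic selection rule. By construction the algorithm selects $f^{(k)}$ as a maximizer of $f\mapsto V_{f,1}(s_1)$ over the current confidence set. Since $f^\dagger$ is a feasible point of that set, we get $V_{f^\dagger,1}(s_1)\le V_{f^{(k)},1}(s_1)$. Combining this with the reference gap \eqref{eq:ref_gap} from Assumption~\ref{asmp:ref_opt} yields
\[
V_1^{\star,\sigma}(s_1)\le V_{f^\dagger,1}(s_1)+\epsilon_{\mathrm{appr}}\le V_{f^{(k)},1}(s_1)+\epsilon_{\mathrm{appr}},
\]
which is the claimed inequality. Because it holds for every $k$, it is exactly Assumption~\ref{asmp:opt_appendix}.

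The only delicate point is bookkeeping rather than mathematics, and it has two parts. First, the confidence set used for selection at episode $k$ must be the one whose statistics are controlled by Lemma~\ref{lem:bv_dev} and Corollary~\ref{cor:bv_transfer} at index $k$; I will match the indexing so that it is the set $\mathfrak F^{(k-1)}$ of Proposition~\ref{prop:ref_retention}. Second, to get the statement simultaneously over all $k$, I will apply the events with $\delta/(HK)$ and take a union bound over $h\in[H]$ and $k\in[K]$. This only inflates the logarithmic terms inside $\xi_{h,k}$ and $\rho_{h,k}$, and these inflated values are what \eqref{eq:retain_threshold} must dominate. If the maximum over the confidence set is not attained, I will choose $f^{(k)}$ to be $\tau$-optimal and let $\tau\downarrow0$, as in Corollary~\ref{cor:pooled_erm_excess}.
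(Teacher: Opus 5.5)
Your proposal is correct and is essentially the paper's proof: Proposition~\ref{prop:ref_retention} places $f^\dagger$ in $\mathfrak F^{(k-1)}$, the argmax selection gives $V_{f^{(k)},1}(s_1)\ge V_{f^\dagger,1}(s_1)$, and \eqref{eq:ref_gap} finishes the chain. Your union bound over $(h,k)$ is harmless extra bookkeeping, matching the $\delta_0=\delta/(4HK)$ allocation used later in Theorem~\ref{thm:gfa_interpretable}, but the $\tau$-maximizer aside is wrong: the selected $f^{(k)}$ itself depends on $\tau$, so unlike Corollary~\ref{cor:pooled_erm_excess} (where $\tau$ indexes an auxiliary comparator) you cannot send $\tau\downarrow 0$ and would instead carry an additive $\tau$, which the paper avoids by assuming the argmax is attained.
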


\begin{proof}
By Proposition~\ref{prop:ref_retention},
\(
f^\dagger\in\mathfrak F^{(k-1)}
\).
Since the algorithm chooses
\(
f^{(k)}\in\argmax_{f\in\mathfrak F^{(k-1)}} V_{f,1}(s_1)
\),
we have
\begin{equation*}
V_{f^{(k)},1}(s_1)\geq V_{f^\dagger,1}(s_1).
\end{equation*}
Combining this inequality with \eqref{eq:ref_gap} proves the claim.
\end{proof}

\begin{assumption}[Projected reference realizability for optimism closure]
\label{asmp:ref_opt_projected}
Fix stagewise maps
\(
\{\mathsf P_h\}_{h=1}^H
\).
There exists a reference function
\(
f^\dagger\in\Fcal
\)
such that
\begin{equation}
V_1^{\star,\sigma}(s_1)
\le
V_{f^\dagger,1}(s_1)+\epsilon_{\mathrm{appr}}
\label{eq:ref_gap_projected}
\end{equation}
and, for every stage
\(
h\in[H]
\),
\begin{equation}
f_h^\dagger
=
\mathsf P_h\Tcal_h^\sigma f_{h+1}^\dagger.
\label{eq:ref_fixed_point_projected}
\end{equation}
\end{assumption}

\begin{proposition}[Projected reference retention]
\label{prop:ref_retention_projected}
Fix stagewise maps
\(
\{\mathsf P_h\}_{h=1}^H
\)
and let
\begin{equation}
\mathfrak F_{\mathsf P}^{(k-1)}
:=
\cbr{
f\in\Fcal:
\widehat{\mathrm{BV}}_{h,k}^{\mathsf P}(f)\le \beta_{h,k}^{\mathsf P}
\text{ for every }h\in[H]
}.
\label{eq:projected_conf_set}
\end{equation}
For any
\(
f\in\Fcal
\),
define the projected plug-in residual
\begin{equation*}
\widehat\delta_{h,\mathsf P,k}(f)
:=
f_h-\mathsf P_h\widehat{\Tcal}_{h,f}^{\sigma,(k)}f_{h+1},
\end{equation*}
the projected operator error
\begin{equation*}
e_{h,\mathsf P,k}(f)
:=
\mathsf P_h\widehat{\Tcal}_{h,f}^{\sigma,(k)}f_{h+1}
-
\mathsf P_h\Tcal_h^\sigma f_{h+1},
\end{equation*}
and the projected plug-in backward-validation score
\begin{equation*}
\mathrm{BV}_{h,k}^{\mathsf P,\mathrm{plug}}(f)
:=
\sum_{j=1}^{k-1}
\EE_{\widetilde\mu_h^j}
\sbr{
\bigl(\widehat\delta_{h,\mathsf P,k}(f)\bigr)^2
}.
\end{equation*}
Under Assumption~\ref{asmp:ref_opt_projected}, if for every stage
\(
h\in[H]
\),
\begin{align}
\bigl|
\widehat{\mathrm{BV}}_{h,k}^{\mathsf P}(f^\dagger)
-
\mathrm{BV}_{h,k}^{\mathsf P,\mathrm{plug}}(f^\dagger)
\bigr|
&\le
\xi_{h,k}^{\mathsf P},
\label{eq:projected_bv_dev_ref}
\\
\sum_{j=1}^{k-1}
\EE_{\widetilde\mu_h^j}
\sbr{
\bigl(e_{h,\mathsf P,k}(f^\dagger)\bigr)^2
}
&\le
\rho_{h,k}^{\mathsf P},
\label{eq:projected_bv_transfer_ref}
\\
\beta_{h,k}^{\mathsf P}
&\ge
\xi_{h,k}^{\mathsf P}+\rho_{h,k}^{\mathsf P},
\end{align}
then
\(
f^\dagger\in\mathfrak F_{\mathsf P}^{(k-1)}
\).
\end{proposition}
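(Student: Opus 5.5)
The argument mirrors the proof of Proposition~\ref{prop:ref_retention}, with the projected fixed-point identity \eqref{eq:ref_fixed_point_projected} playing the role of \eqref{eq:ref_fixed_point}. Fix a stage \(h\in[H]\).

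\emph{Step 1: the projected plug-in residual of \(f^\dagger\) is a pure operator error.} By \eqref{eq:ref_fixed_point_projected}, \(f_h^\dagger=\mathsf P_h\Tcal_h^\sigma f_{h+1}^\dagger\). Hence, pointwise,
\begin{equation*}
\widehat\delta_{h,\mathsf P,k}(f^\dagger)
=
\mathsf P_h\Tcal_h^\sigma f_{h+1}^\dagger-\mathsf P_h\widehat{\Tcal}_{h,f^\dagger}^{\sigma,(k)}f_{h+1}^\dagger
=
-e_{h,\mathsf P,k}(f^\dagger).
\end{equation*}
No linearity of \(\mathsf P_h\) is required. Both terms are obtained by applying the same map \(\mathsf P_h\) to functions, and the difference is taken after projection, exactly as in the definition of \(e_{h,\mathsf P,k}\).

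\emph{Step 2: bound the population score.} Squaring the identity from Step 1 and summing the \(\widetilde\mu_h^j\)-expectations over \(j<k\) gives
\begin{equation*}
\mathrm{BV}_{h,k}^{\mathsf P,\mathrm{plug}}(f^\dagger)
=\sum_{j=1}^{k-1}\EE_{\widetilde\mu_h^j}\sbr{\bigl(e_{h,\mathsf P,k}(f^\dagger)\bigr)^2}
\le \rho_{h,k}^{\mathsf P}.
\end{equation*}
The inequality is hypothesis \eqref{eq:projected_bv_transfer_ref}.

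\emph{Step 3: pass to the empirical score and conclude.} The deviation hypothesis \eqref{eq:projected_bv_dev_ref} gives
\begin{equation*}
\widehat{\mathrm{BV}}_{h,k}^{\mathsf P}(f^\dagger)
\le \mathrm{BV}_{h,k}^{\mathsf P,\mathrm{plug}}(f^\dagger)+\xi_{h,k}^{\mathsf P}
\le \rho_{h,k}^{\mathsf P}+\xi_{h,k}^{\mathsf P}
\le \beta_{h,k}^{\mathsf P}.
\end{equation*}
The last inequality is the threshold condition. Since \(h\) was arbitrary, \(f^\dagger\in\Fcal\) satisfies every stagewise constraint in \eqref{eq:projected_conf_set}, so \(f^\dagger\in\mathfrak F_{\mathsf P}^{(k-1)}\).

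\emph{Expected obstacle.} There is no genuine obstacle, because the concentration and transfer bounds are supplied as hypotheses rather than derived here. The only point requiring care is Step 1: one must confirm that the projected residual is defined with the projection applied to the whole backup, so that the projected fixed point cancels it exactly. This is why the statement uses \(e_{h,\mathsf P,k}\) and not \(\mathsf P_h e_{h,k}\).
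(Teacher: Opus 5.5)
Your proposal is correct and follows the paper's proof step for step: the projected fixed point \eqref{eq:ref_fixed_point_projected} turns $\widehat\delta_{h,\mathsf P,k}(f^\dagger)$ into $-e_{h,\mathsf P,k}(f^\dagger)$, the transfer hypothesis \eqref{eq:projected_bv_transfer_ref} then bounds the population score by $\rho_{h,k}^{\mathsf P}$, and the deviation hypothesis \eqref{eq:projected_bv_dev_ref} together with the radius condition gives $\widehat{\mathrm{BV}}_{h,k}^{\mathsf P}(f^\dagger)\le\beta_{h,k}^{\mathsf P}$ at every stage. You are also right that no linearity of $\mathsf P_h$ is needed, since $e_{h,\mathsf P,k}$ is defined as a difference of projected backups; the paper's argument uses this implicitly.
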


\begin{proof}
Fix a stage
\(
h\in[H]
\).
By \eqref{eq:ref_fixed_point_projected}, the projected exact residual of
\(
f^\dagger
\)
is zero. Therefore
\begin{equation*}
\widehat\delta_{h,\mathsf P,k}(f^\dagger)
=
-e_{h,\mathsf P,k}(f^\dagger)
\end{equation*}
pointwise, which implies
\begin{equation*}
\mathrm{BV}_{h,k}^{\mathsf P,\mathrm{plug}}(f^\dagger)
=
\sum_{j=1}^{k-1}
\EE_{\widetilde\mu_h^j}
\sbr{
\bigl(e_{h,\mathsf P,k}(f^\dagger)\bigr)^2
}
\le
\rho_{h,k}^{\mathsf P}
\end{equation*}
by \eqref{eq:projected_bv_transfer_ref}. Combining this with
\eqref{eq:projected_bv_dev_ref} gives
\begin{equation*}
\widehat{\mathrm{BV}}_{h,k}^{\mathsf P}(f^\dagger)
\le
\mathrm{BV}_{h,k}^{\mathsf P,\mathrm{plug}}(f^\dagger)
+
\xi_{h,k}^{\mathsf P}
\le
\rho_{h,k}^{\mathsf P}+\xi_{h,k}^{\mathsf P}
\le
\beta_{h,k}^{\mathsf P}.
\end{equation*}
Since this holds for every stage
\(
h
\),
definition \eqref{eq:projected_conf_set} implies
\(
f^\dagger\in\mathfrak F_{\mathsf P}^{(k-1)}
\).
\end{proof}

\begin{corollary}[Projected approximate optimism from confidence sets]
\label{cor:opt_from_conf_projected}
Under Assumption~\ref{asmp:ref_opt_projected} and the conditions of
Proposition~\ref{prop:ref_retention_projected}, if the algorithm chooses
\begin{equation*}
f^{(k)}
\in
\argmax_{f\in\mathfrak F_{\mathsf P}^{(k-1)}} V_{f,1}(s_1),
\end{equation*}
then
\begin{equation*}
V_1^{\star,\sigma}(s_1)
\le
V_{f^{(k)},1}(s_1)+\epsilon_{\mathrm{appr}}.
\end{equation*}
In particular, Assumption~\ref{asmp:opt_appendix} holds for the projected confidence-set route.
\end{corollary}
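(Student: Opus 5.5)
The plan mirrors the proof of Corollary~\ref{cor:opt_from_conf}, with the projected confidence set $\mathfrak F_{\mathsf P}^{(k-1)}$ in place of $\mathfrak F^{(k-1)}$. There are three steps.

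\textbf{Step 1: reference retention.} Invoke Proposition~\ref{prop:ref_retention_projected}. Its hypotheses are Assumption~\ref{asmp:ref_opt_projected}, the deviation bound \eqref{eq:projected_bv_dev_ref}, the transfer bound \eqref{eq:projected_bv_transfer_ref}, and the radius condition $\beta_{h,k}^{\mathsf P}\ge\xi_{h,k}^{\mathsf P}+\rho_{h,k}^{\mathsf P}$ at every stage. These are exactly what the corollary assumes, so the proposition gives $f^\dagger\in\mathfrak F_{\mathsf P}^{(k-1)}$. In particular the projected confidence set is nonempty, and the argmax defining $f^{(k)}$ ranges over a set containing $f^\dagger$.

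\textbf{Step 2: optimistic selection.} By definition of $f^{(k)}$ as a maximizer of $f\mapsto V_{f,1}(s_1)$ over $\mathfrak F_{\mathsf P}^{(k-1)}$, we get $V_{f^{(k)},1}(s_1)\ge V_{f^\dagger,1}(s_1)$.

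\textbf{Step 3: chaining.} Combine Step~2 with \eqref{eq:ref_gap_projected} to obtain
\[
V_1^{\star,\sigma}(s_1)\le V_{f^\dagger,1}(s_1)+\epsilon_{\mathrm{appr}}\le V_{f^{(k)},1}(s_1)+\epsilon_{\mathrm{appr}}.
\]
This is the claimed inequality. Since it holds for every episode $k$ at which the conditions of Proposition~\ref{prop:ref_retention_projected} are in force, it is exactly Assumption~\ref{asmp:opt_appendix} for the projected route.

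\textbf{Main obstacle.} The argument is purely deterministic given the hypotheses; the only delicate point is bookkeeping. Retention must hold at every stage $h$ simultaneously, because membership in $\mathfrak F_{\mathsf P}^{(k-1)}$ requires all $H$ validation constraints. I would state explicitly that the per-stage conditions are assumed for all $h\in[H]$, as in the proposition. If they hold only on high-probability events, a union bound over $h$ (and over $k$ for the "in particular" clause) is needed. No projection bias enters here: \eqref{eq:ref_fixed_point_projected} is a projected fixed point, and the bias is accounted for separately in Theorem~\ref{thm:deterministic_reduction_projected}.
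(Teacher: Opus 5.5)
Your proposal is correct and matches the paper's proof step for step. It retains $f^\dagger$ in $\mathfrak F_{\mathsf P}^{(k-1)}$ via Proposition~\ref{prop:ref_retention_projected}, uses optimistic selection to get $V_{f^{(k)},1}(s_1)\ge V_{f^\dagger,1}(s_1)$, and chains this with \eqref{eq:ref_gap_projected}. Your bookkeeping remarks (stagewise conditions for all $h$, union bounds when the hypotheses hold only with high probability) are accurate, but the deterministic statement as posed does not need them.
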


\begin{proof}
By Proposition~\ref{prop:ref_retention_projected},
\(
f^\dagger\in\mathfrak F_{\mathsf P}^{(k-1)}
\).
Hence the optimistic choice over
\(
\mathfrak F_{\mathsf P}^{(k-1)}
\)
satisfies
\(
V_{f^{(k)},1}(s_1)\ge V_{f^\dagger,1}(s_1)
\).
Combining this with \eqref{eq:ref_gap_projected} proves the claim.
\end{proof}

\subsection{A Covering-Number Dual ERM Concentration Lemma}

We now return to a fresh-sample, current-occupancy model to verify the on-policy operator-learning
budget Assumption~\ref{asmp:op_appendix}. Fix a stage $h$ and an occupancy measure $\mu_h^k$.
Given i.i.d.\ samples
\(
\{(s_i,a_i,b_i,s_i')\}_{i=1}^{N_{h,k}}
\)
from the joint law
\(
\mu_h^k\otimes P_h^0
\),
define
\begin{equation*}
\widehat{\Lcal}_{h,k}(g;f)
:=
\frac{1}{N_{h,k}}
\sum_{i=1}^{N_{h,k}}
\ell_\phi(f_{h+1};s_i,a_i,b_i,s_i';g),
\end{equation*}
and let
\(
\hat g_{h,f}^{(k)}\in\argmin_{g\in\Gcal_h}\widehat{\Lcal}_{h,k}(g;f)
\).
The fitted plug-in operator is
\begin{equation*}
\bigl(\widehat{\Tcal}_{h,f}^{\sigma,(k)} f_{h+1}\bigr)(s,a,b)
:=
r_h(s,a,b)
-
\EE_{s'\sim P_h^0(\cdot\mid s,a,b)}
\sbr{\ell_\phi(f_{h+1};s,a,b,s';\hat g_{h,f}^{(k)})}.
\end{equation*}

\begin{lemma}[Uniform dual concentration via covering numbers]
\label{lem:dual_uniform}
Under Assumptions~\ref{asmp:bounded_dual}--\ref{asmp:cover_dual}, for every $\delta\in(0,1)$,
with probability at least $1-\delta$,
\begin{equation}
\sup_{f\in\Fcal_{h+1},\, g\in\Gcal_h}
\bigl|
\widehat{\Lcal}_{h,k}(g;f) - \Lcal_h^\mu(g;f)
\bigr|
\leq
\Delta_{h,k}^{\mathrm{cov}}(\delta;\eta_F,\eta_G),
\label{eq:uniform_dual_dev}
\end{equation}
where
\begin{equation*}
\Delta_{h,k}^{\mathrm{cov}}(\delta;\eta_F,\eta_G)
:=
B_\phi(\sigma)\sqrt{
\frac{2\log\!\bigl(2\mathcal N_F(\eta_F)\mathcal N_G(\eta_G)/\delta\bigr)}
{N_{h,k}}
}
+
2L_{\ell,h}(\eta_F+\eta_G).
\end{equation*}
\end{lemma}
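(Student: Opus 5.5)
We follow the same net-plus-Lipschitz template as Lemma~\ref{lem:pooled_dual_uniform}; the only change is that the samples are now i.i.d.\ from $\mu_h^k\otimes P_h^0$ rather than adaptively collected, so Hoeffding replaces Azuma--Hoeffding. Here $\Lcal_h^\mu$ denotes $\Lcal_h^{\mu_h^k}$.

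First, take an $\eta_F$-net $\{f^1,\ldots,f^{M_F}\}$ of $\Fcal_{h+1}$ under $d_{\Fcal,h+1}$ and an $\eta_G$-net $\{g^1,\ldots,g^{M_G}\}$ of $\Gcal_h$ under $d_{\Gcal,h}$, with $M_F=\mathcal N_F(\eta_F)$ and $M_G=\mathcal N_G(\eta_G)$; these exist by Assumption~\ref{asmp:cover_dual}.

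Fix a net pair $(f^r,g^m)$. The summands $\ell_\phi(f^r_{h+1};s_i,a_i,b_i,s_i';g^m)$ are i.i.d.\ with mean $\Lcal_h^{\mu_h^k}(g^m;f^r)$. By Assumption~\ref{asmp:bounded_dual} they take values in $[-B_\phi(\sigma),B_\phi(\sigma)]$, an interval of length $2B_\phi(\sigma)$. Hoeffding's inequality then gives
\[
\PP\bigl(|\widehat{\Lcal}_{h,k}(g^m;f^r)-\Lcal_h^{\mu_h^k}(g^m;f^r)|>t\bigr)\le 2\exp\!\bigl(-N_{h,k}t^2/(2B_\phi^2(\sigma))\bigr).
\]
A union bound over the $M_FM_G$ net pairs, with the right-hand side set equal to $\delta$, yields the first term of $\Delta^{\mathrm{cov}}_{h,k}$ uniformly over the net.

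To extend to arbitrary $(f,g)$, pick the nearest net points $(\bar f,\bar g)$. Assumption~\ref{asmp:lipschitz_dual} bounds the pointwise loss difference by $L_{\ell,h}(\eta_F+\eta_G)$. Averaging over the samples gives the same bound for the empirical loss, and taking expectations (Jensen) gives it for the population loss. Inserting the net point via the triangle inequality therefore adds $2L_{\ell,h}(\eta_F+\eta_G)$, which completes \eqref{eq:uniform_dual_dev}.

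The argument has no real obstacle. The only care points are keeping the Hoeffding constant consistent with the interval length $2B_\phi(\sigma)$, and noting that the sup over the uncountable class is controlled deterministically through the Lipschitz step, so no measurability issue arises beyond the finite net.
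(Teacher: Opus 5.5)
Your proposal is correct and matches the paper's proof step for step. Both use $\eta_F$- and $\eta_G$-nets and Hoeffding with range $[-B_\phi(\sigma),B_\phi(\sigma)]$, giving tail $2\exp(-N_{h,k}t^2/(2B_\phi^2(\sigma)))$, then a union bound over the $\mathcal N_F(\eta_F)\mathcal N_G(\eta_G)$ net pairs, and finally a three-term triangle inequality in which the pointwise Lipschitz assumption, applied to both the empirical and population losses, contributes the $2L_{\ell,h}(\eta_F+\eta_G)$ term.
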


\begin{proof}
Let
\(
\{f^1,\ldots,f^{M_F}\}\subseteq\Fcal_{h+1}
\)
be an $\eta_F$-net under $d_{\Fcal,h+1}$ and
\(
\{g^1,\ldots,g^{M_G}\}\subseteq\Gcal_h
\)
be an $\eta_G$-net under $d_{\Gcal,h}$, where
\(
M_F=\mathcal N_F(\eta_F)
\)
and
\(
M_G=\mathcal N_G(\eta_G)
\).

Fix a net pair $(f^r,g^m)$. By Assumption~\ref{asmp:bounded_dual}, the random variables
\(
\ell_\phi(f_{h+1}^r;s_i,a_i,b_i,s_i';g^m)
\)
are i.i.d.\ and take values in
\(
[-B_\phi(\sigma),B_\phi(\sigma)]
\).
Hoeffding's inequality therefore yields, for every $t>0$,
\begin{equation*}
\PP\!\left(
\bigl|
\widehat{\Lcal}_{h,k}(g^m;f^r)-\Lcal_h^\mu(g^m;f^r)
\bigr| > t
\right)
\leq
2\exp\!\left(
-\frac{N_{h,k} t^2}{2B_\phi^2(\sigma)}
\right).
\end{equation*}
Applying a union bound over all $M_FM_G$ net pairs, we obtain that with probability at least
$1-\delta$,
\begin{equation}
\max_{r\in[M_F],\,m\in[M_G]}
\bigl|
\widehat{\Lcal}_{h,k}(g^m;f^r)-\Lcal_h^\mu(g^m;f^r)
\bigr|
\leq
B_\phi(\sigma)\sqrt{
\frac{2\log\!\bigl(2M_FM_G/\delta\bigr)}
{N_{h,k}}
}.
\label{eq:net_dual_dev}
\end{equation}

Now fix arbitrary $(f,g)\in\Fcal_{h+1}\times\Gcal_h$, and choose net points
\(
\bar f,\bar g
\)
such that
\(
d_{\Fcal,h+1}(f,\bar f)\leq \eta_F
\)
and
\(
d_{\Gcal,h}(g,\bar g)\leq \eta_G
\).
By Assumption~\ref{asmp:lipschitz_dual}, for every sample $(s,a,b,s')$,
\begin{equation*}
\bigl|
\ell_\phi(f_{h+1};s,a,b,s';g)
- \ell_\phi(\bar f_{h+1};s,a,b,s';\bar g)
\bigr|
\leq
L_{\ell,h}(\eta_F+\eta_G).
\end{equation*}
Averaging this inequality over the empirical sample and over the population law separately gives
\begin{align*}
\bigl|
\widehat{\Lcal}_{h,k}(g;f)-\widehat{\Lcal}_{h,k}(\bar g;\bar f)
\bigr|
&\leq
L_{\ell,h}(\eta_F+\eta_G),
\\
\bigl|
\Lcal_h^\mu(g;f)-\Lcal_h^\mu(\bar g;\bar f)
\bigr|
&\leq
L_{\ell,h}(\eta_F+\eta_G).
\end{align*}
Combining these two inequalities with \eqref{eq:net_dual_dev} yields
\begin{align*}
\bigl|
\widehat{\Lcal}_{h,k}(g;f)-\Lcal_h^\mu(g;f)
\bigr|
&\leq
\bigl|
\widehat{\Lcal}_{h,k}(g;f)-\widehat{\Lcal}_{h,k}(\bar g;\bar f)
\bigr|
+
\bigl|
\widehat{\Lcal}_{h,k}(\bar g;\bar f)-\Lcal_h^\mu(\bar g;\bar f)
\bigr|
\\
&\qquad
+
\bigl|
\Lcal_h^\mu(\bar g;\bar f)-\Lcal_h^\mu(g;f)
\bigr|
\\
&\leq
B_\phi(\sigma)\sqrt{
\frac{2\log\!\bigl(2M_FM_G/\delta\bigr)}
{N_{h,k}}
}
+
2L_{\ell,h}(\eta_F+\eta_G).
\end{align*}
Since $(f,g)$ was arbitrary, this proves \eqref{eq:uniform_dual_dev}.
\end{proof}

\begin{corollary}[ERM excess population risk]
\label{cor:erm_excess}
Under Assumptions~\ref{asmp:bounded_dual}--\ref{asmp:cover_dual}, with probability at least
$1-\delta$, simultaneously for all $f\in\Fcal_{h+1}$,
\begin{equation}
\Lcal_h^\mu(\hat g_{h,f}^{(k)};f)
-
\inf_{g\in\Gcal_h}\Lcal_h^\mu(g;f)
\leq
2\Delta_{h,k}^{\mathrm{cov}}(\delta;\eta_F,\eta_G).
\label{eq:erm_excess}
\end{equation}
\end{corollary}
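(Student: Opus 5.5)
The argument is the standard ERM excess-risk sandwich, run on the event of Lemma~\ref{lem:dual_uniform}. This mirrors Corollary~\ref{cor:pooled_erm_excess}, with the pooled loss replaced by the fresh-sample loss \(\Lcal_h^\mu\) and \(\bar\Delta^{\mathrm{cov}}_{h,k}\) replaced by \(\Delta^{\mathrm{cov}}_{h,k}\). On that event, which has probability at least \(1-\delta\), the deviation bound \eqref{eq:uniform_dual_dev} holds uniformly over \(\Fcal_{h+1}\times\Gcal_h\). The conclusion will therefore hold simultaneously for all \(f\) without any further union bound. This uniformity matters, because \(\hat g_{h,f}^{(k)}\) depends on both the data and \(f\).

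Fix \(f\in\Fcal_{h+1}\) and \(\tau>0\). Since the infimum need not be attained, pick a near-minimizer \(g^\star_{f,\tau}\in\Gcal_h\) with
\[
\Lcal_h^\mu(g^\star_{f,\tau};f)\le \inf_{g\in\Gcal_h}\Lcal_h^\mu(g;f)+\tau .
\]
Abbreviate \(\Delta:=\Delta_{h,k}^{\mathrm{cov}}(\delta;\eta_F,\eta_G)\). The chain of inequalities is as follows.

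First, uniform deviation applied at \(\hat g_{h,f}^{(k)}\) gives \(\Lcal_h^\mu(\hat g_{h,f}^{(k)};f)\le \widehat\Lcal_{h,k}(\hat g_{h,f}^{(k)};f)+\Delta\).

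Second, the ERM property of \(\hat g_{h,f}^{(k)}\) gives \(\widehat\Lcal_{h,k}(\hat g_{h,f}^{(k)};f)\le\widehat\Lcal_{h,k}(g^\star_{f,\tau};f)\).

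Third, uniform deviation applied at \(g^\star_{f,\tau}\) gives \(\widehat\Lcal_{h,k}(g^\star_{f,\tau};f)\le \Lcal_h^\mu(g^\star_{f,\tau};f)+\Delta\).

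Finally, the choice of \(g^\star_{f,\tau}\) bounds the last quantity by \(\inf_g\Lcal_h^\mu(g;f)+\tau\).

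Combining these gives
\[
\Lcal_h^\mu(\hat g_{h,f}^{(k)};f)-\inf_{g\in\Gcal_h}\Lcal_h^\mu(g;f)\le 2\Delta+\tau,
\]
and letting \(\tau\downarrow0\) yields \eqref{eq:erm_excess}.

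There is no real obstacle here. The only points needing care are the following:
\begin{itemize}
\item the deviation event must be uniform over \(g\), so that it applies to the data-dependent \(\hat g_{h,f}^{(k)}\);
\item the ERM minimizer is assumed to exist, as the statement presupposes by writing \(\argmin\); an approximate minimizer would add its optimization slack to the bound;
\item since \(\hat g_{h,f}^{(k)}\) and \(g^\star_{f,\tau}\) lie in \(\Gcal_h\), Lemma~\ref{lem:dual_uniform} applies to both.
\end{itemize}
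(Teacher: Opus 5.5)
Your proposal is correct and follows essentially the same argument as the paper. On the uniform deviation event of Lemma~\ref{lem:dual_uniform}, both sandwich the ERM output against a $\tau$-near-minimizer in $\Gcal_h$ and let $\tau\downarrow 0$, with simultaneity in $f$ inherited from the uniformity of that event.
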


\begin{proof}
Fix $f\in\Fcal_{h+1}$ and $\tau>0$. Choose
\(
g_{f,\tau}^\star\in\Gcal_h
\)
such that
\begin{equation*}
\Lcal_h^\mu(g_{f,\tau}^\star;f)
\leq
\inf_{g\in\Gcal_h}\Lcal_h^\mu(g;f) + \tau.
\end{equation*}
On the event \eqref{eq:uniform_dual_dev},
\begin{align*}
\Lcal_h^\mu(\hat g_{h,f}^{(k)};f)
&\leq
\widehat{\Lcal}_{h,k}(\hat g_{h,f}^{(k)};f)
+ \Delta_{h,k}^{\mathrm{cov}}(\delta;\eta_F,\eta_G) \\
&\leq
\widehat{\Lcal}_{h,k}(g_{f,\tau}^\star;f)
+ \Delta_{h,k}^{\mathrm{cov}}(\delta;\eta_F,\eta_G) \\
&\leq
\Lcal_h^\mu(g_{f,\tau}^\star;f)
+ 2\Delta_{h,k}^{\mathrm{cov}}(\delta;\eta_F,\eta_G) \\
&\leq
\inf_{g\in\Gcal_h}\Lcal_h^\mu(g;f)
+ 2\Delta_{h,k}^{\mathrm{cov}}(\delta;\eta_F,\eta_G)
+ \tau.
\end{align*}
Since $\tau>0$ is arbitrary, \eqref{eq:erm_excess} follows. The event
\eqref{eq:uniform_dual_dev} already holds uniformly over all $f$, so the conclusion is
simultaneous in $f$.
\end{proof}

\begin{corollary}[Operator-learning budget from dual ERM]
\label{cor:operator_budget}
Under Assumptions~\ref{asmp:bounded_dual}--\ref{asmp:dual_gap_appendix}, with probability at
least $1-\delta$, simultaneously for all $f\in\Fcal_{h+1}$,
\begin{equation}
\left\|
\widehat{\Tcal}_{h,f}^{\sigma,(k)} f_{h+1}
- \Tcal_h^\sigma f_{h+1}
\right\|_{1,\mu_h^k}
\leq
2\Delta_{h,k}^{\mathrm{cov}}(\delta;\eta_F,\eta_G)
+
\epsilon_{\mathrm{dual}}.
\label{eq:operator_budget}
\end{equation}
Therefore Assumption~\ref{asmp:op_appendix} holds with
\(
\alpha_{h,k}=2\Delta_{h,k}^{\mathrm{cov}}(\delta;\eta_F,\eta_G)
\).
\end{corollary}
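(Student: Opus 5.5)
The argument mirrors the operator-transfer step of Corollary~\ref{cor:bv_transfer}, with the single current occupancy $\mu_h^k$ in place of the pooled mixture $\bar\mu_{h,k}$. Fix $f\in\Fcal_{h+1}$ and write $e_{h,k}(f):=\widehat{\Tcal}_{h,f}^{\sigma,(k)} f_{h+1}-\Tcal_h^\sigma f_{h+1}$.

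\emph{Step 1: sign of the gap.} By the ambient dual representation \eqref{eq:ambient_dual_rep} and $\hat g_{h,f}^{(k)}\in\Gcal_h\subseteq\bar\Gcal_h$, pointwise in $(s,a,b)$ we have
\[
-e_{h,k}(f)(s,a,b)=\EE_{s'\sim P_h^0}\bigl[\ell_\phi(f_{h+1};s,a,b,s';\hat g_{h,f}^{(k)})\bigr]-\inf_{\bar g\in\bar\Gcal_h}\EE_{s'\sim P_h^0}\bigl[\ell_\phi(f_{h+1};s,a,b,s';\bar g)\bigr]\ge 0 .
\]
So $e_{h,k}(f)\le 0$, and $\|e_{h,k}(f)\|_{1,\mu_h^k}=\EE_{\mu_h^k}[-e_{h,k}(f)]$. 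No square-to-linear conversion is needed here, unlike Corollary~\ref{cor:bv_transfer}; we bound the $L_1$ norm directly.

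\emph{Step 2: relate the $L_1$ norm to excess dual risk.} Integrating the display above against $\mu_h^k$ gives
\[
\EE_{\mu_h^k}[-e_{h,k}(f)]=\Lcal_h^{\mu}(\hat g_{h,f}^{(k)};f)-\EE_{(s,a,b)\sim\mu_h^k}\Bigl[\inf_{\bar g}\EE_{s'}\ell_\phi(\cdot;\bar g)\Bigr].
\]
We need the pointwise infimum inside the expectation to equal $\inf_{\bar g}\Lcal_h^{\mu}(\bar g;f)$. This is the main subtlety. It holds if $\bar\Gcal_h$ is rich enough to contain the stagewise (measurable, state-action dependent) pointwise dual optimizer, which is how the ambient class is intended: it contains the exact robust dual optimizer, as stated in Assumption~\ref{asmp:main_dual}. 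Interchange of infimum and expectation then follows from measurable selection. If only approximate attainment holds, we pass through an $\epsilon$-optimal selector and let $\epsilon\to0$.

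\emph{Step 3: split the excess risk.} Add and subtract $\inf_{g\in\Gcal_h}\Lcal_h^{\mu}(g;f)$. The first difference is bounded by $2\Delta_{h,k}^{\mathrm{cov}}(\delta;\eta_F,\eta_G)$ on the event of Corollary~\ref{cor:erm_excess}. That event is uniform over $f$, so the resulting bound is simultaneous in $f$. The second difference is at most $\epsilon_{\mathrm{dual}}$ by Assumption~\ref{asmp:dual_gap_appendix} applied with $\nu=\mu_h^k$. Summing yields \eqref{eq:operator_budget} with probability $1-\delta$.

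\emph{Step 4: conclude.} Specialize to $f=f^{(k)}$; this is legitimate because the bound is uniform over $f$. Then Assumption~\ref{asmp:op_appendix} holds with $\alpha_{h,k}=2\Delta_{h,k}^{\mathrm{cov}}$.
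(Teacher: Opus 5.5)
Your proof is correct and follows the paper's argument step for step. Both derive the sign of the gap from the ambient dual representation, identify the $L_1(\mu_h^k)$ norm with the excess dual risk, and split that risk into the ERM term (Corollary~\ref{cor:erm_excess}) plus the approximation term (Assumption~\ref{asmp:dual_gap_appendix}); your remark that this needs $\EE_{\mu_h^k}[\inf_{\bar g}\cdots]=\inf_{\bar g}\Lcal_h^{\mu}(\bar g;f)$, justified by $\bar\Gcal_h$ containing the measurable pointwise optimizer, fills in a step the paper asserts without comment in \eqref{eq:l1_equals_excess}, and it is genuinely needed because the automatic inequality $\EE[\inf]\le\inf\EE$ points the wrong way for an upper bound.
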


\begin{proof}
Fix $f\in\Fcal_{h+1}$. By \eqref{eq:ambient_dual_rep}, the pointwise operator gap is
\begin{align*}
\bigl(\widehat{\Tcal}_{h,f}^{\sigma,(k)} f_{h+1}
- \Tcal_h^\sigma f_{h+1}\bigr)(s,a,b)
&=
\inf_{\bar g\in\bar{\Gcal}_h}
\EE_{s'\sim P_h^0(\cdot\mid s,a,b)}
\sbr{\ell_\phi(f_{h+1};s,a,b,s';\bar g)}
\\
&\quad
- \EE_{s'\sim P_h^0(\cdot\mid s,a,b)}
\sbr{\ell_\phi(f_{h+1};s,a,b,s';\hat g_{h,f}^{(k)})}.
\end{align*}
Because the first term is a pointwise infimum, the right-hand side is nonpositive. Hence the
$L_1(\mu_h^k)$ norm equals the $\mu_h^k$-expectation of its negative:
\begin{align}
\left\|
\widehat{\Tcal}_{h,f}^{\sigma,(k)} f_{h+1}
- \Tcal_h^\sigma f_{h+1}
\right\|_{1,\mu_h^k}
&=
\Lcal_h^\mu(\hat g_{h,f}^{(k)};f)
-
\inf_{\bar g\in\bar{\Gcal}_h}\Lcal_h^\mu(\bar g;f).
\label{eq:l1_equals_excess}
\end{align}
Add and subtract
\(
\inf_{g\in\Gcal_h}\Lcal_h^\mu(g;f)
\)
in \eqref{eq:l1_equals_excess} to get
\begin{align*}
\left\|
\widehat{\Tcal}_{h,f}^{\sigma,(k)} f_{h+1}
- \Tcal_h^\sigma f_{h+1}
\right\|_{1,\mu_h^k}
&=
\underbrace{
\Lcal_h^\mu(\hat g_{h,f}^{(k)};f)
- \inf_{g\in\Gcal_h}\Lcal_h^\mu(g;f)
}_{\text{ERM estimation error}}
\\
&\quad
+
\underbrace{
\inf_{g\in\Gcal_h}\Lcal_h^\mu(g;f)
- \inf_{\bar g\in\bar{\Gcal}_h}\Lcal_h^\mu(\bar g;f)
}_{\text{dual approximation error}}.
\end{align*}
The first term is bounded by
\(
2\Delta_{h,k}^{\mathrm{cov}}(\delta;\eta_F,\eta_G)
\)
by Corollary~\ref{cor:erm_excess}; the second term is bounded by
\(
\epsilon_{\mathrm{dual}}
\)
by Assumption~\ref{asmp:dual_gap_appendix}. This proves \eqref{eq:operator_budget}.
\end{proof}

\subsection{Concrete Structural Upper Bounds}

We now compress the witness and covering assumptions into explicit formulas for common model
classes. Throughout this subsection, write
\(
\Xcal_h:=\Scal\times\Acal\times\Bcal
\).

\begin{lemma}[Witness-envelope bound]
\label{lem:witness_envelope}
Assume rewards satisfy
\(
0\leq r_h(s,a,b)\leq 1
\)
and the value class is uniformly bounded at stage $h$:
\(
\sup_{f\in\Fcal}\|f_h\|_\infty\leq B_{F,h}
\).
Under Assumption~\ref{asmp:bounded_dual},
\begin{equation*}
\sup_{w\in\Wcal_h}\|w\|_\infty
\leq
B_{F,h}+1+B_\phi(\sigma).
\end{equation*}
Therefore Assumption~\ref{asmp:bounded_witness} holds with
\(
B_{W,h}=B_{F,h}+1+B_\phi(\sigma)
\).
\end{lemma}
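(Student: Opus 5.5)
The plan is a direct pointwise triangle inequality on the definition of the witness class \(\Wcal_h\). Fix an arbitrary witness \(w=w_{h,f,g}\in\Wcal_h\) with \(f\in\Fcal\) and \(g\in\Gcal_h\), and an arbitrary tuple \((s,a,b)\in\Xcal_h\). By construction,
\[
w(s,a,b)=f_h(s,a,b)-r_h(s,a,b)+\EE_{s'\sim P_h^0(\cdot\mid s,a,b)}\sbr{\ell_\phi(f_{h+1};s,a,b,s';g)},
\]
so \(|w(s,a,b)|\) is at most the sum of the absolute values of the three terms. We will bound these terms separately.

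The first term is controlled by the uniform value-class bound, \(|f_h(s,a,b)|\le\|f_h\|_\infty\le B_{F,h}\). The second term satisfies \(|r_h(s,a,b)|\le 1\) because rewards lie in \([0,1]\). For the third term, we will apply Jensen's inequality (equivalently, monotonicity of the expectation) to move the absolute value inside the nominal expectation. We will then invoke Assumption~\ref{asmp:bounded_dual}. That assumption bounds \(|\ell_\phi(f_{h+1};s,a,b,s';g)|\le B_\phi(\sigma)\) for every sample \(s'\) and every \((f,g)\in\Fcal_{h+1}\times\Gcal_h\), so the expectation is at most \(B_\phi(\sigma)\) in absolute value.

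Summing the three bounds gives \(|w(s,a,b)|\le B_{F,h}+1+B_\phi(\sigma)\) for all \((s,a,b)\). Taking the supremum over \((s,a,b)\) and then over \(w\in\Wcal_h\) yields the claimed envelope. The identification \(B_{W,h}=B_{F,h}+1+B_\phi(\sigma)\) in Assumption~\ref{asmp:bounded_witness} is then immediate.

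The only point needing care is the third term. We must check that the continuation slice \(f_{h+1}\) of a candidate \(f\in\Fcal\) lies in \(\Fcal_{h+1}\) and that \(g\in\Gcal_h\), so that Assumption~\ref{asmp:bounded_dual} applies. Both hold by the product structure \(\Fcal=\Fcal_1\times\cdots\times\Fcal_H\) and the definition of \(\Wcal_h\). The remaining requirement is measurability of the loss in \(s'\), which is implicit in the definition of the plug-in operator, so that the conditional expectation is well defined. No concentration or covering argument is needed; the lemma is purely deterministic.
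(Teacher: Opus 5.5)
Your proposal is correct and matches the paper's proof essentially verbatim. The paper also applies a pointwise triangle inequality to the three terms defining $w_{h,f,g}$, bounds them by $B_{F,h}$, by the reward bound $1$, and by $B_\phi(\sigma)$ via Assumption~\ref{asmp:bounded_dual}, and then takes the supremum over $(f,g)$; your remarks that the product structure places $f_{h+1}$ in $\Fcal_{h+1}$ and on measurability make explicit details the paper leaves implicit.
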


\begin{proof}
Fix any
\(
w_{h,f,g}\in\Wcal_h
\).
For every $(s,a,b)\in\Xcal_h$,
\begin{align*}
\bigl|w_{h,f,g}(s,a,b)\bigr|
&\leq
|f_h(s,a,b)| + |r_h(s,a,b)|
+
\left|
\EE_{s'\sim P_h^0(\cdot\mid s,a,b)}
\sbr{\ell_\phi(f_{h+1};s,a,b,s';g)}
\right| \\
&\leq
B_{F,h}+1+B_\phi(\sigma),
\end{align*}
where the last inequality uses the reward bound and Assumption~\ref{asmp:bounded_dual}. Taking
the supremum over $(f,g)$ proves the claim.
\end{proof}

\begin{lemma}[Exact residual envelope under completeness]
\label{lem:exact_residual_envelope}
Assume robust minimax completeness holds stagewise and that the value class is uniformly bounded:
\(
\sup_{f\in\Fcal}\|f_h\|_\infty\leq B_{F,h}
\)
for every
\(
h\in[H]
\).
Then every exact robust residual
\(
\psi\in\Pscr_h=(I-\Tcal_h^\sigma)\Fcal
\)
satisfies
\begin{equation*}
\|\psi\|_\infty \leq 2B_{F,h}.
\end{equation*}
\end{lemma}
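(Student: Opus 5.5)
The bound follows from the triangle inequality once completeness is used to put the robust backup back inside the bounded value class. Fix a stage \(h\) and an arbitrary residual \(\psi\in\Pscr_h\). By definition of \(\Pscr_h=(I-\Tcal_h^\sigma)\Fcal\), there is some \(f\in\Fcal\) with \(\psi=f_h-\Tcal_h^\sigma f_{h+1}\). Pointwise on \(\Xcal_h\), the triangle inequality gives
\[
|\psi(s,a,b)|\le |f_h(s,a,b)|+\bigl|(\Tcal_h^\sigma f_{h+1})(s,a,b)\bigr| .
\]
The first term is at most \(B_{F,h}\) by the uniform-boundedness hypothesis.

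For the second term, robust minimax completeness (Assumption~\ref{asmp:main_closure}) gives \(\Tcal_h^\sigma f_{h+1}\in\Fcal_h\). To apply the stage-\(h\) envelope, I need a sequence \(\tilde f\in\Fcal\) with \(\tilde f_h=\Tcal_h^\sigma f_{h+1}\). This follows from the stagewise product structure \(\Fcal=\Fcal_1\times\cdots\times\Fcal_H\), the same completion step used in Proposition~\ref{prop:witness_cover_from_value_dual}. Hence \(\sup_{s,a,b}|(\Tcal_h^\sigma f_{h+1})(s,a,b)|\le \sup_{\tilde f\in\Fcal}\|\tilde f_h\|_\infty\le B_{F,h}\).

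Adding the two bounds gives \(\|\psi\|_\infty\le 2B_{F,h}\). Since \(\psi\) was arbitrary, this holds uniformly over \(\Pscr_h\), and since \(h\) was arbitrary, at every stage.

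The only delicate point is that the envelope bounds each stage slice \(\Fcal_h\) with its own constant, and the backup must land in the stage-\(h\) slice rather than the stage-\((h+1)\) slice. That is exactly what completeness provides, so the constant is \(B_{F,h}\) and not \(B_{F,h+1}\) or \(1+B_{F,h+1}\). Without completeness, a direct bound would be \(|r_h|+\sup|V_{f,h+1}|\le 1+B_{F,h+1}\), using that the stage-game value is at most \(\|f_{h+1}\|_\infty\) by Lemma~\ref{lem:minimax_lipschitz} and that the robust expectation is an average. That route would give the weaker \(B_{F,h}+1+B_{F,h+1}\), which is why I use completeness.
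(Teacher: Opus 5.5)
Your proof is correct and matches the paper's argument: write $\psi=f_h-\Tcal_h^\sigma f_{h+1}$, use completeness to place $\Tcal_h^\sigma f_{h+1}$ in $\Fcal_h$, and apply the triangle inequality with the stage-$h$ envelope $B_{F,h}$ to each term. Your explicit use of the product structure $\Fcal=\Fcal_1\times\cdots\times\Fcal_H$, which guarantees that every element of $\Fcal_h$ inherits the envelope, is a small step the paper leaves implicit.
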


\begin{proof}
Fix
\(
\psi\in\Pscr_h
\).
By definition, there exists
\(
f\in\Fcal
\)
such that
\(
\psi=f_h-\Tcal_h^\sigma f_{h+1}
\).
By robust minimax completeness,
\(
\Tcal_h^\sigma f_{h+1}\in\Fcal_h
\).
Hence both
\(
f_h
\)
and
\(
\Tcal_h^\sigma f_{h+1}
\)
have sup norm at most
\(
B_{F,h}
\),
so
\begin{equation*}
\|\psi\|_\infty
\leq
\|f_h\|_\infty + \|\Tcal_h^\sigma f_{h+1}\|_\infty
\leq
2B_{F,h}.
\end{equation*}
\end{proof}

\begin{proposition}[Finite-dimensional linear covering bound]
\label{prop:linear_cover}
Let
\(
\Hcal
=
\cbr{x\mapsto \langle \theta,\psi(x)\rangle : \theta\in\RR^d,\ \|\theta\|_2\leq R}
\)
be a linear function class on a domain $\Xcal$, and assume
\(
\sup_{x\in\Xcal}\|\psi(x)\|_2\leq \kappa
\).
Then for every $\eta>0$,
\begin{equation}
\log \mathcal N\bigl(\Hcal,\|\cdot\|_\infty,\eta\bigr)
\leq
d\log\!\left(1+\frac{2R\kappa}{\eta}\right).
\label{eq:linear_cover_bound}
\end{equation}
\end{proposition}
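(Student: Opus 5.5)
The plan is to reduce sup-norm covering of $\Hcal$ to Euclidean covering of the parameter ball, and then use the standard volumetric bound. First, observe that the parametrization is Lipschitz from $(\RR^d,\|\cdot\|_2)$ into $(\Hcal,\|\cdot\|_\infty)$. For $\theta,\theta'$ in the ball $\mathbb B_R:=\{\theta:\|\theta\|_2\le R\}$, Cauchy--Schwarz and the feature bound give
\[
\sup_{x\in\Xcal}\bigl|\langle\theta-\theta',\psi(x)\rangle\bigr|\le \kappa\|\theta-\theta'\|_2 .
\]
Hence any $(\eta/\kappa)$-net $\{\theta^1,\dots,\theta^M\}$ of $\mathbb B_R$ in $\ell_2$ induces an $\eta$-net $\{x\mapsto\langle\theta^m,\psi(x)\rangle\}_{m\le M}$ of $\Hcal$ in $\|\cdot\|_\infty$. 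The net elements lie in $\Hcal$ if we take the $\theta^m$ inside $\mathbb B_R$. If $\kappa=0$, the class is the single zero function and the bound is trivial.

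It then remains to show
\[
\mathcal N(\mathbb B_R,\|\cdot\|_2,\epsilon)\le (1+2R/\epsilon)^d, \qquad \epsilon=\eta/\kappa .
\]
I will use the maximal packing argument. Take a maximal $\epsilon$-separated subset $\{\theta^1,\dots,\theta^M\}\subseteq\mathbb B_R$. By maximality it is an $\epsilon$-net of $\mathbb B_R$. The balls of radius $\epsilon/2$ around its points are pairwise disjoint and are contained in $\mathbb B_{R+\epsilon/2}$. Comparing volumes gives
\[
M(\epsilon/2)^d\le (R+\epsilon/2)^d,
\]
so $M\le(1+2R/\epsilon)^d$. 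Substituting $\epsilon=\eta/\kappa$ gives $(1+2R\kappa/\eta)^d$, and taking logarithms yields \eqref{eq:linear_cover_bound}.

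None of these steps is a real obstacle; this is the standard volumetric covering bound. The only points needing care are, first, that the net is internal (centers in $\mathbb B_R$), which the maximal-packing construction guarantees, and second, the constant convention, since $1+2R/\epsilon$ comes from the radius-$\epsilon/2$ disjoint balls.
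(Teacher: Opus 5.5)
Your proposal is correct and follows essentially the same route as the paper: an $\eta/\kappa$-net of the parameter ball transfers to an $\eta$-net of $\Hcal$ in sup norm via the Cauchy--Schwarz Lipschitz bound, combined with the volumetric count $(1+2R\kappa/\eta)^d$. The only difference is that you prove the volumetric bound by the maximal-packing argument, whereas the paper quotes it.
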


\begin{proof}
Let
\(
\Theta:=\cbr{\theta\in\RR^d:\|\theta\|_2\leq R}
\).
The Euclidean ball $\Theta$ admits an
\(
\eta/\kappa
\)-net of cardinality at most
\(
(1+2R\kappa/\eta)^d
\).
For every pair of parameters $\theta,\tilde\theta$ and every $x\in\Xcal$,
\begin{equation*}
\bigl|
\langle \theta,\psi(x)\rangle - \langle \tilde\theta,\psi(x)\rangle
\bigr|
\leq
\|\theta-\tilde\theta\|_2 \|\psi(x)\|_2
\leq
\kappa\|\theta-\tilde\theta\|_2.
\end{equation*}
Hence an
\(
\eta/\kappa
\)-cover in parameter space induces an $\eta$-cover of $\Hcal$ under the sup norm. Taking
logarithms yields \eqref{eq:linear_cover_bound}.
\end{proof}

\begin{corollary}[Linear witness and dual classes]
\label{cor:linear_cover_inst}
Suppose that for a fixed stage $h$,
\begin{align*}
\Fcal_{h+1}
&\subseteq
\cbr{
x\mapsto \langle \theta_F,\phi_F(x)\rangle :
\theta_F\in\RR^{d_{F,h}},\ \|\theta_F\|_2\leq R_{F,h}
}, \\
\Gcal_h
&\subseteq
\cbr{
x\mapsto \Gamma_h(x)\vartheta_G :
\vartheta_G\in\RR^{d_{G,h}},\ \|\vartheta_G\|_2\leq R_{G,h}
}, \\
\Wcal_h
&\subseteq
\cbr{
x\mapsto \langle \theta_W,\phi_W(x)\rangle :
\theta_W\in\RR^{d_{W,h}},\ \|\theta_W\|_2\leq R_{W,h}
},
\end{align*}
with
\(
\sup_x\|\phi_F(x)\|_2\leq \kappa_{F,h}
\),
\(
\sup_x\|\Gamma_h(x)\|_{\mathrm{op}}\leq \kappa_{G,h}
\),
and
\(
\sup_x\|\phi_W(x)\|_2\leq \kappa_{W,h}
\).
Then
\begin{align*}
\log \mathcal N_F(\eta_F)
&\leq
d_{F,h}\log\!\left(1+\frac{2R_{F,h}\kappa_{F,h}}{\eta_F}\right), \\
\log \mathcal N_G(\eta_G)
&\leq
d_{G,h}\log\!\left(1+\frac{2R_{G,h}\kappa_{G,h}}{\eta_G}\right), \\
\log \mathcal N_W(\eta_W)
&\leq
d_{W,h}\log\!\left(1+\frac{2R_{W,h}\kappa_{W,h}}{\eta_W}\right).
\end{align*}
Consequently, Lemma~\ref{lem:bv_dev} and the dual concentration lemmas admit explicit
dimension-dependent logarithmic terms.
\end{corollary}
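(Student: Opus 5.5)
The plan is to apply Proposition~\ref{prop:linear_cover} three times, once per class. The only work is to check that the hypotheses of that proposition hold in each case, and in particular that the sup-norm over the relevant domain is the right metric. For $\Fcal_{h+1}$ and $\Wcal_h$ this is immediate. Both are contained in linear classes of the exact form treated in Proposition~\ref{prop:linear_cover}, with parameter radii $R_{F,h}$ and $R_{W,h}$ and feature bounds $\kappa_{F,h}$ and $\kappa_{W,h}$. Two facts finish these cases:
\begin{itemize}[leftmargin=*]
\item A covering number of a subset is at most that of the ambient class, up to the usual factor-of-two issue for internal versus external covers.
\item The nets produced in the proof of Proposition~\ref{prop:linear_cover} are parameter-space nets, i.e.\ external covers.
\end{itemize}
I will take $d_{\Fcal,h+1}$ to be the sup norm, consistent with how Proposition~\ref{prop:witness_cover_from_value_dual} combined these metrics. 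This gives the first and third bounds directly.

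For the dual class I would repeat the argument of Proposition~\ref{prop:linear_cover} rather than cite it verbatim, because the map $\vartheta_G\mapsto\Gamma_h(x)\vartheta_G$ may be vector-valued; for instance $g=(g_\eta,g_\nu)$ is two-dimensional. I take $d_{\Gcal,h}(g,\tilde g):=\sup_x\|g(x)-\tilde g(x)\|_2$. Then for all $x$,
\[
\|\Gamma_h(x)\vartheta-\Gamma_h(x)\tilde\vartheta\|_2\le\|\Gamma_h(x)\|_{\mathrm{op}}\|\vartheta-\tilde\vartheta\|_2\le\kappa_{G,h}\|\vartheta-\tilde\vartheta\|_2 .
\]
The standard volumetric bound gives an $\eta_G/\kappa_{G,h}$-net of the ball of radius $R_{G,h}$ in $\RR^{d_{G,h}}$ with at most $(1+2R_{G,h}\kappa_{G,h}/\eta_G)^{d_{G,h}}$ points. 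That net therefore induces an $\eta_G$-cover of $\Gcal_h$, and taking logarithms gives the second bound.

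The final sentence ("consequently") is obtained by substitution. Plug the three logarithmic covering bounds into $\xi_{h,k}$ in Lemma~\ref{lem:bv_dev} and into $\Delta^{\mathrm{cov}}_{h,k}$ and $\bar\Delta^{\mathrm{cov}}_{h,k}$ in Lemmas~\ref{lem:dual_uniform} and~\ref{lem:pooled_dual_uniform}. The resulting terms are explicit in $d_{F,h}$, $d_{G,h}$ and $d_{W,h}$.

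The main (mild) obstacle is metric compatibility. The Lipschitz Assumption~\ref{asmp:lipschitz_dual} is stated with respect to abstract metrics $d_{\Fcal,h+1}$ and $d_{\Gcal,h}$, and the corollary only makes sense if those metrics are the sup-type norms used here. I will state this identification explicitly. I will also note the subset step: if internal covers are required, the scale changes from $\eta$ to $\eta/2$, which only alters constants inside the logarithm. I will either absorb that change or observe that the lemmas only ever use an external net.
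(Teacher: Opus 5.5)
Your proposal is correct and follows the paper's proof: apply Proposition~\ref{prop:linear_cover} to each class separately, and for $\Gcal_h$ use the operator-norm bound $\|\Gamma_h(x)(\vartheta-\tilde\vartheta)\|_2\le\kappa_{G,h}\|\vartheta-\tilde\vartheta\|_2$ to rerun the same parameter-net argument. Your explicit identification of $d_{\Fcal,h+1}$ and $d_{\Gcal,h}$ with sup-type norms, and your remark on internal versus external covers, are details the paper leaves implicit; with external covers the subset step is exact and the stated constants need no adjustment.
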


\begin{proof}
Apply Proposition~\ref{prop:linear_cover} to each class separately. For $\Gcal_h$, the operator
norm bound implies
\(
\|\Gamma_h(x)(\vartheta-\tilde\vartheta)\|_2
\leq
\kappa_{G,h}\|\vartheta-\tilde\vartheta\|_2
\),
so the same covering argument applies.
\end{proof}

\begin{proposition}[Linear square-loss DE upper bound]
\label{prop:linear_de2}
Let
\(
\Pi_h
\)
be any family of probability measures on a domain
\(
\Xcal
\).
Consider the linear class
\begin{equation*}
\Hcal
:=
\cbr{
x\mapsto \langle \theta,\phi(x)\rangle :
\theta\in\RR^d,\ \|\theta\|_2\leq R
},
\end{equation*}
and assume
\(
\sup_{x\in\Xcal}\|\phi(x)\|_2\leq \kappa
\).
Then for every
\(
\epsilon>0
\),
\begin{equation}
\dim_{\mathrm{DE},2}(\Hcal,\Pi_h,\epsilon)
\leq
4d\log_2\!\left(
2+\frac{\kappa^2R^2}{\epsilon^2}
\right).
\label{eq:linear_de2_bound}
\end{equation}
\end{proposition}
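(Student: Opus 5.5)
The plan is the standard elliptical-potential argument, adapted to the square-loss distributional independence of Definition~\ref{def:de2}. Let $\mu^1,\ldots,\mu^m\in\Pi_h$ be a sequence in which each $\mu^i$ is $\epsilon$-independent of its predecessors. Each measure $\mu$ has a second-moment matrix $\Sigma_\mu:=\EE_{\mu}[\phi\phi^\top]$, so that $\EE_\mu[\langle\theta,\phi\rangle^2]=\theta^\top\Sigma_\mu\theta$. Independence of $\mu^i$ then gives a witness $\theta_i$ with $\|\theta_i\|_2\le R$ such that
\[
\theta_i^\top\Sigma_{\mu^i}\theta_i>\epsilon^2+\theta_i^\top\Bigl(\sum_{j<i}\Sigma_{\mu^j}\Bigr)\theta_i .
\]
Set $\lambda:=\epsilon^2/R^2$ and $V_{i}:=\lambda I+\sum_{j<i}\Sigma_{\mu^j}$. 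Since $\theta_i^\top(\lambda I)\theta_i\le\epsilon^2$, the display gives $\theta_i^\top\Sigma_{\mu^i}\theta_i>\theta_i^\top V_i\theta_i$.

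From here I would extract a trace or determinant increment in two steps.

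\emph{Step 1: a normalized inequality.} Write $u=V_i^{1/2}\theta_i$. The display becomes $u^\top V_i^{-1/2}\Sigma_{\mu^i}V_i^{-1/2}u>\|u\|^2$. Hence the matrix $A_i:=V_i^{-1/2}\Sigma_{\mu^i}V_i^{-1/2}$ has operator norm greater than $1$.

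\emph{Step 2: a determinant increment.} We have $\det V_{i+1}=\det V_i\cdot\det(I+A_i)$, and $A_i\succeq 0$ has an eigenvalue exceeding $1$. Therefore $\det(I+A_i)\ge 2$, and so $\det V_{m+1}\ge 2^m\lambda^d$.

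Next comes the upper bound on the determinant. We have $\mathrm{tr}\,\Sigma_\mu\le\kappa^2$, so $\mathrm{tr}\,V_{m+1}\le d\lambda+m\kappa^2$. By AM--GM,
\[
\det V_{m+1}\le\bigl(\lambda+m\kappa^2/d\bigr)^d .
\]
Combining the two bounds gives
\[
m\le d\log_2\!\Bigl(1+\frac{m\kappa^2R^2}{d\epsilon^2}\Bigr).
\]
It remains to solve this implicit inequality for $m$. The standard lemma is that $m\le d\log_2(1+mc/d)$ implies $m\le 4d\log_2(2+c)$ with $c=\kappa^2R^2/\epsilon^2$. I would check this by contradiction: substitute $m=4d\log_2(2+c)$ and use $\log_2(1+4c\log_2(2+c))<4\log_2(2+c)$. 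That last inequality holds because $1+4c\log_2(2+c)\le(2+c)^4$ for all $c\ge0$, and this is elementary since $\log_2(2+c)\le 2+c$.

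The main obstacle is Step~2. The data are measures, not single points, so each increment is a full PSD matrix rather than a rank-one update. The usual rank-one potential lemma must therefore be replaced by the eigenvalue argument, which only needs $\|A_i\|_{\mathrm{op}}>1$. A secondary care point is the regularizer: the strict inequality with the $\epsilon^2$ slack is exactly what lets $\lambda I$ be absorbed, and that is why $\lambda=\epsilon^2/R^2$ is chosen. Finally, the boundary behaviour of the constant $2$ inside the logarithm should be checked when $m$ is small, for example $m\le d$.
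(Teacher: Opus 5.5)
This is essentially the paper's proof. You use the same witnesses and the regularizer $\lambda=\epsilon^2/R^2$, the same eigenvalue-greater-than-one determinant doubling, and the same trace/AM--GM upper bound, arriving at $2^{m/d}\le 1+m\kappa^2R^2/(d\epsilon^2)$. The one point to tighten is the closing scalar step. Showing the inequality fails at the single value $m_0=4d\log_2(2+c)$ does not by itself exclude larger $m$. Add that $g(m):=m-d\log_2(1+mc/d)$ is convex with $g(0)=0$, so $g(m)\ge (m/m_0)\,g(m_0)>0$ for all $m\ge m_0$. The paper avoids this issue by arguing directly for every $y=m/d>4\log_2(2+a)$, using $y\le 2^{y/2}$ and $2^{y/2}>1+a$ to get $1+ay\le(1+a)2^{y/2}<2^y$.
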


\begin{proof}
Let
\(
\mu^1,\ldots,\mu^n\in\Pi_h
\)
be an
\(
\epsilon
\)-independent sequence with respect to
\(
\Hcal
\).
For each
\(
k\in[n]
\),
independence yields a witness
\(
\theta_k\in\RR^d
\)
with
\(
\|\theta_k\|_2\leq R
\)
such that
\begin{equation}
\EE_{\mu^k}\sbr{\langle \theta_k,\phi\rangle^2}
>
\epsilon^2
+
\sum_{j<k}\EE_{\mu^j}\sbr{\langle \theta_k,\phi\rangle^2}.
\label{eq:linear_de_witness}
\end{equation}
Define the covariance matrices
\begin{equation*}
\Sigma_k:=\EE_{\mu^k}\sbr{\phi(x)\phi(x)^\top},
\qquad
V_k:=\sum_{j=1}^k \Sigma_j,
\qquad
\lambda:=\frac{\epsilon^2}{R^2},
\qquad
M_k:=\lambda I_d + V_k.
\end{equation*}
Then \eqref{eq:linear_de_witness} is exactly
\begin{equation*}
\theta_k^\top \Sigma_k \theta_k
>
\epsilon^2 + \theta_k^\top V_{k-1}\theta_k.
\end{equation*}
Since
\(
\epsilon^2=\lambda R^2\geq \lambda\|\theta_k\|_2^2
\),
we obtain
\begin{equation*}
\theta_k^\top \Sigma_k \theta_k
>
\theta_k^\top M_{k-1}\theta_k.
\end{equation*}
Hence the largest eigenvalue of
\(
M_{k-1}^{-1/2}\Sigma_k M_{k-1}^{-1/2}
\)
is strictly larger than
\(
1
\),
and therefore
\begin{align*}
\det(M_k)
&=
\det(M_{k-1})\,
\det\!\Bigl(
I_d + M_{k-1}^{-1/2}\Sigma_k M_{k-1}^{-1/2}
\Bigr)
\\
&\geq
2\det(M_{k-1}).
\end{align*}
Iterating over
\(
k=1,\ldots,n
\)
gives the lower bound
\begin{equation}
\det(M_n)\geq 2^n \lambda^d.
\label{eq:linear_de_det_lower}
\end{equation}

For the upper bound, note that each
\(
\Sigma_k
\)
is positive semidefinite and
\begin{equation*}
\tr(\Sigma_k)
=
\EE_{\mu^k}\sbr{\|\phi(x)\|_2^2}
\leq
\kappa^2.
\end{equation*}
Therefore
\begin{equation*}
\tr(M_n)
=
d\lambda + \sum_{k=1}^n \tr(\Sigma_k)
\leq
d\lambda + n\kappa^2.
\end{equation*}
By the arithmetic-geometric mean inequality for eigenvalues,
\begin{equation}
\det(M_n)
\leq
\left(
\frac{\tr(M_n)}{d}
\right)^d
\leq
\left(
\lambda+\frac{n\kappa^2}{d}
\right)^d.
\label{eq:linear_de_det_upper}
\end{equation}
Combining \eqref{eq:linear_de_det_lower} and \eqref{eq:linear_de_det_upper} yields
\begin{equation}
2^{n/d}
\leq
1+\frac{n\kappa^2}{d\lambda}.
\label{eq:linear_de_scalar}
\end{equation}
Let
\(
y:=n/d
\)
and
\(
a:=\kappa^2/\lambda=\kappa^2R^2/\epsilon^2
\),
so \eqref{eq:linear_de_scalar} becomes
\begin{equation*}
2^y \leq 1+ay.
\end{equation*}
We claim that this implies
\(
y\leq 4\log_2(2+a)
\).
Indeed, if
\(
y>4\log_2(2+a)
\),
then
\(
y\geq 4
\)
and
\(
y>2\log_2(1+a)
\).
Since
\(
y\geq 4
\),
we have
\(
y\leq 2^{y/2}
\).
Thus
\begin{equation*}
1+ay
\leq
1+a\,2^{y/2}
\leq
(1+a)2^{y/2}
<
2^y,
\end{equation*}
where the strict inequality uses
\(
2^{y/2}>1+a
\).
This contradicts \eqref{eq:linear_de_scalar}. Therefore
\begin{equation*}
n
=
dy
\leq
4d\log_2\!\left(
2+\frac{\kappa^2R^2}{\epsilon^2}
\right),
\end{equation*}
which proves \eqref{eq:linear_de2_bound}.
\end{proof}

\begin{corollary}[Linear robust MG-BE upper bound]
\label{cor:linear_be2}
Suppose that for every stage
\(
h\in[H]
\),
the exact robust residual class
\(
\Pscr_h=(I-\Tcal_h^\sigma)\Fcal
\)
is contained in a linear class of the form
\begin{equation*}
\Pscr_h
\subseteq
\cbr{
x\mapsto \langle \theta,\phi_{W,h}(x)\rangle :
\theta\in\RR^{d_{W,h}},\ \|\theta\|_2\leq R_{W,h}
},
\end{equation*}
with
\(
\sup_x\|\phi_{W,h}(x)\|_2\leq \kappa_{W,h}
\).
Then for every
\(
\epsilon>0
\),
\begin{equation*}
\dim_{\mathrm{BE},2}^{\mathrm{rob\text{-}MG}}(\Fcal,\Pi,\epsilon)
\leq
4\max_{h\in[H]}
\left[
d_{W,h}\log_2\!\left(
2+\frac{\kappa_{W,h}^2R_{W,h}^2}{\epsilon^2}
\right)
\right].
\end{equation*}
\end{corollary}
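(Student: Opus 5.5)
This is a direct reduction to Proposition~\ref{prop:linear_de2}, applied one stage at a time and then maximized over stages. Fix a stage \(h\in[H]\). Write \(\Hcal_h\) for the linear class \(\{x\mapsto\langle\theta,\phi_{W,h}(x)\rangle:\|\theta\|_2\le R_{W,h}\}\), which contains \(\Pscr_h\) by hypothesis.

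The first step is monotonicity of the square-loss distributional Eluder dimension in the function class. If \(\Pscr\subseteq\Pscr'\), then any \(\epsilon\)-independent sequence with respect to \(\Pscr\) is also \(\epsilon\)-independent with respect to \(\Pscr'\). This follows from Definition~\ref{def:de2}. A measure \(\mu\) is \(\epsilon\)-independent of its predecessors exactly when some single witness \(\psi\) violates the dependence inequality. A witness \(\psi\in\Pscr\) lies in \(\Pscr'\) as well, so the same witness certifies independence with respect to the larger class. Taking the longest such sequence gives
\[
\dim_{\mathrm{DE},2}(\Pscr_h,\Pi_h,\epsilon)\le\dim_{\mathrm{DE},2}(\Hcal_h,\Pi_h,\epsilon).
\]

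The second step applies Proposition~\ref{prop:linear_de2} to \(\Hcal_h\) on the domain \(\Scal\times\Acal\times\Bcal\), with \(d=d_{W,h}\), \(R=R_{W,h}\), \(\kappa=\kappa_{W,h}\), and the stagewise family \(\Pi_h\). That proposition imposes no condition on \(\Pi_h\), so it yields
\[
\dim_{\mathrm{DE},2}(\Hcal_h,\Pi_h,\epsilon)\le 4d_{W,h}\log_2\!\bigl(2+\kappa_{W,h}^2R_{W,h}^2/\epsilon^2\bigr).
\]

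The final step combines the two inequalities and takes the maximum over \(h\). By Definition~\ref{def:robust_be2}, \(\dim_{\mathrm{BE},2}^{\mathrm{rob\text{-}MG}}\) is exactly this stagewise maximum, and the factor \(4\) pulls outside the max, which gives the stated bound.

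There is no real obstacle here. The only point needing care is the monotonicity step: one must check that independence is an existential statement over witnesses. It is, because dependence is defined by a universal quantifier over \(\psi\), so enlarging the class can only make dependence harder and sequences longer. Hence containment goes in the right direction. Everything else is bookkeeping of constants.
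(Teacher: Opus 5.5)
Your proposal is correct and follows the same route as the paper: monotonicity of the square-loss DE dimension under class inclusion (since independence is existential over witnesses), Proposition~\ref{prop:linear_de2} applied stagewise to the ambient linear witness class, and then the maximum over \(h\) from Definition~\ref{def:robust_be2}. Your explicit check of the quantifier direction in the monotonicity step is precisely the point the paper's proof also relies on.
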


\begin{proof}
For each stage
\(
h
\),
the square-loss DE dimension is monotone in the function class:
if
\(
\Pscr_h\subseteq \bar\Pscr_h
\),
then every
\(
\epsilon
\)-independent sequence for
\(
\Pscr_h
\)
is also
\(
\epsilon
\)-independent for
\(
\bar\Pscr_h
\).
Applying Proposition~\ref{prop:linear_de2} to the ambient linear class at stage
\(
h
\)
therefore gives
\begin{equation*}
\dim_{\mathrm{DE},2}(\Pscr_h,\Pi_h,\epsilon)
\leq
4d_{W,h}\log_2\!\left(
2+\frac{\kappa_{W,h}^2R_{W,h}^2}{\epsilon^2}
\right).
\end{equation*}
Taking the maximum over
\(
h\in[H]
\)
and using Definition~\ref{def:robust_be2} proves the claim.
\end{proof}

\begin{corollary}[Tabular classes]
\label{cor:tabular_cover_inst}
Suppose $\Scal,\Acal,\Bcal$ are finite and the value class, dual class, and witness class are
tabular with coordinate-wise bounds
\(
\|f_h\|_\infty\leq B_{F,h}
\),
\(
\|g\|_\infty\leq B_{G,h}
\),
and
\(
\|w\|_\infty\leq B_{W,h}
\).
Let
\(
d_{\mathrm{tab}}:=|\Scal||\Acal||\Bcal|
\).
Then
\begin{align*}
\log \mathcal N_F(\eta_F)
&\leq
d_{\mathrm{tab}}
\log\!\left(1+\frac{2B_{F,h}}{\eta_F}\right), \\
\log \mathcal N_G(\eta_G)
&\leq
2d_{\mathrm{tab}}
\log\!\left(1+\frac{2B_{G,h}}{\eta_G}\right), \\
\log \mathcal N_W(\eta_W)
&\leq
d_{\mathrm{tab}}
\log\!\left(1+\frac{2B_{W,h}}{\eta_W}\right).
\end{align*}
\end{corollary}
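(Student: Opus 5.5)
The idea is to view each tabular class as a finite-dimensional linear class with coordinate features and to invoke Proposition~\ref{prop:linear_cover}, or the elementary volumetric covering bound, coordinate by coordinate. With $\Xcal_h=\Scal\times\Acal\times\Bcal$ finite, a function $u:\Xcal_h\to\RR$ is a vector in $\RR^{d_{\mathrm{tab}}}$, and its sup norm is the $\ell_\infty$ norm of that vector. So the key fact is the standard covering bound for an $\ell_\infty$ ball. The set $\{u\in\RR^{d}:\|u\|_\infty\le B\}$ admits an $\eta$-net in $\|\cdot\|_\infty$ of cardinality at most $(1+2B/\eta)^{d}$. One can see this either from the volumetric argument applied to the $\ell_\infty$ unit ball, or more simply from a per-coordinate grid: each interval $[-B,B]$ is covered by at most $\lceil B/\eta\rceil\le 1+2B/\eta$ points at spacing $2\eta$, and the product grid is an $\eta$-net.

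\textbf{Value and witness classes.} For $\Fcal_{h+1}$, take $d_{\Fcal,h+1}$ to be the sup norm. Then $\Fcal_{h+1}$ lies inside the $\ell_\infty$ ball of radius $B_{F,h}$ in $\RR^{d_{\mathrm{tab}}}$. The covering number of a subset is at most that of the ambient ball, up to the usual factor-of-two slack in the radius for internal covers. Since the bound is stated with external nets, the claim follows directly with exponent $d_{\mathrm{tab}}$. The witness class $\Wcal_h$ is handled identically, with radius $B_{W,h}$.

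\textbf{Dual class.} A dual variable is the pair $g=(g_\eta,g_\nu)$, so a tabular dual is a vector in $\RR^{2d_{\mathrm{tab}}}$ with each coordinate bounded by $B_{G,h}$. Take $d_{\Gcal,h}$ to be the sup norm over both components. The same grid argument in dimension $2d_{\mathrm{tab}}$ then gives $\log\mathcal N_G(\eta_G)\le 2d_{\mathrm{tab}}\log(1+2B_{G,h}/\eta_G)$.

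\textbf{Main obstacle.} The only real subtlety is metric compatibility. The Lipschitz assumption (Assumption~\ref{asmp:lipschitz_dual}) requires that the metrics $d_{\Fcal,h+1}$ and $d_{\Gcal,h}$ be ones under which $\ell_\phi$ is Lipschitz, and we must check that sup-norm metrics are acceptable here. The corollary only asserts covering numbers, however, so I would state that the metrics are the sup norms and note that the Lipschitz constant $L_{\ell,h}$ is defined relative to them. I would also remark that for $\ell_\infty$-bounded classes, using Proposition~\ref{prop:linear_cover} with Euclidean radius $B\sqrt{d}$ would introduce a spurious $\sqrt d$ inside the logarithm. This is why the grid argument, rather than the $\ell_2$ bound, should be used.
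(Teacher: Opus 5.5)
Your argument is correct, and it departs from the paper exactly where you flag it. The paper's proof is one line: use one-hot features (concatenated for the pair $(g_\eta,g_\nu)$) and apply Corollary~\ref{cor:linear_cover_inst} with $\kappa=1$, i.e.\ the Euclidean parameter-ball cover of Proposition~\ref{prop:linear_cover}. Read literally, that route needs Euclidean radius $R=\sqrt{d_{\mathrm{tab}}}\,B$ to contain all tables with $\|f\|_\infty\le B$; this is the radius the paper itself uses in Corollary~\ref{cor:tabular_be2}. It therefore only gives $d_{\mathrm{tab}}\log(1+2\sqrt{d_{\mathrm{tab}}}\,B/\eta)$. The stated constant requires covering the parameter in $\ell_\infty$, via $|\langle\theta-\tilde\theta,e_x\rangle|\le\|\theta-\tilde\theta\|_\infty$, which is exactly your sup-norm grid/volumetric argument. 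So the paper's route buys reuse of the linear machinery at the cost of a $\sqrt{d_{\mathrm{tab}}}$ inside the logarithm, while yours proves the bound as stated.

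Two small refinements. First, you need not appeal to external nets. The downstream lemmas (Lemma~\ref{lem:bv_dev}, Lemma~\ref{lem:dual_uniform}) take nets inside the class, and the stated count already holds for internal nets of any subset $S$ of the $\ell_\infty$ ball. Split $[-B,B]^{d}$ into $\lceil 2B/\eta\rceil^{d}\le(1+2B/\eta)^{d}$ cells of side $\eta$ and pick one point of $S$ in each nonempty cell. Alternatively, run the volume argument on a maximal $\eta$-separated subset of $S$. Either way there is no factor-two loss. Second, for the dual class, the $2d_{\mathrm{tab}}$ bound comes out exactly because you chose $d_{\Gcal,h}$ as the sup over both components. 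Under the pairwise Euclidean metric implicit in Corollary~\ref{cor:linear_cover_inst}, the grid spacing would have to shrink by a factor of $\sqrt2$.
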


\begin{proof}
Use the one-hot feature map on $\Xcal_h$ for $\Fcal_h$ and $\Wcal_h$, and the concatenated one-hot
parameterization for the dual pair $(g_\eta,g_\nu)$. Then apply
Corollary~\ref{cor:linear_cover_inst} with $\kappa=1$.
\end{proof}

\begin{corollary}[Tabular robust MG-BE upper bound]
\label{cor:tabular_be2}
Suppose
\(
\Scal,\Acal,\Bcal
\)
are finite and let
\(
d_{\mathrm{tab}}:=|\Scal||\Acal||\Bcal|
\).
Assume that for every stage
\(
h\in[H]
\)
and every exact robust residual
\(
\psi\in\Pscr_h
\),
\begin{equation*}
\|\psi\|_\infty \leq B_{\mathrm{res},h}.
\end{equation*}
Then for every
\(
\epsilon>0
\),
\begin{equation}
\dim_{\mathrm{BE},2}^{\mathrm{rob\text{-}MG}}(\Fcal,\Pi,\epsilon)
\leq
4d_{\mathrm{tab}}
\max_{h\in[H]}
\log_2\!\left(
2+\frac{d_{\mathrm{tab}}B_{\mathrm{res},h}^2}{\epsilon^2}
\right).
\label{eq:tabular_be2_bound}
\end{equation}
\end{corollary}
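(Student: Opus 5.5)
The plan is to reduce Corollary~\ref{cor:tabular_be2} to Corollary~\ref{cor:linear_be2}. We embed the residual class in a linear class over one-hot features and pick the parameter radius so that the factor $\kappa^2R^2$ becomes $d_{\mathrm{tab}}B_{\mathrm{res},h}^2$.

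\textbf{Step 1: the embedding.} Let $\phi_{W,h}(x):=e_x\in\RR^{d_{\mathrm{tab}}}$ be the one-hot feature on $\Xcal_h=\Scal\times\Acal\times\Bcal$, so that $\kappa_{W,h}=1$. Any function $\psi$ on the finite domain can be written as $\psi(x)=\langle\theta_\psi,e_x\rangle$ with $\theta_\psi:=(\psi(x))_{x\in\Xcal_h}$.

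\textbf{Step 2: the radius.} The envelope hypothesis $\|\psi\|_\infty\le B_{\mathrm{res},h}$ gives
\[
\|\theta_\psi\|_2^2=\sum_{x}\psi(x)^2\le d_{\mathrm{tab}}B_{\mathrm{res},h}^2 .
\]
Hence $\Pscr_h$ lies inside the linear ball of radius $R_{W,h}=\sqrt{d_{\mathrm{tab}}}\,B_{\mathrm{res},h}$ and dimension $d_{W,h}=d_{\mathrm{tab}}$.

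\textbf{Step 3: conclusion.} Apply Corollary~\ref{cor:linear_be2}, or equivalently Proposition~\ref{prop:linear_de2} stagewise together with monotonicity of the DE dimension under class inclusion. This gives
\[
4\max_h d_{\mathrm{tab}}\log_2\bigl(2+d_{\mathrm{tab}}B_{\mathrm{res},h}^2/\epsilon^2\bigr).
\]
Since $d_{\mathrm{tab}}$ does not depend on $h$, it can be pulled out of the maximum, which yields \eqref{eq:tabular_be2_bound}.

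\textbf{Main obstacle.} There is no real difficulty; the only step needing care is Step 2. The natural $\ell_\infty$ bound on residuals must be converted into the $\ell_2$ parameter bound that Proposition~\ref{prop:linear_de2} requires, and this conversion is exactly what produces the factor $d_{\mathrm{tab}}$ inside the logarithm. We also note that the domain must be finite for the one-hot embedding to be valid, which holds by assumption, and that the bound holds for arbitrary families $\Pi_h$ because Proposition~\ref{prop:linear_de2} places no restriction on the measures.
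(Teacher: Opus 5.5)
Your proposal is correct and follows the paper's proof essentially verbatim. Like the paper, you use the one-hot embedding with $\kappa_{W,h}=1$ and radius $R_{W,h}=\sqrt{d_{\mathrm{tab}}}\,B_{\mathrm{res},h}$, then apply Corollary~\ref{cor:linear_be2} and pull $d_{\mathrm{tab}}$ out of the stagewise maximum.
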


\begin{proof}
For each stage
\(
h
\),
every tabular residual
\(
\psi:\Xcal_h\to\RR
\)
with
\(
\|\psi\|_\infty\leq B_{\mathrm{res},h}
\)
admits the one-hot representation
\begin{equation*}
\psi(x)=\langle \theta_\psi,e_x\rangle,
\qquad
\|\theta_\psi\|_2\leq \sqrt{d_{\mathrm{tab}}}\,B_{\mathrm{res},h},
\qquad
\|e_x\|_2=1.
\end{equation*}
Thus
\(
\Pscr_h
\)
is contained in a
\(
d_{\mathrm{tab}}
\)-dimensional linear class with
\(
R_{W,h}=\sqrt{d_{\mathrm{tab}}}\,B_{\mathrm{res},h}
\)
and
\(
\kappa_{W,h}=1
\).
Applying Corollary~\ref{cor:linear_be2} gives \eqref{eq:tabular_be2_bound}.
\end{proof}

\begin{corollary}[Finite-rank kernel classes]
\label{cor:kernel_cover_inst}
Suppose the value, dual, and witness classes admit explicit finite-dimensional kernel feature maps
\(
\Phi_{F,h}:\Xcal_h\to\RR^{m_{F,h}}
\),
\(
\Phi_{G,h}:\Xcal_h\to\RR^{m_{G,h}}
\),
and
\(
\Phi_{W,h}:\Xcal_h\to\RR^{m_{W,h}}
\),
with uniform norm bounds
\(
\|\Phi_{F,h}(x)\|_2\leq \kappa_{F,h}
\),
\(
\|\Phi_{G,h}(x)\|_2\leq \kappa_{G,h}
\),
and
\(
\|\Phi_{W,h}(x)\|_2\leq \kappa_{W,h}
\).
If the corresponding RKHS coefficients are restricted to Euclidean balls of radii
\(
R_{F,h},R_{G,h},R_{W,h}
\),
then the covering numbers satisfy the same bounds as in
Corollary~\ref{cor:linear_cover_inst} with
\(
d_{F,h}=m_{F,h}
\),
\(
d_{G,h}=m_{G,h}
\),
and
\(
d_{W,h}=m_{W,h}
\).
\end{corollary}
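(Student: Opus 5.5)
The claim follows directly from Proposition~\ref{prop:linear_cover}, applied once to each of the three classes, exactly as in Corollary~\ref{cor:linear_cover_inst}. The idea is that a finite-rank kernel class with an explicit feature map \emph{is} a finite-dimensional linear class in those features. So we only need to rewrite each class in the form used by Proposition~\ref{prop:linear_cover} and then read off the constants.

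\textbf{Value and witness classes.} By the reproducing property and finite rank, every element of the value slice can be written as \(x\mapsto\langle\theta,\Phi_{F,h}(x)\rangle\) with \(\theta\in\RR^{m_{F,h}}\). Likewise, every witness can be written as \(x\mapsto\langle\theta,\Phi_{W,h}(x)\rangle\) with \(\theta\in\RR^{m_{W,h}}\). By hypothesis the coefficients lie in Euclidean balls of radii \(R_{F,h}\) and \(R_{W,h}\), and the features have norms at most \(\kappa_{F,h}\) and \(\kappa_{W,h}\). Proposition~\ref{prop:linear_cover} then gives
\[
\log\mathcal N\le m\log\bigl(1+2R\kappa/\eta\bigr)
\]
with the matching triple \((m,R,\kappa)\). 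This assumes the covering metric \(d_{\Fcal,h+1}\) is taken to be the sup norm, as in Corollary~\ref{cor:linear_cover_inst}.

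\textbf{Dual class.} The dual variable is a pair \((g_\eta,g_\nu)\), so the one point that needs care is how it is parameterized.
\begin{itemize}
\item If a single coefficient vector \(\vartheta\in\RR^{m_{G,h}}\) acts through the feature map, we regard \(\Gamma_h(x)\) as the linear map \(\vartheta\mapsto(\text{components of }g)\).
\item Its operator norm is controlled by \(\|\Phi_{G,h}(x)\|_2\le\kappa_{G,h}\).
\item The covering argument of Corollary~\ref{cor:linear_cover_inst} then applies verbatim: an \(\eta/\kappa_{G,h}\)-net of the \(R_{G,h}\)-ball of parameters induces an \(\eta\)-net of \(\Gcal_h\).
\end{itemize}
If instead each component carried its own copy of the features, the dimension would double, as in the tabular Corollary~\ref{cor:tabular_cover_inst}. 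The statement asserts \(d_{G,h}=m_{G,h}\), so we read \(\Phi_{G,h}\) as already encoding the joint parameterization.

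\textbf{Expected obstacle.} Only this bookkeeping for the dual class needs checking; the rest is a direct substitution of \(d_{\cdot,h}=m_{\cdot,h}\) into Corollary~\ref{cor:linear_cover_inst}.
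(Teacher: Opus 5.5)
Your proposal is correct and matches the paper's own proof. The paper argues in one line that an explicit finite-rank kernel class is a linear class in its feature coordinates, so Corollary~\ref{cor:linear_cover_inst} (i.e., Proposition~\ref{prop:linear_cover} applied class by class) gives the bounds with $d_{\cdot,h}=m_{\cdot,h}$. The paper leaves implicit how the dual pair $(g_\eta,g_\nu)$ is encoded in $\Phi_{G,h}$; your reading as a joint/stacked encoding, with operator norm bounded by $\|\Phi_{G,h}(x)\|_2$, is consistent with the stated $d_{G,h}=m_{G,h}$.
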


\begin{proof}
An explicit finite-rank kernel class is simply a linear class in the corresponding feature
coordinates. Apply Corollary~\ref{cor:linear_cover_inst}.
\end{proof}

\begin{corollary}[Finite-rank kernel robust MG-BE upper bound]
\label{cor:kernel_be2}
Suppose that for every stage
\(
h\in[H]
\),
the exact robust residual class
\(
\Pscr_h
\)
admits a finite-rank kernel feature map
\(
\Phi_{W,h}:\Xcal_h\to\RR^{m_{W,h}}
\)
with
\(
\|\Phi_{W,h}(x)\|_2\leq \kappa_{W,h}
\)
and coefficient radius
\(
R_{W,h}
\).
Then for every
\(
\epsilon>0
\),
\begin{equation*}
\dim_{\mathrm{BE},2}^{\mathrm{rob\text{-}MG}}(\Fcal,\Pi,\epsilon)
\leq
4\max_{h\in[H]}
\left[
m_{W,h}\log_2\!\left(
2+\frac{\kappa_{W,h}^2R_{W,h}^2}{\epsilon^2}
\right)
\right].
\end{equation*}
\end{corollary}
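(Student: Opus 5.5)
This follows the same reduction used for Corollary~\ref{cor:linear_be2}; the only new ingredient is reading the finite-rank kernel structure as a linear one. The hypothesis of Corollary~\ref{cor:kernel_be2} says that at every stage \(h\) each exact robust residual \(\psi\in\Pscr_h\) can be written as
\[
\psi(x)=\langle \theta_\psi,\Phi_{W,h}(x)\rangle,
\qquad
\theta_\psi\in\RR^{m_{W,h}},\quad \|\theta_\psi\|_2\le R_{W,h},
\]
with \(\|\Phi_{W,h}(x)\|_2\le\kappa_{W,h}\) uniformly in \(x\in\Xcal_h\). So \(\Pscr_h\) sits inside the ambient linear class
\[
\cbr{x\mapsto\langle\theta,\Phi_{W,h}(x)\rangle:\ \theta\in\RR^{m_{W,h}},\ \|\theta\|_2\le R_{W,h}},
\]
exactly as in the proof of Corollary~\ref{cor:kernel_cover_inst}.

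The plan is three steps.

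\begin{enumerate}[leftmargin=*]
\item Record the containment above for each stage \(h\). It puts us in the setting of Corollary~\ref{cor:linear_be2} with \(\phi_{W,h}=\Phi_{W,h}\), \(d_{W,h}=m_{W,h}\), and the same \(\kappa_{W,h}\) and \(R_{W,h}\).
\item Justify the reduction to the ambient class by monotonicity of the square-loss DE dimension in the function class, as in the proof of Corollary~\ref{cor:linear_be2}. If a sequence of measures is \(\epsilon\)-independent with respect to \(\Pscr_h\), the witness \(\psi\in\Pscr_h\) certifying each step also lies in the ambient class. The sequence is therefore \(\epsilon\)-independent with respect to the larger class too.
\item Apply Proposition~\ref{prop:linear_de2} stagewise to get
\[
\dim_{\mathrm{DE},2}(\Pscr_h,\Pi_h,\epsilon)\le 4m_{W,h}\log_2\!\Bigl(2+\tfrac{\kappa_{W,h}^2R_{W,h}^2}{\epsilon^2}\Bigr).
\]
Then take the maximum over \(h\in[H]\) via Definition~\ref{def:robust_be2}.
\end{enumerate}

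There is no genuine obstacle, since the elliptical-potential work was already done in Proposition~\ref{prop:linear_de2}. The one point to check is the interpretation of ``admits a finite-rank kernel feature map with coefficient radius \(R_{W,h}\)''. It must mean that every residual, not merely every element of some surrounding RKHS ball, has a coefficient vector of norm at most \(R_{W,h}\). That is what the containment in the first step requires, and it is what the statement asserts.
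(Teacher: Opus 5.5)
Your proposal is correct and follows the paper's proof. The paper likewise reads the finite-rank kernel residual class as a linear class in the coordinates $\Phi_{W,h}$ and applies Corollary~\ref{cor:linear_be2} with $d_{W,h}=m_{W,h}$; your steps 2--3 simply unpack that corollary's own argument (monotonicity of the square-loss DE dimension, then Proposition~\ref{prop:linear_de2} and a maximum over $h$).
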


\begin{proof}
A finite-rank kernel residual class is a linear residual class in the feature coordinates
\(
\Phi_{W,h}
\).
Apply Corollary~\ref{cor:linear_be2} with
\(
d_{W,h}=m_{W,h}
\).
\end{proof}

\paragraph{Tabular instantiation.}
Assume the hypotheses of Corollary~\ref{cor:appendix_main}. Suppose further that
\(
\Scal,\Acal,\Bcal
\)
are finite and the exact robust residuals satisfy the tabular bound in
Corollary~\ref{cor:tabular_be2}. Let
\(
B_{\mathrm{res}}:=\max_{h\in[H]} B_{\mathrm{res},h}
\)
and
\(
d_{\mathrm{tab}}:=|\Scal||\Acal||\Bcal|
\).
Then
\begin{equation}
\mathrm{Reg}(K)
\leq
\widetilde O\!\left(
H B_\phi(\sigma)
\sqrt{
d_{\mathrm{tab}} K
\log\!\bigl(2+d_{\mathrm{tab}}B_{\mathrm{res}}^2K\bigr)
}
\right)
+
HK\epsilon_{\mathrm{appr}}
+
HK\epsilon_{\mathrm{eq}}
+
HK\epsilon_{\mathrm{dual}}.
\label{eq:tabular_routeA_bound}
\end{equation}
Indeed, Corollary~\ref{cor:tabular_be2} with
\(
\epsilon=1/\sqrt{K}
\)
gives
\begin{equation*}
\dim_{\mathrm{BE},2}^{\mathrm{rob\text{-}MG}}(\Fcal,\Pi,1/\sqrt{K})
\leq
4d_{\mathrm{tab}}
\log_2\!\left(
2+d_{\mathrm{tab}}B_{\mathrm{res}}^2K
\right),
\end{equation*}
and substituting this estimate into Corollary~\ref{cor:appendix_main} yields
\eqref{eq:tabular_routeA_bound}.

\paragraph{Linear instantiation.}
Assume the hypotheses of Corollary~\ref{cor:appendix_main}. Suppose further that the exact robust
residual classes satisfy the linear inclusion in Corollary~\ref{cor:linear_be2}. Let
\begin{equation*}
d_W:=\max_{h\in[H]} d_{W,h},
\qquad
\kappa_W:=\max_{h\in[H]} \kappa_{W,h},
\qquad
R_W:=\max_{h\in[H]} R_{W,h}.
\end{equation*}
Then
\begin{equation}
\mathrm{Reg}(K)
\leq
\widetilde O\!\left(
H B_\phi(\sigma)
\sqrt{
d_W K
\log\!\bigl(2+\kappa_W^2R_W^2K\bigr)
}
\right)
+
HK\epsilon_{\mathrm{appr}}
+
HK\epsilon_{\mathrm{eq}}
+
HK\epsilon_{\mathrm{dual}}.
\label{eq:linear_routeA_bound}
\end{equation}
Indeed, Corollary~\ref{cor:linear_be2} with
\(
\epsilon=1/\sqrt{K}
\)
gives
\begin{equation*}
\dim_{\mathrm{BE},2}^{\mathrm{rob\text{-}MG}}(\Fcal,\Pi,1/\sqrt{K})
\leq
4d_W
\log_2\!\left(
2+\kappa_W^2R_W^2K
\right),
\end{equation*}
and substituting this bound into Corollary~\ref{cor:appendix_main} yields
\eqref{eq:linear_routeA_bound}.

\section{Structured Specializations and Projected Extensions}
\label{sec:appendix_structured}

\subsection{A Fully Closed Tabular Specialization}

The generic backward-validation route is intentionally designed for general function approximation.
In the tabular case, however, one can simplify the confidence construction substantially and close
the route without any residual-class abstraction. The key simplification is that the robust backup
can be estimated directly from empirical next-state tables, and optimism can be enforced by
pointwise count-based bonuses instead of function-space backward validation.

We restrict this subsection to TV uncertainty sets on a finite state space. Let
\(
\hat P_{h,k-1}(\cdot\mid s,a,b)
\)
be the empirical nominal transition distribution formed from all stage-$h$ samples collected at the
triple
\(
(s,a,b)
\)
before episode
\(
k
\),
and let
\(
N_{h,k-1}(s,a,b)
\)
be the corresponding count. For a bounded continuation value
\(
V:\Scal\to[0,H]
\),
define the empirical robust backup by
\begin{equation*}
\bigl(\hat{\Tcal}_{h,k}^{\sigma,\mathrm{tab}} V\bigr)(s,a,b)
:=
r_h(s,a,b)
+
\inf_{Q:\,D_{\mathrm{TV}}(Q,\hat P_{h,k-1}(\cdot\mid s,a,b))\leq \sigma}
\EE_Q[V].
\end{equation*}
The tabular specialization uses the confidence set
\begin{equation}
\Fcal_{\mathrm{tab}}^{(k)}
:=
\left\{
f:
\begin{array}{l}
0\leq f_h(s,a,b)\leq H-h+1 \ \text{for all } (h,s,a,b), \\[2mm]
f_h(s,a,b)\leq
\bigl(\hat{\Tcal}_{h,k}^{\sigma,\mathrm{tab}} V_{f,h+1}\bigr)(s,a,b)
+
b_{h,k}(s,a,b)
\end{array}
\right\},
\label{eq:tab_conf_set}
\end{equation}
and selects
\(
f^{(k)}\in\argmax_{f\in\Fcal_{\mathrm{tab}}^{(k)}} V_{f,1}(s_1)
\).

\begin{lemma}[TV-center Lipschitzness of the robust expectation]
\label{lem:tv_center_lipschitz}
Let
\(
V:\Scal\to[0,H]
\)
be any bounded function and define
\begin{equation*}
\Psi_\sigma(P;V)
:=
\inf_{Q:\,D_{\mathrm{TV}}(Q,P)\leq \sigma}\EE_Q[V].
\end{equation*}
Then for any two distributions
\(
P,\hat P\in\Delta(\Scal)
\),
\begin{equation*}
\bigl|
\Psi_\sigma(P;V)-\Psi_\sigma(\hat P;V)
\bigr|
\leq
H\|P-\hat P\|_1.
\end{equation*}
\end{lemma}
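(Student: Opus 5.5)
The plan is a coupling argument that transports near-optimal distributions between the two TV balls. We show a one-sided inequality and then swap the roles of \(P\) and \(\hat P\). TV distance is taken as half the \(L_1\) norm, consistent with the proof of Proposition~\ref{prop:appendix_failstate}, which compares \(\|\widetilde Q-P^0\|_1\) and concludes about \(D_{\mathrm{TV}}\). Write \(\Delta:=\|P-\hat P\|_1\).

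\textbf{Step 1 (transport).} Fix \(\tau>0\) and pick \(\hat Q\) feasible for the ball around \(\hat P\) with
\[
\EE_{\hat Q}[V]\le \Psi_\sigma(\hat P;V)+\tau.
\]
We want some \(Q\) with \(D_{\mathrm{TV}}(Q,P)\le\sigma\) and \(\|Q-\hat Q\|_1\le \Delta\). The naive candidate \(Q:=\hat Q+(P-\hat P)\) is a signed measure that may have negative entries, so we instead work on the segment between \(\hat Q\) and the center. If \(D_{\mathrm{TV}}(\hat Q,P)\le\sigma\), take \(Q=\hat Q\). Otherwise set
\[
Q_t:=(1-t)\hat Q+tP,
\]
whose distance to \(P\) is \((1-t)D_{\mathrm{TV}}(\hat Q,P)\). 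By the triangle inequality, \(D_{\mathrm{TV}}(\hat Q,P)\le \sigma+\Delta/2\). Choose the smallest \(t\) with \((1-t)D_{\mathrm{TV}}(\hat Q,P)\le\sigma\); then \(t\,D_{\mathrm{TV}}(\hat Q,P)\le \Delta/2\). Since \(\|Q_t-\hat Q\|_1=t\|P-\hat Q\|_1=2t\,D_{\mathrm{TV}}(\hat Q,P)\), this gives \(\|Q_t-\hat Q\|_1\le\Delta\).

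\textbf{Step 2 (value comparison).} Because \(V\in[0,H]\),
\[
\bigl|\EE_Q[V]-\EE_{\hat Q}[V]\bigr|\le \tfrac{H}{2}\|Q-\hat Q\|_1\le H\Delta.
\]
One can center \(V\) at \(H/2\) to gain a factor \(1/2\), but the crude bound \(H\|\cdot\|_1\) suffices for the claim. Hence
\[
\Psi_\sigma(P;V)\le \EE_Q[V]\le \Psi_\sigma(\hat P;V)+\tau+H\Delta.
\]
Letting \(\tau\downarrow0\) and exchanging \(P\) and \(\hat P\) gives the absolute-value bound.

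\textbf{Main obstacle.} The only delicate point is keeping the constructed \(Q\) a genuine probability distribution inside the ball. The segment \(Q_t\) toward the center does this automatically, because convexity of \(D_{\mathrm{TV}}(\cdot,P)\) makes the distance along the segment exactly linear in \(t\). This works whether TV is normalized as half-\(L_1\) or as full \(L_1\): under the other convention the constants change by at most a factor of 2, and the stated bound \(H\|P-\hat P\|_1\) still holds.
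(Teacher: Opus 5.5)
Your proof is correct and takes a genuinely different route from the paper. The paper works through a closed form. It asserts that \(\Psi_\sigma(P;V)=\EE_P[V]-\sigma\,\mathrm{span}(V)\) ``up to the usual simplex truncation,'' says the truncation only shrinks perturbations, and concludes \(|\Psi_\sigma(P;V)-\Psi_\sigma(\hat P;V)|\le|\EE_P[V]-\EE_{\hat P}[V]|\le H\|P-\hat P\|_1\).

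You never identify the worst-case kernel. Instead you transport a near-optimal \(\hat Q\) along the segment toward \(P\). The exact identity \(D_{\mathrm{TV}}(Q_t,P)=(1-t)D_{\mathrm{TV}}(\hat Q,P)\), together with the triangle inequality, gives \(\|Q_t-\hat Q\|_1\le\|P-\hat P\|_1\) while keeping \(Q_t\) a probability distribution inside the ball. This buys three things:
\begin{itemize}
\item the argument is valid on any state space;
\item it holds under either TV normalization (under the full-\(L_1\) convention the same bound \(\|Q_t-\hat Q\|_1\le\|P-\hat P\|_1\) comes out with no loss);
\item it yields the sharper constant \(\tfrac12\mathrm{span}(V)\le H/2\).
\end{itemize}

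More importantly, your argument does not rely on the paper's intermediate inequality, which fails in general. The closed form is exact when \(P\) has mass at least \(\sigma\) on the maximizers of \(V\). Otherwise the reduction depends on how \(P\) spreads its mass over high-value states, not just on \(\EE_P[V]\). For example, take three states with \(V(x_i)=i\), half-\(L_1\) radius \(\sigma=\tfrac12\), \(P\) uniform on \(\{x_0,x_2\}\), and \(\hat P\) the point mass at \(x_1\). Both centers have mean \(1\), yet \(\Psi_\sigma(P;V)=0\) while \(\Psi_\sigma(\hat P;V)=\tfrac12\). The lemma itself survives, and your coupling argument is a rigorous proof of it.

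One small wording point: your displayed factor \(\tfrac H2\) is already justified by the centering at \(H/2\) that you then describe as optional. That centering is legitimate because \(Q-\hat Q\) has zero total mass, and either version suffices.
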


\begin{proof}
For TV balls on a finite state space, the worst-case expectation admits the classical form
\(
\Psi_\sigma(P;V)=\EE_P[V]-\sigma\,\mathrm{span}(V)
\)
up to the usual simplex truncation, and the truncation only decreases the magnitude of
perturbations. Therefore
\begin{equation*}
\bigl|
\Psi_\sigma(P;V)-\Psi_\sigma(\hat P;V)
\bigr|
\leq
\bigl|
\EE_P[V]-\EE_{\hat P}[V]
\bigr|
\leq
\|V\|_\infty \|P-\hat P\|_1
\leq
H\|P-\hat P\|_1.
\end{equation*}
\end{proof}

\begin{lemma}[Uniform tabular transition confidence]
\label{lem:tabular_transition_conf}
Fix
\(
\delta\in(0,1)
\).
With probability at least
\(
1-\delta
\),
simultaneously for all stages
\(
h\in[H]
\),
episodes
\(
k\in[K]
\),
and triples
\(
(s,a,b)\in\Scal\times\Acal\times\Bcal
\),
\begin{equation}
\|
\hat P_{h,k-1}(\cdot\mid s,a,b)-P_h^0(\cdot\mid s,a,b)
\|_1
\leq
\beta_{h,k}^{\mathrm{tab}}(s,a,b;\delta),
\label{eq:tabular_l1_conf}
\end{equation}
where
\begin{equation*}
\beta_{h,k}^{\mathrm{tab}}(s,a,b;\delta)
:=
\sqrt{
\frac{
2\bigl(|\Scal|\log 2 + \log(2HK|\Scal||\Acal||\Bcal|/\delta)\bigr)
}{
N_{h,k-1}(s,a,b)\vee 1
}
}.
\end{equation*}
\end{lemma}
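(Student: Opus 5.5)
We prove the bound with the standard $L_1$ deviation inequality for empirical distributions (Weissman et al.), combined with a union bound over the finitely many index tuples and over the possible sample counts. The main care point is that $N_{h,k-1}(s,a,b)$ is random and data-dependent. We therefore use the usual "stack of samples" (skeleton) construction. For each fixed $(h,s,a,b)$, imagine an infinite i.i.d. sequence $X_1,X_2,\ldots\sim P_h^0(\cdot\mid s,a,b)$. The $i$-th time the learner visits $(s,a,b)$ at stage $h$, the next state is read off as $X_i$. By the Markov property this coupling does not change the law of the interaction process. It follows that $\hat P_{h,k-1}(\cdot\mid s,a,b)$ equals the empirical distribution $\hat P^{(n)}$ of the first $n=N_{h,k-1}(s,a,b)$ stack entries. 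The adaptivity is thereby removed, and it suffices to control $\|\hat P^{(n)}-P\|_1$ uniformly over deterministic $n\in\{1,\ldots,K\}$. The case $n=0$ is handled by convention: the bound is $\sqrt{2(\cdots)}\ge 2$, which dominates any $L_1$ distance.

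For a fixed $n$ we use the variational identity
\[
\|\hat P^{(n)}-P\|_1=2\max_{A\subseteq\Scal}\bigl(\hat P^{(n)}(A)-P(A)\bigr).
\]
For each subset $A$, the quantity $\hat P^{(n)}(A)$ is an average of $n$ i.i.d. Bernoulli$(P(A))$ variables. Hoeffding's inequality gives
\[
\Pr\bigl(\hat P^{(n)}(A)-P(A)\ge \epsilon/2\bigr)\le \exp(-n\epsilon^2/2).
\]
A union bound over the $2^{|\Scal|}$ subsets then yields
\[
\Pr\bigl(\|\hat P^{(n)}-P\|_1\ge\epsilon\bigr)\le 2^{|\Scal|}e^{-n\epsilon^2/2}.
\]
Setting this equal to $\delta'$ gives
\[
\epsilon=\sqrt{\frac{2\bigl(|\Scal|\log 2+\log(1/\delta')\bigr)}{n}}.
\]

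Finally we union bound over the stage $h\in[H]$, the triple $(s,a,b)$ ($|\Scal||\Acal||\Bcal|$ choices), and the count value $n\in[K]$. It is not necessary to union over the episode $k$ separately: the event for $(h,s,a,b,n)$ covers every $k$ with $N_{h,k-1}(s,a,b)=n$, so $k$ only enters through $n$. This gives at most $HK|\Scal||\Acal||\Bcal|$ events. Choosing $\delta'=\delta/(2HK|\Scal||\Acal||\Bcal|)$ reproduces exactly the stated $\log(2HK|\Scal||\Acal||\Bcal|/\delta)$ term, and the leftover factor of $2$ is slack. On the intersection event, for every $k$ the stack identification gives \eqref{eq:tabular_l1_conf} with $N_{h,k-1}\vee1$ in the denominator.

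The only genuinely delicate step is the stack/skeleton coupling that justifies treating the adaptively sampled counts as deterministic. We would state it explicitly rather than appeal to a martingale argument. Once that reduction is in place, the rest is Hoeffding plus counting.
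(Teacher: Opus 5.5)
Your proof is correct and takes the same route as the paper, which invokes the Weissman $L_1$ inequality for each empirical table $\hat P_{h,k-1}(\cdot\mid s,a,b)$ and then union-bounds over $h$, $k$, and $(s,a,b)$. Your stack coupling, with the union taken over the count value $n$ rather than the episode index $k$, is the standard way to make that one-line argument rigorous for adaptively collected samples; it uses the same $HK|\Scal||\Acal||\Bcal|$ events and so reproduces the stated constant.
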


\begin{proof}
Apply the Weissman inequality to each empirical next-state table
\(
\hat P_{h,k-1}(\cdot\mid s,a,b)
\),
then union bound over
\(
h\in[H]
\),
\(
k\in[K]
\),
and
\(
(s,a,b)\in\Scal\times\Acal\times\Bcal
\).
\end{proof}

\begin{corollary}[Uniform tabular robust-backup confidence]
\label{cor:tabular_backup_conf}
On the event of Lemma~\ref{lem:tabular_transition_conf}, simultaneously for all
\(
k,h,s,a,b
\)
and all bounded
\(
f
\)
with
\(
0\leq V_{f,h+1}\leq H
\),
\begin{equation}
\left|
\bigl(\hat{\Tcal}_{h,k}^{\sigma,\mathrm{tab}} V_{f,h+1}\bigr)(s,a,b)
-
\bigl(\Tcal_h^\sigma f_{h+1}\bigr)(s,a,b)
\right|
\leq
b_{h,k}^{\mathrm{tab}}(s,a,b;\delta),
\label{eq:tabular_backup_conf}
\end{equation}
where
\begin{equation*}
b_{h,k}^{\mathrm{tab}}(s,a,b;\delta)
:=
H\,\beta_{h,k}^{\mathrm{tab}}(s,a,b;\delta).
\end{equation*}
\end{corollary}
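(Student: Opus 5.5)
The bound follows by combining two results already in place: Lemma~\ref{lem:tabular_transition_conf}, which gives an $\ell_1$ confidence radius for each empirical next-state table, and Lemma~\ref{lem:tv_center_lipschitz}, which converts an $\ell_1$ perturbation of the TV-ball center into a perturbation of the worst-case expectation.

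We work throughout on the event of Lemma~\ref{lem:tabular_transition_conf}. On that event, \eqref{eq:tabular_l1_conf} holds simultaneously for all $(h,k,s,a,b)$, so everything below is deterministic.

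\textbf{Step 1: reduce to a centered difference.} Fix $k,h,(s,a,b)$ and a bounded $f$ with $0\le V_{f,h+1}\le H$. By definition of $\Tcal_h^\sigma$ with the TV ball centered at $P_h^0(\cdot\mid s,a,b)$, and of $\hat{\Tcal}_{h,k}^{\sigma,\mathrm{tab}}$, both backups add the same reward $r_h(s,a,b)$. The reward therefore cancels, and the difference in \eqref{eq:tabular_backup_conf} equals
\[
\Psi_\sigma\bigl(\hat P_{h,k-1}(\cdot\mid s,a,b);V_{f,h+1}\bigr)-\Psi_\sigma\bigl(P_h^0(\cdot\mid s,a,b);V_{f,h+1}\bigr).
\]

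\textbf{Step 2: apply the two lemmas.} Lemma~\ref{lem:tv_center_lipschitz}, applied with $V=V_{f,h+1}\in[0,H]$, bounds this difference by $H\|\hat P_{h,k-1}-P_h^0\|_1$. By \eqref{eq:tabular_l1_conf}, that quantity is at most $H\beta_{h,k}^{\mathrm{tab}}=b_{h,k}^{\mathrm{tab}}$.

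\textbf{Why the bound is uniform over $f$.} The confidence event concerns only the transition tables and does not involve $f$. Lemma~\ref{lem:tv_center_lipschitz} holds for every bounded $V$. Hence the bound holds simultaneously for all $f$, as well as all $k,h,s,a,b$, with no further union bound.

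\textbf{Main obstacle.} The only delicate point is unvisited triples, where $N_{h,k-1}(s,a,b)=0$ and $\hat P_{h,k-1}(\cdot\mid s,a,b)$ is defined by some convention. There $\beta^{\mathrm{tab}}_{h,k}\ge\sqrt{2|\Scal|\log 2}\ge 1$, so $b^{\mathrm{tab}}_{h,k}\ge H$. The difference of two robust expectations of a $[0,H]$-valued function is itself at most $H$ in absolute value. The bound therefore holds trivially in that case, whatever convention defines $\hat P_{h,k-1}$.
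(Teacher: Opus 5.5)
Your proof is correct and follows the paper's argument: the reward cancels, Lemma~\ref{lem:tv_center_lipschitz} with $V=V_{f,h+1}\in[0,H]$ bounds the difference by $H\|\hat P_{h,k-1}-P_h^0\|_1$, and \eqref{eq:tabular_l1_conf} then gives $b_{h,k}^{\mathrm{tab}}$, uniformly in $f$. Your separate treatment of unvisited triples is valid but not needed. The event of Lemma~\ref{lem:tabular_transition_conf} already asserts the $\ell_1$ bound at every triple, including those with $N_{h,k-1}(s,a,b)=0$, so the same two-step argument covers them.
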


\begin{proof}
By Lemma~\ref{lem:tv_center_lipschitz},
\begin{align*}
\left|
\bigl(\hat{\Tcal}_{h,k}^{\sigma,\mathrm{tab}} V_{f,h+1}\bigr)(s,a,b)
-
\bigl(\Tcal_h^\sigma f_{h+1}\bigr)(s,a,b)
\right|
&\leq
H\,
\|
\hat P_{h,k-1}(\cdot\mid s,a,b)-P_h^0(\cdot\mid s,a,b)
\|_1.
\end{align*}
Substitute \eqref{eq:tabular_l1_conf}.
\end{proof}

\begin{proposition}[Tabular optimism and residual control]
\label{prop:tabular_optimism}
Assume the exact robust optimal action-value table
\(
f^\star
\)
satisfies
\(
f_h^\star=\Tcal_h^\sigma f_{h+1}^\star
\)
and
\(
0\leq f_h^\star\leq H-h+1
\)
for every stage
\(
h
\).
On the event of Corollary~\ref{cor:tabular_backup_conf}, the following hold for every episode
\(
k
\):
\begin{enumerate}[label=(\roman*),leftmargin=*]
\item
\(
f^\star\in\Fcal_{\mathrm{tab}}^{(k)}
\),
hence
\(
V_1^{\star,\sigma}(s_1)\leq V_{f^{(k)},1}(s_1)
\);
\item for every stage
\(
h
\)
and every triple
\(
(s,a,b)
\),
\begin{equation}
f_h^{(k)}(s,a,b)-\bigl(\Tcal_h^\sigma f_{h+1}^{(k)}\bigr)(s,a,b)
\leq
2b_{h,k}^{\mathrm{tab}}(s,a,b;\delta).
\label{eq:tabular_residual_bonus}
\end{equation}
\end{enumerate}
\end{proposition}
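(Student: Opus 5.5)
The plan is to work on the event of Corollary~\ref{cor:tabular_backup_conf}. Throughout, I identify the bonus $b_{h,k}$ in the confidence set \eqref{eq:tab_conf_set} with $b_{h,k}^{\mathrm{tab}}(\cdot;\delta)$; this is the choice that makes the statement meaningful. I use the convention $V_{f,H+1}\equiv 0$. Both parts then reduce to one application of \eqref{eq:tabular_backup_conf}, once in each direction. The only preliminary fact needed is that every function in play has continuation value in $[0,H]$. To see this, note that if $0\le f_{h+1}(s,\cdot,\cdot)\le H-h$ entrywise, then the matrix-game value $V_{f,h+1}(s)$ lies between the smallest and largest entries of the payoff matrix. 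This can be checked directly: take $p$ arbitrary and $q$ arbitrary in the max--min, or invoke Lemma~\ref{lem:minimax_lipschitz} against a constant matrix. Hence $0\le V_{f,h+1}\le H$, so the hypothesis of Corollary~\ref{cor:tabular_backup_conf} is met for $f^\star$ and for every member of $\Fcal_{\mathrm{tab}}^{(k)}$.

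For (i), I verify the two defining constraints of \eqref{eq:tab_conf_set} for $f^\star$. The box constraint $0\le f_h^\star\le H-h+1$ is assumed. For the Bellman constraint, the fixed-point identity and \eqref{eq:tabular_backup_conf} give
\[
f_h^\star=\Tcal_h^\sigma f_{h+1}^\star\le \hat{\Tcal}_{h,k}^{\sigma,\mathrm{tab}}V_{f^\star,h+1}+b_{h,k}^{\mathrm{tab}}
\]
pointwise, so $f^\star\in\Fcal_{\mathrm{tab}}^{(k)}$. Since $f^{(k)}$ maximizes $V_{f,1}(s_1)$ over this set, it follows that $V_{f^{(k)},1}(s_1)\ge V_{f^\star,1}(s_1)$. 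It remains to identify $V_{f^\star,1}(s_1)$ with $V_1^{\star,\sigma}(s_1)$. I would do this by noting that the robust minimax Bellman recursion $Q_h^{\star,\sigma}=\Tcal_h^\sigma Q_{h+1}^{\star,\sigma}$ with terminal value zero determines $Q^{\star,\sigma}$ uniquely by backward induction. Therefore $f^\star=Q^{\star,\sigma}$, and hence $V_{f^\star,1}=V_{Q^\star,1}=V_1^{\star,\sigma}$, as stated in the preliminaries.

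For (ii), I fix $h$ and $(s,a,b)$. Membership $f^{(k)}\in\Fcal_{\mathrm{tab}}^{(k)}$ gives $f_h^{(k)}\le \hat{\Tcal}_{h,k}^{\sigma,\mathrm{tab}}V_{f^{(k)},h+1}+b_{h,k}^{\mathrm{tab}}$. Applying the other side of \eqref{eq:tabular_backup_conf} to $f^{(k)}$, which is legitimate by the boundedness remark above, gives $\hat{\Tcal}_{h,k}^{\sigma,\mathrm{tab}}V_{f^{(k)},h+1}\le \Tcal_h^\sigma f_{h+1}^{(k)}+b_{h,k}^{\mathrm{tab}}$. Subtracting $\Tcal_h^\sigma f_{h+1}^{(k)}$ and chaining the two inequalities yields \eqref{eq:tabular_residual_bonus}. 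A key point is that this bound is uniform over the data-dependent choice $f^{(k)}$, because Corollary~\ref{cor:tabular_backup_conf} holds simultaneously for all bounded $f$. It therefore requires no union bound over the value class.

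The main obstacle is bookkeeping rather than analysis. Specifically, the argument needs the bonus in \eqref{eq:tab_conf_set} to coincide with (or dominate) $b_{h,k}^{\mathrm{tab}}$, and it needs the boundary case $h=H$, where the continuation is $0$, to be handled consistently. If the algorithm's $b_{h,k}$ were strictly larger than $b_{h,k}^{\mathrm{tab}}$, part (i) would still go through, but (ii) would read $b_{h,k}+b_{h,k}^{\mathrm{tab}}$ instead of $2b_{h,k}^{\mathrm{tab}}$. I would therefore state the identification explicitly at the start of the proof.
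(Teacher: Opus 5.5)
Your proposal is correct and follows the paper's proof. For (i), the fixed-point identity and one side of \eqref{eq:tabular_backup_conf} place $f^\star$ in $\Fcal_{\mathrm{tab}}^{(k)}$, after which the optimality of $f^{(k)}$ gives optimism; for (ii), you chain the confidence-set constraint with the other side of \eqref{eq:tabular_backup_conf}. Your extra bookkeeping (checking $0\le V_{f,h+1}\le H$ so that the corollary applies, identifying $b_{h,k}$ with $b_{h,k}^{\mathrm{tab}}$, and identifying $V_{f^\star,1}(s_1)$ with $V_1^{\star,\sigma}(s_1)$) makes explicit steps that the paper's proof leaves implicit.
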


\begin{proof}
For the first claim, Bellman optimality and \eqref{eq:tabular_backup_conf} imply
\begin{equation*}
f_h^\star(s,a,b)
=
\bigl(\Tcal_h^\sigma f_{h+1}^\star\bigr)(s,a,b)
\leq
\bigl(\hat{\Tcal}_{h,k}^{\sigma,\mathrm{tab}} V_{f^\star,h+1}\bigr)(s,a,b)
+
b_{h,k}^{\mathrm{tab}}(s,a,b;\delta),
\end{equation*}
so
\(
f^\star\in\Fcal_{\mathrm{tab}}^{(k)}
\).
Since
\(
f^{(k)}
\)
maximizes
\(
V_{f,1}(s_1)
\)
over that set, optimism follows.

For the second claim, because
\(
f^{(k)}\in\Fcal_{\mathrm{tab}}^{(k)}
\),
\begin{equation*}
f_h^{(k)}(s,a,b)
\leq
\bigl(\hat{\Tcal}_{h,k}^{\sigma,\mathrm{tab}} V_{f^{(k)},h+1}\bigr)(s,a,b)
+
b_{h,k}^{\mathrm{tab}}(s,a,b;\delta).
\end{equation*}
Applying \eqref{eq:tabular_backup_conf} once more gives
\begin{equation*}
f_h^{(k)}(s,a,b)-\bigl(\Tcal_h^\sigma f_{h+1}^{(k)}\bigr)(s,a,b)
\leq
2b_{h,k}^{\mathrm{tab}}(s,a,b;\delta).
\end{equation*}
\end{proof}

\begin{lemma}[Tabular harmonic bonus sum]
\label{lem:tabular_harmonic}
Fix a stage
\(
h
\)
and let
\(
X_{h,k}=(s_h^k,a_h^k,b_h^k)
\)
denote the realized stage-$h$ triple in episode
\(
k
\).
Then, pathwise,
\begin{equation}
\sum_{k=1}^K
b_{h,k}^{\mathrm{tab}}(X_{h,k};\delta)
\leq
3H
\sqrt{
2|\Scal|
\log\!\bigl(2HK|\Scal||\Acal||\Bcal|/\delta\bigr)
}
\sqrt{
|\Scal||\Acal||\Bcal|\,K
}.
\label{eq:tabular_harmonic}
\end{equation}
\end{lemma}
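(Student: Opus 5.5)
The plan is to reduce the claim to the standard pigeonhole bound on inverse square roots of visit counts. First I would pull out the constant factor. Write
\(
C_\delta := 2\bigl(|\Scal|\log 2+\log(2HK|\Scal||\Acal||\Bcal|/\delta)\bigr)
\),
so that \(b_{h,k}^{\mathrm{tab}}(x;\delta)=H\sqrt{C_\delta/(N_{h,k-1}(x)\vee 1)}\). Since \(\log 2\le 1\le\log(2HK|\Scal||\Acal||\Bcal|/\delta)\) whenever \(\delta<1\), we have \(C_\delta\le 4|\Scal|\log(2HK|\Scal||\Acal||\Bcal|/\delta)\). Hence \(\sqrt{C_\delta}\le \sqrt{2}\,\sqrt{2|\Scal|\log(\cdot)}\), and this loses only a constant that the prefactor \(3\) is meant to absorb. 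What remains is a pathwise bound on
\(
\sum_{k=1}^K (N_{h,k-1}(X_{h,k})\vee 1)^{-1/2}.
\)

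Next I would group the episodes by the triple visited. For each \(x\in\Scal\times\Acal\times\Bcal\), let \(n_x\) be the number of episodes \(k\le K\) with \(X_{h,k}=x\), so that \(\sum_x n_x=K\). In the one-sample-per-episode-per-stage collection, the count \(N_{h,k-1}(x)\) at the \(i\)-th visit to \(x\) equals \(i-1\). The contribution of \(x\) is therefore
\(
1+\sum_{i=1}^{n_x-1} i^{-1/2}\le 1+2\sqrt{n_x}\le 3\sqrt{n_x},
\)
using \(\sum_{i=1}^{m}i^{-1/2}\le 2\sqrt m\) and \(n_x\ge 1\) for every visited \(x\); unvisited triples contribute nothing. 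Summing over \(x\) and applying Cauchy--Schwarz gives
\(
\sum_x 3\sqrt{n_x}\le 3\sqrt{|\Scal||\Acal||\Bcal|\sum_x n_x}=3\sqrt{|\Scal||\Acal||\Bcal|K}.
\)

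Finally, multiplying by \(H\sqrt{C_\delta}\) gives the displayed bound, up to the factor \(\sqrt 2\) coming from replacing \(|\Scal|\log 2+\log(\cdot)\) by \(2|\Scal|\log(\cdot)\). The \(\vee 1\) convention only affects the first visit and is already counted in the \(+1\). The main obstacle is this constant bookkeeping. If the factor \(3\) must be exactly tight, I would instead use the sharper estimate \(\sum_{i=0}^{n-1}(i\vee1)^{-1/2}\le 2\sqrt n\), which holds since \(1+2\sqrt{n-1}-1\le 2\sqrt n\). That gives \(2\sqrt n\) per triple, leaving room for the \(\sqrt{2}\) from \(C_\delta\) because \(2\sqrt2<3\). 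The same argument extends to exploiter-mix or batch data, where counts grow by at least one per visit, since the counting step only needs the counts to increase monotonically.
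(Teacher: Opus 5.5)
Your proposal is correct and follows the same route as the paper. Both arguments group episodes by the visited triple, bound each per-triple sum by a constant times \(\sqrt{N_{h,K}(x)}\), and apply Cauchy--Schwarz over the \(|\Scal||\Acal||\Bcal|\) triples.

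Your constant bookkeeping is tighter than the paper's. The paper uses the looser \(3\sqrt{N}\) per-triple bound and then replaces the actual prefactor \(H\sqrt{2(|\Scal|\log 2+\log(\cdot))}\) by \(H\sqrt{2|\Scal|\log(\cdot)}\) without comment; your sharper \(2\sqrt{n}\) estimate justifies that step. One small fix: \(1\le\log(2HK|\Scal||\Acal||\Bcal|/\delta)\) can fail in degenerate cases, but \(\log 2<\log(2HK|\Scal||\Acal||\Bcal|/\delta)\) always holds because the argument exceeds \(2\), and this still gives \(C_\delta\le 4|\Scal|\log(\cdot)\).
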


\begin{proof}
For each
\(
x\in\Scal\times\Acal\times\Bcal
\),
let
\(
N_{h,K}(x)
\)
be the total number of stage-$h$ visits to
\(
x
\)
over the
\(
K
\)
episodes. Every visit to
\(
x
\)
contributes a term of the form
\(
c/\sqrt{n\vee 1}
\)
with
\(
n
\)
equal to the number of previous visits. Hence
\begin{equation*}
\sum_{k=1}^K
\mathbf 1\{X_{h,k}=x\}\,
\frac{1}{\sqrt{N_{h,k-1}(x)\vee 1}}
\leq
1+\sum_{n=1}^{N_{h,K}(x)-1}\frac{1}{\sqrt n}
\leq
3\sqrt{N_{h,K}(x)}.
\end{equation*}
Summing over all
\(
x
\)
and using Cauchy--Schwarz,
\begin{equation*}
\sum_x \sqrt{N_{h,K}(x)}
\leq
\sqrt{
|\Scal||\Acal||\Bcal|
\sum_x N_{h,K}(x)
}
=
\sqrt{|\Scal||\Acal||\Bcal|\,K}.
\end{equation*}
Multiplying by the common prefactor in
\(
b_{h,k}^{\mathrm{tab}}
\)
proves \eqref{eq:tabular_harmonic}.
\end{proof}

\begin{theorem}[Fully closed tabular specialization]
\label{thm:tabular_closed_appendix}
Consider the tabular specialization \eqref{eq:tab_conf_set} under TV uncertainty sets and suppose
the approximate stage-game solver incurs error at most
\(
\epsilon_{\mathrm{eq}}
\)
per stage. Then on the event of Lemma~\ref{lem:tabular_transition_conf},
\begin{equation*}
\mathrm{Reg}(K)
\leq
2\sum_{k=1}^K\sum_{h=1}^H
\EE\!\left[
b_{h,k}^{\mathrm{tab}}(X_{h,k};\delta)
\mid \mathcal H_{k-1}
\right]
+
HK\epsilon_{\mathrm{eq}}.
\end{equation*}
Consequently,
\begin{equation}
\EE[\mathrm{Reg}(K)]
\leq
\widetilde O\!\left(
H^2 |\Scal|\sqrt{|\Acal||\Bcal|K}
\right)
+
HK\epsilon_{\mathrm{eq}}
+
\delta HK.
\label{eq:tabular_closed_regret}
\end{equation}
In particular, setting
\(
\delta=1/K
\)
yields
\(
\EE[\mathrm{Reg}(K)]
=
\widetilde O(H^2 |\Scal|\sqrt{|\Acal||\Bcal|K})
+
HK\epsilon_{\mathrm{eq}}
\).
\end{theorem}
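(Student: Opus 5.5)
The proof combines Proposition~\ref{prop:tabular_optimism} with the deterministic telescoping of Lemma~\ref{lem:bellman_telescope}, then sums the bonuses via Lemma~\ref{lem:tabular_harmonic}.

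\textbf{Per-episode bound.} Work on the event of Lemma~\ref{lem:tabular_transition_conf}, where Corollary~\ref{cor:tabular_backup_conf} holds uniformly. Fix an episode $k$.
\begin{itemize}
\item Part~(i) of Proposition~\ref{prop:tabular_optimism} gives optimism with $\epsilon_{\mathrm{appr}}=0$:
\[
V_1^{\star,\sigma}(s_1)-V_1^{\pi^k,\nu^k,\sigma}(s_1)\le V_{f^{(k)},1}(s_1)-V_1^{\pi^k,\nu^k,\sigma}(s_1).
\]
This requires $f^\star$ to be bounded in $[0,H-h+1]$, which holds because rewards lie in $[0,1]$.
\item Lemma~\ref{lem:bellman_telescope} then bounds the right-hand side by $\sum_h \EE_{\mu_h^k}[\delta_h^k]+H\epsilon_{\mathrm{eq}}$.
\item Part~(ii) of Proposition~\ref{prop:tabular_optimism} gives $\delta_h^k\le 2b_{h,k}^{\mathrm{tab}}$ pointwise. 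Only an upper bound on the signed residual is needed, so no absolute value enters.
\end{itemize}
Combining these yields $2\sum_h \EE_{\mu_h^k}[b_{h,k}^{\mathrm{tab}}]+H\epsilon_{\mathrm{eq}}$.

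\textbf{Identifying the expectation.} The expectation over $\mu_h^k$ must be read as the conditional law of the realized triple $X_{h,k}$ given $\mathcal H_{k-1}$. This uses the idealization in the setup, where the data batch has state-action marginal $\mu_h^k$. The bonus $b_{h,k}^{\mathrm{tab}}$ is $\mathcal H_{k-1}$-measurable, so this identification gives exactly the first displayed claim after summing over $k$.

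\textbf{Expected regret.} To pass from conditional expectations to realized bonuses, take total expectation. The conditional expectations sum in expectation to $\EE\sum_{k,h} b_{h,k}^{\mathrm{tab}}(X_{h,k};\delta)$, and Lemma~\ref{lem:tabular_harmonic} bounds the realized sum pathwise by
\[
3H\sqrt{2|\Scal|\log(\cdot)}\,\sqrt{|\Scal||\Acal||\Bcal|K}
\]
per stage. Summing over $H$ stages gives $\widetilde O(H^2|\Scal|\sqrt{|\Acal||\Bcal|K})$. On the complementary event, of probability at most $\delta$, bound the per-episode regret crudely by $H$. Since values lie in $[0,H]$, this contributes $\delta HK$. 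Setting $\delta=1/K$ makes this term $O(H)$, which is absorbed.

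\textbf{Main obstacle.} The delicate point is the identification of $\mu_h^k$. The regret telescope runs under the worst-case selector's occupancy, whereas the visited triples $X_{h,k}$ follow the nominal kernel. I would first invoke the on-policy idealization stated in the setup to equate them. If that idealization is not accepted, I would instead bound the worst-case occupancy expectation by a density ratio times the nominal one, using Proposition~\ref{prop:appendix_failstate} support preservation. That alternative would introduce a constant such as $C^{\mathrm{mix}}$, which the stated bound omits. For this reason I would present the result under the idealization, and note explicitly that the displayed rate relies on it.
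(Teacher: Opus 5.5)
Your proposal follows the paper's proof essentially step for step: optimism from Proposition~\ref{prop:tabular_optimism}(i), the telescope of Lemma~\ref{lem:bellman_telescope} combined with the pointwise residual bound in part~(ii), identification of $\EE_{\mu_h^k}$ with the conditional expectation of the realized triple given $\mathcal H_{k-1}$, and then a split on the good event, using the crude $\delta HK$ bound on its complement and the pathwise harmonic sum of Lemma~\ref{lem:tabular_harmonic} on the event itself. The occupancy identification you flag is exactly the step the paper asserts without justification, resting on the on-policy idealization from the setup, so your caveat that the displayed rate depends on it is appropriate.
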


\begin{proof}
Fix an episode
\(
k
\).
By Proposition~\ref{prop:tabular_optimism}, the chosen table is optimistic:
\(
V_1^{\star,\sigma}(s_1)\leq V_{f^{(k)},1}(s_1)
\).
Applying Lemma~\ref{lem:bellman_telescope} with
\(
\delta_h^k=f_h^{(k)}-\Tcal_h^\sigma f_{h+1}^{(k)}
\)
and then using \eqref{eq:tabular_residual_bonus} gives
\begin{equation*}
V_1^{\star,\sigma}(s_1)-V_1^{\pi^k,\nu^k,\sigma}(s_1)
\leq
2\sum_{h=1}^H
\EE_{\mu_h^k}\sbr{
b_{h,k}^{\mathrm{tab}}(s,a,b;\delta)
}
+
H\epsilon_{\mathrm{eq}}.
\end{equation*}
Conditioned on the history
\(
\mathcal H_{k-1}
\),
the stage-$h$ occupancy expectation equals the conditional expectation of the realized stage-$h$
triple:
\begin{equation*}
\EE_{\mu_h^k}\sbr{
b_{h,k}^{\mathrm{tab}}(s,a,b;\delta)
}
=
\EE\!\left[
b_{h,k}^{\mathrm{tab}}(X_{h,k};\delta)
\mid \mathcal H_{k-1}
\right].
\end{equation*}
Summing over
\(
k
\)
proves the first display.

For the expected-regret bound, split according to the good event
\(
\mathcal G_\delta
\)
of Lemma~\ref{lem:tabular_transition_conf}:
\begin{align*}
\EE[\mathrm{Reg}(K)]
&\leq
\EE\!\left[
\mathbf 1\{\mathcal G_\delta\}
\mathrm{Reg}(K)
\right]
+
\EE\!\left[
\mathbf 1\{\mathcal G_\delta^c\}
\mathrm{Reg}(K)
\right].
\end{align*}
The second term is at most
\(
\delta HK
\)
because per-episode regret is at most
\(
H
\).
On
\(
\mathcal G_\delta
\),
use the first display and then Lemma~\ref{lem:tabular_harmonic} stage by stage. This yields
\eqref{eq:tabular_closed_regret}.
\end{proof}

\subsection{Adaptive Exploiter-Mix and Reused Data}

We now extend the analysis from the on-policy core to the actual adaptive data-collection rule. Let
$\mathcal H_{k-1}$ denote the history before episode $k$. The algorithm first computes a target
policy pair $(\pi^k,\nu^k)$ from $f^{(k)}$, then draws an executed pair
\(
(\widetilde\pi^k,\widetilde\nu^k)
\)
according to the exploiter-mix rule. Let
\(
\widetilde\mu_h^k
\)
be the resulting stage-$h$ occupancy law under the worst-case kernel selector for that executed
pair, and let
\(
\widetilde\eta_h^k
\)
be the actual stage-$h$ data-collection occupancy marginal generated by that executed pair under
the interaction process.

\begin{proposition}[Adaptive-data validity of the backward-validation lemmas]
\label{prop:adaptive_validity}
Suppose that for every episode $k$, the executed pair
\(
(\widetilde\pi^k,\widetilde\nu^k)
\)
is $\mathcal H_{k-1}$-measurable, and stage-$h$ samples in episode $k$ are drawn from the
conditional law
\(
\widetilde\eta_h^k(ds,da,db)\,P_h^0(ds'\mid s,a,b)
\).
Then the conclusions of Lemma~\ref{lem:bv_dev}, Lemma~\ref{lem:pooled_dual_uniform}, and
Corollaries~\ref{cor:pooled_erm_excess}--\ref{cor:bv_transfer} remain valid with
\(
\mu_h^j
\)
replaced everywhere by
\(
\widetilde\eta_h^j
\).
Consequently, rerunning Proposition~\ref{prop:ref_retention} and
Corollary~\ref{cor:opt_from_conf} with
\(
\widetilde\eta_h^j
\)
in place of
\(
\mu_h^j
\)
yields the same reference-retention and optimism-closure conclusions for the adaptive data
sequence.
\end{proposition}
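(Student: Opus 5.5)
The plan is to revisit each concentration argument and check that it only uses two ingredients: a martingale-difference structure with respect to the sample filtration, and the identification of each conditional mean with an integral against the data-collection occupancy. Neither ingredient needs the samples to come from the target pair's own occupancy. I work with the filtration $\mathcal F_t$ generated by the history up to and including the revelation of the $(t-1)$-th pooled stage-$h$ sample. Since $(\widetilde\pi^k,\widetilde\nu^k)$ is $\mathcal H_{k-1}$-measurable, and the exploiter-mix branch is drawn with fresh randomness independent of the future, the following holds for every sample $(s_t,a_t,b_t,s_t')$ collected in episode $j$. Its conditional law given $\mathcal F_{t-1}$ is $\widetilde\eta_h^j\otimes P_h^0$. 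This follows directly from the stated hypothesis on the conditional law, and it is the only modelling input.

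\textbf{Step 1 (Lemma~\ref{lem:bv_dev}).} Fix a net witness $w^m$. Because $w^m$ is deterministic, the increments $Y_{j,i}^{(m)}$ with $\mu_h^j$ replaced by $\widetilde\eta_h^j$ have conditional mean zero. They also keep the bound $B_{W,h}^2/n_{h,j}$.

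There is one subtlety: the batch sizes $n_{h,j}$ could be random. I would either treat them as predictable, being fixed by the schedule (one trajectory per episode gives $n_{h,j}=1$), or absorb them via the predictable quadratic-variation form of Azuma--Hoeffding. Either way $S_{h,k}$ enters exactly as before. The union bound over the net and the sup-norm discretization step are purely deterministic, so they carry over verbatim.

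\textbf{Step 2 (Lemma~\ref{lem:pooled_dual_uniform} and its corollaries).} The pooled proof already uses conditional expectations given $\mathcal F_{t-1}$. The only change is that the compensator now equals the corresponding summand of $\bar{\Lcal}_{h,k}$ built from $\bar{\widetilde\eta}_{h,k}=\sum_j\lambda_{h,j}^{(k)}\widetilde\eta_h^j$. Corollary~\ref{cor:pooled_erm_excess} is then a deterministic consequence on the resulting event.

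Corollary~\ref{cor:bv_transfer} relies on two further facts. The first is the pointwise sign and envelope bound \eqref{eq:pointwise_gap_bound}, which is occupancy-free. The second is Assumption~\ref{asmp:dual_gap_appendix}, which holds for every probability measure and hence for $\bar{\widetilde\eta}_{h,k}$. Since the dual fit $g_{h,f}^{(k)}$ is a data-dependent function of $f$, I would stress that uniformity over $(f,g)$ is what makes the plug-in at the random minimizer legitimate.

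\textbf{Step 3 (conclusion).} Proposition~\ref{prop:ref_retention} and Corollary~\ref{cor:opt_from_conf} combine the two events deterministically with the fixed-point identity $\delta_h(f^\dagger)\equiv0$. Substituting $\widetilde\eta_h^j$ therefore reproduces them verbatim. A union bound over $h$ (and over $k$ if simultaneous validity is needed, via $\delta\mapsto\delta/(HK)$) finishes the argument.

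The main obstacle I expect is Step 1's martingale structure when samples within one trajectory are correlated across stages and the number of samples is random. I would handle this by indexing the martingale stagewise, using one sample per episode per stage. Then only the cross-episode adaptivity matters, and it is exactly covered by the $\mathcal H_{k-1}$-measurability hypothesis.
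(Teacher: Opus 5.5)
Your proposal is correct and follows the paper's route. The paper's proof just notes that each argument is a bounded martingale-difference (Azuma) bound plus deterministic net, Lipschitz and ERM steps, so substituting the history-measurable data-collection law $\widetilde\eta_h^j$ changes only the compensator; your step-by-step check (deterministic net witnesses, Assumption~\ref{asmp:dual_gap_appendix} holding for the random mixture $\bar{\widetilde\eta}_{h,k}$, uniformity over $(f,g)$ at the data-dependent minimizer) is a more careful version of that one-line argument. One small fix: condition the episode-$j$ stage-$h$ sample on the episode-level field $\mathcal H_{j-1}$, as in your last paragraph, not on the mid-trajectory history in your first paragraph, because only the former gives conditional law exactly $\widetilde\eta_h^j\otimes P_h^0$.
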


\begin{proof}
Each of those proofs already proceeds through bounded martingale-difference sequences whose
conditional means are taken with respect to the stage-wise occupancy law of the currently executed
episode. Replacing the deterministic on-policy sequence by an arbitrary history-measurable executed
sequence therefore changes only the notation of the conditional law, not the argument.
\end{proof}

\begin{assumption}[Coverage dominance from exploiter mixing]
\label{asmp:mix_cover}
For every stage $h$ and episode $k\geq 2$, the executed occupancy
\(
\widetilde\mu_h^k
\)
is absolutely continuous with respect to the pooled past data-collection mixture
\begin{equation*}
\bar{\widetilde\eta}_{h,k}
:=
\sum_{j=1}^{k-1}\lambda_{h,j}^{(k)}\widetilde\eta_h^j,
\qquad
\lambda_{h,j}^{(k)}:=\frac{n_{h,j}}{N_{h,<k}},
\end{equation*}
and there exists a constant $C_{h}^{\mathrm{mix}}\geq 1$ such that
\begin{equation*}
\frac{d\widetilde\mu_h^k}{d\bar{\widetilde\eta}_{h,k}}
\leq
C_{h}^{\mathrm{mix}}
\qquad
\bar{\widetilde\eta}_{h,k}\text{-almost surely.}
\end{equation*}
\end{assumption}

\begin{corollary}[Current-occupancy operator budget from pooled adaptive ERM]
\label{cor:adaptive_operator_budget}
Under Assumptions~\ref{asmp:bounded_dual}--\ref{asmp:dual_gap_appendix} and
\ref{asmp:mix_cover}, with probability at least $1-\delta$, simultaneously for all
$f\in\Fcal_{h+1}$,
\begin{equation}
\left\|
\widehat{\Tcal}_{h,f}^{\sigma,(k)} f_{h+1}
- \Tcal_h^\sigma f_{h+1}
\right\|_{1,\widetilde\mu_h^k}
\leq
C_h^{\mathrm{mix}}
\Bigl(
2\bar\Delta_{h,k}^{\mathrm{cov}}(\delta;\eta_F,\eta_G)
+ \epsilon_{\mathrm{dual}}
\Bigr).
\label{eq:adaptive_operator_budget}
\end{equation}
Therefore Assumption~\ref{asmp:op_appendix} holds with
\begin{equation*}
\alpha_{h,k}
:=
2C_h^{\mathrm{mix}}\bar\Delta_{h,k}^{\mathrm{cov}}(\delta;\eta_F,\eta_G)
+
\bigl(C_h^{\mathrm{mix}}-1\bigr)\epsilon_{\mathrm{dual}}.
\end{equation*}
\end{corollary}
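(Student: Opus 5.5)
The plan is to repeat the argument of Corollary~\ref{cor:operator_budget}, this time on the pooled adaptive measure \(\bar{\widetilde\eta}_{h,k}\). The density-ratio bound of Assumption~\ref{asmp:mix_cover} then moves the result to the executed worst-case occupancy \(\widetilde\mu_h^k\).

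Fix \(f\in\Fcal_{h+1}\) and write the pointwise gap \(e_{h,k}(f):=\widehat{\Tcal}_{h,f}^{\sigma,(k)}f_{h+1}-\Tcal_h^\sigma f_{h+1}\). The ambient dual representation \eqref{eq:ambient_dual_rep} expresses \(\Tcal_h^\sigma f_{h+1}\) through a pointwise infimum over \(\bar\Gcal_h\). Since \(g_{h,f}^{(k)}\in\Gcal_h\subseteq\bar\Gcal_h\), this gives \(e_{h,k}(f)\le 0\), so \(|e_{h,k}(f)|=-e_{h,k}(f)\) is a nonnegative function. By Assumption~\ref{asmp:mix_cover}, for any nonnegative measurable \(u\) we have \(\EE_{\widetilde\mu_h^k}[u]\le C_h^{\mathrm{mix}}\EE_{\bar{\widetilde\eta}_{h,k}}[u]\). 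This is just a change of measure with the Radon--Nikodym derivative bounded \(\bar{\widetilde\eta}_{h,k}\)-a.s. Applying it with \(u=-e_{h,k}(f)\) gives
\[
\|e_{h,k}(f)\|_{1,\widetilde\mu_h^k}\le C_h^{\mathrm{mix}}\,\EE_{\bar{\widetilde\eta}_{h,k}}\sbr{-e_{h,k}(f)}.
\]

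Next I control the pooled term exactly as in the proof of Corollary~\ref{cor:bv_transfer}. Because the pointwise infimum over \(\bar g\) dominates the infimum of the expectation, the pooled term is at most the excess pooled risk
\(\bar\Lcal_{h,k}(g_{h,f}^{(k)};f)-\inf_{\bar g\in\bar\Gcal_h}\bar\Lcal_{h,k}(\bar g;f)\); here \(\bar\Lcal_{h,k}\) is taken with respect to \(\bar{\widetilde\eta}_{h,k}\). I split this by adding and subtracting \(\inf_{g\in\Gcal_h}\bar\Lcal_{h,k}(g;f)\). The ERM part is at most \(2\bar\Delta_{h,k}^{\mathrm{cov}}\) by Corollary~\ref{cor:pooled_erm_excess}, which remains valid for adaptive data through Proposition~\ref{prop:adaptive_validity}. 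The approximation part is at most \(\epsilon_{\mathrm{dual}}\) by Assumption~\ref{asmp:dual_gap_appendix} applied with \(\nu=\bar{\widetilde\eta}_{h,k}\). The event is the single uniform-in-\(f\) event of Lemma~\ref{lem:pooled_dual_uniform}, so the bound \eqref{eq:adaptive_operator_budget} holds simultaneously for all \(f\) with probability at least \(1-\delta\).

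The identification of \(\alpha_{h,k}\) is pure algebra: \(C_h^{\mathrm{mix}}(2\bar\Delta+\epsilon_{\mathrm{dual}})=\alpha_{h,k}+\epsilon_{\mathrm{dual}}\) with the stated \(\alpha_{h,k}\ge 0\), using \(C_h^{\mathrm{mix}}\ge1\). This bound is stated for the fixed candidate \(f\); evaluating it at the data-dependent \(f=f^{(k)}\) is legitimate precisely because the event is uniform in \(f\).

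The one delicate point is the adaptive validity. The pooled loss is centered at conditional laws \(\widetilde\eta_h^j\otimes P_h^0\), which depend on the history, so the martingale argument must be checked to carry over. It does carry over, since \(\bar{\widetilde\eta}_{h,k}\) depends only on past episodes. Proposition~\ref{prop:adaptive_validity} grants this and I will cite it rather than reprove it. The remaining steps are routine.
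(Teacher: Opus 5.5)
Your proof follows the paper's route. Assumption~\ref{asmp:mix_cover} moves the $L_1(\widetilde\mu_h^k)$ norm onto $\bar{\widetilde\eta}_{h,k}$ at cost $C_h^{\mathrm{mix}}$. The pooled excess-risk argument of Corollary~\ref{cor:bv_transfer}, valid for adaptive data by Proposition~\ref{prop:adaptive_validity}, then bounds the pooled term by $2\bar\Delta_{h,k}^{\mathrm{cov}}+\epsilon_{\mathrm{dual}}$. Your sign convention ($e_{h,k}(f)\le 0$, with the change of measure applied to $-e_{h,k}(f)$) is correct and agrees with Corollaries~\ref{cor:bv_transfer} and~\ref{cor:operator_budget}. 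The paper's own proof of this corollary asserts $e_{h,k}(f)\ge 0$, which is a sign slip.

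One step needs a different justification: the passage from $\EE_{\bar{\widetilde\eta}_{h,k}}[-e_{h,k}(f)]$ to the excess pooled risk. That step requires
\[
\EE_{x\sim\bar{\widetilde\eta}_{h,k}}\Bigl[\inf_{\bar g\in\bar\Gcal_h}\EE_{s'\sim P_h^0(\cdot\mid x)}\bigl[\ell_\phi(f_{h+1};x,s';\bar g)\bigr]\Bigr]\;\ge\;\inf_{\bar g\in\bar\Gcal_h}\bar\Lcal_{h,k}(\bar g;f).
\]
This is not a general interchange fact; the inequality that always holds goes the other way. The pointwise infimum lies below the integrand for every fixed $\bar g$, so its average lies below $\inf_{\bar g}\bar\Lcal_{h,k}(\bar g;f)$. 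The needed direction can genuinely fail when no single element of $\bar\Gcal_h$ attains the pointwise infimum at every $x$. An example is a class of constant dual pairs: \eqref{eq:ambient_dual_rep} can still hold pointwise, $\epsilon_{\mathrm{dual}}=0$, yet the pointwise gap does not vanish.

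What actually makes the step true is that $\bar\Gcal_h$ contains the exact robust dual optimizer $\bar g_f^\star$ as a single function of $(s,a,b)$, as in Assumption~\ref{asmp:main_dual}. The left side then equals $\bar\Lcal_{h,k}(\bar g_f^\star;f)\ge\inf_{\bar g}\bar\Lcal_{h,k}(\bar g;f)$, so the two sides coincide. This is the equality the paper uses without comment in Corollary~\ref{cor:bv_transfer}. Cite that assumption in place of your interchange remark and the rest of your argument goes through as written.
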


\begin{proof}
Fix $f\in\Fcal_{h+1}$ and define
\(
e_{h,k}(f)
:=
\widehat{\Tcal}_{h,f}^{\sigma,(k)} f_{h+1}-\Tcal_h^\sigma f_{h+1}
\).
By the ambient dual representation, $e_{h,k}(f)\geq 0$ pointwise. Hence
\begin{align*}
\|e_{h,k}(f)\|_{1,\widetilde\mu_h^k}
&=
\EE_{\bar{\widetilde\eta}_{h,k}}
\sbr{
\frac{d\widetilde\mu_h^k}{d\bar{\widetilde\eta}_{h,k}} e_{h,k}(f)
}
\\
&\leq
C_h^{\mathrm{mix}}
\EE_{\bar{\widetilde\eta}_{h,k}}\sbr{e_{h,k}(f)}.
\end{align*}
By Proposition~\ref{prop:adaptive_validity}, the pooled dual-ERM argument continues to hold with
\(
\widetilde\eta_h^j
\)
in place of
\(
\mu_h^j
\),
so the same proof as Corollary~\ref{cor:bv_transfer} yields
\begin{equation*}
\EE_{\bar{\widetilde\eta}_{h,k}}\sbr{e_{h,k}(f)}
\leq
2\bar\Delta_{h,k}^{\mathrm{cov}}(\delta;\eta_F,\eta_G)
+ \epsilon_{\mathrm{dual}}.
\end{equation*}
Multiplying by $C_h^{\mathrm{mix}}$ proves \eqref{eq:adaptive_operator_budget}. The final display
is just a rearrangement into the form required by Assumption~\ref{asmp:op_appendix}.
\end{proof}

\begin{proposition}[Target regret versus executed regret under exploiter mix]
\label{prop:mix_regret}
Assume rewards lie in $[0,1]$, so every robust value is in $[0,H]$. Let
\begin{equation*}
\widetilde{\mathrm{Reg}}(K)
:=
\sum_{k=1}^K
\left(
V_1^{\star,\sigma}(s_1)
-
\EE\!\left[
V_1^{\widetilde\pi^k,\widetilde\nu^k,\sigma}(s_1)
\mid \mathcal H_{k-1}
\right]
\right).
\end{equation*}
Then the regret of the target planned-pair sequence satisfies
\begin{equation}
\sum_{k=1}^K
\Bigl(
V_1^{\star,\sigma}(s_1)-V_1^{\pi^k,\nu^k,\sigma}(s_1)
\Bigr)
\leq
\frac{\widetilde{\mathrm{Reg}}(K)+pHK}{1-p}.
\label{eq:mix_regret_convert}
\end{equation}
\end{proposition}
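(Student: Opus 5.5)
The plan is a per-episode convex-combination identity followed by a crude bound on the two exploration branches; no statistical argument is needed. Fix an episode $k$ and condition on $\mathcal H_{k-1}$. By construction of the algorithm, $f^{(k)}$, the target pair $(\pi^k,\nu^k)$ and the two fitted-game response branches $(\operatorname{BR}_1^k,\nu^k)$ and $(\pi^k,\operatorname{BR}_2^k)$ are all $\mathcal H_{k-1}$-measurable. The branch selection in \eqref{eq:main_exploiter_mix} is an independent randomization with fixed probabilities $1-p$, $p/2$, $p/2$. Since $V_1^{\cdot,\cdot,\sigma}(s_1)$ is a deterministic functional of the pair, I will first write
\begin{equation*}
\EE\!\left[V_1^{\widetilde\pi^k,\widetilde\nu^k,\sigma}(s_1)\mid\mathcal H_{k-1}\right]
=(1-p)V_1^{\pi^k,\nu^k,\sigma}(s_1)+\tfrac p2 V_1^{\operatorname{BR}_1^k,\nu^k,\sigma}(s_1)+\tfrac p2 V_1^{\pi^k,\operatorname{BR}_2^k,\sigma}(s_1).
\end{equation*}

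Next I subtract this identity from $V_1^{\star,\sigma}(s_1)=(1-p)V_1^{\star,\sigma}(s_1)+\tfrac p2V_1^{\star,\sigma}(s_1)+\tfrac p2V_1^{\star,\sigma}(s_1)$. This expresses the $k$-th summand of $\widetilde{\mathrm{Reg}}(K)$ as
\begin{equation*}
(1-p)\bigl(V_1^{\star,\sigma}(s_1)-V_1^{\pi^k,\nu^k,\sigma}(s_1)\bigr)+\tfrac p2\bigl(V_1^{\star,\sigma}(s_1)-V_1^{\operatorname{BR}_1^k,\nu^k,\sigma}(s_1)\bigr)+\tfrac p2\bigl(V_1^{\star,\sigma}(s_1)-V_1^{\pi^k,\operatorname{BR}_2^k,\sigma}(s_1)\bigr).
\end{equation*}
Rewards lie in $[0,1]$, so every robust value lies in $[0,H]$. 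In particular $V_1^{\star,\sigma}(s_1)\ge 0$ and each branch value is at most $H$, so each of the two exploration brackets is at least $-H$. Their combined contribution is therefore at least $-pH$. I will not need any sign or optimality property of the response branches, which matches the remark that exact response optimality is not a premise of the regret conversion.

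Rearranging gives, for every $k$,
\begin{equation*}
(1-p)\bigl(V_1^{\star,\sigma}(s_1)-V_1^{\pi^k,\nu^k,\sigma}(s_1)\bigr)\le V_1^{\star,\sigma}(s_1)-\EE\!\left[V_1^{\widetilde\pi^k,\widetilde\nu^k,\sigma}(s_1)\mid\mathcal H_{k-1}\right]+pH.
\end{equation*}
Summing over $k=1,\dots,K$ and dividing by $1-p>0$ yields \eqref{eq:mix_regret_convert}. The summands may be negative, but the argument never uses their sign, so this causes no problem. The only step needing care is the first identity: it requires the branch draw to be independent of everything except $\mathcal H_{k-1}$, and the target and branch pairs to be fixed given $\mathcal H_{k-1}$. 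I will state this explicitly from the algorithm description. If approximate response oracles are used, the argument is unchanged as long as they are computed from $\mathcal H_{k-1}$.
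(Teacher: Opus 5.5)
Your proposal is correct and follows the paper's proof essentially step for step. The paper also writes the conditional expected executed value as the $(1-p,\,p/2,\,p/2)$ convex combination, bounds the two response-branch values by $H$, uses $V_1^{\star,\sigma}(s_1)\ge 0$ to discard the term $-pV_1^{\star,\sigma}(s_1)$, and then sums over $k$ and divides by $1-p$. Your bracket-by-bracket bound $V_1^{\star,\sigma}(s_1)-V^{\mathrm{branch}}\ge -H$ is just a regrouping of those same two inequalities. Your explicit remark on the $\mathcal H_{k-1}$-measurability of the target and response pairs, and on the independence of the branch draw, spells out what the paper invokes as ``by definition of the exploiter-mix rule.''
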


\begin{proof}
Condition on $\mathcal H_{k-1}$. By definition of the exploiter-mix rule,
\begin{align*}
\EE\!\left[
V_1^{\widetilde\pi^k,\widetilde\nu^k,\sigma}(s_1)
\mid \mathcal H_{k-1}
\right]
&=
(1-p)V_1^{\pi^k,\nu^k,\sigma}(s_1)
\\
&\quad
+
\frac p2 V_1^{\pi^k,\mathrm{BR}_2^k,\sigma}(s_1)
+
\frac p2 V_1^{\mathrm{BR}_1^k,\nu^k,\sigma}(s_1).
\end{align*}
Each value term is at most $H$, so
\begin{equation*}
\EE\!\left[
V_1^{\widetilde\pi^k,\widetilde\nu^k,\sigma}(s_1)
\mid \mathcal H_{k-1}
\right]
\leq
(1-p)V_1^{\pi^k,\nu^k,\sigma}(s_1)+pH.
\end{equation*}
Rearranging gives
\begin{equation*}
V_1^{\pi^k,\nu^k,\sigma}(s_1)
\geq
\frac{
\EE\!\left[
V_1^{\widetilde\pi^k,\widetilde\nu^k,\sigma}(s_1)
\mid \mathcal H_{k-1}
\right]
- pH
}{1-p}.
\end{equation*}
Therefore
\begin{align*}
(1-p)\Bigl(
V_1^{\star,\sigma}(s_1)-V_1^{\pi^k,\nu^k,\sigma}(s_1)
\Bigr)
&\le
V_1^{\star,\sigma}(s_1)
-
\EE\!\left[
V_1^{\widetilde\pi^k,\widetilde\nu^k,\sigma}(s_1)
\mid \mathcal H_{k-1}
\right]
+
p\Bigl(H-V_1^{\star,\sigma}(s_1)\Bigr)
\\
&\le
V_1^{\star,\sigma}(s_1)
-
\EE\!\left[
V_1^{\widetilde\pi^k,\widetilde\nu^k,\sigma}(s_1)
\mid \mathcal H_{k-1}
\right]
+
pH,
\end{align*}
because $0\le V_1^{\star,\sigma}(s_1)\le H$. Summing over $k$ and dividing by $1-p$ proves
\eqref{eq:mix_regret_convert}.
\end{proof}

\begin{corollary}[Adaptive exploiter-mix reduction]
\label{cor:adaptive_routeA}
Replace the on-policy occupancies $\mu_h^k$ in the deterministic reduction by the executed
occupancies $\widetilde\mu_h^k$. Under the assumptions of
Proposition~\ref{prop:adaptive_validity} and Corollary~\ref{cor:adaptive_operator_budget}, the same
deterministic Bellman telescope and operator-learning decomposition hold for
\(
\widetilde{\mathrm{Reg}}(K)
\).
Combining it with Proposition~\ref{prop:mix_regret} yields a regret bound for the target policy
sequence $\{(\pi^k,\nu^k)\}_{k=1}^K$ with an additional multiplicative factor
\(
1/(1-p)
\)
and an additive exploration cost
\(
pHK/(1-p)
\), provided the residual-budget premise is verified directly on the executed occupancy sequence.
\end{corollary}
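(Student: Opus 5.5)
The plan is to rerun the deterministic reduction of Theorem~\ref{thm:deterministic_reduction} on the executed branch sequence. The occupancies will be the worst-case executed laws $\widetilde\mu_h^k$. Every executed pair $(\widetilde\pi^k,\widetilde\nu^k)$ is $\mathcal H_{k-1}$-measurable, so the statistical inputs transfer by Proposition~\ref{prop:adaptive_validity}. Optimism then follows from Corollary~\ref{cor:opt_from_conf} applied with $\widetilde\eta_h^j$ in place of $\mu_h^j$, and the operator budget from Corollary~\ref{cor:adaptive_operator_budget}. That corollary gives Assumption~\ref{asmp:op_appendix} on $\widetilde\mu_h^k$ with the stated $\alpha_{h,k}$, the factor $C_h^{\mathrm{mix}}$ being absorbed into $\alpha_{h,k}$. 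The residual budget (Assumption~\ref{asmp:res_appendix}) is not derived. It is the explicit proviso of the statement: it is to be checked directly on $\{\widetilde\mu_h^k\}$, for instance through Corollary~\ref{cor:counting_budget} once Proposition~\ref{prop:backward_validation} has been rerun on the adaptive sequence.

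The one place where the algebra does not carry over verbatim is Lemma~\ref{lem:one_step}. Its first step uses the extraction inequality of Lemma~\ref{lem:approx_saddle}, which holds for the target pair $(\pi^k,\nu^k)$ but not for a response branch. For example, under $(\pi^k,\operatorname{BR}_2^k)$ the minimizer's deviation can push $\EE[f_h^{(k)}]$ below $V_{f^{(k)},h}$.

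I would therefore condition on $\mathcal H_{k-1}$ and split the $k$-th summand of $\widetilde{\mathrm{Reg}}(K)$ by branch:
\begin{equation*}
V_1^{\star,\sigma}(s_1)-\EE[V_1^{\widetilde\pi^k,\widetilde\nu^k,\sigma}(s_1)\mid\mathcal H_{k-1}]
=(1-p)\bigl(V_1^{\star,\sigma}(s_1)-V_1^{\pi^k,\nu^k,\sigma}(s_1)\bigr)+\tfrac p2 D_1^k+\tfrac p2 D_2^k .
\end{equation*}
Here $D_i^k$ is the comparator gap of response branch $i$, and $|D_i^k|\le H$ because values lie in $[0,H]$. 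Only the target branch is telescoped. By Assumption~\ref{asmp:opt_appendix}, Lemma~\ref{lem:bellman_telescope} and Lemma~\ref{lem:decomposition}, its gap is at most
\begin{equation*}
\epsilon_{\mathrm{appr}}+H\epsilon_{\mathrm{eq}}+\sum_h\EE_{\mu_h^{k,\mathrm{tgt}}}\bigl[\widehat\delta_h^k+|e_h^k|\bigr].
\end{equation*}
The executed occupancy is the branch mixture, so $(1-p)\mu_h^{k,\mathrm{tgt}}\le\widetilde\mu_h^k$ setwise. Hence $(1-p)\EE_{\mu_h^{k,\mathrm{tgt}}}[|\cdot|]\le\EE_{\widetilde\mu_h^k}[|\cdot|]$, and the residual and operator terms are controlled by the budgets stated on $\widetilde\mu_h^k$. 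The response branches contribute at most $pH$ per episode.

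Summing over $k$ and applying the Cauchy--Schwarz step of Theorem~\ref{thm:deterministic_reduction} yields
\begin{equation*}
\widetilde{\mathrm{Reg}}(K)\le K\epsilon_{\mathrm{appr}}+HK\epsilon_{\mathrm{eq}}+\sqrt{KH\Gamma_{\mathrm{res}}(K)}+\textstyle\sum_{k,h}\alpha_{h,k}+HK\epsilon_{\mathrm{dual}}+pHK.
\end{equation*}
This is the sense in which ``the same telescope holds for $\widetilde{\mathrm{Reg}}(K)$''. Substituting this into Proposition~\ref{prop:mix_regret} gives the target-sequence bound, with the factor $1/(1-p)$ and an additive cost of order $pHK/(1-p)$. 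The constant $2$ that appears in front of $pHK$ is harmless, and in fact Proposition~\ref{prop:mix_regret} can be bypassed entirely. Dividing the target-branch inequality by $1-p$ directly gives the target regret with the same $1/(1-p)$ factor. I would present that as the cleaner derivation and note that it agrees with the stated form.

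The main obstacle is exactly the failure of the extraction inequality on the response branches. My first attempt is the branch split with trivial bounds on the response branches, together with the setwise domination $(1-p)\mu^{\mathrm{tgt}}\le\widetilde\mu$. If one instead wants the residual premise verified on target-only occupancies, the same domination shows that budgets on $\widetilde\mu_h^k$ suffice up to the factor $1/(1-p)$. Either way, the proviso of the statement is what makes the counting argument legitimate.
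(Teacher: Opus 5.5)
Your proposal is correct, and it departs from the paper's proof exactly where you locate the difficulty.

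\textbf{How the paper argues.} The paper asserts that the telescope ``only depends on the executed policy pair and its occupancy measure.'' It therefore applies the telescope verbatim to $(\widetilde\pi^k,\widetilde\nu^k,\widetilde\mu_h^k)$, and then uses Proposition~\ref{prop:mix_regret} to convert $\widetilde{\mathrm{Reg}}(K)$ to target regret. This overlooks that the first step of Lemma~\ref{lem:one_step} is the extraction inequality of Lemma~\ref{lem:approx_saddle}. That inequality is assumed only for the extracted pair $(\pi^k,\nu^k)$.

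\textbf{How your route differs.} You telescope only the target branch and bound the response branches by $H$. You then move the target branch's residual and operator terms onto $\widetilde\mu_h^k$ through $(1-p)\mu_h^{k,\mathrm{tgt}}\le\widetilde\mu_h^k$. This repairs the paper's route and also sharpens it. Your direct version needs no additive $pHK$ term and places $1/(1-p)$ only on the statistical terms. If you dominate inside the square before Cauchy--Schwarz, the residual term even gets only $1/\sqrt{1-p}$.

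\textbf{Two refinements.}

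\emph{The illustrative failure sits on the weaker branch.} Under an exact saddle, $\pi^k$ is maximin, so $\EE_{\pi^k,q}[f_h^{(k)}]\ge V_{f^{(k)},h}$ for every $q$. The branch that fails even then is $(\mathrm{BR}_1^k,\nu^k)$. Since $\nu^k$ is minimax, $\EE_{p,\nu^k}[f_h^{(k)}]\le V_{f^{(k)},h}$ for all $p$. Moreover, $\mathrm{BR}_1^k$ responds in the fitted robust game rather than stagewise to $f_h^{(k)}$. Your minimizer example survives only because Assumption~\ref{asmp:main_stage} constrains the pair, not $\pi^k$ alone.

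\emph{The domination needs the mixture reading of $\widetilde\mu_h^k$.} You must read $\widetilde\mu_h^k$ as the $\mathcal H_{k-1}$-conditional mixture of the three branch-wise worst-case occupancies, which is the object matching the definition of $\widetilde{\mathrm{Reg}}(K)$. If $\widetilde\mu_h^k$ instead denotes the occupancy of the realized branch, the inequality holds only in conditional expectation. A pathwise residual budget would then need an extra martingale step. The operator bound of Corollary~\ref{cor:adaptive_operator_budget} passes to the mixture by linearity. So only the residual proviso has to be stated on the mixture, and the corollary's wording permits this.
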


\begin{proof}
The deterministic Bellman telescoping proof only depends on the executed policy pair and its
occupancy measure. Thus it applies verbatim with
\(
(\widetilde\pi^k,\widetilde\nu^k,\widetilde\mu_h^k)
\)
in place of the on-policy sequence. The statistical ingredients are provided by
Proposition~\ref{prop:adaptive_validity} and Corollary~\ref{cor:adaptive_operator_budget}; the
former already re-establishes the reference-retention and optimism-closure argument on the adaptive
data sequence, while the latter transfers the operator term to the executed occupancies. Any
residual-budget premise on
\(
\widetilde\mu_h^k
\)
must still be supplied separately by the downstream theorem. Finally,
Proposition~\ref{prop:mix_regret} transfers the executed-pair bound to the target planned-pair
sequence.
\end{proof}

\subsection{General GFA Theorem Under Interpretable Assumptions}

We now package the abstract regret reduction into a single theorem whose assumptions are aligned
with the template used by \citet{ghosh2026online}. The robust Markov-game counterparts are:
\begin{enumerate}[leftmargin=*]
\item robust minimax completeness together with a bounded value envelope;
\item a Bellman-consistent reference function for optimism closure;
\item bounded dual losses, finite covers of the dual and plug-in witness classes, and dual
approximation error;
\item support control and stable stage-game extraction;
\item an adaptive coverage-dominance condition for exploiter-mix data reuse.
\end{enumerate}
The theorem below is written as a conditional interface: optimism and operator transfer are
discharged by labeled results in this appendix, while the executed-occupancy residual premise is
stated explicitly rather than claimed from the original adaptive one-update-per-episode route.

\begin{theorem}[General GFA theorem under interpretable assumptions]
\label{thm:gfa_interpretable}
Fix
\(
\delta\in(0,1)
\)
and set
\(
\delta_0:=\delta/(4HK)
\).
Assume:
\begin{enumerate}[label=(\roman*),leftmargin=*]
\item rewards lie in
\(
[0,1]
\),
support control holds, and the stage-game solver incurs value error at most
\(
\epsilon_{\mathrm{eq}}
\)
per stage;
\item robust minimax completeness holds stagewise and
\(
\sup_{f\in\Fcal}\|f_h\|_\infty\leq B_{F,h}
\)
for every
\(
h\in[H]
\);
\item Assumption~\ref{asmp:ref_opt} holds;
\item Assumptions~\ref{asmp:bounded_witness}, \ref{asmp:cover_witness},
\ref{asmp:bounded_dual}, \ref{asmp:lipschitz_dual}, \ref{asmp:cover_dual}, and
\ref{asmp:dual_gap_appendix} hold stagewise; in particular,
Proposition~\ref{prop:witness_cover_from_value_dual} gives a sufficient condition for
\ref{asmp:cover_witness};
\item Proposition~\ref{prop:adaptive_validity} applies to the executed policy sequence and
Assumption~\ref{asmp:mix_cover} holds;
\item the algorithm uses the backward-validation confidence-set update from the main method
section, and the corresponding radii are calibrated so that for every stage
\(
h\in[H]
\)
and every episode
\(
k\geq 2
\),
\begin{equation}
\beta_{h,k}
\geq
\xi_{h,k}(\delta_0;\eta_W)
+ \rho_{h,k}(\delta_0;\eta_F,\eta_G),
\label{eq:gfa_beta_retain}
\end{equation}
\item on an event of probability at least
\(
1-\delta/4
\),
the selected exact residual witnesses
\(
\psi_h^k:=\delta_h(f^{(k)})
\)
satisfy
\begin{equation}
\sum_{j<k}\EE_{\widetilde\mu_h^j}\sbr{(\psi_h^k)^2}
\leq
\epsilon_{\mathrm{BV}}^2
\label{eq:gfa_exec_residual_premise}
\end{equation}
for every
\(
h\in[H]
\)
and every
\(
k\ge 2
\),
for some
\(
\epsilon_{\mathrm{BV}}>0
\).
\end{enumerate}
Define
\begin{equation*}
d_{\mathrm{GFA}}(\epsilon_{\mathrm{BV}})
:=
\dim_{\mathrm{BE},2}^{\mathrm{rob\text{-}MG}}(\Fcal,\Pi,\epsilon_{\mathrm{BV}}),
\end{equation*}
where
\(
\Pi_h
\)
is any family of measures containing all executed stage-$h$ occupancies, and let
\begin{equation*}
B_F:=\max_{h\in[H]} B_{F,h},
\qquad
C_{\mathrm{mix}}:=\max_{h\in[H]} C_h^{\mathrm{mix}},
\end{equation*}
and
\begin{equation*}
\Upsilon_{\mathrm{dual}}(K,\delta)
:=
\sum_{k=2}^K\sum_{h=1}^H
\bar\Delta_{h,k}^{\mathrm{cov}}(\delta_0;\eta_F,\eta_G).
\end{equation*}
Then with probability at least
\(
1-\delta
\),
the target-policy regret satisfies
\begin{equation}
\begin{aligned}
\mathrm{Reg}(K)
\leq\,&
H
+
\frac{1}{1-p}
\Bigl[
K\epsilon_{\mathrm{appr}}
+ HK\epsilon_{\mathrm{eq}}
\\
&\qquad
+ \sqrt{KH}\,
\sqrt{
2HK\epsilon_{\mathrm{BV}}^2
+ 4H B_F^2 d_{\mathrm{GFA}}(\epsilon_{\mathrm{BV}})
}
\\
&\qquad
+ 2C_{\mathrm{mix}}\Upsilon_{\mathrm{dual}}(K,\delta)
+ HK C_{\mathrm{mix}}\epsilon_{\mathrm{dual}}
+ pHK
\Bigr].
\end{aligned}
\label{eq:gfa_interpretable_bound}
\end{equation}
\end{theorem}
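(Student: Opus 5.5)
The bound will be assembled by running the deterministic reduction of Theorem~\ref{thm:deterministic_reduction} on the executed occupancies $\widetilde\mu_h^k$. Its three inputs (optimism, operator budget, residual budget) are supplied by the appendix results on a single high-probability event. Proposition~\ref{prop:mix_regret} then converts the executed-pair regret into target-pair regret.

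\textbf{Step 1: the good event.} For each $(h,k)$ with $k\ge2$, invoke Lemma~\ref{lem:bv_dev} and Corollary~\ref{cor:bv_transfer}, both at level $\delta_0=\delta/(4HK)$. Each is valid with $\widetilde\eta_h^j$ in place of $\mu_h^j$ by Proposition~\ref{prop:adaptive_validity}. Also invoke the pooled dual concentration of Lemma~\ref{lem:pooled_dual_uniform}, which underlies Corollary~\ref{cor:adaptive_operator_budget}. A union bound over at most $HK$ pairs and a bounded number of events costs at most $3\delta/4$. Intersecting with the $1-\delta/4$ event of hypothesis (vii) gives an event $\mathcal E$ of probability at least $1-\delta$.

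\textbf{Step 2: optimism and the operator budget.} On $\mathcal E$, the calibration \eqref{eq:gfa_beta_retain} together with Proposition~\ref{prop:ref_retention} and Corollary~\ref{cor:opt_from_conf} (adaptive version) gives Assumption~\ref{asmp:opt_appendix} for every $k\ge2$. Episode $k=1$ is charged crudely: its per-episode regret is at most $H$. This is the source of the leading additive $H$, which sits outside the $1/(1-p)$ factor. For $k\ge 2$, Corollary~\ref{cor:adaptive_operator_budget} provides the operator term. Summing $\|e_h^k\|_{1,\widetilde\mu_h^k}\le C_h^{\mathrm{mix}}(2\bar\Delta^{\mathrm{cov}}_{h,k}+\epsilon_{\mathrm{dual}})$ over $h,k$ and bounding $C_h^{\mathrm{mix}}\le C_{\mathrm{mix}}$ yields $2C_{\mathrm{mix}}\Upsilon_{\mathrm{dual}}+HKC_{\mathrm{mix}}\epsilon_{\mathrm{dual}}$. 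I will use this form directly rather than the $\alpha_{h,k}$ split.

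\textbf{Step 3: the residual budget.} Here I will deviate slightly from Theorem~\ref{thm:deterministic_reduction}. After the telescope (Lemma~\ref{lem:bellman_telescope} on $\widetilde\mu_h^k$), the plug-in residual is $\widehat\delta=\delta-e$. So rather than splitting it, I will keep the exact residual $\delta_h^k=\psi_h^k$ and bound $\sum\EE_{\widetilde\mu_h^k}[\psi_h^k]$ by Cauchy--Schwarz. To make this work, the operator-error term from Step~2 must be charged only once: I will write $\EE[\delta]\le\EE[\widehat\delta]+\EE|e|$, and then note that the paper's displayed bound effectively controls $\EE[\delta]$ directly plus the operator budget. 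Since that is an overcount, it is harmless as an upper bound.

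For the exact witnesses, hypothesis (vii) gives \eqref{eq:count_prev_small} with $\epsilon=\epsilon_{\mathrm{BV}}$ on $\widetilde\mu$. Lemma~\ref{lem:exact_residual_envelope} gives the envelope $B_{\mathrm{res},h}=2B_{F,h}$. Corollary~\ref{cor:counting_budget} then yields
\[
\sum_{k,h}\EE_{\widetilde\mu_h^k}[(\psi_h^k)^2]\le 2HK\epsilon_{\mathrm{BV}}^2+4HB_F^2d_{\mathrm{GFA}}(\epsilon_{\mathrm{BV}}).
\]
Cauchy--Schwarz over the $KH$ terms produces exactly the square-root term in \eqref{eq:gfa_interpretable_bound}.

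\textbf{Step 4: assembly.} Combining these gives a bound on $\widetilde{\mathrm{Reg}}(K)$ restricted to $k\ge2$. Adding $K\epsilon_{\mathrm{appr}}+HK\epsilon_{\mathrm{eq}}$ from the telescope and applying Proposition~\ref{prop:mix_regret} yields the $1/(1-p)$ factor and the $pHK$ term, as in Corollary~\ref{cor:adaptive_routeA}.

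\textbf{Main obstacle.} Proposition~\ref{prop:mix_regret} requires the conditional expectation over the branch draw, whereas the telescope is pathwise in the executed branch. I will apply the telescope to each of the three branch pairs, each of which is $\mathcal H_{k-1}$-measurable. Its terms are nonnegative-sum quantities, so averaging over branches preserves the bounds; I will treat $\widetilde\mu_h^k$ as the mixture occupancy, consistent with how hypothesis (v) is stated.
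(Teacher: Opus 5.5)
Your outline follows the paper's proof. It uses the same good event (Lemma~\ref{lem:bv_dev}, Corollary~\ref{cor:bv_transfer} and the pooled dual event at level $\delta_0$, intersected with the event of (vii)), optimism via Corollary~\ref{cor:opt_from_conf}, Corollary~\ref{cor:counting_budget} with envelope $2B_{F,h}$, the charge of $H$ for episode~$1$, and the transfer through Proposition~\ref{prop:mix_regret}. Your Step~3 is a genuine improvement. The paper plugs its exact-residual bound into $\Gamma_{\mathrm{res}}(K)$, even though Assumption~\ref{asmp:res_appendix} concerns the plug-in residual $\widehat\delta_h^k$. Cauchy--Schwarz applied directly to $\delta_h^k=\psi_h^k$ needs no split, so the remark $\EE[\delta]\le\EE[\widehat\delta]+\EE|e|$ is unnecessary and the operator terms become nonnegative slack.

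The gap is in how you resolve the obstacle you correctly flag: Lemma~\ref{lem:bellman_telescope} cannot be applied branch by branch.
\begin{itemize}
\item Its first step (Lemma~\ref{lem:one_step}) requires $V_{f^{(k)},h}(s)\le\EE_{a\sim\widetilde\pi_h,\,b\sim\widetilde\nu_h}[f_h^{(k)}(s,a,b)]+\epsilon_{\mathrm{eq}}$ for the executed pair.
\item For $(\mathrm{BR}_1^k,\nu^k)$ this is backwards. Under saddle-point extraction, $\nu_h^k$ is the minimizer's stage-game strategy, so $\EE_{a\sim p,\,b\sim\nu_h^k}[f_h^{(k)}(s,a,b)]\le V_{f^{(k)},h}(s)$ for every $p$.
\item Concretely, take $f^{(k)}=Q^{\star,\sigma}$, so all residuals vanish. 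The branchwise telescope would then assert $V_1^{\mathrm{BR}_1^k,\nu^k,\sigma}(s_1)\ge V_1^{\star,\sigma}(s_1)-H\epsilon_{\mathrm{eq}}$. This is false whenever the fitted-game response is poor in the true robust game.
\item For $(\pi^k,\mathrm{BR}_2^k)$ you would need $\pi^k$ to be approximately maximin, which is more than the pairwise solver condition in (i) gives.
\item Since $\delta_h^k$ is signed, averaging over branches does not rescue the argument.
\end{itemize}
The paper's proof does not supply this step either; it asserts via Corollary~\ref{cor:adaptive_routeA} that the telescope applies verbatim to executed pairs.

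The fix is to bypass $\widetilde{\mathrm{Reg}}$. Telescope only the target pair, where the extraction inequality does hold. With your mixture reading, $\widetilde\mu_h^k=(1-p)\mu_h^k+\tfrac p2\mu_{h,\mathrm{br}1}^k+\tfrac p2\mu_{h,\mathrm{br}2}^k\ge(1-p)\mu_h^k$, which gives $\sum_h\EE_{\mu_h^k}[\delta_h^k]\le\frac{1}{1-p}\sum_h\EE_{\widetilde\mu_h^k}[|\delta_h^k|]$. Cauchy--Schwarz and your counting bound then yield $\mathrm{Reg}(K)\le H+K\epsilon_{\mathrm{appr}}+HK\epsilon_{\mathrm{eq}}+\frac{1}{1-p}\sqrt{KH}\sqrt{2HK\epsilon_{\mathrm{BV}}^2+4HB_F^2d_{\mathrm{GFA}}(\epsilon_{\mathrm{BV}})}$. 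This implies the stated bound without Proposition~\ref{prop:mix_regret}.

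Keep the mixture reading. With a branch-specific (random) $\widetilde\mu_h^k$, the domination holds only in conditional expectation. Passing to the realized sum would then cost a martingale term that does not appear in the statement.
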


\begin{proof}
Let
\(
\mathcal G
\)
be the intersection over all
\(
h\in[H]
\)
and
\(
k\in\{2,\ldots,K\}
\)
of the good events from Lemma~\ref{lem:bv_dev}, Corollary~\ref{cor:bv_transfer}, and
Corollary~\ref{cor:adaptive_operator_budget}, each invoked with confidence parameter
\(
\delta_0
\), together with the high-probability event in
\eqref{eq:gfa_exec_residual_premise}.
By a union bound,
\(
\PP(\mathcal G)\geq 1-\delta
\).

Fix the event
\(
\mathcal G
\).
On
\(
\mathcal G
\),
\eqref{eq:gfa_beta_retain} and Corollary~\ref{cor:opt_from_conf} imply that the optimistic choice
is valid for every episode
\(
k\geq 2
\).
By assumption \eqref{eq:gfa_exec_residual_premise}, the selected exact residual witness
\(
\psi_h^k:=\delta_h(f^{(k)})
\)
satisfies
\(
\sum_{j<k}\EE_{\widetilde\mu_h^j}\sbr{(\psi_h^k)^2}\leq \epsilon_{\mathrm{BV}}^2
\)
for every
\(
h
\)
and every
\(
k\geq 2
\).

By Lemma~\ref{lem:exact_residual_envelope}, every exact residual in
\(
\Pscr_h=(I-\Tcal_h^\sigma)\Fcal
\)
has sup norm at most
\(
2B_{F,h}
\).
Applying Corollary~\ref{cor:counting_budget} stage by stage to the executed occupancy sequence
\(
\{\widetilde\mu_h^k\}_{k=2}^K
\)
therefore gives
\begin{equation}
\Gamma_{\mathrm{res}}(K)
\leq
2HK\epsilon_{\mathrm{BV}}^2
+ 4\sum_{h=1}^H B_{F,h}^2
\dim_{\mathrm{DE},2}(\Pscr_h,\Pi_h,\epsilon_{\mathrm{BV}})
\leq
2HK\epsilon_{\mathrm{BV}}^2
+ 4H B_F^2 d_{\mathrm{GFA}}(\epsilon_{\mathrm{BV}}).
\label{eq:gfa_gamma_bound}
\end{equation}

Next, Corollary~\ref{cor:adaptive_operator_budget} yields
\(
\alpha_{h,k}
=
2C_h^{\mathrm{mix}}\bar\Delta_{h,k}^{\mathrm{cov}}(\delta_0;\eta_F,\eta_G)
+
(C_h^{\mathrm{mix}}-1)\epsilon_{\mathrm{dual}}
\)
for every
\(
h
\)
and
\(
k\geq 2
\).
Hence
\begin{equation}
\sum_{k=2}^K\sum_{h=1}^H \alpha_{h,k}
\leq
2C_{\mathrm{mix}}\Upsilon_{\mathrm{dual}}(K,\delta)
+ HK(C_{\mathrm{mix}}-1)\epsilon_{\mathrm{dual}}.
\label{eq:gfa_alpha_bound}
\end{equation}

Apply the same proof as Theorem~\ref{thm:deterministic_reduction} to the executed sequence over
episodes
\(
k=2,\ldots,K
\),
using \eqref{eq:gfa_gamma_bound}, \eqref{eq:gfa_alpha_bound}, and the optimism conclusion above.
This bounds the executed regret from episodes
\(
2,\ldots,K
\)
by the bracketed term in \eqref{eq:gfa_interpretable_bound} without the final
\(
pHK
\).
Proposition~\ref{prop:mix_regret} then transfers the executed-sequence bound to the target policy
sequence, introducing the factor
\(
1/(1-p)
\)
and the additive exploiter cost
\(
pHK/(1-p)
\).
Finally, the first episode contributes at most
\(
H
\)
regret because all robust values lie in
\(
[0,H]
\).
Combining these bounds proves \eqref{eq:gfa_interpretable_bound}.
\end{proof}

\begin{theorem}[Projected GFA theorem under interpretable assumptions]
\label{thm:gfa_projected}
Fix stagewise maps
\(
\{\mathsf P_h\}_{h=1}^H
\)
and define the projected residual class
\begin{equation*}
\Pscr_h^{\mathsf P}
:=
\cbr{
f_h-\mathsf P_h \Tcal_h^\sigma f_{h+1}
:
f\in\Fcal
}.
\end{equation*}
Let
\begin{equation*}
d_{\mathrm{GFA}}^{\mathsf P}(\epsilon_{\mathrm{BV}})
:=
\max_{h\in[H]}
\dim_{\mathrm{DE},2}\!\left(
\Pscr_h^{\mathsf P},
\Pi_h,
\epsilon_{\mathrm{BV}}
\right).
\end{equation*}
Assume items
\emph{(i)},
\emph{(iv)},
and
\emph{(v)}
from Theorem~\ref{thm:gfa_interpretable}. Replace exact completeness by the following projected
conditions:
\begin{enumerate}[label=(\alph*),leftmargin=*]
\item Assumption~\ref{asmp:ref_opt_projected} holds, and the algorithm uses the projected
confidence-set update
\(
\mathfrak F_{\mathsf P}^{(k-1)}
\)
from \eqref{eq:projected_conf_set} with radii
\(
\beta_{h,k}^{\mathsf P}
\)
such that the conditions of Corollary~\ref{cor:opt_from_conf_projected} hold for every
\(
k\ge 2
\);
\item there are constants
\(
B_{W,h}^{\mathsf P}
\)
such that
\(
\sup_{\psi\in\Pscr_h^{\mathsf P}}\|\psi\|_\infty\le B_{W,h}^{\mathsf P}
\)
for every stage
\(
h
\);
\item the stagewise projection bias obeys
\begin{equation}
\left\|
\mathsf P_h \Tcal_h^\sigma f_{h+1}
-
\Tcal_h^\sigma f_{h+1}
\right\|_{1,\nu}
\le
\epsilon_{\mathrm{proj}}
\label{eq:gfa_projected_bias}
\end{equation}
for every
\(
f\in\Fcal
\),
every stage
\(
h
\),
and every executed occupancy measure
\(
\nu\in\Pi_h
\);
\item on an event of probability at least
\(
1-\delta
\),
the selected projected exact residual witnesses
\(
\psi_{h,\mathsf P}^k
:=
f_h^{(k)}-\mathsf P_h\Tcal_h^\sigma f_{h+1}^{(k)}
\)
satisfy
\begin{equation}
\sum_{j<k}\EE_{\widetilde\mu_h^j}\sbr{\bigl(\psi_{h,\mathsf P}^k\bigr)^2}
\le
\epsilon_{\mathrm{BV}}^2
\label{eq:gfa_projected_bv}
\end{equation}
for every
\(
h\in[H]
\)
and
\(
k\ge 2
\);
\item on the same event, the projected operator-learning terms satisfy
\begin{equation*}
\sum_{k=2}^K\sum_{h=1}^H \alpha_{h,k}^{\mathsf P}
\le
R_{\mathrm{dual}}^{\mathsf P}(K,\delta),
\end{equation*}
where
\(
\alpha_{h,k}^{\mathsf P}
\)
is any nonnegative bookkeeping array used to record separate projected operator-learning costs.
\end{enumerate}
Let
\begin{equation*}
B_W^{\mathsf P}
:=
\max_{h\in[H]} B_{W,h}^{\mathsf P}.
\end{equation*}
Then with probability at least
\(
1-\delta
\),
the target-policy regret satisfies
\begin{equation}
\begin{aligned}
\mathrm{Reg}(K)
\le\;&
H
+
\frac{1}{1-p}
\Bigl[
K\epsilon_{\mathrm{appr}}
+
HK\epsilon_{\mathrm{eq}}
+
HK\epsilon_{\mathrm{proj}}
\\
&\qquad
+
\sqrt{KH}\,
\sqrt{
2HK\epsilon_{\mathrm{BV}}^2
+
4H \bigl(B_W^{\mathsf P}\bigr)^2
d_{\mathrm{GFA}}^{\mathsf P}(\epsilon_{\mathrm{BV}})
}
\\
&\qquad
+
R_{\mathrm{dual}}^{\mathsf P}(K,\delta)
+
HK\epsilon_{\mathrm{dual}}
+
pHK
\Bigr].
\end{aligned}
\label{eq:gfa_projected_bound}
\end{equation}
\end{theorem}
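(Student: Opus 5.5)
The proof of Theorem~\ref{thm:gfa_projected} will mirror that of Theorem~\ref{thm:gfa_interpretable}, with the projected deterministic reduction (Theorem~\ref{thm:deterministic_reduction_projected}) replacing Theorem~\ref{thm:deterministic_reduction}. First, let $\mathcal G$ be the event on which items (iv)--(v) deliver the concentration events, condition (d) holds, and condition (e) holds, each at a suitably split confidence level. By a union bound, $\PP(\mathcal G)\ge 1-\delta$; this requires allocating the budget of $\delta$ between the events stated in (d)--(e) and those in (iv). All remaining steps are carried out on $\mathcal G$. By (a) and Corollary~\ref{cor:opt_from_conf_projected}, approximate optimism (Assumption~\ref{asmp:opt_appendix}) holds for every $k\ge 2$ on the executed sequence. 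This uses Proposition~\ref{prop:adaptive_validity} to justify that the retention argument runs on the adaptive data.

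Next, decompose pointwise
\[
\delta_h^k=\widehat\delta_{h,\mathsf P}^k+e_{h,\mathsf P}^k+b_{h,\mathsf P}^k,
\qquad
\widehat\delta_{h,\mathsf P}^k+e_{h,\mathsf P}^k=\psi_{h,\mathsf P}^k.
\]
There are two routes. One could run Theorem~\ref{thm:deterministic_reduction_projected} directly with the exact projected residual $\psi_{h,\mathsf P}^k$, whose squared budget is controlled by the counting argument. Alternatively, one could additionally peel off $e_{h,\mathsf P}^k$ and charge it to $\alpha_{h,k}^{\mathsf P}$ and $\epsilon_{\mathrm{dual}}$, which explains why the bound carries both $R_{\mathrm{dual}}^{\mathsf P}$ and $HK\epsilon_{\mathrm{dual}}$. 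I will take the direct route for the leading term, and simply note that the extra nonnegative terms $R_{\mathrm{dual}}^{\mathsf P}+HK\epsilon_{\mathrm{dual}}$ only loosen the upper bound.

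For the residual budget, apply Corollary~\ref{cor:counting_budget} stage by stage to the class $\Pscr_h^{\mathsf P}$ along $\{\widetilde\mu_h^k\}$. The proof of Lemma~\ref{lem:counting} uses only the definition of distributional independence, so it applies verbatim to any function class in place of $(I-\Tcal_h^\sigma)\Fcal$. The required inputs are the envelope $B_{W,h}^{\mathsf P}$ from (b) and the backward-validation premise \eqref{eq:gfa_projected_bv} from (d). These give
\[
\Gamma_{\mathrm{res}}^{\mathsf P}(K)\le 2HK\epsilon_{\mathrm{BV}}^2+4H(B_W^{\mathsf P})^2 d_{\mathrm{GFA}}^{\mathsf P}(\epsilon_{\mathrm{BV}}).
\]
The factor $4$ simply upper bounds the constant $1$ that the counting lemma produces. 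The bias condition \eqref{eq:projection_bias_budget} follows from \eqref{eq:gfa_projected_bias}, since $\Pi_h$ contains the executed occupancies.

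Plugging these into Theorem~\ref{thm:deterministic_reduction_projected}, run over episodes $k=2,\dots,K$ with $(\widetilde\pi^k,\widetilde\nu^k,\widetilde\mu_h^k)$ as in Corollary~\ref{cor:adaptive_routeA}, bounds the executed regret $\widetilde{\mathrm{Reg}}$ by the bracket without the $pHK$ term. Proposition~\ref{prop:mix_regret} then converts this to target-policy regret, which produces the factor $1/(1-p)$ and the additive term $pHK$. Episode~$1$ adds at most $H$.

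The main obstacle is bookkeeping rather than analysis. First, the counting lemma is indexed from $k=1$ while the premises hold only for $k\ge2$, so episode $1$ must be handled separately. Second, $\widetilde{\mathrm{Reg}}$ is defined with conditional expectations over the mixing randomness, so the telescope must be applied to the conditional-mixture occupancy. Third, the $\delta$ budget must be split consistently among the events from (iv), (d), and (e).
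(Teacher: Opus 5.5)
Your proposal follows the paper's proof essentially step for step: optimism from (a) via Corollary~\ref{cor:opt_from_conf_projected}, Corollary~\ref{cor:counting_budget} applied to $\Pscr_h^{\mathsf P}$ with envelope (b) and premise (d), the bias budget from (c), Theorem~\ref{thm:deterministic_reduction_projected} on the executed sequence over $k=2,\dots,K$, Proposition~\ref{prop:mix_regret}, an extra $H$ for episode~1, and finally adding back the nonnegative term $R_{\mathrm{dual}}^{\mathsf P}+HK\epsilon_{\mathrm{dual}}$. You should drop the union bound over the concentration events of (iv) and the appeal to Proposition~\ref{prop:adaptive_validity}: the paper simply fixes the event of (d)--(e), because (a) already grants the hypotheses of Corollary~\ref{cor:opt_from_conf_projected}, and re-splitting $\delta$ is not available anyway since (d)--(e) are assumed at level exactly $1-\delta$.
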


\begin{proof}
Fix the high-probability event from assumptions
\emph{(d)}
and
\emph{(e)}.
Assumption
\emph{(a)}
states that Assumption~\ref{asmp:ref_opt_projected} holds and that the projected confidence radii
satisfy the hypotheses of Corollary~\ref{cor:opt_from_conf_projected} for every
\(
k\ge 2
\).
Hence approximate optimism holds for the selected sequence on this event.
By \eqref{eq:gfa_projected_bv}, the projected residual witnesses satisfy the premise of
Corollary~\ref{cor:counting_budget}. Applying that corollary stage by stage to the executed
occupancy sequence
\(
\{\widetilde\mu_h^k\}_{k=2}^K
\)
with witness class
\(
\Pscr_h^{\mathsf P}
\)
and envelope
\(
B_{W,h}^{\mathsf P}
\)
gives
\begin{equation}
\Gamma_{\mathrm{res}}^{\mathsf P}(K)
\le
2HK\epsilon_{\mathrm{BV}}^2
+
4H\bigl(B_W^{\mathsf P}\bigr)^2 d_{\mathrm{GFA}}^{\mathsf P}(\epsilon_{\mathrm{BV}}).
\label{eq:gfa_projected_gamma}
\end{equation}
Assumption \eqref{eq:gfa_projected_bias} is exactly the projection-bias budget
\eqref{eq:projection_bias_budget}.
Therefore Theorem~\ref{thm:deterministic_reduction_projected} applies to the executed sequence over
episodes
\(
2,\ldots,K
\),
with
\(
\Gamma_{\mathrm{res}}^{\mathsf P}(K)
\)
from \eqref{eq:gfa_projected_gamma}. This bounds the executed regret by the bracketed term in
\eqref{eq:gfa_projected_bound} without the final
\(
pHK
\).
Proposition~\ref{prop:mix_regret} then transfers this executed-sequence bound to the target policy
sequence, introducing the factor
\(
1/(1-p)
\)
and the additive exploiter cost
\(
pHK/(1-p)
\).
Finally, the first episode contributes at most
\(
H
\)
regret. Since the operator bookkeeping term
\(
R_{\mathrm{dual}}^{\mathsf P}(K,\delta)+HK\epsilon_{\mathrm{dual}}
\)
is nonnegative, it can be added back without invalidating the bound. Combining these bounds proves
\eqref{eq:gfa_projected_bound}.
\end{proof}

\begin{corollary}[Main-text projected regret bound from projected statistical inputs]
\label{cor:appendix_projected_main}
Suppose the setting of Theorem~\ref{thm:gfa_projected} holds. Define
\begin{equation*}
R_{\mathrm{BV,proj}}(K)
:=
\sqrt{KH}\,
\sqrt{
2HK\epsilon_{\mathrm{BV}}^2
+
4H\bigl(B_W^{\mathsf P}\bigr)^2
d_{\mathrm{GFA}}^{\mathsf P}(\epsilon_{\mathrm{BV}})
},
\end{equation*}
and let
\(
R_{\mathrm{dual,proj}}(K,\delta)
\)
be any upper bound on
\(
R_{\mathrm{dual}}^{\mathsf P}(K,\delta)+HK\epsilon_{\mathrm{dual}}
\).
If one further separates a projected operator-transfer remainder bounded by
\(
HK\epsilon_{\mathrm{op,proj}}
\),
then
\begin{equation*}
\begin{aligned}
\mathrm{Reg}(K)
\le\;&
H
+
\frac{1}{1-p}
\Bigl[
R_{\mathrm{BV,proj}}(K)
+
R_{\mathrm{dual,proj}}(K,\delta)
+
HK\epsilon_{\mathrm{proj}}
+
HK\epsilon_{\mathrm{op,proj}}
\\
&\qquad
+
K\epsilon_{\mathrm{appr}}
+
HK\epsilon_{\mathrm{eq}}
+
pHK
\Bigr].
\end{aligned}
\end{equation*}
In particular, Theorem~\ref{thm:gfa_projected_main} is this compressed projected form.
\end{corollary}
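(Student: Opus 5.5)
The plan is to read the corollary off Theorem~\ref{thm:gfa_projected}, which already holds under the same hypotheses on an event of probability at least \(1-\delta\). No new statistical argument is needed; the work is bookkeeping plus one monotonicity check on the right-hand side.

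Fix the good event of Theorem~\ref{thm:gfa_projected}. On it, the bound \eqref{eq:gfa_projected_bound} contains the square-root residual term. By the definition in the corollary, that term is exactly \(R_{\mathrm{BV,proj}}(K)\), so I will replace it verbatim. The two operator terms, \(R_{\mathrm{dual}}^{\mathsf P}(K,\delta)+HK\epsilon_{\mathrm{dual}}\), are by hypothesis at most \(R_{\mathrm{dual,proj}}(K,\delta)\). Since they enter the bracket with a positive sign, and the bracket is multiplied by \(1/(1-p)>0\) (because \(p\in[0,1)\)), this replacement can only increase the bound. The terms \(K\epsilon_{\mathrm{appr}}\), \(HK\epsilon_{\mathrm{eq}}\), \(HK\epsilon_{\mathrm{proj}}\), \(pHK\) and the leading \(H\) carry over unchanged.

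The remaining point is the term \(HK\epsilon_{\mathrm{op,proj}}\). I will handle it in one of two ways.

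\textbf{Option 1.} If the separated projected operator-transfer remainder is counted inside the bookkeeping array \(\alpha^{\mathsf P}_{h,k}\) of item (e), split \(\alpha^{\mathsf P}_{h,k}=\alpha'_{h,k}+\epsilon_{\mathrm{op,proj}}\). Summing over \(h\in[H]\) and \(2\le k\le K\) gives
\[
R_{\mathrm{dual}}^{\mathsf P}\le \sum_{k,h}\alpha'_{h,k}+HK\epsilon_{\mathrm{op,proj}}.
\]
Then \(R_{\mathrm{dual,proj}}\) only needs to bound the first part plus \(HK\epsilon_{\mathrm{dual}}\), and \(HK\epsilon_{\mathrm{op,proj}}\) appears as its own term.

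\textbf{Option 2.} If the remainder is not separated out, it is absent from the bound of Theorem~\ref{thm:gfa_projected}. Since \(\epsilon_{\mathrm{op,proj}}\ge0\), adding \(HK\epsilon_{\mathrm{op,proj}}/(1-p)\) only weakens that bound, so it still holds.

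Either way the displayed inequality follows on the same event, and the final sentence of the corollary is then a restatement of Theorem~\ref{thm:gfa_projected_main}.

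The only delicate step is the bookkeeping for \(\epsilon_{\mathrm{op,proj}}\). I must make sure it is not silently double-counted with \(R_{\mathrm{dual,proj}}\) in a way that changes the meaning of the hypothesis. Even so, the direction of every replacement only enlarges the right-hand side, so the inequality remains valid.
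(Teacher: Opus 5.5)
Your proposal is correct and follows the paper's own proof, which likewise treats the corollary as a pure bookkeeping rewrite of Theorem~\ref{thm:gfa_projected}: substitute $R_{\mathrm{BV,proj}}(K)$ for the explicit square-root term, upper bound $R_{\mathrm{dual}}^{\mathsf P}(K,\delta)+HK\epsilon_{\mathrm{dual}}$ by $R_{\mathrm{dual,proj}}(K,\delta)$, and either leave $HK\epsilon_{\mathrm{op,proj}}$ explicit or absorb it, which are your two options. Under the corollary's literal definition of $R_{\mathrm{dual,proj}}$, your Option~2 (adding the nonnegative term $HK\epsilon_{\mathrm{op,proj}}/(1-p)$) already suffices by itself, so the double-counting worry you raise does not affect validity.
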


\begin{proof}
Theorem~\ref{thm:gfa_projected} already yields the same inequality with the projected residual term
written explicitly and with
\(
R_{\mathrm{dual}}^{\mathsf P}(K,\delta)+HK\epsilon_{\mathrm{dual}}
\)
displayed instead of
\(
R_{\mathrm{dual,proj}}(K,\delta)
\).
Thus the corollary is only a bookkeeping rewrite. A separately tracked projected operator-transfer
remainder may either be left explicit as
\(
HK\epsilon_{\mathrm{op,proj}}
\)
or absorbed into
\(
R_{\mathrm{dual,proj}}(K,\delta)
\).
\end{proof}

\begin{proposition}[Current scope of the pure-online BV bridge]
\label{prop:online_bv_gap}
Suppose each episode contributes exactly one stage-$h$ sample, so
\(
n_{h,j}=1
\)
for every
\(
j<k
\).
Then the currently proved bridge terms satisfy
\begin{equation*}
S_{h,k}=k-1,
\qquad
N_{h,<k}=k-1,
\qquad
n_{h,\min}^{(k)}=1.
\end{equation*}
Consequently, Lemma~\ref{lem:bv_dev} and Corollary~\ref{cor:bv_transfer} only yield
\begin{align}
\xi_{h,k}(\delta;\eta_W)
&=
B_{W,h}^2
\sqrt{
2(k-1)\log\!\bigl(2\mathcal N_W(\eta_W)/\delta\bigr)
}
+
2B_{W,h}(k-1)\eta_W,
\label{eq:online_gap_xi}
\\
\rho_{h,k}(\delta;\eta_F,\eta_G)
&=
2B_\phi(\sigma)(k-1)
\Bigl(
2\bar\Delta_{h,k}^{\mathrm{cov}}(\delta;\eta_F,\eta_G)
+ \epsilon_{\mathrm{dual}}
\Bigr).
\label{eq:online_gap_rho}
\end{align}
In particular, the present proof does not verify the uniform calibration condition
\eqref{eq:gfa_exec_residual_premise} in the pure online regime unless one can prove substantially
stronger operator-transfer or backward-validation rates than the current
\(
1/\sqrt{N_{h,<k}}
\)-type concentration bounds.
\end{proposition}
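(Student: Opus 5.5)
The plan is direct substitution: specialize the bookkeeping quantities of Lemma~\ref{lem:bv_dev} and Corollary~\ref{cor:bv_transfer} to the one-sample-per-episode regime, then read off why the resulting radii cannot certify \eqref{eq:gfa_exec_residual_premise}. First, with $n_{h,j}=1$ for every $j<k$, the definitions give
\(
S_{h,k}=\sum_{j=1}^{k-1}1/n_{h,j}=k-1
\),
\(
N_{h,<k}=\sum_{j=1}^{k-1}n_{h,j}=k-1
\),
and
\(
n_{h,\min}^{(k)}=\min_{j<k}n_{h,j}=1
\).
Plugging $S_{h,k}=k-1$ into the definition of $\xi_{h,k}$ from Lemma~\ref{lem:bv_dev} yields \eqref{eq:online_gap_xi} verbatim. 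Plugging $N_{h,<k}/n_{h,\min}^{(k)}=k-1$ into $\rho_{h,k}$ from Corollary~\ref{cor:bv_transfer} yields \eqref{eq:online_gap_rho}. In that expression, $\bar\Delta_{h,k}^{\mathrm{cov}}$ keeps its $1/\sqrt{k-1}$ form from Lemma~\ref{lem:pooled_dual_uniform}.

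For the ``in particular'' clause, I would trace how these radii enter the only available route to \eqref{eq:gfa_exec_residual_premise}. That route is Proposition~\ref{prop:backward_validation} combined with reference retention (Proposition~\ref{prop:ref_retention}), transferred to executed occupancies.
\begin{itemize}
\item Retention requires $\beta_{h,k}\ge\xi_{h,k}+\rho_{h,k}$.
\item Proposition~\ref{prop:backward_validation} then only gives
\(
\mathrm{BV}^{\mathrm{exact}}_{h,k}\le 2\beta_{h,k}+2\xi_{h,k}+2\rho_{h,k}\ge 4(\xi_{h,k}+\rho_{h,k}).
\)
\end{itemize}
So the certified $\epsilon_{\mathrm{BV}}^2$ must dominate $\max_k(\xi_{h,k}+\rho_{h,k})$.

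The growth orders are then immediate. The term $\xi_{h,k}$ grows like $B_{W,h}^2\sqrt{k\log\mathcal N_W}$. The term $\rho_{h,k}$ grows like $B_\phi(\sigma)\sqrt{k\log(\mathcal N_F\mathcal N_G)}+B_\phi(\sigma)k\,\epsilon_{\mathrm{dual}}+kL_{\ell,h}(\eta_F+\eta_G)$. Hence $\epsilon_{\mathrm{BV}}^2\gtrsim\sqrt K$ is forced. Substituting this into Theorem~\ref{thm:gfa_interpretable}, the term $\sqrt{KH}\sqrt{2HK\epsilon_{\mathrm{BV}}^2}$ becomes of order $HK^{5/4}$, which is superlinear. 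The statement is therefore vacuous unless $\xi_{h,k}$ and $\rho_{h,k}$ are improved to $O(1)$ in $k$, up to logarithmic factors.

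The main point to handle carefully is the phrasing ``does not verify''. This is a claim about the present proof, not an impossibility result, so I would only show that every admissible choice of $\beta_{h,k}$ compatible with retention produces a residual bound growing polynomially in $k$. The mechanism is the cumulative, unnormalized sum over $j<k$, which converts per-sample $1/\sqrt{N}$ concentration into $\sqrt{k}$ or $k$ growth. I would close by noting that this is exactly what the epochwise/cross-fit and self-normalized constructions avoid.
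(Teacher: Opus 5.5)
Your proposal is correct and follows the paper's own argument. You substitute $n_{h,j}=1$ to get $S_{h,k}=N_{h,<k}=k-1$ and $n_{h,\min}^{(k)}=1$, read off \eqref{eq:online_gap_xi}--\eqref{eq:online_gap_rho}, and note that both radii grow with $k$, so no uniform-in-$k$ $\epsilon_{\mathrm{BV}}^2$ is certified; your explicit chain is a more quantitative version of the paper's one-line conclusion (retention forces $\beta_{h,k}\ge\xi_{h,k}+\rho_{h,k}$, so Proposition~\ref{prop:backward_validation} certifies at best $4(\xi_{h,k}+\rho_{h,k})\gtrsim\sqrt{k}$, giving a residual term of order $HK^{5/4}$).

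There are two harmless slips. First, the leading part of $\rho_{h,k}$ is $4B_\phi^2(\sigma)\sqrt{2(k-1)\log(2\mathcal N_F(\eta_F)\mathcal N_G(\eta_G)/\delta)}$, with $B_\phi^2$ rather than $B_\phi$, and the discretization part carries an extra $B_\phi(\sigma)$ factor. Second, since $\sqrt{KH}\sqrt{2HK\epsilon_{\mathrm{BV}}^2}=\sqrt{2}\,HK\epsilon_{\mathrm{BV}}$, bounded radii would still leave a linear term. So $O(1)$ growth is necessary but not sufficient; a nonvacuous bound needs $\epsilon_{\mathrm{BV}}^2\to 0$, which is what the statement's ``substantially stronger rates'' refers to.
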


\begin{proof}
The identities for
\(
S_{h,k}
\),
\(
N_{h,<k}
\),
and
\(
n_{h,\min}^{(k)}
\)
follow immediately from
\(
n_{h,j}=1
\)
for all
\(
j<k
\).
Substituting these values into the definitions of
\(
\xi_{h,k}(\delta;\eta_W)
\)
in Lemma~\ref{lem:bv_dev} and
\(
\rho_{h,k}(\delta;\eta_F,\eta_G)
\)
in Corollary~\ref{cor:bv_transfer} yields \eqref{eq:online_gap_xi} and
\eqref{eq:online_gap_rho}. Since both bounds grow with
\(
k
\)
under the currently available concentration rates, they do not imply the existence of a
uniform-in-\(k\) constant
\(
\epsilon_{\mathrm{BV}}^2
\)
that certifies the executed-occupancy residual premise
\eqref{eq:gfa_exec_residual_premise}. This identifies the present scope of the bridge analysis, not a
statement of impossibility.
\end{proof}

\begin{proposition}[Pure-online self-normalized partial closure for linear and finite-rank kernel witnesses]
\label{prop:pureonline_selfnorm_partial}
Fix a stage
\(
h\in[H]
\)
and suppose each episode contributes one realized stage-$h$ tuple
\(
X_h^k=(S_h^k,A_h^k,B_h^k)
\).
Assume the exact residual witness selected in episode
\(
k
\)
admits the finite-dimensional representation
\begin{equation}
\psi_h^k(x)
=
\langle \theta_h^k,\phi_{W,h}(x)\rangle,
\qquad
\|\theta_h^k\|_2\le R_{W,h},
\qquad
\sup_x\|\phi_{W,h}(x)\|_2\le \kappa_{W,h},
\label{eq:selfnorm_partial_linear}
\end{equation}
where
\(
\phi_{W,h}:\Xcal_h\to\RR^{d_{W,h}}
\).
Define the online witness Gram matrices
\begin{equation*}
\Lambda_{W,h,1}
:=
\lambda I_{d_{W,h}},
\qquad
\Lambda_{W,h,k}
:=
\lambda I_{d_{W,h}}
+
\sum_{j=1}^k
\phi_{W,h}(X_h^j)\phi_{W,h}(X_h^j)^\top
\qquad (k\ge 1),
\end{equation*}
and suppose the algorithm enforces the self-normalized validation rule
\begin{equation*}
\|\theta_h^k\|_{\Lambda_{W,h,k-1}}^2
\le
\beta_{h,k}^{\mathrm{SN}}
\qquad\text{for every }k\ge 2.
\end{equation*}
Let
\(
\beta_{W,h}^{\mathrm{SN}}
:=
\sup_{2\le k\le K}\beta_{h,k}^{\mathrm{SN}}
\),
and for any
\(
\epsilon>0
\)
define the large-residual index set
\begin{equation*}
\Lcal_h^{\mathrm{SN}}(\epsilon)
:=
\cbr{
k\in\{2,\ldots,K\}:
|\psi_h^k(X_h^k)|>\epsilon
}.
\end{equation*}
Then
\begin{equation}
|\Lcal_h^{\mathrm{SN}}(\epsilon)|
\min\!\left\{
1,\frac{\epsilon^2}{\beta_{W,h}^{\mathrm{SN}}}
\right\}
\le
2\log
\frac{
\det(\Lambda_{W,h,K})
}{
\det(\lambda I_{d_{W,h}})
}
\le
2d_{W,h}
\log\!\left(
1+\frac{\kappa_{W,h}^2K}{\lambda}
\right).
\label{eq:selfnorm_partial_bound}
\end{equation}
In particular, whenever
\(
\epsilon^2\le \beta_{W,h}^{\mathrm{SN}}
\),
\begin{equation}
|\Lcal_h^{\mathrm{SN}}(\epsilon)|
\le
\frac{2\beta_{W,h}^{\mathrm{SN}}}{\epsilon^2}
d_{W,h}
\log\!\left(
1+\frac{\kappa_{W,h}^2K}{\lambda}
\right).
\label{eq:selfnorm_partial_simple}
\end{equation}
The same statement applies to finite-rank kernel classes after lifting them into their finite
feature coordinates. This is only a partial closure result: it controls the number of large
realized residuals under a self-normalized rule, but it does not by itself verify the fixed
occupancy-space residual budget required by Theorem~\ref{thm:gfa_interpretable}.
\end{proposition}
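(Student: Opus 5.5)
The plan is the standard elliptical-potential argument applied to the realized witness features. Write $\phi_k:=\phi_{W,h}(X_h^k)$ and $\Lambda_k:=\Lambda_{W,h,k}$, so that $\Lambda_k=\Lambda_{k-1}+\phi_k\phi_k^\top$.

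The first step is a pointwise Cauchy--Schwarz bound in the $\Lambda_{k-1}$ geometry. For every $k\ge 2$,
\begin{equation*}
|\psi_h^k(X_h^k)|
=|\langle\theta_h^k,\phi_k\rangle|
\le \|\theta_h^k\|_{\Lambda_{k-1}}\|\phi_k\|_{\Lambda_{k-1}^{-1}}.
\end{equation*}
By the self-normalized validation rule, $\|\theta_h^k\|_{\Lambda_{k-1}}^2\le\beta_{h,k}^{\mathrm{SN}}\le\beta_{W,h}^{\mathrm{SN}}$. Hence for every $k\in\Lcal_h^{\mathrm{SN}}(\epsilon)$ we get $\|\phi_k\|_{\Lambda_{k-1}^{-1}}^2>\epsilon^2/\beta_{W,h}^{\mathrm{SN}}$. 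It follows that
\begin{equation*}
\min\{1,\|\phi_k\|^2_{\Lambda_{k-1}^{-1}}\}\ge\min\{1,\epsilon^2/\beta_{W,h}^{\mathrm{SN}}\}
\qquad\text{on }\Lcal_h^{\mathrm{SN}}(\epsilon).
\end{equation*}
The index convention matters here: the rule uses $\Lambda_{k-1}$, which contains only tuples $X_h^1,\ldots,X_h^{k-1}$, so the current feature $\phi_k$ is measured against past data. This is what makes the potential argument apply.

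The second step is the elliptical potential lemma. The matrix determinant lemma gives $\det\Lambda_k=\det\Lambda_{k-1}\bigl(1+\|\phi_k\|^2_{\Lambda_{k-1}^{-1}}\bigr)$. We also use the elementary bound $\min\{1,u\}\le 2\log(1+u)$ for $u\ge0$: for $u\le1$ this follows from $\log(1+u)\ge u/2$, and for $u>1$ from $2\log 2>1$. Summing over all $k\le K$ and dropping the nonnegative terms with $k\notin\Lcal_h^{\mathrm{SN}}(\epsilon)$ then yields
\begin{equation*}
|\Lcal_h^{\mathrm{SN}}(\epsilon)|\min\{1,\epsilon^2/\beta_{W,h}^{\mathrm{SN}}\}
\le\sum_{k}\min\{1,\|\phi_k\|^2_{\Lambda_{k-1}^{-1}}\}
\le 2\log\frac{\det\Lambda_K}{\det\lambda I}.
\end{equation*}
If the $k=1$ term is included it only increases the middle sum, so this is harmless. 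This gives the first inequality in \eqref{eq:selfnorm_partial_bound}.

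The third step is the determinant upper bound, which mirrors the trace/AM--GM argument in the proof of Proposition~\ref{prop:linear_de2}. We have $\tr\Lambda_K\le d_{W,h}\lambda+K\kappa_{W,h}^2$, so
\begin{equation*}
\det\Lambda_K\le\bigl(\lambda+K\kappa_{W,h}^2/d_{W,h}\bigr)^{d_{W,h}},
\qquad
\log\frac{\det\Lambda_K}{\lambda^{d_{W,h}}}\le d_{W,h}\log\Bigl(1+\frac{\kappa_{W,h}^2K}{d_{W,h}\lambda}\Bigr).
\end{equation*}
This is at most $d_{W,h}\log(1+\kappa_{W,h}^2K/\lambda)$, which completes \eqref{eq:selfnorm_partial_bound}.

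The simplified bound \eqref{eq:selfnorm_partial_simple} follows because the minimum equals $\epsilon^2/\beta_{W,h}^{\mathrm{SN}}$ when $\epsilon^2\le\beta_{W,h}^{\mathrm{SN}}$; dividing through gives the claim. The finite-rank kernel case is identical after replacing $\phi_{W,h}$ by $\Phi_{W,h}$, as in Corollary~\ref{cor:kernel_be2}.

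I do not expect a substantive obstacle. The only delicate point is the index alignment between the validation Gram matrix $\Lambda_{k-1}$ and the determinant increment at step $k$, which is handled by the convention noted in the first step. The closing remark, that this controls only realized residuals and not the occupancy-space budget, needs no proof.
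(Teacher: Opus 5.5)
Your proposal is correct and follows the paper's proof essentially step for step: Cauchy--Schwarz in the $\Lambda_{W,h,k-1}$ geometry combined with the validation rule, then the matrix-determinant identity with $\log(1+u)\ge\tfrac12\min\{1,u\}$, then a determinant upper bound. The only differences are cosmetic. You bound $\det\Lambda_{W,h,K}$ via trace/AM--GM, which gives the slightly sharper intermediate $d_{W,h}\log(1+\kappa_{W,h}^2K/(d_{W,h}\lambda))$, where the paper uses $\Lambda_{W,h,K}\preceq(\lambda+\kappa_{W,h}^2K)I_{d_{W,h}}$. Also, you treat the $k=1$ increment as a nonnegative term to drop; given the slightly inconsistent definition of $\Lambda_{W,h,1}$ in the statement, this is a bit more careful than the paper's stated equality.
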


\begin{proof}
Fix
\(
k\in\Lcal_h^{\mathrm{SN}}(\epsilon)
\)
and abbreviate
\(
\phi_k:=\phi_{W,h}(X_h^k)
\).
By \eqref{eq:selfnorm_partial_linear} and Cauchy--Schwarz in the
\(
\Lambda_{W,h,k-1}
\)
inner product,
\begin{equation*}
\epsilon^2
<
|\psi_h^k(X_h^k)|^2
=
|\langle \theta_h^k,\phi_k\rangle|^2
\le
\|\theta_h^k\|_{\Lambda_{W,h,k-1}}^2
\,
\|\phi_k\|_{\Lambda_{W,h,k-1}^{-1}}^2
\le
\beta_{W,h}^{\mathrm{SN}}
\|\phi_k\|_{\Lambda_{W,h,k-1}^{-1}}^2.
\end{equation*}
Therefore
\begin{equation*}
\min\!\left\{
1,\frac{\epsilon^2}{\beta_{W,h}^{\mathrm{SN}}}
\right\}
\le
\min\!\left\{
1,\|\phi_k\|_{\Lambda_{W,h,k-1}^{-1}}^2
\right\}.
\end{equation*}
Summing over
\(
k\in\Lcal_h^{\mathrm{SN}}(\epsilon)
\)
gives
\begin{equation*}
|\Lcal_h^{\mathrm{SN}}(\epsilon)|
\min\!\left\{
1,\frac{\epsilon^2}{\beta_{W,h}^{\mathrm{SN}}}
\right\}
\le
\sum_{k=2}^K
\min\!\left\{
1,\|\phi_k\|_{\Lambda_{W,h,k-1}^{-1}}^2
\right\}.
\end{equation*}
For every
\(
k\ge 2
\),
the matrix-determinant identity yields
\begin{equation*}
\det(\Lambda_{W,h,k})
=
\det(\Lambda_{W,h,k-1})
\Bigl(
1+\|\phi_k\|_{\Lambda_{W,h,k-1}^{-1}}^2
\Bigr),
\end{equation*}
so
\begin{equation*}
\log\frac{\det(\Lambda_{W,h,K})}{\det(\lambda I_{d_{W,h}})}
=
\sum_{k=2}^K
\log\!\Bigl(
1+\|\phi_k\|_{\Lambda_{W,h,k-1}^{-1}}^2
\Bigr).
\end{equation*}
Since
\(
\log(1+u)\ge \tfrac12 \min\{1,u\}
\)
for every
\(
u\ge 0
\),
\begin{equation*}
\sum_{k=2}^K
\min\!\left\{
1,\|\phi_k\|_{\Lambda_{W,h,k-1}^{-1}}^2
\right\}
\le
2\log\frac{\det(\Lambda_{W,h,K})}{\det(\lambda I_{d_{W,h}})}.
\end{equation*}
This proves the first inequality in \eqref{eq:selfnorm_partial_bound}. For the second, use
\(
\Lambda_{W,h,K}
\preceq
(\lambda+\kappa_{W,h}^2K)I_{d_{W,h}}.
\)
Finally, \eqref{eq:selfnorm_partial_simple} follows from \eqref{eq:selfnorm_partial_bound} when
\(
\epsilon^2\le \beta_{W,h}^{\mathrm{SN}}
\).
Finite-rank kernel classes are finite-dimensional linear classes in feature coordinates, so the same
argument applies verbatim after lifting into that coordinate system.
\end{proof}

\subsection{An Epochwise / Cross-Fit Linear Theorem Under Explicit Dual Specialization}

We now show how to bypass Proposition~\ref{prop:online_bv_gap} by changing only the validation
schedule. Partition the interaction into geometrically growing epochs
\(
\mathcal E_1,\ldots,\mathcal E_M
\),
with
\(
M\leq \lceil \log_2(2K)\rceil
\)
and epoch lengths
\(
m_e:=|\mathcal E_e|
\)
satisfying
\(
m_e\asymp N_e^-
\),
where
\(
N_e^-:=\sum_{r<e} m_r
\)
is the number of pre-epoch episodes. At the beginning of epoch
\(
e
\),
the learner refits the dual operator on pre-epoch data only and freezes the corresponding plug-in
witness family throughout that epoch. The confidence update for epoch
\(
e
\)
is then computed solely from fresh within-epoch data against that frozen witness class. This is the
epochwise / cross-fit construction: operator fitting uses old data, witness validation uses new data, and
the adaptive witness loop responsible for Proposition~\ref{prop:online_bv_gap} is cut.

We package the linear instantiation into a theorem whose inputs are no longer the abstract robust
MG-BE dimension and abstract operator budget, but explicit linear representation and
coverage assumptions.

\begin{assumption}[TV-linear-mixture nominal model]
\label{asmp:tv_linear_mixture}
For each stage
\(
h\in[H]
\),
let
\(
x=(s,a,b)
\)
denote the stagewise state-action tuple. Assume:
\begin{enumerate}[label=(\roman*),leftmargin=*]
\item the nominal kernel is a linear mixture,
\begin{equation*}
P_h^0(\cdot\mid x)
=
\sum_{m=1}^{d_h} \phi_{h,m}(x)\,\mu_{h,m}(\cdot),
\end{equation*}
for some feature map
\(
\phi_h:\Xcal_h\to\RR^{d_h}
\)
and base measures
\(
\{\mu_{h,m}\}_{m=1}^{d_h}
\);
\item the stage reward is linear in the same feature map,
\begin{equation*}
r_h(x)=\langle \beta_h^r,\phi_h(x)\rangle;
\end{equation*}
\item the robust uncertainty set is a stagewise TV ball of radius
\(
\sigma_h
\),
and under the fail-state support-control condition the robust continuation admits the exact
representation
\begin{equation*}
\inf_{P:\,D_{\mathrm{TV}}(P,P_h^0(\cdot\mid x))\leq \sigma_h}
\EE_{s'\sim P}\sbr{V(s')}
=
\EE_{s'\sim P_h^0(\cdot\mid x)}\sbr{V(s')}
-\sigma_h\,\mathrm{span}(V)
\end{equation*}
for every bounded continuation value
\(
V:\Scal\to\RR
\).
\end{enumerate}
\end{assumption}

\begin{proposition}[TV-linear-mixture robust minimax completeness]
\label{prop:tv_linear_closure}
Assume \ref{asmp:tv_linear_mixture}. Define the augmented feature map
\begin{equation*}
\widetilde\phi_h(x):=\bigl(1,\phi_h(x)\bigr)\in\RR^{d_h+1}.
\end{equation*}
Then for every bounded
\(
f_{h+1}
\),
the robust minimax Bellman backup belongs to the augmented linear class generated by
\(
\widetilde\phi_h
\).
More precisely,
\begin{equation}
\bigl(\Tcal_h^\sigma f_{h+1}\bigr)(x)
=
\left\langle
\widetilde\theta_h(f_{h+1}),
\widetilde\phi_h(x)
\right\rangle,
\label{eq:tv_linear_backup}
\end{equation}
where
\begin{equation*}
\widetilde\theta_h(f_{h+1})
:=
\Bigl(
-\sigma_h\,\mathrm{span}(V_f),\;
\beta_h^r + u_h(f_{h+1})
\Bigr)
\end{equation*}
and
\begin{equation*}
u_h(f_{h+1})
:=
\left(
\int V_f(s')\,\mu_{h,1}(ds'),
\ldots,
\int V_f(s')\,\mu_{h,d_h}(ds')
\right).
\end{equation*}
If
\(
0\leq V_f\leq H
\),
then
\begin{equation}
\|\widetilde\theta_h(f_{h+1})\|_2
\leq
\sigma_h H + \|\beta_h^r\|_2 + \sqrt{d_h}\,H.
\label{eq:tv_linear_radius}
\end{equation}
\end{proposition}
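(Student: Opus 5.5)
The argument is a direct computation from the definition of the robust minimax backup. Fix a bounded \(f_{h+1}\) and write \(V_f:=V_{f,h+1}\) for its stage-game value. \(V_f\) is bounded whenever \(f_{h+1}\) is bounded, by Lemma~\ref{lem:minimax_lipschitz} applied against the zero matrix. Then item (iii) of Assumption~\ref{asmp:tv_linear_mixture} applies with \(V=V_f\), so
\[
(\Tcal_h^\sigma f_{h+1})(x)=r_h(x)+\EE_{s'\sim P_h^0(\cdot\mid x)}\sbr{V_f(s')}-\sigma_h\,\mathrm{span}(V_f).
\]
We treat the three terms separately. The reward is \(\langle\beta_h^r,\phi_h(x)\rangle\) by item (ii). For the nominal expectation, item (i) gives
\[
\int V_f\,dP_h^0(\cdot\mid x)=\sum_{m}\phi_{h,m}(x)\int V_f\,d\mu_{h,m}=\langle u_h(f_{h+1}),\phi_h(x)\rangle.
\]
Exchanging the finite sum and the integral only requires each \(\int V_f\,d\mu_{h,m}\) to be finite. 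Finally, the span penalty is a constant independent of \(x\), so it is carried by the constant coordinate \(1\) of \(\widetilde\phi_h\).

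Collecting the three terms gives exactly \eqref{eq:tv_linear_backup} with the stated \(\widetilde\theta_h(f_{h+1})\). Hence the backup lies in the augmented linear class.

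For the radius bound \eqref{eq:tv_linear_radius}, use the triangle inequality on the two blocks:
\[
\|\widetilde\theta_h\|_2\le \sigma_h\,|\mathrm{span}(V_f)|+\|\beta_h^r\|_2+\|u_h\|_2 .
\]
If \(0\le V_f\le H\), then \(\mathrm{span}(V_f)\le H\). Each coordinate satisfies \(|\int V_f\,d\mu_{h,m}|\le H\,|\mu_{h,m}|(\Scal)\). Bounding this by \(H\) gives \(\|u_h\|_2\le\sqrt{d_h}H\).

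The main obstacle is this last coordinate bound. The base measures \(\mu_{h,m}\) are not assumed in the statement to be probability, or even finite nonnegative, measures. I would first interpret them, as in the standard linear-mixture convention, as probability measures (or measures of total variation at most one), under which the bound is immediate. I would record this normalization explicitly in the proof; otherwise the constant \(\sqrt{d_h}H\) must be multiplied by \(\max_m|\mu_{h,m}|(\Scal)\).

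A second point to flag is that item (iii) is assumed to hold for every bounded \(V\), including the stage-game value \(V_f\). The proof therefore relies on the fail-state support-control hypothesis (cf.\ Proposition~\ref{prop:appendix_failstate}) being in force as stated, rather than proving the closed-form TV dual anew.
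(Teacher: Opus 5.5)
Your proposal is correct and matches the paper's proof step for step: substitute item (iii) of Assumption~\ref{asmp:tv_linear_mixture} with $V=V_{f,h+1}$, rewrite the reward and the nominal expectation via items (ii) and (i), absorb the $x$-independent span penalty into the constant coordinate of $\widetilde\phi_h$, and bound the radius block by block with the triangle inequality. Your caveat about the base measures is justified: the paper's step ``each coordinate of $u_h(f_{h+1})$ has magnitude at most $H$'' implicitly uses $|\mu_{h,m}|(\Scal)\le 1$ (for example, probability base measures), which Assumption~\ref{asmp:tv_linear_mixture} never states, so stating that normalization explicitly is the right fix.
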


\begin{proof}
Fix
\(
x=(s,a,b)
\)
and abbreviate
\(
V_f(\cdot):=V_{f,h+1}(\cdot)
\).
By Assumption~\ref{asmp:tv_linear_mixture},
\begin{align*}
\bigl(\Tcal_h^\sigma f_{h+1}\bigr)(x)
&=
r_h(x)
+
\inf_{P:\,D_{\mathrm{TV}}(P,P_h^0(\cdot\mid x))\leq \sigma_h}
\EE_{s'\sim P}\sbr{V_f(s')}
\\
&=
\langle \beta_h^r,\phi_h(x)\rangle
+
\EE_{s'\sim P_h^0(\cdot\mid x)}\sbr{V_f(s')}
-\sigma_h\,\mathrm{span}(V_f).
\end{align*}
Using the linear-mixture representation of
\(
P_h^0
\),
\begin{equation*}
\EE_{s'\sim P_h^0(\cdot\mid x)}\sbr{V_f(s')}
=
\sum_{m=1}^{d_h}\phi_{h,m}(x)\int V_f(s')\,\mu_{h,m}(ds')
=
\langle u_h(f_{h+1}),\phi_h(x)\rangle.
\end{equation*}
Substituting this into the previous display proves \eqref{eq:tv_linear_backup}. Finally, if
\(
0\leq V_f\leq H
\),
then
\(
\mathrm{span}(V_f)\leq H
\)
and each coordinate of
\(
u_h(f_{h+1})
\)
has magnitude at most
\(
H
\),
which yields \eqref{eq:tv_linear_radius}.
\end{proof}

\begin{corollary}[TV-linear-mixture residual inclusion]
\label{cor:tv_linear_residual}
Under Assumption~\ref{asmp:tv_linear_mixture}, suppose
\(
\Fcal_h
\)
is the augmented linear class generated by
\(
\widetilde\phi_h
\)
with parameter radius
\(
R_{F,h}
\).
Then the exact robust residual class
\(
\Pscr_h=(I-\Tcal_h^\sigma)\Fcal
\)
is contained in the same augmented linear witness class with parameter radius at most
\begin{equation}
R_{W,h}^{\mathrm{TV\text{-}mix}}
:=
R_{F,h}
+
\sigma_h H
+
\|\beta_h^r\|_2
+
\sqrt{d_h}\,H.
\label{eq:tv_linear_residual_radius}
\end{equation}
\end{corollary}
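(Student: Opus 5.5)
The plan is to write an arbitrary exact robust residual as a difference of two members of the augmented linear class and then bound the parameter norm with the triangle inequality. Fix any $f\in\Fcal$ and let $\psi=f_h-\Tcal_h^\sigma f_{h+1}\in\Pscr_h$.

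By the hypothesis on $\Fcal_h$, the first term has the form $f_h(x)=\langle\theta_{f,h},\widetilde\phi_h(x)\rangle$ with $\|\theta_{f,h}\|_2\le R_{F,h}$. For the second term, I would invoke Proposition~\ref{prop:tv_linear_closure} applied to $f_{h+1}$. It gives $(\Tcal_h^\sigma f_{h+1})(x)=\langle\widetilde\theta_h(f_{h+1}),\widetilde\phi_h(x)\rangle$ pointwise in $x$, and so
\[
\psi(x)=\bigl\langle \theta_{f,h}-\widetilde\theta_h(f_{h+1}),\,\widetilde\phi_h(x)\bigr\rangle .
\]
This places $\psi$ in the linear span generated by $\widetilde\phi_h$. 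It remains to bound the norm of the coefficient vector: $\|\theta_{f,h}-\widetilde\theta_h(f_{h+1})\|_2\le R_{F,h}+\|\widetilde\theta_h(f_{h+1})\|_2$. The radius bound \eqref{eq:tv_linear_radius} then gives at most $\sigma_hH+\|\beta_h^r\|_2+\sqrt{d_h}H$ for the second summand, which yields exactly \eqref{eq:tv_linear_residual_radius}.

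The only delicate point is the premise $0\le V_{f,h+1}\le H$ needed for \eqref{eq:tv_linear_radius}. I would check it as follows. The stage-game value is the minimax value of the matrix $f_{h+1}(s',\cdot,\cdot)$. By Lemma~\ref{lem:minimax_lipschitz}, or simply because a mixed payoff is a convex combination of entries, this value lies between the minimum and maximum entries. So the premise holds whenever $f_{h+1}$ takes values in $[0,H]$, which is the standing envelope on the value class (every robust value lies in $[0,H]$).

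If the class is only assumed bounded by $B_{F,h+1}$, I would note that the same argument goes through with $H$ replaced by the envelope. Equivalently, I would state the corollary under the implicit normalization $0\le f\le H$ used throughout. Beyond this, the argument is pure bookkeeping, and nothing probabilistic is required.
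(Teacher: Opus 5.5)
Your proposal is correct and follows the paper's proof essentially verbatim. You write $f_h$ in the augmented linear class, invoke Proposition~\ref{prop:tv_linear_closure} to represent $\Tcal_h^\sigma f_{h+1}$ with coefficient norm at most $\sigma_hH+\|\beta_h^r\|_2+\sqrt{d_h}H$, and subtract the two representations using the triangle inequality.

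Your explicit check that $0\le V_{f,h+1}\le H$ makes visible a premise the paper uses silently. One caveat on your fallback remark: under a two-sided envelope $|f_{h+1}|\le B$, the span term in the radius becomes $2\sigma_h B$ rather than $\sigma_h B$, so ``replace $H$ by the envelope'' is exact only for a one-sided envelope $[0,B]$.
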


\begin{proof}
Every
\(
f_h\in\Fcal_h
\)
already has the form
\(
f_h(x)=\langle \theta_h,\widetilde\phi_h(x)\rangle
\)
with
\(
\|\theta_h\|_2\leq R_{F,h}
\).
By Proposition~\ref{prop:tv_linear_closure},
\(
\Tcal_h^\sigma f_{h+1}
\)
also belongs to the same augmented linear class with parameter norm bounded by
\(
\sigma_h H+\|\beta_h^r\|_2+\sqrt{d_h}H
\).
Subtracting the two linear representations proves the inclusion and the radius bound
\eqref{eq:tv_linear_residual_radius}.
\end{proof}

\begin{assumption}[Linear residual closure]
\label{asmp:linear_closure}
For every stage
\(
h\in[H]
\),
the exact robust residual class
\(
\Pscr_h=(I-\Tcal_h^\sigma)\Fcal
\)
is contained in a linear witness class
\begin{equation*}
\cbr{
x\mapsto \langle \theta,\phi_{W,h}(x)\rangle :
\theta\in\RR^{d_{W,h}},\ \|\theta\|_2\leq R_{W,h}
},
\end{equation*}
with
\(
\sup_x\|\phi_{W,h}(x)\|_2\leq \kappa_{W,h}
\).
Moreover, the value class and dual class satisfy the linear parametrization assumptions of
Corollary~\ref{cor:linear_cover_inst}.
\end{assumption}

Proposition~\ref{prop:tv_linear_closure} and Corollary~\ref{cor:tv_linear_residual} show that this
assumption is concrete for TV uncertainty sets with nominal linear-mixture kernels and augmented
linear features: in that regime, both robust minimax completeness and residual-class inclusion hold
with an explicit witness radius.

\begin{assumption}[Linear coverage-excitation under exploiter mix]
\label{asmp:linear_excitation}
For every stage
\(
h\in[H]
\),
the adaptive executed occupancies satisfy Assumption~\ref{asmp:mix_cover} with a constant
\(
C_h^{\mathrm{mix}}
\).
We interpret this as a linear coverage-excitation condition: the current occupancy is controlled by
the pooled past mixture on which the linear dual refit is trained.
\end{assumption}

\begin{proposition}[Explicit occupancy-ratio constant from smoothing]
\label{prop:explicit_cmix}
Fix a stage
\(
h
\)
and an epoch
\(
e
\).
Suppose every executed policy in the pooled past and current epoch uses the same smoothing design
\(
u_h(a,b\mid s)
\)
with mixing weight at least
\(
\gamma_{\min}>0
\),
that is,
\begin{equation}
\widetilde\pi_h^k(a,b\mid s)
\geq
\gamma_{\min}\,u_h(a,b\mid s)
\qquad
\text{for all executed episodes }k,
\label{eq:explicit_mix_rule}
\end{equation}
and let
\begin{equation*}
\underline u_h
:=
\inf_{s\in\Scal,\;a\in\Acal,\;b\in\Bcal}
u_h(a,b\mid s)
>0.
\end{equation*}
Let
\(
d_h^j
\)
denote the stage-$h$ state marginal of executed episode
\(
j
\),
and define the pooled past state marginal
\(
\bar d_{h,e}^-
\)
by
\begin{equation*}
\bar d_{h,e}^-
:=
\sum_{r<e}\sum_{j\in\mathcal E_r}\lambda_{r,j}^{(e)}d_h^j,
\qquad
\lambda_{r,j}^{(e)}\ge 0,
\quad
\sum_{r<e}\sum_{j\in\mathcal E_r}\lambda_{r,j}^{(e)}=1.
\end{equation*}
Assume moreover that every current stage-$h$ state marginal
\(
d_h^k
\)
with
\(
k\in\mathcal E_e
\)
is dominated by the pooled past state marginal
\(
\bar d_{h,e}^-
\)
according to
\begin{equation}
d_h^k(s)
\leq
C_{S,h}\,\bar d_{h,e}^-(s)
\qquad\text{for all }s\in\Scal,\ k\in\mathcal E_e.
\label{eq:state_domination}
\end{equation}
Then Assumption~\ref{asmp:mix_cover} holds on epoch
\(
e
\)
with the explicit constant
\begin{equation}
C_{h}^{\mathrm{mix}}
\leq
\frac{C_{S,h}}{\gamma_{\min}\underline u_h}.
\label{eq:explicit_cmix}
\end{equation}
\end{proposition}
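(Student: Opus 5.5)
The plan is a pointwise density-ratio bound on the joint space \(\Xcal_h=\Scal\times\Acal\times\Bcal\). Write each executed stage-\(h\) occupancy as a state marginal times a conditional joint-action law, \(\widetilde\mu_h^k(s,a,b)=d_h^k(s)\,\widetilde\pi_h^k(a,b\mid s)\). Do the same for every past episode \(j\). Here \(\widetilde\pi_h^k(a,b\mid s)\) is the product of the two executed decision rules for whichever exploiter-mix branch was drawn. We then bound the numerator and denominator of \(d\widetilde\mu_h^k/d\bar{\widetilde\eta}_{h,k}\) separately. The numerator uses only \(\widetilde\pi_h^k\le 1\) and \eqref{eq:state_domination}. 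The denominator uses only the smoothing floor \eqref{eq:explicit_mix_rule} together with \(\underline u_h\).

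\emph{Upper bound on the current occupancy.} For \(k\in\mathcal E_e\), since \(\widetilde\pi_h^k(a,b\mid s)\le 1\), the state domination \eqref{eq:state_domination} gives
\[
\widetilde\mu_h^k(s,a,b)\le d_h^k(s)\le C_{S,h}\,\bar d_{h,e}^-(s).
\]
\emph{Lower bound on the pooled past mixture.} Each past episode satisfies \(\widetilde\eta_h^j(s,a,b)=d_h^j(s)\,\widetilde\pi_h^j(a,b\mid s)\ge \gamma_{\min}\underline u_h\,d_h^j(s)\). Averaging with the weights \(\lambda_{r,j}^{(e)}\) that define \(\bar d_{h,e}^-\) therefore yields
\[
\bar{\widetilde\eta}_{h,e}(s,a,b)\ge \gamma_{\min}\underline u_h\,\bar d_{h,e}^-(s).
\]
Here \(\bar{\widetilde\eta}_{h,e}\) is the pooled past data-collection mixture built with those same weights, which I take to play the role of \(\bar{\widetilde\eta}_{h,k}\) in Assumption~\ref{asmp:mix_cover}.

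Combining the two displays, wherever \(\bar{\widetilde\eta}_{h,e}(s,a,b)>0\) we get
\[
\frac{\widetilde\mu_h^k}{\bar{\widetilde\eta}_{h,e}}\le \frac{C_{S,h}}{\gamma_{\min}\underline u_h}.
\]
Absolute continuity also follows from the same two displays. If \(\bar{\widetilde\eta}_{h,e}(s,a,b)=0\), the lower bound forces \(\bar d_{h,e}^-(s)=0\), because \(\gamma_{\min}\underline u_h>0\). The upper bound then forces \(\widetilde\mu_h^k(s,a,b)=0\). Hence \(\widetilde\mu_h^k\ll\bar{\widetilde\eta}_{h,e}\), and the Radon--Nikodym derivative is bounded by the displayed ratio almost everywhere, which is \eqref{eq:explicit_cmix}.

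For a general (non-countable) \(\Scal\), I would read \eqref{eq:state_domination} as an inequality between densities with respect to a common dominating measure on \(\Scal\), and take \(d\widetilde\mu_h^k=d_h^k(ds)\,\widetilde\pi_h^k(a,b\mid s)\) with respect to that measure times counting measure on \(\Acal\times\Bcal\). The same pointwise argument then runs verbatim.

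The main obstacle is bookkeeping rather than analysis. The weights \(\lambda_{r,j}^{(e)}\) in \(\bar d_{h,e}^-\) must coincide with the sample-count weights \(\lambda_{h,j}^{(k)}\) defining \(\bar{\widetilde\eta}_{h,k}\), so that the lower bound transfers to exactly the mixture appearing in Assumption~\ref{asmp:mix_cover}. I would make this explicit by choosing \(\lambda_{r,j}^{(e)}:=n_{h,j}/N_{h,<e}\). A further check is that the smoothing floor \eqref{eq:explicit_mix_rule} holds for all three exploiter-mix branches, since the lower bound is applied to every past episode regardless of which branch was drawn. This is what the hypothesis "for all executed episodes \(k\)" provides. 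Finally, the bound \(C_h^{\mathrm{mix}}\ge1\) required by Assumption~\ref{asmp:mix_cover} is harmless, since the constant can always be enlarged to \(\max\{1,\cdot\}\).
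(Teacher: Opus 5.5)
Your proof is correct and follows the paper's argument. Like the paper, you bound the current occupancy above by $d_h^k(s)\le C_{S,h}\,\bar d_{h,e}^-(s)$ using $\widetilde\pi_h^k\le 1$, bound the pooled pre-epoch occupancy below by $\gamma_{\min}\underline u_h\,\bar d_{h,e}^-(s)$ using the smoothing floor, and divide. Your explicit absolute-continuity check and your note that the weights $\lambda_{r,j}^{(e)}$ must match the pooled mixture's weights are details the paper leaves implicit.
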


\begin{proof}
Let
\(
\bar{\widetilde\mu}_{h,e}^-
\)
denote the pooled past stage-$h$ occupancy measure averaged over all episodes before epoch
\(
e
\),
and let
\(
\widetilde\mu_h^k
\)
be the current occupancy for an episode
\(
k\in\mathcal E_e
\).
For any
\(
(s,a,b)
\),
\begin{align*}
\bar{\widetilde\mu}_{h,e}^-(s,a,b)
&= 
\sum_{r<e}\sum_{j\in\mathcal E_r}\lambda_{r,j}^{(e)}
d_h^j(s)\widetilde\pi_h^j(a,b\mid s)
\\
&\geq
\sum_{r<e}\sum_{j\in\mathcal E_r}\lambda_{r,j}^{(e)}
d_h^j(s)\gamma_{\min}u_h(a,b\mid s)
\\
&=
\gamma_{\min}u_h(a,b\mid s)\,\bar d_{h,e}^-(s)
\\
&\geq
\gamma_{\min}\underline u_h\,\bar d_{h,e}^-(s).
\end{align*}
On the other hand,
\begin{equation*}
\widetilde\mu_h^k(s,a,b)
=
d_h^k(s)\widetilde\pi_h^k(a,b\mid s)
\leq
d_h^k(s)
\leq
C_{S,h}\,\bar d_{h,e}^-(s)
\end{equation*}
by \eqref{eq:state_domination}. Dividing the two displays yields
\begin{equation*}
\frac{d\widetilde\mu_h^k}{d\bar{\widetilde\mu}_{h,e}^-}(s,a,b)
\leq
\frac{C_{S,h}}{\gamma_{\min}\underline u_h}
\end{equation*}
whenever
\(
\bar{\widetilde\mu}_{h,e}^-(s,a,b)>0
\).
This is exactly Assumption~\ref{asmp:mix_cover} with the constant in
\eqref{eq:explicit_cmix}.
\end{proof}

\begin{proposition}[Predictive Gram domination from feature excitation]
\label{prop:feature_gram}
Assume Assumption~\ref{asmp:linear_closure}. Fix a stage
\(
h
\)
and an epoch
\(
e
\),
and define the pre-epoch witness Gram matrix
\begin{equation*}
\Lambda_{W,h,e}^-
:=
\lambda I_{d_{W,h}}
+
\sum_{r<e}\sum_{j\in\mathcal E_r}
\phi_{W,h}(X_h^j)\phi_{W,h}(X_h^j)^\top,
\qquad
\lambda>0.
\end{equation*}
Suppose there exists a smoothing design
\(
u_h(a,b\mid s)
\)
and constants
\(
\gamma_{\min},\omega_h>0
\)
such that every executed policy in the pooled past and current epoch satisfies
\eqref{eq:explicit_mix_rule} and
\begin{equation}
M_h(s)
:=
\EE_{(a,b)\sim u_h(\cdot\mid s)}
\sbr{
\phi_{W,h}(s,a,b)\phi_{W,h}(s,a,b)^\top
}
\succeq
\omega_h I_{d_{W,h}}
\qquad
\text{for all }s\in\Scal.
\label{eq:feature_excitation_design}
\end{equation}
Then:
\begin{enumerate}[label=(\roman*),leftmargin=*]
\item conditioned on the stage-$h$ state sequence of the pre-epoch episodes,
\begin{equation}
\EE\!\left[
\Lambda_{W,h,e}^-
\;\middle|\;
\{s_h^j\}_{j\in \cup_{r<e}\mathcal E_r}
\right]
\succeq
\bigl(\lambda+\gamma_{\min}\omega_h N_e^-\bigr)I_{d_{W,h}};
\label{eq:expected_gram_lb}
\end{equation}
\item on any event
\(
\Omega_{h,e}^{\mathrm{Gram}}(\nu_h)
:=
\cbr{
\Lambda_{W,h,e}^-
\succeq
\bigl(\lambda+\nu_h N_e^-\bigr)I_{d_{W,h}}
}
\)
with
\(
\nu_h>0
\),
every linear witness
\(
w_\theta(x)=\langle \theta,\phi_{W,h}(x)\rangle
\)
satisfies
\begin{equation}
\EE_{\mu}\sbr{w_\theta^2}
\leq
\frac{\kappa_{W,h}^2}{\lambda+\nu_h N_e^-}
\|\theta\|_{\Lambda_{W,h,e}^-}^2
\label{eq:gram_predictive_bound}
\end{equation}
for every current stage-$h$ occupancy measure
\(
\mu
\)
generated during epoch
\(
e
\).
\end{enumerate}
\end{proposition}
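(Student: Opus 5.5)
The plan is to treat the two parts separately: part (i) is a conditional-expectation computation that uses the smoothing lower bound \eqref{eq:explicit_mix_rule} and the design excitation \eqref{eq:feature_excitation_design}, while part (ii) is a deterministic eigenvalue comparison on the event \(\Omega_{h,e}^{\mathrm{Gram}}(\nu_h)\).

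For (i), I would first handle a single pre-epoch episode \(j\). Write \(X_h^j=(s_h^j,a_h^j,b_h^j)\). Conditionally on the history before stage \(h\) of episode \(j\) (including \(s_h^j\)), the joint action is drawn from the executed rule \(\widetilde\pi_h^j(\cdot,\cdot\mid s_h^j)\). This rule is history-measurable, as in Proposition~\ref{prop:adaptive_validity}. Hence
\begin{equation*}
\EE\bigl[\phi_{W,h}(X_h^j)\phi_{W,h}(X_h^j)^\top \mid s_h^j,\text{past}\bigr]
=\sum_{a,b}\widetilde\pi_h^j(a,b\mid s_h^j)\,\phi_{W,h}\phi_{W,h}^\top
\succeq \gamma_{\min} M_h(s_h^j)\succeq \gamma_{\min}\omega_h I_{d_{W,h}}.
\end{equation*}
The first inequality holds because each rank-one term is positive semidefinite and \(\widetilde\pi_h^j\ge\gamma_{\min}u_h\) pointwise. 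The second is exactly \eqref{eq:feature_excitation_design}.

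Summing over the \(N_e^-\) pre-epoch episodes and adding \(\lambda I\) gives \eqref{eq:expected_gram_lb}. The step I expect to be delicate is the precise meaning of ``conditioned on the stage-\(h\) state sequence''. Conditioning on all pre-epoch states at once could, in principle, include later states \(s_h^{j'}\) with \(j'>j\) that carry information about the action at \(j\), so the conditional action law would no longer be \(\widetilde\pi_h^j\). I would resolve this by reading the conditioning sequentially: each term is conditioned on \(s_h^j\) together with the history up to that point, and the bounds are combined by the tower property. This is the form actually used downstream, since the bound is only fed into a matrix-martingale concentration argument for the burn-in \(N_{\mathrm{burn}}\). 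The lower bound per term is uniform in the conditioning, so this sequential reading suffices.

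For (ii), I would fix any current occupancy \(\mu\) and a parameter \(\theta\), and write \(\EE_\mu[w_\theta^2]=\theta^\top\Sigma_\mu\theta\) with \(\Sigma_\mu:=\EE_\mu[\phi_{W,h}\phi_{W,h}^\top]\). Two facts then give the bound:
\begin{itemize}
\item Since \(\|\Sigma_\mu\|_{\mathrm{op}}\le\tr(\Sigma_\mu)=\EE_\mu\|\phi_{W,h}\|_2^2\le\kappa_{W,h}^2\), we have \(\theta^\top\Sigma_\mu\theta\le\kappa_{W,h}^2\|\theta\|_2^2\).
\item On \(\Omega_{h,e}^{\mathrm{Gram}}(\nu_h)\), we have \(\|\theta\|_{\Lambda_{W,h,e}^-}^2\ge(\lambda+\nu_hN_e^-)\|\theta\|_2^2\).
\end{itemize}
Chaining these two inequalities yields \eqref{eq:gram_predictive_bound}. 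This part is routine, and it holds uniformly in \(\mu\) and \(\theta\), so no union bound is needed.
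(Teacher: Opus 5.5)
Your proposal is correct and follows the paper's proof essentially step for step. For (i) you sum the per-episode bound $\EE[\phi_{W,h}(X_h^j)\phi_{W,h}(X_h^j)^\top\mid s_h^j]\succeq\gamma_{\min}M_h(s_h^j)\succeq\gamma_{\min}\omega_h I_{d_{W,h}}$ over the $N_e^-$ pre-epoch episodes, and for (ii) you use the chain $\EE_\mu[w_\theta^2]\le\kappa_{W,h}^2\|\theta\|_2^2\le\kappa_{W,h}^2\|\theta\|_{\Lambda_{W,h,e}^-}^2/(\lambda+\nu_hN_e^-)$; the only difference is that the paper gets the first step by pointwise Cauchy--Schwarz rather than your trace bound. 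Your caveat about conditioning on the entire pre-epoch state sequence is also well taken: the paper's proof silently conditions term by term on $s_h^j$, and your sequential form $\EE[Y_j\mid\mathcal H_{j-1},s_h^j]$ is exactly what Proposition~\ref{prop:feature_gram_hp} uses downstream.
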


\begin{proof}
Fix a pre-epoch episode
\(
j\in\cup_{r<e}\mathcal E_r
\).
Conditioned on the stage-$h$ state
\(
s_h^j
\),
the executed action pair is sampled from
\(
\widetilde\pi_h^j(\cdot\mid s_h^j)
\),
so by \eqref{eq:explicit_mix_rule},
\begin{align*}
\EE\!\left[
\phi_{W,h}(X_h^j)\phi_{W,h}(X_h^j)^\top
\;\middle|\;
s_h^j
\right]
&=
\EE_{(a,b)\sim \widetilde\pi_h^j(\cdot\mid s_h^j)}
\sbr{
\phi_{W,h}(s_h^j,a,b)\phi_{W,h}(s_h^j,a,b)^\top
}
\\
&\succeq
\gamma_{\min}
\EE_{(a,b)\sim u_h(\cdot\mid s_h^j)}
\sbr{
\phi_{W,h}(s_h^j,a,b)\phi_{W,h}(s_h^j,a,b)^\top
}
\\
&=
\gamma_{\min} M_h(s_h^j)
\succeq
\gamma_{\min}\omega_h I_{d_{W,h}}.
\end{align*}
Summing over all
\(
N_e^-
\)
pre-epoch episodes and adding
\(
\lambda I_{d_{W,h}}
\)
proves \eqref{eq:expected_gram_lb}.

For part (ii), fix a current occupancy measure
\(
\mu
\)
on epoch
\(
e
\).
Because
\(
\sup_x \|\phi_{W,h}(x)\|_2\le \kappa_{W,h}
\),
\begin{equation*}
\EE_{\mu}\sbr{w_\theta^2}
=
\EE_{\mu}\sbr{\langle \theta,\phi_{W,h}\rangle^2}
\leq
\kappa_{W,h}^2\|\theta\|_2^2.
\end{equation*}
On the event
\(
\Omega_{h,e}^{\mathrm{Gram}}(\nu_h)
\),
\(
\Lambda_{W,h,e}^-
\succeq
\bigl(\lambda+\nu_h N_e^-\bigr)I_{d_{W,h}}
\),
so
\begin{equation*}
\|\theta\|_2^2
\leq
\frac{1}{\lambda+\nu_h N_e^-}
\|\theta\|_{\Lambda_{W,h,e}^-}^2.
\end{equation*}
Combining the last two displays yields \eqref{eq:gram_predictive_bound}.
\end{proof}

\begin{corollary}[Operator transfer from predictive Gram control]
\label{cor:gram_operator_transfer}
Assume the event
\(
\Omega_{h,e}^{\mathrm{Gram}}(\nu_h)
\)
from Proposition~\ref{prop:feature_gram}. Suppose further that for a candidate continuation
function
\(
f\in\Fcal_{h+1}
\),
the epochwise operator gap admits the witness-feature representation
\begin{equation*}
e_{h,e}(f;x)
:=
\bigl(\widehat{\Tcal}_{h,f}^{\sigma,(e)}f_{h+1}
- \Tcal_h^\sigma f_{h+1}\bigr)(x)
=
\langle \vartheta_{h,e}(f),\phi_{W,h}(x)\rangle
\end{equation*}
for some parameter
\(
\vartheta_{h,e}(f)\in\RR^{d_{W,h}}
\).
Then for every executed episode
\(
k\in\mathcal E_e
\),
\begin{equation}
\left\|
e_{h,e}(f)
\right\|_{1,\widetilde\mu_h^k}
\leq
\frac{\kappa_{W,h}}{\sqrt{\lambda+\nu_h N_e^-}}
\,
\|\vartheta_{h,e}(f)\|_{\Lambda_{W,h,e}^-}.
\label{eq:gram_operator_transfer}
\end{equation}
\end{corollary}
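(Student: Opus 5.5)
The plan is to reduce the statement to part (ii) of Proposition~\ref{prop:feature_gram}, applied to the operator gap viewed as a linear witness. The only extra step is passing from an $L_1$ norm to an $L_2$ norm.

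Fix an executed episode $k\in\mathcal E_e$ and the continuation $f\in\Fcal_{h+1}$. By hypothesis the gap $e_{h,e}(f)$ has the form $w_\theta(x)=\langle\theta,\phi_{W,h}(x)\rangle$ with $\theta=\vartheta_{h,e}(f)$. The only difference from Proposition~\ref{prop:feature_gram} is that $\theta$ need not have norm at most $R_{W,h}$. This is harmless: the proof of part (ii) uses only the feature bound $\sup_x\|\phi_{W,h}(x)\|_2\le\kappa_{W,h}$ and the Gram event, never the radius constraint.

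The first step is Jensen (equivalently Cauchy--Schwarz) under the probability measure $\widetilde\mu_h^k$:
\begin{equation*}
\|e_{h,e}(f)\|_{1,\widetilde\mu_h^k}
=\EE_{\widetilde\mu_h^k}\bigl[|w_\theta|\bigr]
\le \sqrt{\EE_{\widetilde\mu_h^k}\bigl[w_\theta^2\bigr]}.
\end{equation*}
The second step applies \eqref{eq:gram_predictive_bound} with $\mu=\widetilde\mu_h^k$, which is a current stage-$h$ occupancy generated during epoch $e$. On $\Omega_{h,e}^{\mathrm{Gram}}(\nu_h)$ this gives
\begin{equation*}
\EE_{\widetilde\mu_h^k}\bigl[w_\theta^2\bigr]\le\frac{\kappa_{W,h}^2}{\lambda+\nu_hN_e^-}\|\theta\|_{\Lambda_{W,h,e}^-}^2.
\end{equation*}
To be self-contained, I would re-derive this in one line. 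First, $\langle\theta,\phi\rangle^2\le\kappa_{W,h}^2\|\theta\|_2^2$. Second, $\Lambda_{W,h,e}^-\succeq(\lambda+\nu_hN_e^-)I$ gives $\|\theta\|_2^2\le\|\theta\|_{\Lambda_{W,h,e}^-}^2/(\lambda+\nu_hN_e^-)$. Taking square roots and combining with the first step yields \eqref{eq:gram_operator_transfer}.

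There is no substantive obstacle. The only point to check is that $\widetilde\mu_h^k$ qualifies as an epoch-$e$ occupancy in the sense of part (ii). It does, because the Gram event is a statement about the deterministic matrix $\Lambda_{W,h,e}^-$ on that event, so the bound holds for any probability measure. The worst-case executed occupancy therefore needs no coverage assumption here.
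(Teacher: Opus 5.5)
Your proof is correct and matches the paper's argument: Cauchy--Schwarz (Jensen) passes from $L_1(\widetilde\mu_h^k)$ to $L_2(\widetilde\mu_h^k)$, then Proposition~\ref{prop:feature_gram}(ii) is applied to the linear witness $w_{\vartheta_{h,e}(f)}=e_{h,e}(f)$, and you take square roots. Your side remarks are also right: no radius constraint is needed, and the bound holds for any probability measure. One small correction: $\Lambda_{W,h,e}^-$ is random rather than deterministic, and the argument is simply pathwise on $\Omega_{h,e}^{\mathrm{Gram}}(\nu_h)$.
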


\begin{proof}
By Cauchy--Schwarz,
\begin{equation*}
\|e_{h,e}(f)\|_{1,\widetilde\mu_h^k}
\leq
\sqrt{
\EE_{\widetilde\mu_h^k}\sbr{e_{h,e}(f)^2}
}.
\end{equation*}
Applying Proposition~\ref{prop:feature_gram}(ii) to the linear witness
\(
w_{\vartheta_{h,e}(f)}=e_{h,e}(f)
\)
gives
\begin{equation*}
\EE_{\widetilde\mu_h^k}\sbr{e_{h,e}(f)^2}
\leq
\frac{\kappa_{W,h}^2}{\lambda+\nu_h N_e^-}
\,
\|\vartheta_{h,e}(f)\|_{\Lambda_{W,h,e}^-}^2.
\end{equation*}
Taking square roots proves \eqref{eq:gram_operator_transfer}.
\end{proof}

The corollary isolates the quantity that must be controlled in order to remove the current
state-domination-based coverage proof. The next three results do exactly this for the structured
linear specialization used by the main theorem: they prove a high-probability Gram lower bound,
introduce a ridge/self-normalized dual refit over the whole linear value class, and then feed that
bound back through \eqref{eq:gram_operator_transfer} so that the feature-excitation route can
replace \eqref{eq:state_domination}.

\begin{proposition}[High-probability Gram lower bound under feature excitation]
\label{prop:feature_gram_hp}
Under the conditions of Proposition~\ref{prop:feature_gram}, with probability at least
\(
1-\delta
\),
simultaneously for all stages
\(
h\in[H]
\)
and epochs
\(
e\in[M]
\),
\begin{equation}
\Lambda_{W,h,e}^-
\succeq
\Bigl(
\lambda
+
\gamma_{\min}\omega_h N_e^-
-
4\kappa_{W,h}^2
\sqrt{
N_e^-
\log\!\bigl(2d_{W,h}HM/\delta\bigr)
}
\Bigr)
I_{d_{W,h}}.
\label{eq:feature_gram_hp}
\end{equation}
In particular, if
\begin{equation}
N_e^-
\geq
\frac{
64\kappa_{W,h}^4
\log\!\bigl(2d_{W,h}HM/\delta\bigr)
}{
\gamma_{\min}^2\omega_h^2
},
\label{eq:feature_gram_warm}
\end{equation}
then the event
\(
\Omega_{h,e}^{\mathrm{Gram}}(\nu_h)
\)
holds with
\(
\nu_h=\gamma_{\min}\omega_h/2
\).
\end{proposition}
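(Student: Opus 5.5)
The plan is to write the pre-epoch Gram matrix as its predictable compensator plus a matrix martingale, and then apply a matrix Azuma/Freedman bound to the martingale part. Fix a stage $h$ and an epoch $e$. Flatten the pre-epoch episodes $j\in\cup_{r<e}\mathcal E_r$ in chronological order. Let $\mathcal F_{j}$ be the history up to and including the stage-$h$ state $s_h^j$ of episode $j$; this filtration contains all earlier episodes' stage-$h$ tuples. Define the increments
\[
Z_j:=\phi_{W,h}(X_h^j)\phi_{W,h}(X_h^j)^\top-\EE\!\left[\phi_{W,h}(X_h^j)\phi_{W,h}(X_h^j)^\top\mid \mathcal F_j\right].
\]
These form a matrix martingale-difference sequence in $\RR^{d_{W,h}\times d_{W,h}}$. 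Because the executed pair is $\mathcal H_{j-1}$-measurable, the calculation in the proof of Proposition~\ref{prop:feature_gram}(i) applies conditionally. Combining \eqref{eq:explicit_mix_rule} with \eqref{eq:feature_excitation_design}, each conditional mean dominates $\gamma_{\min}\omega_h I_{d_{W,h}}$. Summing over the $N_e^-$ pre-epoch episodes gives
\[
\Lambda_{W,h,e}^-\succeq(\lambda+\gamma_{\min}\omega_hN_e^-)I+\sum_j Z_j .
\]
The bound therefore reduces to controlling $\lambda_{\min}\bigl(\sum_j Z_j\bigr)$ from below.

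Each summand is PSD with operator norm at most $\kappa_{W,h}^2$. Hence $\|Z_j\|_{\mathrm{op}}\le\kappa_{W,h}^2$ and $Z_j^2\preceq\kappa_{W,h}^4 I$. I would apply the matrix Azuma inequality (Tropp) to $-\sum_j Z_j$:
\[
\PP\!\left(\lambda_{\max}\Bigl(-\sum_j Z_j\Bigr)\ge t\right)\le d_{W,h}\exp\!\left(-\frac{t^2}{8N_e^-\kappa_{W,h}^4}\right).
\]
Setting the right-hand side to $\delta/(2HM)$ yields $t=\sqrt{8}\,\kappa_{W,h}^2\sqrt{N_e^-\log(2d_{W,h}HM/\delta)}$ up to the exact log argument, which is at most $4\kappa_{W,h}^2\sqrt{N_e^-\log(2d_{W,h}HM/\delta)}$. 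A union bound over $h\in[H]$ and $e\in[M]$ then gives \eqref{eq:feature_gram_hp} simultaneously with probability $1-\delta$.

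The main obstacle is the constant bookkeeping. Matrix Azuma's variance proxy gives the factor $8$, and one must check that $\sqrt{8}\le 4$ still leaves room for any factor-two slack in the log argument. If that fails, I would fall back on a scalar Azuma over an $\tfrac14$-net of the unit sphere, whose size is at most $9^{d}$. That route introduces a $d\log 9$ term, so it would yield a different log; I therefore prefer the matrix inequality, which gives the $\log d$ dependence in the statement directly.

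For the "in particular" claim, write $L=\log(2d_{W,h}HM/\delta)$. Condition \eqref{eq:feature_gram_warm} is equivalent to $\sqrt{N_e^-}\ge 8\kappa_{W,h}^2\sqrt{L}/(\gamma_{\min}\omega_h)$. This implies
\[
4\kappa_{W,h}^2\sqrt{N_e^-L}=4\kappa_{W,h}^2\sqrt{L}\,\frac{N_e^-}{\sqrt{N_e^-}}\le \tfrac12\gamma_{\min}\omega_hN_e^-.
\]
Plugging this into \eqref{eq:feature_gram_hp} gives $\Lambda_{W,h,e}^-\succeq(\lambda+\tfrac12\gamma_{\min}\omega_hN_e^-)I$. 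This is exactly $\Omega_{h,e}^{\mathrm{Gram}}(\nu_h)$ with $\nu_h=\gamma_{\min}\omega_h/2$.
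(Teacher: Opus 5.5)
Your proposal is correct and follows the paper's proof. Both split $\Lambda_{W,h,e}^-$ into its conditional compensator, which is bounded below by $\gamma_{\min}\omega_h N_e^-$ via Proposition~\ref{prop:feature_gram}, plus a matrix martingale; both then apply matrix Azuma, union bound over $(h,e)$, and use the same algebra for the warm-start claim. Your constants are if anything better justified than the paper's: your increment bound $\|Z_j\|_{\mathrm{op}}\le\kappa_{W,h}^2$ (a difference of two PSD matrices each $\preceq\kappa_{W,h}^2 I$) is exactly what licenses the exponent $t^2/(8\kappa_{W,h}^4N_e^-)$, whereas the paper states $2\kappa_{W,h}^2$ yet uses that same exponent. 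So your $\sqrt{8}$ versus $4$ worry is unfounded ($\sqrt8<4$ with the log argument matching exactly), and the $\epsilon$-net fallback is unnecessary.
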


\begin{proof}
Fix a stage
\(
h
\)
and an epoch
\(
e
\).
For each pre-epoch episode
\(
j\in \cup_{r<e}\mathcal E_r
\),
define
\begin{equation*}
Y_j
:=
\phi_{W,h}(X_h^j)\phi_{W,h}(X_h^j)^\top,
\qquad
\bar Y_j
:=
\EE\!\left[
Y_j
\mid
\mathcal H_{j-1}, s_h^j
\right].
\end{equation*}
By Proposition~\ref{prop:feature_gram},
\(
\bar Y_j\succeq \gamma_{\min}\omega_h I_{d_{W,h}}.
\)
Also,
\(
\|Y_j\|_{\mathrm{op}}\le \kappa_{W,h}^2
\)
and
\(
\|\bar Y_j\|_{\mathrm{op}}\le \kappa_{W,h}^2
\),
so the martingale-difference increments
\(
Z_j:=Y_j-\bar Y_j
\)
satisfy
\(
\|Z_j\|_{\mathrm{op}}\le 2\kappa_{W,h}^2.
\)

Matrix Azuma's inequality therefore gives
\begin{equation*}
\Pr\!\left(
\left\|
\sum_{j\in \cup_{r<e}\mathcal E_r} Z_j
\right\|_{\mathrm{op}}
\ge t
\right)
\le
2d_{W,h}\exp\!\left(
-\frac{t^2}{8\kappa_{W,h}^4 N_e^-}
\right).
\end{equation*}
Choosing
\(
t=4\kappa_{W,h}^2\sqrt{N_e^-\log(2d_{W,h}HM/\delta)}
\)
and union bounding over all
\(
h\in[H]
\)
and
\(
e\in[M]
\)
proves that, on an event of probability at least
\(
1-\delta
\),
\begin{equation*}
\sum_{j\in \cup_{r<e}\mathcal E_r} Y_j
\succeq
\sum_{j\in \cup_{r<e}\mathcal E_r} \bar Y_j
-
4\kappa_{W,h}^2
\sqrt{
N_e^-\log\!\bigl(2d_{W,h}HM/\delta\bigr)
}
\,
I_{d_{W,h}}.
\end{equation*}
Since each
\(
\bar Y_j\succeq \gamma_{\min}\omega_h I_{d_{W,h}}
\),
adding
\(
\lambda I_{d_{W,h}}
\)
proves \eqref{eq:feature_gram_hp}. If \eqref{eq:feature_gram_warm} holds, then the deviation term
is at most
\(
\gamma_{\min}\omega_h N_e^-/2
\),
which proves
\(
\Omega_{h,e}^{\mathrm{Gram}}(\gamma_{\min}\omega_h/2)
\).
\end{proof}

\begin{proposition}[High-probability Gram lower bound under a predictable excitation budget]
\label{prop:predictive_excitation_budget}
Fix a stage
\(
h\in[H]
\)
and an epoch
\(
e\in[M]
\).
For each pre-epoch episode
\(
j\in\cup_{r<e}\mathcal E_r
\),
let
\(
Y_j:=\phi_{W,h}(X_h^j)\phi_{W,h}(X_h^j)^\top
\)
and suppose there exists an
\(
\mathcal H_{j-1}
\)-measurable positive semidefinite matrix
\(
\underline Y_j
\)
such that
\begin{equation}
\EE\!\left[
Y_j \mid \mathcal H_{j-1}
\right]
\succeq
\underline Y_j.
\label{eq:predictive_excitation_cond}
\end{equation}
Define the predictable excitation budget
\begin{equation}
\Omega_{h,e}^{\mathrm{pred}}
:=
\lambda_{\min}\!\left(
\sum_{j\in\cup_{r<e}\mathcal E_r}\underline Y_j
\right).
\label{eq:predictive_excitation_budget}
\end{equation}
Then with probability at least
\(
1-\delta
\),
simultaneously for all stages
\(
h\in[H]
\)
and epochs
\(
e\in[M]
\),
\begin{equation}
\Lambda_{W,h,e}^-
\succeq
\Bigl(
\lambda
+
\Omega_{h,e}^{\mathrm{pred}}
-
4\kappa_{W,h}^2
\sqrt{
N_e^-
\log\!\bigl(2d_{W,h}HM/\delta\bigr)
}
\Bigr)I_{d_{W,h}}.
\label{eq:predictive_gram_hp}
\end{equation}
\end{proposition}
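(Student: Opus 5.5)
The plan is to rerun the proof of Proposition~\ref{prop:feature_gram_hp}, with the deterministic per-episode lower bound \(\gamma_{\min}\omega_h I_{d_{W,h}}\) replaced by the predictable matrices \(\underline Y_j\). Fix a stage \(h\) and an epoch \(e\), and enumerate the pre-epoch episodes \(j\in\cup_{r<e}\mathcal E_r\) in chronological order; there are \(N_e^-\) of them. Set
\[
\bar Y_j:=\EE\!\left[Y_j\mid\mathcal H_{j-1}\right],
\qquad
Z_j:=Y_j-\bar Y_j .
\]
Then \(Z_j\) is \(\mathcal H_j\)-measurable, has zero conditional mean given \(\mathcal H_{j-1}\), and is self-adjoint. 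Since \(\|\phi_{W,h}\|_2\le\kappa_{W,h}\), both \(Y_j\) and \(\bar Y_j\) are positive semidefinite with operator norm at most \(\kappa_{W,h}^2\). Hence \(\|Z_j\|_{\mathrm{op}}\le 2\kappa_{W,h}^2\); in fact \(-\kappa_{W,h}^2 I\preceq Z_j\preceq \kappa_{W,h}^2 I\), so the squared bound \(Z_j^2\preceq 4\kappa_{W,h}^4 I\) is available for the matrix Azuma inequality.

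The matrix Azuma inequality (Tropp) for this bounded self-adjoint martingale difference sequence gives, as in the proof of Proposition~\ref{prop:feature_gram_hp},
\[
\Pr\!\left(\Bigl\|\sum_j Z_j\Bigr\|_{\mathrm{op}}\ge t\right)\le 2d_{W,h}\exp\!\left(-\frac{t^2}{8\kappa_{W,h}^4N_e^-}\right).
\]
I would take \(t=4\kappa_{W,h}^2\sqrt{N_e^-\log(2d_{W,h}HM/\delta)}\), which makes the right-hand side at most \(\delta/(HM)\), and then union bound over all \(h\in[H]\) and \(e\in[M]\). On the resulting event of probability at least \(1-\delta\), we have \(\sum_j Y_j\succeq\sum_j\bar Y_j-tI\). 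Hypothesis \eqref{eq:predictive_excitation_cond} gives \(\bar Y_j\succeq\underline Y_j\) for each \(j\), so \(\sum_j\bar Y_j\succeq\sum_j\underline Y_j\succeq\Omega_{h,e}^{\mathrm{pred}}I\) by the definition \eqref{eq:predictive_excitation_budget} of the minimum eigenvalue. Adding \(\lambda I\) yields \eqref{eq:predictive_gram_hp}.

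The main subtlety is that \(N_e^-\) and the epoch boundaries must be deterministic, or at least predictable, for the Azuma step to apply with a fixed number of terms. They are, because the geometric epoch schedule is fixed in advance. The budget \(\Omega_{h,e}^{\mathrm{pred}}\) itself is random, but the argument never needs it to be fixed: the lower bound \(\sum_j\bar Y_j\succeq\Omega_{h,e}^{\mathrm{pred}}I\) holds pathwise, and only the deviation term is controlled probabilistically. If the variance constant of the cited Azuma version differs, for example \(\sigma^2=\|\sum_j A_j^2\|\) with \(A_j=2\kappa_{W,h}^2I\), the only effect is a change in the absolute constant multiplying \(t\), and I would match it to the constant already used in Proposition~\ref{prop:feature_gram_hp}.
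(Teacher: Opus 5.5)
Your proposal is correct and is essentially the paper's proof: you centre $Y_j$ at $\EE[Y_j\mid\mathcal H_{j-1}]$, apply matrix Azuma with $t=4\kappa_{W,h}^2\sqrt{N_e^-\log(2d_{W,h}HM/\delta)}$, take a union bound over $(h,e)$, and use $\sum_j\EE[Y_j\mid\mathcal H_{j-1}]\succeq\sum_j\underline Y_j\succeq\Omega_{h,e}^{\mathrm{pred}}I_{d_{W,h}}$ pathwise. One small point on constants: to get the exponent $t^2/(8\kappa_{W,h}^4N_e^-)$ from Tropp's inequality, use the sharper consequence of your sandwich, $Z_j^2\preceq\kappa_{W,h}^4 I_{d_{W,h}}$, rather than $Z_j^2\preceq 4\kappa_{W,h}^4 I_{d_{W,h}}$, since the latter yields $32$ in place of $8$ and then the constant $4$ in \eqref{eq:predictive_gram_hp} no longer suffices; this same sharper bound is also what justifies the exponent the paper writes after recording only $\|Z_j\|_{\mathrm{op}}\le 2\kappa_{W,h}^2$.
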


\begin{proof}
Fix a stage
\(
h
\)
and an epoch
\(
e
\).
Define
\(
Z_j:=Y_j-\EE[Y_j\mid \mathcal H_{j-1}]
\).
Because
\(
\|Y_j\|_{\mathrm{op}}\le \kappa_{W,h}^2
\)
and
\(
\|\EE[Y_j\mid \mathcal H_{j-1}]\|_{\mathrm{op}}\le \kappa_{W,h}^2
\),
we have
\(
\|Z_j\|_{\mathrm{op}}\le 2\kappa_{W,h}^2
\).
The same matrix Azuma argument as in Proposition~\ref{prop:feature_gram_hp} therefore yields, with
probability at least
\(
1-\delta
\),
simultaneously over all stages and epochs,
\begin{equation}
\left\|
\sum_{j\in\cup_{r<e}\mathcal E_r} Z_j
\right\|_{\mathrm{op}}
\le
4\kappa_{W,h}^2
\sqrt{
N_e^-
\log\!\bigl(2d_{W,h}HM/\delta\bigr)
}.
\label{eq:predictive_gram_dev}
\end{equation}
On this event,
\begin{align*}
\Lambda_{W,h,e}^-
&=
\lambda I_{d_{W,h}}
+
\sum_{j\in\cup_{r<e}\mathcal E_r}\EE[Y_j\mid \mathcal H_{j-1}]
+
\sum_{j\in\cup_{r<e}\mathcal E_r} Z_j
\\
&\succeq
\lambda I_{d_{W,h}}
+
\sum_{j\in\cup_{r<e}\mathcal E_r}\underline Y_j
-
4\kappa_{W,h}^2
\sqrt{
N_e^-
\log\!\bigl(2d_{W,h}HM/\delta\bigr)
}
\,
I_{d_{W,h}}
\end{align*}
by \eqref{eq:predictive_excitation_cond} and \eqref{eq:predictive_gram_dev}. Taking minimum
eigenvalues proves \eqref{eq:predictive_gram_hp}.
\end{proof}

\begin{proposition}[Scalar design value implies occupancy-averaged excitation]
\label{prop:design_value_excitation}
Assume the explicit mixing rule \eqref{eq:explicit_mix_rule}. Fix a stage
\(
h\in[H]
\)
and an epoch
\(
e\in[M]
\).
For every state
\(
s
\),
define the smoothing-design covariance
\begin{equation*}
M_h(s)
:=
\EE_{(a,b)\sim u_h(\cdot\mid s)}
\sbr{
\phi_{W,h}(s,a,b)\phi_{W,h}(s,a,b)^\top
}
\end{equation*}
and its scalar design value
\begin{equation}
\omega_h(s):=\lambda_{\min}\!\bigl(M_h(s)\bigr).
\label{eq:state_design_value}
\end{equation}
Let
\(
s_h^j
\)
denote the stage-$h$ state of pre-epoch episode
\(
j\in\cup_{r<e}\mathcal E_r
\).
If there exists
\(
\bar\omega_h>0
\)
such that
\begin{equation}
\EE\!\left[
\omega_h(s_h^j)
\mid
\mathcal H_{j-1}
\right]
\ge
\bar\omega_h
\qquad
\text{for every }j\in\cup_{r<e}\mathcal E_r,
\label{eq:scalar_design_cond}
\end{equation}
then the matrix excitation condition \eqref{eq:avg_excitation_cond} holds. Consequently,
Proposition~\ref{prop:avg_excitation_budget} applies with the same constant
\(
\bar\omega_h
\).
\end{proposition}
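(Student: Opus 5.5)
The plan is to reduce the scalar design condition \eqref{eq:scalar_design_cond} to a matrix lower bound on the one-step conditional covariance, in two moves: first conditioning on the state, then averaging over it. I expect \eqref{eq:avg_excitation_cond} to require that, for every pre-epoch episode $j$,
\[
\EE\!\left[Y_j\mid\mathcal H_{j-1}\right]\succeq \gamma_{\min}\bar\omega_h I_{d_{W,h}},
\qquad\text{or equivalently}\qquad
\EE\!\left[M_h(s_h^j)\mid\mathcal H_{j-1}\right]\succeq \bar\omega_h I_{d_{W,h}},
\]
with $Y_j=\phi_{W,h}(X_h^j)\phi_{W,h}(X_h^j)^\top$. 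I will prove the second form and then derive the first, so either reading is covered.

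\textbf{Step 1 (condition on the state).} Condition on $(\mathcal H_{j-1},s_h^j)$. As in the first display of the proof of Proposition~\ref{prop:feature_gram}, the mixing rule \eqref{eq:explicit_mix_rule} gives
\[
\EE\!\left[Y_j\mid\mathcal H_{j-1},s_h^j\right]\succeq \gamma_{\min}M_h(s_h^j),
\]
since $\widetilde\pi_h^j(a,b\mid s)\ge\gamma_{\min}u_h(a,b\mid s)$ and each $\phi\phi^\top$ is positive semidefinite. The executed pair is $\mathcal H_{j-1}$-measurable, so this conditioning is legitimate.

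\textbf{Step 2 (average over the state).} This is the only nontrivial point. I will use concavity of $\lambda_{\min}$ on symmetric matrices, which holds because $\lambda_{\min}(A)=\inf_{\|v\|=1}v^\top Av$ is an infimum of linear functionals. For any unit vector $v$,
\[
v^\top\,\EE\!\left[M_h(s_h^j)\mid\mathcal H_{j-1}\right]v
=\EE\!\left[v^\top M_h(s_h^j)\,v\mid\mathcal H_{j-1}\right]
\ge \EE\!\left[\omega_h(s_h^j)\mid\mathcal H_{j-1}\right]
\ge\bar\omega_h .
\]
Interchanging the expectation with the quadratic form needs $M_h$ to be bounded and measurable in $s$. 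Boundedness follows from $\|M_h(s)\|_{\mathrm{op}}\le\kappa_{W,h}^2$; measurability I will take as part of the standing regularity assumptions. Taking the infimum over $v$ gives $\EE[M_h(s_h^j)\mid\mathcal H_{j-1}]\succeq\bar\omega_h I_{d_{W,h}}$.

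\textbf{Step 3 (combine and apply the budget).} By the tower property and Steps 1--2,
\[
\EE\!\left[Y_j\mid\mathcal H_{j-1}\right]\succeq\gamma_{\min}\bar\omega_h I_{d_{W,h}},
\]
which is \eqref{eq:avg_excitation_cond} in whichever of the two forms it takes. Setting $\underline Y_j:=\gamma_{\min}\bar\omega_h I$, which is deterministic and hence predictable, gives the predictable excitation budget $\Omega_{h,e}^{\mathrm{pred}}\ge\gamma_{\min}\bar\omega_h N_e^-$. This lets Proposition~\ref{prop:avg_excitation_budget} apply with the same constant $\bar\omega_h$. If that proposition is stated through Proposition~\ref{prop:predictive_excitation_budget}, it yields directly the Gram lower bound with $\gamma_{\min}\bar\omega_h N_e^-$ in place of $\Omega^{\mathrm{pred}}_{h,e}$.
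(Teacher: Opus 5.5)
Your proposal is correct, and its core Step~2 is exactly the paper's argument. The pointwise bound $M_h(s)\succeq\omega_h(s)I_{d_{W,h}}$, which you write through unit quadratic forms, is preserved under conditional expectation and gives $\EE[M_h(s_h^j)\mid\mathcal H_{j-1}]\succeq\bar\omega_h I_{d_{W,h}}$, which is precisely \eqref{eq:avg_excitation_cond}. Steps~1 and~3 re-derive the content of Proposition~\ref{prop:avg_excitation_budget}, which the paper simply invokes. The two forms you call ``equivalent'' are not: the $M_h$ form implies the $Y_j$ form under the mixing rule, but not conversely. This does no harm, since the $M_h$ form is the one you prove.
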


\begin{proof}
For every state
\(
s
\),
the minimum-eigenvalue definition gives
\begin{equation*}
M_h(s)\succeq \omega_h(s) I_{d_{W,h}}.
\end{equation*}
Applying this with
\(
s=s_h^j
\)
and taking conditional expectations yields
\begin{equation*}
\EE\!\left[
M_h(s_h^j)
\mid
\mathcal H_{j-1}
\right]
\succeq
\EE\!\left[
\omega_h(s_h^j)
\mid
\mathcal H_{j-1}
\right]
I_{d_{W,h}}
\succeq
\bar\omega_h I_{d_{W,h}},
\end{equation*}
which is exactly \eqref{eq:avg_excitation_cond}. The final claim is then immediate from
Proposition~\ref{prop:avg_excitation_budget}.
\end{proof}

\begin{proposition}[Occupancy-averaged excitation implies a predictable budget]
\label{prop:avg_excitation_budget}
Assume the explicit mixing rule \eqref{eq:explicit_mix_rule}. Fix a stage
\(
h\in[H]
\)
and an epoch
\(
e\in[M]
\),
and let
\(
s_h^j
\)
denote the stage-$h$ state of pre-epoch episode
\(
j\in\cup_{r<e}\mathcal E_r
\).
Suppose the smoothing design
\(
u_h(\cdot\mid s)
\)
satisfies
\begin{equation*}
M_h(s)
:=
\EE_{(a,b)\sim u_h(\cdot\mid s)}
\sbr{
\phi_{W,h}(s,a,b)\phi_{W,h}(s,a,b)^\top
},
\end{equation*}
and that there exists
\(
\bar\omega_h>0
\)
such that
\begin{equation}
\EE\!\left[
M_h(s_h^j)
\mid
\mathcal H_{j-1}
\right]
\succeq
\bar\omega_h I_{d_{W,h}}
\qquad
\text{for every }j\in\cup_{r<e}\mathcal E_r.
\label{eq:avg_excitation_cond}
\end{equation}
Then the premise of Proposition~\ref{prop:predictive_excitation_budget} holds with
\(
\underline Y_j=\gamma_{\min}\bar\omega_h I_{d_{W,h}}.
\)
Consequently,
\begin{equation}
\Omega_{h,e}^{\mathrm{pred}}
\ge
\gamma_{\min}\bar\omega_h N_e^-.
\label{eq:avg_excitation_budget}
\end{equation}
\end{proposition}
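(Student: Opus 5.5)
The plan is to verify the premise of Proposition~\ref{prop:predictive_excitation_budget} directly, by computing the conditional second moment of each pre-epoch feature outer product. Fix a pre-epoch episode \(j\in\cup_{r<e}\mathcal E_r\) and set \(Y_j=\phi_{W,h}(X_h^j)\phi_{W,h}(X_h^j)^\top\). I will condition in two layers: first on \(\mathcal H_{j-1}\) together with the realized state \(s_h^j\), then on \(\mathcal H_{j-1}\) alone. The tower property gives
\[
\EE[Y_j\mid\mathcal H_{j-1}]=\EE\bigl[\EE[Y_j\mid\mathcal H_{j-1},s_h^j]\mid\mathcal H_{j-1}\bigr].
\]

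For the inner conditional expectation, I reuse the computation in the proof of Proposition~\ref{prop:feature_gram}. Given \(s_h^j\), the action pair is drawn from \(\widetilde\pi_h^j(\cdot\mid s_h^j)\). By \eqref{eq:explicit_mix_rule} this law dominates \(\gamma_{\min}u_h(\cdot\mid s_h^j)\) pointwise. Since \(Y_j\) is positive semidefinite for every action pair, a pointwise domination of probability weights by a nonnegative measure gives a Loewner inequality:
\[
\EE[Y_j\mid\mathcal H_{j-1},s_h^j]\succeq\gamma_{\min}M_h(s_h^j).
\]
Integrating against the conditional law of \(s_h^j\) preserves the Loewner order, so by \eqref{eq:avg_excitation_cond}
\[
\EE[Y_j\mid\mathcal H_{j-1}]\succeq\gamma_{\min}\EE[M_h(s_h^j)\mid\mathcal H_{j-1}]\succeq\gamma_{\min}\bar\omega_h I_{d_{W,h}}.
\]

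This shows that \(\underline Y_j:=\gamma_{\min}\bar\omega_h I_{d_{W,h}}\) satisfies \eqref{eq:predictive_excitation_cond}. It is deterministic, hence trivially \(\mathcal H_{j-1}\)-measurable, and it is positive semidefinite. Summing over the \(N_e^-\) pre-epoch episodes gives \(\sum_j\underline Y_j=\gamma_{\min}\bar\omega_h N_e^- I\). Taking the minimum eigenvalue in \eqref{eq:predictive_excitation_budget} then yields \eqref{eq:avg_excitation_budget}, in fact with equality.

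The only delicate point is measure-theoretic. The mixing rule must apply to the policy executed in episode \(j\) (including exploiter branches), which \eqref{eq:explicit_mix_rule} assumes for all executed episodes. Also, the conditioning on \(s_h^j\) must be compatible with \(\mathcal H_{j-1}\): the executed policy is \(\mathcal H_{j-1}\)-measurable given the branch draw, so I would include the branch randomization in the inner conditioning. Beyond this, everything is routine Loewner-order monotonicity.
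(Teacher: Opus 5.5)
Your proposal is correct and matches the paper's proof essentially step for step: the tower property over $(\mathcal H_{j-1},s_h^j)$, the mixing-rule Loewner domination $\EE[Y_j\mid\mathcal H_{j-1},s_h^j]\succeq\gamma_{\min}M_h(s_h^j)$, then the averaged excitation condition and summation over the $N_e^-$ pre-epoch episodes. Your suggestion to fold the branch draw into the inner conditioning is a harmless refinement that the paper leaves implicit.
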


\begin{proof}
Fix a pre-epoch episode
\(
j\in\cup_{r<e}\mathcal E_r
\).
Conditioned on
\(
s_h^j
\),
the explicit mixing rule \eqref{eq:explicit_mix_rule} gives
\begin{align*}
\EE\!\left[
Y_j
\mid
\mathcal H_{j-1},s_h^j
\right]
&=
\EE_{(a,b)\sim \widetilde\pi_h^j(\cdot\mid s_h^j)}
\sbr{
\phi_{W,h}(s_h^j,a,b)\phi_{W,h}(s_h^j,a,b)^\top
}
\\
&\succeq
\gamma_{\min}
\EE_{(a,b)\sim u_h(\cdot\mid s_h^j)}
\sbr{
\phi_{W,h}(s_h^j,a,b)\phi_{W,h}(s_h^j,a,b)^\top
}
\\
&=
\gamma_{\min} M_h(s_h^j).
\end{align*}
Taking conditional expectations given
\(
\mathcal H_{j-1}
\)
and using \eqref{eq:avg_excitation_cond} yields
\begin{equation*}
\EE\!\left[
Y_j
\mid
\mathcal H_{j-1}
\right]
\succeq
\gamma_{\min}
\EE\!\left[
M_h(s_h^j)
\mid
\mathcal H_{j-1}
\right]
\succeq
\gamma_{\min}\bar\omega_h I_{d_{W,h}}.
\end{equation*}
Thus Proposition~\ref{prop:predictive_excitation_budget} applies with
\(
\underline Y_j=\gamma_{\min}\bar\omega_h I_{d_{W,h}}
\),
and summing over the
\(
N_e^-
\)
pre-epoch episodes proves \eqref{eq:avg_excitation_budget}.
\end{proof}

\begin{corollary}[Uniform action-pair smoothing gives a concrete design score]
\label{cor:uniform_design_excitation}
Assume finite action spaces and let the smoothing design be uniform over action pairs,
\begin{equation*}
u_h^{\mathrm{unif}}(a,b\mid s):=\frac{1}{|\Acal||\Bcal|}.
\end{equation*}
For each state
\(
s
\),
form the witness-feature matrix
\begin{equation*}
\Phi_h(s)
:=
\begin{bmatrix}
\phi_{W,h}(s,a_1,b_1) &
\cdots &
\phi_{W,h}(s,a_{|\Acal|},b_{|\Bcal|})
\end{bmatrix}
\in\RR^{d_{W,h}\times |\Acal||\Bcal|},
\end{equation*}
whose columns enumerate all action pairs. Then
\begin{equation}
M_h(s)
=
\frac{1}{|\Acal||\Bcal|}
\Phi_h(s)\Phi_h(s)^\top,
\qquad
\omega_h(s)
=
\frac{1}{|\Acal||\Bcal|}
\lambda_{\min}\!\bigl(\Phi_h(s)\Phi_h(s)^\top\bigr).
\label{eq:uniform_design_value}
\end{equation}
Hence, if there exists
\(
\bar\omega_h>0
\)
such that
\begin{equation}
\EE\!\left[
\lambda_{\min}\!\bigl(\Phi_h(s_h^j)\Phi_h(s_h^j)^\top\bigr)
\mid
\mathcal H_{j-1}
\right]
\ge
|\Acal||\Bcal|\,\bar\omega_h
\label{eq:uniform_design_cond}
\end{equation}
for every pre-epoch episode
\(
j
\),
then the scalar design condition \eqref{eq:scalar_design_cond} holds and therefore
Proposition~\ref{prop:avg_excitation_budget} applies.
\end{corollary}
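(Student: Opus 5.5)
The statement is Corollary~\ref{cor:uniform_design_excitation}. It needs two things: the explicit formula \eqref{eq:uniform_design_value} for the uniform-design covariance, and a reduction of the hypothesis \eqref{eq:uniform_design_cond} to the scalar design condition \eqref{eq:scalar_design_cond}. Once both are in hand, Proposition~\ref{prop:design_value_excitation} and then Proposition~\ref{prop:avg_excitation_budget} finish the argument.

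\textbf{Step 1: the covariance formula.} Fix a state $s$. Because $u_h^{\mathrm{unif}}(\cdot\mid s)$ puts mass $1/(|\Acal||\Bcal|)$ on each action pair, the expectation defining $M_h(s)$ is a finite sum:
\[
M_h(s)=\frac{1}{|\Acal||\Bcal|}\sum_{(a,b)\in\Acal\times\Bcal}\phi_{W,h}(s,a,b)\phi_{W,h}(s,a,b)^\top .
\]
The columns of $\Phi_h(s)$ enumerate exactly these feature vectors. A sum of outer products of columns equals $\Phi_h(s)\Phi_h(s)^\top$, so $M_h(s)=\frac{1}{|\Acal||\Bcal|}\Phi_h(s)\Phi_h(s)^\top$. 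For the second identity, $\lambda_{\min}(cA)=c\,\lambda_{\min}(A)$ for any scalar $c>0$. Combined with definition \eqref{eq:state_design_value}, this gives the stated expression for $\omega_h(s)$.

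\textbf{Step 2: the scalar design condition.} Substitute $\omega_h(s_h^j)=\lambda_{\min}\bigl(\Phi_h(s_h^j)\Phi_h(s_h^j)^\top\bigr)/(|\Acal||\Bcal|)$ into the conditional expectation. The factor $1/(|\Acal||\Bcal|)$ is deterministic, so linearity of conditional expectation gives
\[
\EE\bigl[\omega_h(s_h^j)\mid\mathcal H_{j-1}\bigr]=\frac{1}{|\Acal||\Bcal|}\,\EE\Bigl[\lambda_{\min}\bigl(\Phi_h(s_h^j)\Phi_h(s_h^j)^\top\bigr)\Bigm|\mathcal H_{j-1}\Bigr]\ge \bar\omega_h,
\]
where the inequality is \eqref{eq:uniform_design_cond}. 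This holds for every pre-epoch $j$, which is exactly \eqref{eq:scalar_design_cond}.

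\textbf{Step 3: conclusion.} Proposition~\ref{prop:design_value_excitation} now yields the matrix condition \eqref{eq:avg_excitation_cond}. Proposition~\ref{prop:avg_excitation_budget} then applies with the same $\bar\omega_h$, provided the explicit mixing rule \eqref{eq:explicit_mix_rule} holds with $u_h=u_h^{\mathrm{unif}}$. That rule is the standing hypothesis under which the corollary is read.

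\textbf{Anticipated obstacle.} There is no substantive obstacle. The one point to check is measurability: $\lambda_{\min}$ is continuous in the matrix entries, so $s\mapsto\lambda_{\min}\bigl(\Phi_h(s)\Phi_h(s)^\top\bigr)$ is measurable and the conditional expectations are well defined.
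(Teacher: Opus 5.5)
Your proposal is correct and matches the paper's proof: you read the covariance identity off the uniform weights, then substitute it into \eqref{eq:scalar_design_cond}, which turns \eqref{eq:uniform_design_cond} into the premise of Proposition~\ref{prop:design_value_excitation}, and Proposition~\ref{prop:avg_excitation_budget} follows. Your explicit note that the mixing rule \eqref{eq:explicit_mix_rule} must hold with $u_h=u_h^{\mathrm{unif}}$ is a useful clarification that the paper leaves implicit.
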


\begin{proof}
The identity \eqref{eq:uniform_design_value} follows directly from the definition of the uniform
distribution over action pairs. Substituting \eqref{eq:uniform_design_value} into
\eqref{eq:scalar_design_cond} shows that \eqref{eq:uniform_design_cond} implies the premise of
Proposition~\ref{prop:design_value_excitation}. The conclusion then follows immediately.
\end{proof}

\begin{proposition}[Good-state reachability implies a design-value lower bound]
\label{prop:good_state_design}
Fix a stage
\(
h\in[H]
\)
and an epoch
\(
e\in[M]
\).
Let
\(
\Gcal_h\subseteq \Scal
\)
be a measurable good-state set and suppose the scalar design value from
\eqref{eq:state_design_value} satisfies
\begin{equation}
\omega_h(s)\ge \omega_{h,\mathrm{good}}
\qquad
\text{for every }s\in\Gcal_h
\label{eq:good_state_design_value}
\end{equation}
for some constant
\(
\omega_{h,\mathrm{good}}>0
\).
If the stage-$h$ state of every pre-epoch episode obeys
\begin{equation}
\Pr\!\left(
s_h^j\in\Gcal_h
\mid
\mathcal H_{j-1}
\right)
\ge
\rho_h
\qquad
\text{for every }j\in\cup_{r<e}\mathcal E_r
\label{eq:good_state_reachability}
\end{equation}
for some
\(
\rho_h>0
\),
then the scalar design condition \eqref{eq:scalar_design_cond} holds with
\begin{equation}
\bar\omega_h=\rho_h\,\omega_{h,\mathrm{good}}.
\label{eq:good_state_design_budget}
\end{equation}
Consequently,
Proposition~\ref{prop:design_value_excitation}
and Proposition~\ref{prop:avg_excitation_budget} apply with that value of
\(
\bar\omega_h
\).
\end{proposition}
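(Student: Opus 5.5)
The plan is a pointwise lower bound on \(\omega_h\) followed by a single conditional expectation. Since \(M_h(s)\) is a second-moment matrix of bounded features, it is positive semidefinite. Hence \(\omega_h(s)=\lambda_{\min}(M_h(s))\ge 0\) for every state \(s\), and by \eqref{eq:good_state_design_value} we have \(\omega_h(s)\ge\omega_{h,\mathrm{good}}\) on \(\Gcal_h\). Together these give the pointwise inequality
\begin{equation*}
\omega_h(s)\;\ge\;\omega_{h,\mathrm{good}}\,\mathbf 1\{s\in\Gcal_h\}\qquad\text{for all }s\in\Scal.
\end{equation*}
Measurability of \(\omega_h\) follows because \(s\mapsto M_h(s)\) is measurable and \(\lambda_{\min}\) is continuous. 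Measurability of \(\Gcal_h\) is assumed in the statement. So the right-hand side is a measurable function of \(s\).

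Next, fix a pre-epoch episode \(j\in\cup_{r<e}\mathcal E_r\) and take the conditional expectation given \(\mathcal H_{j-1}\) of the displayed inequality at \(s=s_h^j\). Monotonicity of conditional expectation gives
\begin{equation*}
\EE[\omega_h(s_h^j)\mid\mathcal H_{j-1}]\ge\omega_{h,\mathrm{good}}\Pr(s_h^j\in\Gcal_h\mid\mathcal H_{j-1}).
\end{equation*}
Applying the reachability condition \eqref{eq:good_state_reachability}, this is at least \(\rho_h\omega_{h,\mathrm{good}}\). That is exactly the scalar design condition \eqref{eq:scalar_design_cond} with \(\bar\omega_h=\rho_h\omega_{h,\mathrm{good}}\), which proves \eqref{eq:good_state_design_budget}.

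The consequences then follow by chaining the earlier results. Proposition~\ref{prop:design_value_excitation} turns \eqref{eq:scalar_design_cond} into the matrix condition \eqref{eq:avg_excitation_cond}. Proposition~\ref{prop:avg_excitation_budget}, which relies on the standing mixing rule \eqref{eq:explicit_mix_rule}, then yields the predictable budget \(\Omega_{h,e}^{\mathrm{pred}}\ge\gamma_{\min}\rho_h\omega_{h,\mathrm{good}}N_e^-\).

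The only real point of care is the nonnegativity of \(\omega_h\) off the good set. Without it, states outside \(\Gcal_h\) could pull the conditional average below the target bound. Positive semidefiniteness of \(M_h(s)\) settles this, so I expect no substantive obstacle beyond checking the measurability remarks above.
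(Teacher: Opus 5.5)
Your proposal is correct and follows the paper's proof: the pointwise bound $\omega_h(s_h^j)\ge\omega_{h,\mathrm{good}}\mathbf 1\{s_h^j\in\Gcal_h\}$, then conditional expectation given $\mathcal H_{j-1}$ and the reachability bound \eqref{eq:good_state_reachability}, then chaining Propositions~\ref{prop:design_value_excitation} and \ref{prop:avg_excitation_budget}. Your appeal to positive semidefiniteness of $M_h(s)$, giving $\omega_h\ge 0$ off $\Gcal_h$, supplies a step the paper leaves implicit when it cites only \eqref{eq:good_state_design_value} for the pointwise inequality.
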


\begin{proof}
By \eqref{eq:good_state_design_value},
\begin{equation*}
\omega_h(s_h^j)
\ge
\omega_{h,\mathrm{good}}\,\mathbf 1\{s_h^j\in\Gcal_h\}.
\end{equation*}
Taking conditional expectations given
\(
\mathcal H_{j-1}
\)
and using \eqref{eq:good_state_reachability} yields
\begin{equation*}
\EE\!\left[
\omega_h(s_h^j)
\mid
\mathcal H_{j-1}
\right]
\ge
\omega_{h,\mathrm{good}}
\Pr\!\left(
s_h^j\in\Gcal_h
\mid
\mathcal H_{j-1}
\right)
\ge
\rho_h\,\omega_{h,\mathrm{good}},
\end{equation*}
which is exactly \eqref{eq:scalar_design_cond} with
\(
\bar\omega_h
\)
given by \eqref{eq:good_state_design_budget}. The remaining claim follows from
Proposition~\ref{prop:design_value_excitation} and then
Proposition~\ref{prop:avg_excitation_budget}.
\end{proof}

\begin{corollary}[Episode-level explorer branch implies reachability excitation]
\label{cor:explorer_branch_design}
Fix a stage
\(
h\in[H]
\)
and an epoch
\(
e\in[M]
\).
Assume the setting of Proposition~\ref{prop:good_state_design}, and suppose the executed
policy-selection rule is augmented by an episode-level explorer branch: before every pre-epoch
episode
\(
j\in\cup_{r<e}\mathcal E_r
\),
the algorithm draws an
\(
\mathcal H_{j-1}
\)-measurable Bernoulli variable
\(
E_j
\)
with
\begin{equation}
\Pr(E_j=1\mid \mathcal H_{j-1})\ge q_{\mathrm{exp},h},
\qquad
q_{\mathrm{exp},h}>0,
\label{eq:explorer_branch_prob}
\end{equation}
and whenever
\(
E_j=1
\)
the episode is rolled in with an explorer pair whose stage-$h$ state marginal
\(
d_{h,\mathrm{exp}}^j
\)
satisfies
\begin{equation}
d_{h,\mathrm{exp}}^j(\Gcal_h)\ge \rho_h.
\label{eq:explorer_good_state_mass}
\end{equation}
Then
\eqref{eq:good_state_reachability}
holds with
\(
\rho_h
\)
replaced by
\(
q_{\mathrm{exp},h}\rho_h
\),
and therefore
\begin{equation}
\bar\omega_h
=
q_{\mathrm{exp},h}\rho_h\,\omega_{h,\mathrm{good}}
\label{eq:explorer_design_budget}
\end{equation}
is a valid scalar design-value lower bound.
\end{corollary}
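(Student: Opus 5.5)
The plan is to verify the reachability premise \eqref{eq:good_state_reachability} of Proposition~\ref{prop:good_state_design} with the smaller constant $q_{\mathrm{exp},h}\rho_h$, and then invoke that proposition directly. Fix a pre-epoch episode $j\in\cup_{r<e}\mathcal E_r$. I condition on $\mathcal H_{j-1}$ and split the event $\{s_h^j\in\Gcal_h\}$ according to the explorer indicator $E_j$. Dropping the contribution of the event $E_j=0$, which is nonnegative, gives
\begin{equation*}
\Pr\!\left(s_h^j\in\Gcal_h\mid\mathcal H_{j-1}\right)
\ge
\Pr\!\left(s_h^j\in\Gcal_h,\ E_j=1\mid\mathcal H_{j-1}\right)
=
\EE\!\left[\mathbf 1\{E_j=1\}\,\Pr\!\left(s_h^j\in\Gcal_h\mid\mathcal H_{j-1},E_j\right)\,\middle|\,\mathcal H_{j-1}\right].
\end{equation*}

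On the event $E_j=1$, the episode is rolled in with the explorer pair. Its stage-$h$ state is therefore distributed according to $d_{h,\mathrm{exp}}^j$, so the inner conditional probability equals $d_{h,\mathrm{exp}}^j(\Gcal_h)\ge\rho_h$ by \eqref{eq:explorer_good_state_mass}. Substituting this and then applying \eqref{eq:explorer_branch_prob} gives the lower bound
\begin{equation*}
\Pr\!\left(s_h^j\in\Gcal_h\mid\mathcal H_{j-1}\right)\ge\rho_h\,\Pr(E_j=1\mid\mathcal H_{j-1})\ge q_{\mathrm{exp},h}\rho_h .
\end{equation*}
This is exactly \eqref{eq:good_state_reachability} with $\rho_h$ replaced by $q_{\mathrm{exp},h}\rho_h$.

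The only delicate step is the measurability bookkeeping needed to identify the conditional law of $s_h^j$ given $(\mathcal H_{j-1},E_j=1)$ with $d_{h,\mathrm{exp}}^j$. For this I will read $d_{h,\mathrm{exp}}^j$ as the explorer's state marginal determined by $\mathcal H_{j-1}$, i.e.\ the explorer pair is chosen before the episode and $E_j$ is drawn independently of the episode's internal randomness. The statement's phrase ``$\mathcal H_{j-1}$-measurable Bernoulli'' should be understood in this sense: $E_j$ is a fresh draw whose conditional law given the history is specified. I will state this interpretation explicitly and not attempt to weaken it.

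Once reachability holds, Proposition~\ref{prop:good_state_design}, together with \eqref{eq:good_state_design_value}, yields the scalar design condition \eqref{eq:scalar_design_cond} with $\bar\omega_h=q_{\mathrm{exp},h}\rho_h\,\omega_{h,\mathrm{good}}$. This is \eqref{eq:explorer_design_budget}, which completes the argument.
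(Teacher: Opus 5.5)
Your proposal is correct and follows the paper's proof essentially step for step: condition on $\mathcal H_{j-1}$, drop the nonnegative $E_j=0$ contribution, identify the conditional law of $s_h^j$ on $\{E_j=1\}$ with $d_{h,\mathrm{exp}}^j$, and feed the resulting reachability constant $q_{\mathrm{exp},h}\rho_h$ into Proposition~\ref{prop:good_state_design}. Your reading of $E_j$ as a fresh draw whose conditional law given the history is specified is the same interpretation the paper's proof uses implicitly; taken literally, $\mathcal H_{j-1}$-measurability together with $q_{\mathrm{exp},h}>0$ would force $E_j=1$ almost surely, and the bound would then hold trivially anyway.
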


\begin{proof}
Condition on
\(
\mathcal H_{j-1}
\).
By the law of total expectation and the nonnegativity of conditional probabilities,
\begin{align*}
\Pr(s_h^j\in\Gcal_h\mid \mathcal H_{j-1})
&=
\Pr(E_j=1\mid \mathcal H_{j-1})
\Pr(s_h^j\in\Gcal_h\mid \mathcal H_{j-1},E_j=1)
\\
&\quad+
\Pr(E_j=0\mid \mathcal H_{j-1})
\Pr(s_h^j\in\Gcal_h\mid \mathcal H_{j-1},E_j=0)
\\
&\ge
\Pr(E_j=1\mid \mathcal H_{j-1})
\Pr(s_h^j\in\Gcal_h\mid \mathcal H_{j-1},E_j=1).
\end{align*}
Under the explorer branch,
\(
\Pr(s_h^j\in\Gcal_h\mid \mathcal H_{j-1},E_j=1)=d_{h,\mathrm{exp}}^j(\Gcal_h)\ge \rho_h
\)
by \eqref{eq:explorer_good_state_mass}, while \eqref{eq:explorer_branch_prob} gives
\(
\Pr(E_j=1\mid \mathcal H_{j-1})\ge q_{\mathrm{exp},h}
\).
Hence
\begin{equation*}
\Pr(s_h^j\in\Gcal_h\mid \mathcal H_{j-1})
\ge
q_{\mathrm{exp},h}\rho_h.
\end{equation*}
This is exactly \eqref{eq:good_state_reachability} with
\(
\rho_h
\)
replaced by
\(
q_{\mathrm{exp},h}\rho_h
\).
Applying Proposition~\ref{prop:good_state_design} proves \eqref{eq:explorer_design_budget}.
\end{proof}

\begin{proposition}[Existing exploiter-mix implies reachability from BR component dynamics]
\label{prop:br_mix_reachability}
Fix a stage
\(
h\in[H]
\)
and an epoch
\(
e\in[M]
\).
Assume the executed pair in every pre-epoch episode
\(
j\in\cup_{r<e}\mathcal E_r
\)
is drawn according to the existing episode-level exploiter-mix rule used in
Proposition~\ref{prop:mix_regret}: with conditional probabilities
\(
1-p
\),
\(
p/2
\),
and
\(
p/2
\),
the episode executes one of
\(
(\pi^j,\nu^j)
\),
\(
(\mathrm{BR}_1^j,\nu^j)
\),
and
\(
(\pi^j,\mathrm{BR}_2^j)
\),
respectively.
Let
\(
d_{h,\mathrm{tar}}^j,
d_{h,\mathrm{br1}}^j,
d_{h,\mathrm{br2}}^j
\)
denote the corresponding stage-$h$ state marginals under the interaction process.
Suppose there exists a measurable good-state set
\(
\Gcal_h\subseteq\Scal
\)
such that, for every pre-epoch episode
\(
j
\),
\begin{equation}
d_{h,\mathrm{tar}}^j(\Gcal_h)\ge \rho_{h,\mathrm{tar}},
\qquad
d_{h,\mathrm{br1}}^j(\Gcal_h)\ge \rho_{h,\mathrm{br1}},
\qquad
d_{h,\mathrm{br2}}^j(\Gcal_h)\ge \rho_{h,\mathrm{br2}},
\label{eq:branch_good_state_mass}
\end{equation}
with deterministic nonnegative constants
\(
\rho_{h,\mathrm{tar}},\rho_{h,\mathrm{br1}},\rho_{h,\mathrm{br2}}
\).
Then the stage-$h$ state of the actually executed episode satisfies
\begin{equation}
\Pr\!\left(
s_h^j\in\Gcal_h
\mid
\mathcal H_{j-1}
\right)
\ge
(1-p)\rho_{h,\mathrm{tar}}
+
\frac p2 \rho_{h,\mathrm{br1}}
+
\frac p2 \rho_{h,\mathrm{br2}}.
\label{eq:br_mix_good_state_mass}
\end{equation}
If in addition
\(
\omega_h(s)\ge \omega_{h,\mathrm{good}}
\)
for all
\(
s\in\Gcal_h
\),
then Proposition~\ref{prop:good_state_design} applies with
\begin{equation*}
\bar\omega_h
=
\Bigl(
(1-p)\rho_{h,\mathrm{tar}}
+
\frac p2 \rho_{h,\mathrm{br1}}
+
\frac p2 \rho_{h,\mathrm{br2}}
\Bigr)\omega_{h,\mathrm{good}}.
\end{equation*}
\end{proposition}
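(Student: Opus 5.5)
The argument mirrors the proof of Corollary~\ref{cor:explorer_branch_design}: a law-of-total-probability decomposition over the branch indicator, followed by an appeal to Proposition~\ref{prop:good_state_design}. Let $I_j\in\{\mathrm{tar},\mathrm{br1},\mathrm{br2}\}$ denote the branch drawn before pre-epoch episode $j$. The exploiter-mix rule makes the conditional law of $I_j$ given $\mathcal H_{j-1}$ equal to $(1-p,p/2,p/2)$. The three executed pairs $(\pi^j,\nu^j)$, $(\mathrm{BR}_1^j,\nu^j)$ and $(\pi^j,\mathrm{BR}_2^j)$ are computed from the frozen epoch quantities and the history, so they are $\mathcal H_{j-1}$-measurable. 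The same holds for their stage-$h$ state marginals $d_{h,\mathrm{tar}}^j,d_{h,\mathrm{br1}}^j,d_{h,\mathrm{br2}}^j$ under the interaction process.

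First, I would expand
\begin{equation*}
\Pr(s_h^j\in\Gcal_h\mid\mathcal H_{j-1})
=\sum_{c}\Pr(I_j=c\mid\mathcal H_{j-1})\,\Pr(s_h^j\in\Gcal_h\mid\mathcal H_{j-1},I_j=c).
\end{equation*}
Conditioned on $I_j=c$ and on the history, the episode is rolled in with the corresponding executed pair. The inner probability is therefore exactly $d_{h,c}^j(\Gcal_h)$. Substituting the branch probabilities and the lower bounds \eqref{eq:branch_good_state_mass}, all of whose terms are nonnegative, gives \eqref{eq:br_mix_good_state_mass}.

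Second, for the design-value claim, I would set $\rho_h:=(1-p)\rho_{h,\mathrm{tar}}+\tfrac p2\rho_{h,\mathrm{br1}}+\tfrac p2\rho_{h,\mathrm{br2}}$. Combined with the hypothesis $\omega_h\ge\omega_{h,\mathrm{good}}$ on $\Gcal_h$, this is precisely the pair of premises \eqref{eq:good_state_design_value}--\eqref{eq:good_state_reachability} of Proposition~\ref{prop:good_state_design}. That proposition then yields \eqref{eq:scalar_design_cond} with $\bar\omega_h=\rho_h\omega_{h,\mathrm{good}}$. If $\rho_h=0$ the bound is trivially true but yields no excitation; I would note this, since the proposition requires $\rho_h>0$ and the statement is only informative when it holds.

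The only delicate step is the identification of the conditional probability given $(\mathcal H_{j-1},I_j=c)$ with the deterministic marginal $d_{h,c}^j(\Gcal_h)$. This needs two facts. The branch randomization must be independent of the within-episode transition noise given the history. And $d_{h,c}^j$ must be defined as the \emph{nominal} interaction marginal, matching the data-collection occupancy $\widetilde\eta_h^j$, rather than the worst-case $\widetilde\mu_h^j$. I would state both explicitly as consequences of the algorithm's sampling order (branch drawn first, then the trajectory generated under $P^0$). The rest is routine.
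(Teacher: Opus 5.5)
Your proposal is correct and matches the paper's proof essentially step for step: a law-of-total-probability expansion over the branch indicator, identification of each conditional term with $d_{h,c}^j(\Gcal_h)$, substitution of the lower bounds \eqref{eq:branch_good_state_mass}, and then an appeal to Proposition~\ref{prop:good_state_design} with $\rho_h$ equal to the mixture weight. Your caveats are sensible refinements that the paper leaves implicit: $\rho_h>0$ is needed for that proposition to give nontrivial excitation, and the branch must be drawn before, and independently of, the nominal trajectory noise.
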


\begin{proof}
Condition on
\(
\mathcal H_{j-1}
\).
Let
\(
B_j\in\{\mathrm{tar},\mathrm{br1},\mathrm{br2}\}
\)
denote the branch selected by the exploiter-mix rule in episode
\(
j
\).
By the law of total probability,
\begin{align*}
\Pr(s_h^j\in\Gcal_h\mid \mathcal H_{j-1})
&=
\sum_{b\in\{\mathrm{tar},\mathrm{br1},\mathrm{br2}\}}
\Pr(B_j=b\mid \mathcal H_{j-1})
\Pr(s_h^j\in\Gcal_h\mid \mathcal H_{j-1},B_j=b)
\\
&=
(1-p)\,d_{h,\mathrm{tar}}^j(\Gcal_h)
+
\frac p2\,d_{h,\mathrm{br1}}^j(\Gcal_h)
+
\frac p2\,d_{h,\mathrm{br2}}^j(\Gcal_h).
\end{align*}
Using \eqref{eq:branch_good_state_mass} yields \eqref{eq:br_mix_good_state_mass}. The design-value
claim then follows from Proposition~\ref{prop:good_state_design}.
\end{proof}

\begin{corollary}[BR-only reachability lower bound]
\label{cor:br_only_reachability}
Under the assumptions of Proposition~\ref{prop:br_mix_reachability}, if no lower bound is assumed
for the target branch but both BR branches satisfy
\(
\rho_{h,\mathrm{br1}},\rho_{h,\mathrm{br2}}\ge \rho_{h,\mathrm{BR}}
\)
for some
\(
\rho_{h,\mathrm{BR}}>0
\),
then
\begin{equation*}
\Pr\!\left(
s_h^j\in\Gcal_h
\mid
\mathcal H_{j-1}
\right)
\ge
p\,\rho_{h,\mathrm{BR}}
\end{equation*}
and therefore
\begin{equation*}
\bar\omega_h
\ge
p\,\rho_{h,\mathrm{BR}}\,\omega_{h,\mathrm{good}}.
\end{equation*}
\end{corollary}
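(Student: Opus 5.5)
The corollary should follow by specializing Proposition~\ref{prop:br_mix_reachability} and then invoking Proposition~\ref{prop:good_state_design}; there is no new probabilistic argument to make. Since no lower bound is assumed for the target branch, I will set \(\rho_{h,\mathrm{tar}}:=0\). This is a legitimate deterministic nonnegative constant: the inequality \(d_{h,\mathrm{tar}}^j(\Gcal_h)\ge 0\) holds trivially because \(d_{h,\mathrm{tar}}^j\) is a probability measure. For the two response branches I will take \(\rho_{h,\mathrm{br1}}=\rho_{h,\mathrm{br2}}=\rho_{h,\mathrm{BR}}\). The hypothesis \(\rho_{h,\mathrm{br1}},\rho_{h,\mathrm{br2}}\ge\rho_{h,\mathrm{BR}}\) means the true masses dominate these constants, so condition \eqref{eq:branch_good_state_mass} still holds with the smaller constants. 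This uses only the monotonicity of the lower bounds.

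Next I will substitute these constants into \eqref{eq:br_mix_good_state_mass}. The right-hand side becomes
\[
(1-p)\cdot 0+\tfrac p2\rho_{h,\mathrm{BR}}+\tfrac p2\rho_{h,\mathrm{BR}}=p\,\rho_{h,\mathrm{BR}},
\]
which gives the first display of the corollary for every pre-epoch episode \(j\).

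For the design-value claim, I will use the standing assumption \(\omega_h(s)\ge\omega_{h,\mathrm{good}}\) on \(\Gcal_h\), which is inherited from the setting of Proposition~\ref{prop:br_mix_reachability}. I will apply Proposition~\ref{prop:good_state_design} with reachability constant \(\rho_h:=p\,\rho_{h,\mathrm{BR}}\). Its conclusion \eqref{eq:good_state_design_budget} yields the valid scalar design lower bound \(\bar\omega_h=p\,\rho_{h,\mathrm{BR}}\,\omega_{h,\mathrm{good}}\), which is the stated inequality.

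The only step needing care is the interpretation of the conclusion. The corollary states \(\bar\omega_h\ge p\rho_{h,\mathrm{BR}}\omega_{h,\mathrm{good}}\). I will read this as saying that \eqref{eq:scalar_design_cond} holds with that value. If the formula from Proposition~\ref{prop:br_mix_reachability} were computed with the true branch constants instead, it would be at least this large because every term is nonnegative. I also need the degenerate case \(\rho_{h,\mathrm{tar}}=0\) to be admissible, and it is, since Proposition~\ref{prop:br_mix_reachability} requires only nonnegative constants.
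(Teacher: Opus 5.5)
Your proposal is correct and follows essentially the same route as the paper, which sets \(\rho_{h,\mathrm{tar}}=0\) in Proposition~\ref{prop:br_mix_reachability} and lower-bounds the two branch constants by \(\rho_{h,\mathrm{BR}}\). The design-value bound then follows through Proposition~\ref{prop:good_state_design}, exactly as in the last step of that proposition.
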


\begin{proof}
Set
\(
\rho_{h,\mathrm{tar}}=0
\)
in Proposition~\ref{prop:br_mix_reachability} and use
\(
\rho_{h,\mathrm{br1}},\rho_{h,\mathrm{br2}}\ge \rho_{h,\mathrm{BR}}
\).
\end{proof}

\paragraph{Primitive BR-mix coverage from reachability and local nondegeneracy.}
Assume finite action spaces and the explicit smoothing rule \eqref{eq:explicit_mix_rule} with the
uniform action-pair design
\(
u_h^{\mathrm{unif}}
\)
from Corollary~\ref{cor:uniform_design_excitation}. Fix a stage
\(
h\in[H]
\)
and an epoch
\(
e\in[M]
\).
Suppose there exists a measurable good-state set
\(
\Gcal_h\subseteq\Scal
\),
a stagewise local nondegeneracy constant
\(
s_h>0
\),
and branch-mass constants
\(
\rho_{h,\mathrm{tar}},\rho_{h,\mathrm{br1}},\rho_{h,\mathrm{br2}}\ge 0
\)
such that:
\begin{enumerate}[label=(\roman*),leftmargin=*]
\item for every
\(
s\in\Gcal_h
\),
\begin{equation}
\lambda_{\min}\!\bigl(\Phi_h(s)\Phi_h(s)^\top\bigr)\ge s_h^2,
\label{eq:primitive_local_nondeg}
\end{equation}
where
\(
\Phi_h(s)
\)
is the witness-feature matrix from \eqref{eq:uniform_design_value};
\item the current BR-based exploiter-mix obeys the branch good-state lower bounds
\eqref{eq:branch_good_state_mass}.
\end{enumerate}
Define
\begin{equation}
\alpha_h
:=
(1-p)\rho_{h,\mathrm{tar}}
+
\frac p2\rho_{h,\mathrm{br1}}
+
\frac p2\rho_{h,\mathrm{br2}}.
\label{eq:primitive_alpha}
\end{equation}
Then the predictable excitation budget satisfies
\begin{equation}
\Omega_{h,e}^{\mathrm{pred}}
\ge
\gamma_{\min}\,
\frac{\alpha_h s_h^2}{|\Acal||\Bcal|}
\,N_e^-.
\label{eq:primitive_pred_budget}
\end{equation}
Consequently, the predictable-growth premise \eqref{eq:predictive_linear_growth} of
Corollary~\ref{cor:linear_formal_burnin} holds with
\begin{equation}
\underline\nu_h
=
\gamma_{\min}\,
\frac{\alpha_h s_h^2}{|\Acal||\Bcal|}.
\label{eq:primitive_nu}
\end{equation}
In the BR-only case of Corollary~\ref{cor:br_only_reachability},
\(
\alpha_h
\)
can be replaced by
\(
p\rho_{h,\mathrm{BR}}
\).
This follows because \eqref{eq:primitive_local_nondeg} and \eqref{eq:uniform_design_value} imply
\(
\omega_h(s)\ge s_h^2/(|\Acal||\Bcal|)
\)
for every
\(
s\in\Gcal_h
\).
Proposition~\ref{prop:br_mix_reachability} and Proposition~\ref{prop:good_state_design} then yield
\(
\bar\omega_h=\alpha_h s_h^2/(|\Acal||\Bcal|)
\),
and Proposition~\ref{prop:avg_excitation_budget} converts this scalar design-value lower bound into
\eqref{eq:primitive_pred_budget} and \eqref{eq:primitive_nu}.

\begin{assumption}[Uniform ridge-linear dual specialization over the linear value class]
\label{asmp:linear_ridge_dual}
For each stage
\(
h\in[H]
\),
consider the full linear value class
\(
\Fcal_{h+1}
\)
equipped with the metric
\(
d_{\Fcal,h+1}
\)
from Assumption~\ref{asmp:lipschitz_dual}.
For every epoch
\(
e
\),
stage
\(
h
\),
and continuation function
\(
f_{h+1}\in \Fcal_{h+1}
\),
there exists a parameter
\(
\vartheta_h^\star(f_{h+1})\in\RR^{d_{W,h}}
\)
with
\(
\|\vartheta_h^\star(f_{h+1})\|_2\le S_{G,h}
\)
and scalar pseudo-labels
\(
\{Y_{h,j}(f_{h+1})\}_{j\in\cup_{u<e}\mathcal E_u}
\)
such that
\begin{equation}
Y_{h,j}(f_{h+1})
=
\left\langle
\vartheta_h^\star(f_{h+1}),
\phi_{W,h}(X_h^j)
\right\rangle
+
\eta_{h,j}(f_{h+1}),
\label{eq:ridge_label_model}
\end{equation}
where
\(
\eta_{h,j}(f_{h+1})
\)
is conditionally mean zero and
\(
R_{G,h}
\)-sub-Gaussian given
\(
\mathcal H_{j-1}
\).
Assume moreover that the pseudo-label and target-parameter maps are uniformly Lipschitz:
\begin{align}
\bigl|
Y_{h,j}(f_{h+1}) - Y_{h,j}(\tilde f_{h+1})
\bigr|
&\le
L_{Y,h}\,d_{\Fcal,h+1}(f_{h+1},\tilde f_{h+1}),
\label{eq:ridge_label_lipschitz}
\\
\|
\vartheta_h^\star(f_{h+1})-\vartheta_h^\star(\tilde f_{h+1})
\|_2
&\le
L_{\star,h}\,d_{\Fcal,h+1}(f_{h+1},\tilde f_{h+1})
\label{eq:ridge_target_lipschitz}
\end{align}
for all
\(
f_{h+1},\tilde f_{h+1}\in\Fcal_{h+1}
\).
The epochwise dual refit is the ridge estimator
\begin{equation}
\widehat\vartheta_{h,e}(f_{h+1})
\in
\argmin_{\vartheta\in\RR^{d_{W,h}}}
\left\{
\sum_{j\in\cup_{u<e}\mathcal E_u}
\Bigl(
Y_{h,j}(f_{h+1})
-
\langle \vartheta,\phi_{W,h}(X_h^j)\rangle
\Bigr)^2
+
\lambda\|\vartheta\|_2^2
\right\},
\label{eq:ridge_dual_fit}
\end{equation}
and the induced operator gap satisfies
\begin{equation}
\bigl(
\widehat{\Tcal}_{h,f_{h+1}}^{\sigma,(e)}f_{h+1}
- \Tcal_h^\sigma f_{h+1}
\bigr)(x)
=
\left\langle
\widehat\vartheta_{h,e}(f_{h+1})-\vartheta_h^\star(f_{h+1}),
\phi_{W,h}(x)
\right\rangle.
\label{eq:ridge_operator_gap}
\end{equation}
\end{assumption}

\begin{proposition}[Affine continuation coupling removes separate Lipschitz inputs]
\label{prop:affine_ridge_specialization}
Fix a stage
\(
h\in[H]
\)
and suppose the continuation class admits the parameterization
\begin{equation*}
\Fcal_{h+1}
=
\cbr{
f_{h+1,\alpha}:x\mapsto \langle \alpha,\phi_{F,h+1}(x)\rangle :
\alpha\in\RR^{d_{F,h}},\ \|\alpha\|_2\le R_{F,h}
}.
\end{equation*}
Assume that for every pre-epoch sample index
\(
j\in\cup_{u<e}\mathcal E_u
\),
there exist a scalar
\(
y_{h,j}^0
\),
a vector
\(
a_{h,j}\in\RR^{d_{F,h}}
\),
a parameter
\(
\vartheta_{h,0}^\star\in\RR^{d_{W,h}}
\),
and a matrix
\(
B_h\in\RR^{d_{W,h}\times d_{F,h}}
\)
such that
\begin{align}
Y_{h,j}(f_{h+1,\alpha})
&=
y_{h,j}^0 + \langle a_{h,j},\alpha\rangle + \eta_{h,j}(\alpha),
\label{eq:affine_label_model}
\\
\vartheta_h^\star(f_{h+1,\alpha})
&=
\vartheta_{h,0}^\star + B_h\alpha,
\label{eq:affine_target_model}
\end{align}
where
\(
\eta_{h,j}(\alpha)
\)
is conditionally mean zero and
\(
R_{G,h}
\)-sub-Gaussian given
\(
\mathcal H_{j-1}
\)
for every
\(
\alpha
\).
If
\begin{equation*}
\sup_{j\in\cup_{u<e}\mathcal E_u}\|a_{h,j}\|_2 \le A_{Y,h},
\qquad
\|B_h\|_{\mathrm{op}}\le A_{\star,h},
\end{equation*}
then Assumption~\ref{asmp:linear_ridge_dual} holds with the parameter metric
\(
d_{\Fcal,h+1}(f_{h+1,\alpha},f_{h+1,\tilde\alpha})
:=
\|\alpha-\tilde\alpha\|_2
\),
the radius choice
\(
S_{G,h}=\|\vartheta_{h,0}^\star\|_2 + A_{\star,h}R_{F,h}
\),
and the derived continuity constants
\(
L_{Y,h}=A_{Y,h}
\)
and
\(
L_{\star,h}=A_{\star,h}
\).
\end{proposition}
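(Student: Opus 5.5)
The plan is a direct verification of each clause of Assumption~\ref{asmp:linear_ridge_dual} under the affine parameterization. First I would fix the metric $d_{\Fcal,h+1}(f_{h+1,\alpha},f_{h+1,\tilde\alpha}):=\|\alpha-\tilde\alpha\|_2$. Before using it, I would check that it is a genuine metric on the function slice, or else work on the parameter ball directly and treat functions as indexed by $\alpha$. That is harmless here, because every quantity in Assumption~\ref{asmp:linear_ridge_dual} is evaluated through $\alpha$.

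Next, I would verify the label model \eqref{eq:ridge_label_model}. For each $j$, I would set $Y_{h,j}(f_{h+1,\alpha})$ as in \eqref{eq:affine_label_model} and $\vartheta_h^\star(f_{h+1,\alpha})$ as in \eqref{eq:affine_target_model}. The noise $\eta_{h,j}(\alpha)$ is then conditionally mean zero and $R_{G,h}$-sub-Gaussian by hypothesis. The one point needing care is consistency: the deterministic part $y_{h,j}^0+\langle a_{h,j},\alpha\rangle$ must coincide with $\langle\vartheta_{h,0}^\star+B_h\alpha,\phi_{W,h}(X_h^j)\rangle$ up to the conditionally mean-zero term. I would read the hypothesis as stating exactly this coupling, i.e.\ the two affine representations describe the same regression target. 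If needed, I would absorb any mismatch into $\eta_{h,j}$, which is allowed only if it is itself mean zero. This identification is the step I expect to be the main obstacle, since the proposition lists \eqref{eq:affine_label_model} and \eqref{eq:affine_target_model} separately. I would first try to interpret them as jointly defining the model \eqref{eq:ridge_label_model}, which makes the verification immediate.

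The norm bound then follows from the triangle inequality: $\|\vartheta_h^\star(f_{h+1,\alpha})\|_2\le\|\vartheta_{h,0}^\star\|_2+\|B_h\|_{\mathrm{op}}\|\alpha\|_2\le\|\vartheta_{h,0}^\star\|_2+A_{\star,h}R_{F,h}=S_{G,h}$.

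For the Lipschitz conditions, subtracting two instances of \eqref{eq:affine_label_model} gives a deterministic part equal to $\langle a_{h,j},\alpha-\tilde\alpha\rangle$. By Cauchy--Schwarz this is bounded by $A_{Y,h}\|\alpha-\tilde\alpha\|_2$, which yields \eqref{eq:ridge_label_lipschitz} with $L_{Y,h}=A_{Y,h}$. The difference of the noise terms also appears here. So I would either take the pseudo-labels with noise coupled across $\alpha$ (for instance $\eta_{h,j}$ independent of $\alpha$, as natural for affine labels), or state the Lipschitz property for the mean component. Similarly, $\vartheta_h^\star(\alpha)-\vartheta_h^\star(\tilde\alpha)=B_h(\alpha-\tilde\alpha)$ gives \eqref{eq:ridge_target_lipschitz} with $L_{\star,h}=A_{\star,h}$.

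Finally, the ridge estimator \eqref{eq:ridge_dual_fit} and the operator-gap identity \eqref{eq:ridge_operator_gap} are structural clauses of the specialization. They are inherited unchanged, since the affine coupling only supplies the labels and targets.
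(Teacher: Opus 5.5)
Your verification matches the paper's proof: the triangle inequality gives $S_{G,h}$, Cauchy--Schwarz and $\|B_h\|_{\mathrm{op}}$ give $L_{Y,h}=A_{Y,h}$ and $L_{\star,h}=A_{\star,h}$, and the regression, ridge, and operator-gap clauses are taken as built into the affine hypotheses. Your two caveats are exactly what the paper assumes silently when it writes the label difference as $\langle a_{h,j},\alpha-\tilde\alpha\rangle$: that $y_{h,j}^0+\langle a_{h,j},\alpha\rangle$ must coincide with $\langle\vartheta_{h,0}^\star+B_h\alpha,\phi_{W,h}(X_h^j)\rangle$, and that $\eta_{h,j}(\alpha)-\eta_{h,j}(\tilde\alpha)$ must vanish. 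Of your two noise options, only the $\alpha$-independent one works. The net argument in Proposition~\ref{prop:ridge_selfnorm} needs the pathwise bound on $Y_{h,j}(f)-Y_{h,j}(\bar f)$, so a Lipschitz bound on the mean component alone would not suffice.
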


\begin{proof}
Fix any
\(
\alpha,\tilde\alpha\in\RR^{d_{F,h}}
\)
with
\(
\|\alpha\|_2,\|\tilde\alpha\|_2\le R_{F,h}
\).
By \eqref{eq:affine_target_model},
\begin{equation*}
\|
\vartheta_h^\star(f_{h+1,\alpha})
\|_2
\le
\|\vartheta_{h,0}^\star\|_2 + \|B_h\|_{\mathrm{op}}\|\alpha\|_2
\le
\|\vartheta_{h,0}^\star\|_2 + A_{\star,h}R_{F,h}
=
S_{G,h}.
\end{equation*}
Also, \eqref{eq:affine_label_model} gives
\begin{equation*}
\bigl|
Y_{h,j}(f_{h+1,\alpha})-Y_{h,j}(f_{h+1,\tilde\alpha})
\bigr|
=
\bigl|
\langle a_{h,j},\alpha-\tilde\alpha\rangle
\bigr|
\le
A_{Y,h}\|\alpha-\tilde\alpha\|_2,
\end{equation*}
while \eqref{eq:affine_target_model} yields
\begin{equation*}
\|
\vartheta_h^\star(f_{h+1,\alpha})
-
\vartheta_h^\star(f_{h+1,\tilde\alpha})
\|_2
=
\|B_h(\alpha-\tilde\alpha)\|_2
\le
A_{\star,h}\|\alpha-\tilde\alpha\|_2.
\end{equation*}
These are exactly \eqref{eq:ridge_label_lipschitz} and \eqref{eq:ridge_target_lipschitz} under the
parameter metric. The remaining parts of Assumption~\ref{asmp:linear_ridge_dual} are already built
into \eqref{eq:affine_label_model}--\eqref{eq:affine_target_model}.
\end{proof}

\begin{proposition}[Uniform self-normalized ridge control over the value class]
\label{prop:ridge_selfnorm}
Assume Assumption~\ref{asmp:linear_ridge_dual}. Then with probability at least
\(
1-\delta
\),
simultaneously for all stages
\(
h\in[H]
\),
epochs
\(
e\in[M]
\),
and continuation functions
\(
f_{h+1}\in\Fcal_{h+1}
\),
\begin{equation}
\left\|
\widehat\vartheta_{h,e}(f_{h+1})-\vartheta_h^\star(f_{h+1})
\right\|_{\Lambda_{W,h,e}^-}
\leq
\beta_{h,e}^{\mathrm{ridge}}(\delta),
\label{eq:ridge_selfnorm}
\end{equation}
where
\begin{equation}
\begin{aligned}
\beta_{h,e}^{\mathrm{ridge}}(\delta)
\;:=\;&
R_{G,h}
\sqrt{
2d_{W,h}
\log\!\left(
1+\frac{\kappa_{W,h}^2N_e^-}{\lambda}
\right)
+
2\log\!\left(
\frac{2HM\mathcal N_F(\eta_{F,e}^{\mathrm{ridge}})}{\delta}
\right)
}
\\
&\quad
+ \sqrt{\lambda}\,S_{G,h}
+ L_{Y,h}
+ (\sqrt{\lambda}+\kappa_{W,h})L_{\star,h},
\end{aligned}
\label{eq:ridge_beta}
\end{equation}
and
\(
\eta_{F,e}^{\mathrm{ridge}}:=(N_e^-\vee 1)^{-1/2}
\).
\end{proposition}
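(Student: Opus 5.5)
The proof follows the standard ridge self-normalized template (Abbasi-Yadkori et al.), made uniform over $\Fcal_{h+1}$ by a cover at scale $\eta_{F,e}^{\mathrm{ridge}}$ and a Lipschitz extension. Fix $h,e$ and abbreviate $\Lambda:=\Lambda_{W,h,e}^-$, $\phi_j:=\phi_{W,h}(X_h^j)$, $J:=\cup_{u<e}\mathcal E_u$. The ridge solution of \eqref{eq:ridge_dual_fit} is $\widehat\vartheta=\Lambda^{-1}\sum_{j\in J}\phi_jY_{h,j}(f_{h+1})$. Substituting \eqref{eq:ridge_label_model} gives the exact decomposition
\[
\widehat\vartheta-\vartheta_h^\star
=\Lambda^{-1}\sum_{j\in J}\phi_j\eta_{h,j}(f_{h+1})-\lambda\Lambda^{-1}\vartheta_h^\star(f_{h+1}).
\]
Taking $\Lambda$-norms, the bias part is at most $\sqrt\lambda\,\|\vartheta_h^\star\|_2\le\sqrt\lambda S_{G,h}$. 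The noise part equals $\|\sum_j\phi_j\eta_{h,j}\|_{\Lambda^{-1}}$.

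For a fixed continuation $f_{h+1}$, the self-normalized martingale inequality applies because $\eta_{h,j}$ is conditionally mean zero and $R_{G,h}$-sub-Gaussian given $\mathcal H_{j-1}$, and $\phi_j$ is $\mathcal H_{j-1}$-measurable up to the within-episode filtration. With probability $1-\delta'$ it gives
\[
\Bigl\|\sum_j\phi_j\eta_{h,j}\Bigr\|_{\Lambda^{-1}}^2\le 2R_{G,h}^2\log\bigl(\det(\Lambda)^{1/2}\det(\lambda I)^{-1/2}/\delta'\bigr).
\]
Bounding the determinant ratio by $(1+\kappa_{W,h}^2N_e^-/\lambda)^{d_{W,h}}$ yields the $d_{W,h}\log(1+\kappa_{W,h}^2N_e^-/\lambda)$ term inside the square root. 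We then take an $\eta_{F,e}^{\mathrm{ridge}}$-net of $\Fcal_{h+1}$ under $d_{\Fcal,h+1}$ of size $\mathcal N_F(\eta_{F,e}^{\mathrm{ridge}})$. Setting $\delta'=\delta/(2HM\mathcal N_F)$ and union bounding over net points, stages and epochs produces the first term of \eqref{eq:ridge_beta} on the net; the factor $2$ in the log is left as slack. The sqrt-of-sum form in \eqref{eq:ridge_beta} upper bounds what this argument delivers.

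To extend from a net point $\bar f$ to an arbitrary $f$ with $d_{\Fcal,h+1}(f,\bar f)\le\eta:=\eta_{F,e}^{\mathrm{ridge}}$, split
\[
\widehat\vartheta(f)-\vartheta^\star(f)=[\widehat\vartheta(\bar f)-\vartheta^\star(\bar f)]+[\widehat\vartheta(f)-\widehat\vartheta(\bar f)]-[\vartheta^\star(f)-\vartheta^\star(\bar f)].
\]
The last bracket has $\Lambda$-norm at most $\|\Lambda\|_{\mathrm{op}}^{1/2}L_{\star,h}\eta\le\sqrt{\lambda+\kappa_{W,h}^2N_e^-}\,L_{\star,h}\eta\le(\sqrt\lambda+\kappa_{W,h}\sqrt{N_e^-})L_{\star,h}\eta$. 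The choice $\eta=(N_e^-\vee1)^{-1/2}$ makes this at most $(\sqrt\lambda+\kappa_{W,h})L_{\star,h}$. For the middle bracket, $\|\Lambda^{-1}\sum_j\phi_j\Delta_j\|_\Lambda=\|\sum_j\phi_j\Delta_j\|_{\Lambda^{-1}}$, where $\Delta_j:=Y_{h,j}(f)-Y_{h,j}(\bar f)$. Since $\Lambda\succeq\sum_j\phi_j\phi_j^\top$, we have
\[
\Bigl\|\sum_j\phi_j\Delta_j\Bigr\|_{\Lambda^{-1}}^2\le\sum_j\Delta_j^2\le N_e^-L_{Y,h}^2\eta^2\le L_{Y,h}^2,
\]
because the map $v\mapsto\|\Phi^\top v\|_{(\Phi\Phi^\top+\lambda I)^{-1}}$ is a contraction in $\ell_2$. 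This gives the $L_{Y,h}$ term. Summing the four contributions yields \eqref{eq:ridge_selfnorm}.

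The main obstacle is the self-normalized step. The noise $\eta_{h,j}(f)$ depends on $f$, and the pseudo-labels may use within-episode continuation samples, so we must check that $\phi_j$ is predictable relative to the filtration in which $\eta_{h,j}$ is sub-Gaussian. We will take this directly from the conditional statement in Assumption~\ref{asmp:linear_ridge_dual}, and rely on Lipschitzness for everything off the net.
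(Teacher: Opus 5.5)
Your proposal is correct and follows the paper's proof essentially step for step. It uses a fixed-net-point self-normalized ridge bound (you simply unpack it into its noise part and its $\sqrt{\lambda}S_{G,h}$ bias part), with the determinant controlled via $\Lambda_{W,h,e}^-\preceq(\lambda+\kappa_{W,h}^2N_e^-)I$, a union bound over stages, epochs and net points, and the same two off-net estimates: the hat-matrix contraction giving $L_{Y,h}$ and the operator-norm bound giving $(\sqrt{\lambda}+\kappa_{W,h})L_{\star,h}$.

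You flag a predictability point: $\phi_{W,h}(X_h^j)$ is not $\mathcal H_{j-1}$-measurable, so the sub-Gaussian noise condition must really be read as conditioning on $X_h^j$ as well. The paper's proof also passes over this, by simply invoking the standard self-normalized bound.
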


\begin{proof}
Fix a stage
\(
h
\),
an epoch
\(
e
\),
and let
\(
\Fcal_{h+1}^{\sharp,e}
\subseteq \Fcal_{h+1}
\)
be an
\(
\eta_{F,e}^{\mathrm{ridge}}
\)-net under
\(
d_{\Fcal,h+1}
\)
with cardinality at most
\(
\mathcal N_F(\eta_{F,e}^{\mathrm{ridge}})
\).
Fix a net point
\(
\bar f_{h+1}\in\Fcal_{h+1}^{\sharp,e}
\).
By Assumption~\ref{asmp:linear_ridge_dual}, the pre-epoch samples satisfy the standard linear
regression model \eqref{eq:ridge_label_model} with design vectors
\(
\phi_{W,h}(X_h^j)
\),
parameter
\(
\vartheta_h^\star(\bar f_{h+1})
\),
and conditionally
\(
R_{G,h}
\)-sub-Gaussian noise. Applying the standard self-normalized bound for ridge regression to the
estimator \eqref{eq:ridge_dual_fit} yields, with probability at least
\(
1-\delta'
\),
\begin{equation*}
\left\|
\widehat\vartheta_{h,e}(\bar f_{h+1})-\vartheta_h^\star(\bar f_{h+1})
\right\|_{\Lambda_{W,h,e}^-}
\leq
R_{G,h}
\sqrt{
2\log\!\left(
\frac{\det(\Lambda_{W,h,e}^-)^{1/2}}
{\delta' \det(\lambda I_{d_{W,h}})^{1/2}}
\right)
}
+
\sqrt{\lambda}\,S_{G,h}.
\end{equation*}
Because
\(
\Lambda_{W,h,e}^-
\preceq
(\lambda+\kappa_{W,h}^2N_e^-)I_{d_{W,h}},
\)
we have
\begin{equation*}
\log\frac{\det(\Lambda_{W,h,e}^-)^{1/2}}
{\det(\lambda I_{d_{W,h}})^{1/2}}
\leq
\frac{d_{W,h}}{2}
\log\!\left(
1+\frac{\kappa_{W,h}^2N_e^-}{\lambda}
\right).
\end{equation*}
Choosing
\(
\delta'=\delta/(HM\mathcal N_F(\eta_{F,e}^{\mathrm{ridge}}))
\)
and union bounding over all stages, epochs, and net points gives the stated bound simultaneously
for every
\(
\bar f_{h+1}\in\Fcal_{h+1}^{\sharp,e}
\).

Now fix an arbitrary
\(
f_{h+1}\in\Fcal_{h+1}
\)
and choose
\(
\bar f_{h+1}\in\Fcal_{h+1}^{\sharp,e}
\)
such that
\(
d_{\Fcal,h+1}(f_{h+1},\bar f_{h+1})\le \eta_{F,e}^{\mathrm{ridge}}
\).
By the ridge normal equations,
\begin{equation*}
\widehat\vartheta_{h,e}(f_{h+1})
=
\bigl(\Lambda_{W,h,e}^-\bigr)^{-1}
\sum_{j\in\cup_{u<e}\mathcal E_u}
\phi_{W,h}(X_h^j)\,Y_{h,j}(f_{h+1}),
\end{equation*}
so
\begin{equation*}
\widehat\vartheta_{h,e}(f_{h+1})-\widehat\vartheta_{h,e}(\bar f_{h+1})
=
\bigl(\Lambda_{W,h,e}^-\bigr)^{-1}
\sum_{j\in\cup_{u<e}\mathcal E_u}
\phi_{W,h}(X_h^j)
\Bigl(
Y_{h,j}(f_{h+1})-Y_{h,j}(\bar f_{h+1})
\Bigr).
\end{equation*}
Let
\(
\Phib_{W,h,e}^-
\)
be the design matrix whose rows are
\(
\phi_{W,h}(X_h^j)^\top
\).
Then
\begin{equation*}
\left\|
\widehat\vartheta_{h,e}(f_{h+1})-\widehat\vartheta_{h,e}(\bar f_{h+1})
\right\|_{\Lambda_{W,h,e}^-}^2
=
\Delta y^\top
\Phib_{W,h,e}^-
\bigl(\Lambda_{W,h,e}^-\bigr)^{-1}
\bigl(\Phib_{W,h,e}^-\bigr)^\top
\Delta y
\le
\|\Delta y\|_2^2,
\end{equation*}
where
\(
\Delta y_j:=Y_{h,j}(f_{h+1})-Y_{h,j}(\bar f_{h+1})
\),
and the last step uses
\(
\Phib_{W,h,e}^-
\bigl(\Lambda_{W,h,e}^-\bigr)^{-1}
\bigl(\Phib_{W,h,e}^-\bigr)^\top
\preceq I
\).
By \eqref{eq:ridge_label_lipschitz},
\begin{equation*}
\|\Delta y\|_2
\le
L_{Y,h}\sqrt{N_e^-}\,\eta_{F,e}^{\mathrm{ridge}}
\le
L_{Y,h}.
\end{equation*}
Also, by \eqref{eq:ridge_target_lipschitz},
\begin{align*}
\left\|
\vartheta_h^\star(f_{h+1})-\vartheta_h^\star(\bar f_{h+1})
\right\|_{\Lambda_{W,h,e}^-}
&\le
\left\|
\vartheta_h^\star(f_{h+1})-\vartheta_h^\star(\bar f_{h+1})
\right\|_2
\sqrt{\|\Lambda_{W,h,e}^-\|_{\mathrm{op}}}
\\
&\le
L_{\star,h}\eta_{F,e}^{\mathrm{ridge}}
\sqrt{\lambda+\kappa_{W,h}^2N_e^-}
\\
&\le
(\sqrt{\lambda}+\kappa_{W,h})L_{\star,h}.
\end{align*}
Combining these two displays with the net-point self-normalized bound and the triangle inequality
proves \eqref{eq:ridge_selfnorm} with \eqref{eq:ridge_beta}.
\end{proof}

\begin{corollary}[Ridge operator transfer under feature excitation]
\label{cor:ridge_gram_operator}
Assume Assumption~\ref{asmp:linear_ridge_dual} and the warm-start condition
\eqref{eq:feature_gram_warm}. Then with probability at least
\(
1-\delta
\),
simultaneously for all stages
\(
h\in[H]
\),
epochs
\(
e\in[M]
\),
continuation functions
\(
f_{h+1}\in\Fcal_{h+1}
\),
and executed episodes
\(
k\in\mathcal E_e
\),
\begin{equation}
\left\|
\widehat{\Tcal}_{h,f_{h+1}}^{\sigma,(e)}f_{h+1}
- \Tcal_h^\sigma f_{h+1}
\right\|_{1,\widetilde\mu_h^k}
\leq
\bar\rho_{h,e}^{\mathrm{ridge}}(\delta),
\label{eq:ridge_operator_transfer}
\end{equation}
where
\begin{equation*}
\bar\rho_{h,e}^{\mathrm{ridge}}(\delta)
:=
\frac{\kappa_{W,h}}{
\sqrt{\lambda+\nu_h N_e^-}
}
\,
\beta_{h,e}^{\mathrm{ridge}}(\delta),
\qquad
\nu_h:=\gamma_{\min}\omega_h/2.
\end{equation*}
\end{corollary}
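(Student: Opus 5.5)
The plan is to chain three earlier results on a single intersected good event: the high-probability Gram lower bound (Proposition~\ref{prop:feature_gram_hp}), the uniform self-normalized ridge bound (Proposition~\ref{prop:ridge_selfnorm}), and the deterministic transfer inequality (Corollary~\ref{cor:gram_operator_transfer}). First, invoke Proposition~\ref{prop:feature_gram_hp} at confidence $\delta/2$. Under the warm-start condition \eqref{eq:feature_gram_warm}, this places us on $\Omega_{h,e}^{\mathrm{Gram}}(\nu_h)$ with $\nu_h=\gamma_{\min}\omega_h/2$ simultaneously for all $(h,e)$. Second, invoke Proposition~\ref{prop:ridge_selfnorm} at confidence $\delta/2$. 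This gives $\|\widehat\vartheta_{h,e}(f_{h+1})-\vartheta_h^\star(f_{h+1})\|_{\Lambda_{W,h,e}^-}\le\beta_{h,e}^{\mathrm{ridge}}$ uniformly over $h$, $e$, and $f_{h+1}\in\Fcal_{h+1}$. A union bound puts both events together with probability at least $1-\delta$. Splitting $\delta$ only changes the logarithm inside $\beta^{\mathrm{ridge}}$ by a constant, which I would either absorb or handle by stating $\beta$ at $\delta/2$.

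On this intersection, fix $h$, $e$, $f_{h+1}$, and $k\in\mathcal E_e$. By \eqref{eq:ridge_operator_gap} in Assumption~\ref{asmp:linear_ridge_dual}, the epochwise operator gap $e_{h,e}(f)$ has exactly the witness-feature representation required by Corollary~\ref{cor:gram_operator_transfer}, with $\vartheta_{h,e}(f)=\widehat\vartheta_{h,e}(f_{h+1})-\vartheta_h^\star(f_{h+1})$. That corollary is deterministic on the Gram event. It holds for every current occupancy $\widetilde\mu_h^k$ in epoch $e$, since Proposition~\ref{prop:feature_gram}(ii) only uses the feature bound $\kappa_{W,h}$ and the eigenvalue lower bound on $\Lambda_{W,h,e}^-$. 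Applying it gives
\[
\|e_{h,e}(f)\|_{1,\widetilde\mu_h^k}\le \frac{\kappa_{W,h}}{\sqrt{\lambda+\nu_hN_e^-}}\,\|\vartheta_{h,e}(f)\|_{\Lambda_{W,h,e}^-},
\]
and substituting the ridge bound yields \eqref{eq:ridge_operator_transfer}.

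The main point needing care is measurability and uniformity, not any computation. The frozen dual fit uses only pre-epoch data, so $\Lambda_{W,h,e}^-$ and $\widehat\vartheta_{h,e}$ are fixed before any episode in $\mathcal E_e$ is played. The ridge bound is already uniform over the whole value class, so a data-dependent choice of $f_{h+1}$, such as the optimistic $f^e$, is covered without a further union bound. I would also note that Proposition~\ref{prop:feature_gram_hp} requires the pointwise excitation \eqref{eq:feature_excitation_design}, which I take as part of the warm-start hypothesis. If only the predictable-budget version is available, I would instead use Proposition~\ref{prop:predictive_excitation_budget} with $\nu_h$ replaced by the corresponding $\underline\nu_h/2$.
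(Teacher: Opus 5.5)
Your proposal is correct and follows essentially the same route as the paper: the paper also places itself on the Gram event from Proposition~\ref{prop:feature_gram_hp}, reads off the operator-gap parameter from \eqref{eq:ridge_operator_gap}, bounds it in the pre-epoch Gram norm via Proposition~\ref{prop:ridge_selfnorm}, and concludes with Corollary~\ref{cor:gram_operator_transfer}. Your explicit $\delta/2$ union bound is bookkeeping the paper leaves implicit; strictly, it also shifts the logarithm in the warm-start threshold \eqref{eq:feature_gram_warm} from $2d_{W,h}HM/\delta$ to $4d_{W,h}HM/\delta$, not just the one inside $\beta^{\mathrm{ridge}}$.
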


\begin{proof}
Under \eqref{eq:feature_gram_warm}, Proposition~\ref{prop:feature_gram_hp} gives the Gram event
\(
\Omega_{h,e}^{\mathrm{Gram}}(\nu_h)
\)
with
\(
\nu_h=\gamma_{\min}\omega_h/2
\).
Assumption~\ref{asmp:linear_ridge_dual} identifies the operator-gap parameter in
\eqref{eq:ridge_operator_gap}, and Proposition~\ref{prop:ridge_selfnorm} controls that parameter in
the pre-epoch Gram norm. Applying Corollary~\ref{cor:gram_operator_transfer} with
\(
\vartheta_{h,e}(f_{h+1})
=
\widehat\vartheta_{h,e}(f_{h+1})-\vartheta_h^\star(f_{h+1})
\)
therefore yields \eqref{eq:ridge_operator_transfer}.
\end{proof}

\begin{corollary}[Warm-start-free ridge operator transfer under predictable excitation]
\label{cor:predictive_ridge_operator}
Assume Assumption~\ref{asmp:linear_ridge_dual} and the conditions of
Proposition~\ref{prop:predictive_excitation_budget}. Then with probability at least
\(
1-\delta
\),
simultaneously for all stages
\(
h\in[H]
\),
epochs
\(
e\in[M]
\),
continuation functions
\(
f_{h+1}\in\Fcal_{h+1}
\),
and executed episodes
\(
k\in\mathcal E_e
\),
\begin{equation}
\left\|
\widehat{\Tcal}_{h,f_{h+1}}^{\sigma,(e)}f_{h+1}
- \Tcal_h^\sigma f_{h+1}
\right\|_{1,\widetilde\mu_h^k}
\leq
\bar\rho_{h,e}^{\mathrm{pred}}(\delta),
\label{eq:predictive_ridge_operator}
\end{equation}
where
\begin{equation*}
\begin{aligned}
\bar\rho_{h,e}^{\mathrm{pred}}(\delta)
&:=
\frac{\kappa_{W,h}}{\sqrt{\underline\Lambda_{h,e}^{\mathrm{pred}}(\delta)}}
\,
\beta_{h,e}^{\mathrm{ridge}}(\delta),
\\
\underline\Lambda_{h,e}^{\mathrm{pred}}(\delta)
&:=
\lambda
+
\Bigl[
\Omega_{h,e}^{\mathrm{pred}}
-
4\kappa_{W,h}^2
\sqrt{
N_e^-
\log\!\bigl(2d_{W,h}HM/\delta\bigr)
}
\Bigr]_+.
\end{aligned}
\end{equation*}
\end{corollary}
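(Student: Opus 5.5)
The plan is to mirror the proof of Corollary~\ref{cor:ridge_gram_operator}, replacing the warm-start Gram event by the predictable-excitation Gram bound of Proposition~\ref{prop:predictive_excitation_budget}. Two high-probability events are needed, each at level $\delta/2$ (the constant loss being absorbed into the logarithms, or the statement read with $\delta$ reallocated). The first is the event of Proposition~\ref{prop:predictive_excitation_budget}. It gives, simultaneously for all $(h,e)$,
\[
\Lambda_{W,h,e}^-\succeq\Bigl(\lambda+\Omega_{h,e}^{\mathrm{pred}}-4\kappa_{W,h}^2\sqrt{N_e^-\log(2d_{W,h}HM/\delta)}\Bigr)I_{d_{W,h}}.
\]
Since also $\Lambda_{W,h,e}^-\succeq\lambda I_{d_{W,h}}$ trivially, this can be upgraded to $\Lambda_{W,h,e}^-\succeq\underline\Lambda_{h,e}^{\mathrm{pred}}(\delta)I_{d_{W,h}}$, with the positive part inserted. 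The second is the event of Proposition~\ref{prop:ridge_selfnorm}. It bounds $\|\widehat\vartheta_{h,e}(f_{h+1})-\vartheta_h^\star(f_{h+1})\|_{\Lambda_{W,h,e}^-}\le\beta_{h,e}^{\mathrm{ridge}}(\delta)$ uniformly over $h$, $e$ and $f_{h+1}\in\Fcal_{h+1}$.

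On the intersection, I fix $h$, $e$, $f_{h+1}$ and an executed $k\in\mathcal E_e$. By \eqref{eq:ridge_operator_gap}, the operator gap is the linear witness $x\mapsto\langle\vartheta,\phi_{W,h}(x)\rangle$ with $\vartheta=\widehat\vartheta_{h,e}(f_{h+1})-\vartheta_h^\star(f_{h+1})$. I then repeat the argument of Proposition~\ref{prop:feature_gram}(ii) and Corollary~\ref{cor:gram_operator_transfer}, but with the Gram lower bound $\underline\Lambda_{h,e}^{\mathrm{pred}}(\delta)$ in place of $\lambda+\nu_hN_e^-$. Cauchy--Schwarz gives $\|e\|_{1,\widetilde\mu_h^k}\le\sqrt{\EE_{\widetilde\mu_h^k}[\langle\vartheta,\phi_{W,h}\rangle^2]}\le\kappa_{W,h}\|\vartheta\|_2$. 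The eigenvalue bound gives $\|\vartheta\|_2^2\le\|\vartheta\|_{\Lambda_{W,h,e}^-}^2/\underline\Lambda_{h,e}^{\mathrm{pred}}(\delta)$. Combining the two with the ridge bound yields \eqref{eq:predictive_ridge_operator}. This step uses no property of $\widetilde\mu_h^k$ beyond $\|\phi_{W,h}\|_2\le\kappa_{W,h}$, so the bound is uniform in $k\in\mathcal E_e$ automatically.

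The only delicate point is legitimacy rather than computation. Proposition~\ref{prop:feature_gram}(ii) was stated on the event $\Omega^{\mathrm{Gram}}_{h,e}(\nu_h)$, so I would re-derive its one-line argument with an arbitrary scalar lower bound instead of citing it. Positivity of $\underline\Lambda_{h,e}^{\mathrm{pred}}(\delta)$ needs $\lambda>0$, which is exactly why the positive part is taken: this makes the corollary warm-start-free. I also need both events to hold simultaneously over $(h,e,f)$. That holds because each proposition already unions over $H$ and $M$, and the ridge bound is uniform over $f$ through its net argument.
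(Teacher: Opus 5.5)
Your proposal is correct and follows the paper's own proof, which simply combines Proposition~\ref{prop:predictive_excitation_budget}, Proposition~\ref{prop:ridge_selfnorm}, and Corollary~\ref{cor:gram_operator_transfer}. Your extra care is sound and fills small points the paper leaves implicit:
\begin{itemize}
\item the positive part follows from $\Lambda_{W,h,e}^-\succeq\lambda I_{d_{W,h}}$;
\item re-deriving the transfer step with a generic eigenvalue lower bound sidesteps the requirement $\nu_h>0$ in $\Omega^{\mathrm{Gram}}_{h,e}(\nu_h)$;
\item intersecting the two events literally gives probability $1-2\delta$ with the displayed radii, so the honest fix is to reallocate $\delta/2$ to each event and adjust the constants inside the radii, rather than to claim the loss is absorbed.
\end{itemize}
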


\begin{proof}
Proposition~\ref{prop:predictive_excitation_budget} implies
\(
\Lambda_{W,h,e}^-
\succeq
\underline\Lambda_{h,e}^{\mathrm{pred}}(\delta)I_{d_{W,h}}
\)
on the stated high-probability event. Combining this with
Proposition~\ref{prop:ridge_selfnorm} and Corollary~\ref{cor:gram_operator_transfer} gives
\eqref{eq:predictive_ridge_operator}.
\end{proof}

\begin{proposition}[Projected operator transfer with an approximation remainder]
\label{prop:projected_operator_transfer}
Fix a stage
\(
h\in[H]
\),
an epoch
\(
e\in[M]
\),
and a continuation function
\(
f_{h+1}\in\Fcal_{h+1}
\).
Let
\(
\mathsf P_h:L_\infty(\Scal\times\Acal\times\Bcal)\to
L_\infty(\Scal\times\Acal\times\Bcal)
\)
be such that, for every executed episode
\(
k\in\mathcal E_e
\),
\begin{equation}
\|
\mathsf P_h u-\mathsf P_h v
\|_{1,\widetilde\mu_h^k}
\le
\|u-v\|_{1,\widetilde\mu_h^k}
\qquad
\text{for all }u,v.
\label{eq:projector_nonexpansive}
\end{equation}
Suppose the stagewise operator gap admits the decomposition
\begin{equation}
\widehat{\Tcal}_{h,f_{h+1}}^{\sigma,(e)}f_{h+1}
-
\Tcal_h^\sigma f_{h+1}
=
\zeta_{h,e}(f_{h+1})
+
r_{h,e}(f_{h+1}),
\label{eq:operator_gap_remainder}
\end{equation}
and that for every executed episode
\(
k\in\mathcal E_e
\),
\begin{equation}
\|\zeta_{h,e}(f_{h+1})\|_{1,\widetilde\mu_h^k}
\le
\bar\rho_{h,e},
\qquad
\|r_{h,e}(f_{h+1})\|_{1,\widetilde\mu_h^k}
\le
\epsilon_{\mathrm{op,proj}}.
\label{eq:operator_gap_remainder_budget}
\end{equation}
Then
\begin{equation}
\left\|
\mathsf P_h\widehat{\Tcal}_{h,f_{h+1}}^{\sigma,(e)}f_{h+1}
-
\mathsf P_h\Tcal_h^\sigma f_{h+1}
\right\|_{1,\widetilde\mu_h^k}
\le
\bar\rho_{h,e}
+
\epsilon_{\mathrm{op,proj}}
\qquad
\text{for every }k\in\mathcal E_e.
\label{eq:projected_operator_gap}
\end{equation}
\end{proposition}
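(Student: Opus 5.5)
The proof is a two-line chain: nonexpansiveness of the projector, then the triangle inequality applied to the decomposition of the operator gap. Fix an executed episode \(k\in\mathcal E_e\). Apply \eqref{eq:projector_nonexpansive} with \(u:=\widehat{\Tcal}_{h,f_{h+1}}^{\sigma,(e)}f_{h+1}\) and \(v:=\Tcal_h^\sigma f_{h+1}\). This reduces the projected gap on the left-hand side of \eqref{eq:projected_operator_gap} to the unprojected gap:
\begin{equation*}
\left\|\mathsf P_h u-\mathsf P_h v\right\|_{1,\widetilde\mu_h^k}\le \|u-v\|_{1,\widetilde\mu_h^k}.
\end{equation*}
We never use linearity of \(\mathsf P_h\), only the stated nonexpansiveness on the pair \((u,v)\).

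Next, substitute the decomposition \eqref{eq:operator_gap_remainder}, \(u-v=\zeta_{h,e}(f_{h+1})+r_{h,e}(f_{h+1})\), which holds pointwise. Since \(\|\cdot\|_{1,\widetilde\mu_h^k}\) is a seminorm (the integral of an absolute value), the triangle inequality gives
\begin{equation*}
\|u-v\|_{1,\widetilde\mu_h^k}\le \|\zeta_{h,e}(f_{h+1})\|_{1,\widetilde\mu_h^k}+\|r_{h,e}(f_{h+1})\|_{1,\widetilde\mu_h^k}.
\end{equation*}

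Finally, invoke the two budgets in \eqref{eq:operator_gap_remainder_budget} to bound the first term by \(\bar\rho_{h,e}\) and the second by \(\epsilon_{\mathrm{op,proj}}\). Because \(k\in\mathcal E_e\) was arbitrary and both hypotheses are stated for every such \(k\), the bound \eqref{eq:projected_operator_gap} holds for all executed episodes in the epoch.

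There is no real obstacle. The only point needing care is that \(u\) and \(v\) must be bounded measurable functions, so that they lie in \(L_\infty\) and the \(L_1(\widetilde\mu_h^k)\) seminorms are finite. This follows from bounded rewards, the uniform bound on the value class, and the bounded dual loss of Assumption~\ref{asmp:bounded_dual}.
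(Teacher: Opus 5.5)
Your proof is correct and is the paper's argument: nonexpansiveness of \(\mathsf P_h\) reduces the projected gap to the unprojected one, and then the decomposition \eqref{eq:operator_gap_remainder}, the triangle inequality and the two budgets in \eqref{eq:operator_gap_remainder_budget} give \eqref{eq:projected_operator_gap}. Your closing boundedness remark is harmless, though it is already implicit in the statement, since \(\mathsf P_h\) is only defined on \(L_\infty\).
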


\begin{proof}
Fix an executed episode
\(
k\in\mathcal E_e
\).
By \eqref{eq:projector_nonexpansive},
\begin{align*}
\left\|
\mathsf P_h\widehat{\Tcal}_{h,f_{h+1}}^{\sigma,(e)}f_{h+1}
-
\mathsf P_h\Tcal_h^\sigma f_{h+1}
\right\|_{1,\widetilde\mu_h^k}
&\le
\left\|
\widehat{\Tcal}_{h,f_{h+1}}^{\sigma,(e)}f_{h+1}
-
\Tcal_h^\sigma f_{h+1}
\right\|_{1,\widetilde\mu_h^k}
\\
&\le
\|\zeta_{h,e}(f_{h+1})\|_{1,\widetilde\mu_h^k}
+
\|r_{h,e}(f_{h+1})\|_{1,\widetilde\mu_h^k}
\\
&\le
\bar\rho_{h,e}
+
\epsilon_{\mathrm{op,proj}},
\end{align*}
where the second step uses \eqref{eq:operator_gap_remainder} and the triangle inequality, and the
last step uses \eqref{eq:operator_gap_remainder_budget}. This proves
\eqref{eq:projected_operator_gap}.
\end{proof}

\begin{proposition}[Epochwise / cross-fit BV calibration for linear classes]
\label{prop:epochwise_linear_bv}
Assume support control, Assumption~\ref{asmp:ref_opt},
Assumption~\ref{asmp:linear_closure}, Assumption~\ref{asmp:linear_ridge_dual}, and the warm-start
condition \eqref{eq:feature_gram_warm}. Consider the epochwise / cross-fit variant described
above, and let
\(
\mathfrak F_e
\)
denote the confidence set certified for deployment on epoch
\(
e
\).
Define
\begin{align*}
\Psi_{\mathrm{dual}}
\;:=\;
\max_{h\in[H]}
\Bigl[
R_{G,h}
+
\sqrt{\lambda}S_{G,h}
+
L_{Y,h}
+
(\sqrt{\lambda}+\kappa_{W,h})L_{\star,h}
\Bigr],
\qquad
\nu_{\min}
\;:=\;
\min_{h\in[H]} \frac{\gamma_{\min}\omega_h}{2},
\end{align*}
and
\begin{equation*}
B_{\mathrm{lin}}^{\mathrm{ridge}}
:=
B_F
+
\frac{\kappa_{\mathrm{lin}}\Psi_{\mathrm{dual}}}{\sqrt{\nu_{\min}}}.
\end{equation*}
Then there exist stagewise radii
\(
\beta_{h,e}^{\mathrm{lin}}
\)
depending only on the pre-epoch sample size
\(
N_e^-
\),
the linear dimensions, and logarithmic confidence terms such that, with probability at least
\(
1-\delta
\),
simultaneously for all epochs
\(
e\in[M]
\)
and all stages
\(
h\in[H]
\),
\begin{enumerate}[leftmargin=*]
\item the optimism reference function is retained in
\(
\mathfrak F_e
\);
\item every function deployed on epoch
\(
e
\)
satisfies an exact residual budget of order
\begin{equation}
\sum_{k\in\mathcal E_e}
\EE_{\mu_h^k}\sbr{\bigl(\delta_h^k\bigr)^2}
\leq
\widetilde O\!\left(
\bigl(B_{\mathrm{lin}}^{\mathrm{ridge}}\bigr)^2 d_{\mathrm{lin}}
\log\!\bigl(2+\kappa_{\mathrm{lin}}^2R_{\mathrm{lin}}^2K\bigr)
\right);
\label{eq:epochwise_linear_stage_budget}
\end{equation}
\item consequently Assumptions~\ref{asmp:opt_appendix} and \ref{asmp:res_appendix} hold for the
deployed epochwise sequence with
\begin{equation}
\Gamma_{\mathrm{res}}^{\mathrm{ep\text{-}lin}}(K)
\leq
\widetilde O\!\left(
H \bigl(B_{\mathrm{lin}}^{\mathrm{ridge}}\bigr)^2 d_{\mathrm{lin}}
\log\!\bigl(2+\kappa_{\mathrm{lin}}^2R_{\mathrm{lin}}^2K\bigr)
\log K
\right).
\label{eq:epochwise_linear_residual_budget}
\end{equation}
\end{enumerate}
\end{proposition}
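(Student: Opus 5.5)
The plan is to run the epochwise analogue of the backward-validation argument (Lemma~\ref{lem:bv_dev}, Proposition~\ref{prop:backward_validation}, Proposition~\ref{prop:ref_retention}), but with the two bridge terms replaced by linear, self-normalized quantities. Operator fitting now uses only pre-epoch data, and validation uses only the fresh within-epoch fold. First we fix the good event. It is the intersection of three events, each at level $\delta/3$:
\begin{itemize}
\item the Gram event of Proposition~\ref{prop:feature_gram_hp}, valid under \eqref{eq:feature_gram_warm};
\item the uniform ridge event of Proposition~\ref{prop:ridge_selfnorm};
\item a within-epoch concentration event for the frozen witness family.
\end{itemize}
On the first two events, Corollary~\ref{cor:ridge_gram_operator} (together with Corollary~\ref{cor:gram_operator_transfer} in its squared form, Proposition~\ref{prop:feature_gram}(ii)) gives, uniformly in $f$,
\[
\EE_{\mu}\bigl[e_{h,e}(f)^2\bigr]\le \kappa_{W,h}^2(\beta_{h,e}^{\mathrm{ridge}})^2/(\lambda+\nu_hN_e^-)
\]
for every within-epoch occupancy $\mu$. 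Summed over the $m_e\asymp N_e^-$ episodes of epoch $e$, this is $O(\kappa_{\mathrm{lin}}^2\Psi_{\mathrm{dual}}^2 d_{\mathrm{lin}}\log(\cdot)/\nu_{\min})$, with no growth in $k$. This is exactly the rate that Proposition~\ref{prop:online_bv_gap} lacked, and it plays the role of $\rho$.

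For the validation deviation $\xi$, we use that the frozen witnesses $\widehat\delta_{h,e}(f)$ are linear in $\phi_{W,h}$ by Assumption~\ref{asmp:linear_closure} and \eqref{eq:ridge_operator_gap}. Their parameters are measurable with respect to pre-epoch data, so on the fresh fold the squared witnesses form a fixed class, independent of the fold. We then apply Lemma~\ref{lem:bv_dev} with the linear cover of Corollary~\ref{cor:linear_cover_inst} and $n_{h,j}=1$ inside the epoch. The cleaner option, which we would prefer, is a self-normalized within-epoch statistic; either way it yields a deviation of order $B_{W}^2\sqrt{m_e\, d_{\mathrm{lin}}\log}$ plus discretization. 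We set
\[
\beta_{h,e}^{\mathrm{lin}}:=\xi_{h,e}+\rho_{h,e}.
\]
Item~1 then follows verbatim from the argument of Proposition~\ref{prop:ref_retention}: $\delta_h(f^\dagger)=0$, so $\widehat\delta=-e$, and its empirical score is bounded by $\rho+\xi$.

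For item~2, we use the epochwise analogue of Proposition~\ref{prop:backward_validation}. Any deployed $f^e\in\mathfrak F_e$ satisfies
\[
\textstyle\sum_{\text{prior folds}}\EE[\delta_h(f^e)^2]\le 2\beta+2\xi+2\rho .
\]
We then convert this into the within-epoch budget through the Gram geometry rather than the Eluder count. The exact residual $\delta_h(f^e)=\langle\theta,\phi_{W,h}\rangle$ has $\|\theta\|_{\Lambda^-_{W,h,e}}^2$ controlled by the validation sums; we bound the empirical-versus-population Gram discrepancy with matrix Azuma, as in Proposition~\ref{prop:predictive_excitation_budget}. Proposition~\ref{prop:feature_gram}(ii) then yields
\[
\textstyle\sum_{k\in\mathcal E_e}\EE_{\mu_h^k}[\delta^2]\lesssim \frac{m_e\kappa^2}{\lambda+\nu N_e^-}\,\|\theta\|^2_{\Lambda}=O\bigl((B_{\mathrm{lin}}^{\mathrm{ridge}})^2d_{\mathrm{lin}}\log\bigr),
\]
using $m_e\asymp N_e^-$. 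Here $B_F$ enters through the witness envelope of Lemma~\ref{lem:witness_envelope}. Alternatively, one can apply Corollary~\ref{cor:counting_budget} with $\epsilon=1/\sqrt K$ and Corollary~\ref{cor:linear_be2}, which produces the $\log(2+\kappa^2R^2K)$ factor.

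Item~3 is bookkeeping. Optimism (Assumption~\ref{asmp:opt_appendix}) follows from item~1 via Corollary~\ref{cor:opt_from_conf}. Summing \eqref{eq:epochwise_linear_stage_budget} over $h\in[H]$ and the $M\le\lceil\log_2 2K\rceil$ epochs gives \eqref{eq:epochwise_linear_residual_budget}, with the extra $\log K$ coming from $M$. A union bound over $(h,e)$ is already absorbed in the logarithms.

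The main obstacle is the middle step. The validation on epoch $e$ must control the residual of the function deployed on epoch $e$ under the occupancies of epoch $e$ itself, yet the validating data precede deployment. The argument therefore hinges on the Gram lower bound making pre-epoch $\Lambda$-norm control transfer uniformly to any current occupancy. If the direct route via Proposition~\ref{prop:feature_gram}(ii) gives the wrong power of $N_e^-$, the fallback is to reuse the counting route with the geometric epoch schedule. That fallback costs only the stated logarithmic factors.
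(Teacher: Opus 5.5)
The gap is in item~2, at the step you flag yourself. Your radius $\beta_{h,e}^{\mathrm{lin}}=\xi_{h,e}+\rho_{h,e}$ contains the fold deviation $\xi_{h,e}\asymp B_W^2\sqrt{m_e d_{\mathrm{lin}}\log(\cdot)}$ (unnormalized sums, $n_{h,j}=1$). So the certified bound $2\beta+2\xi+2\rho$ on the validated population sum already grows like $\sqrt{m_e}$. Passing back to $\|\theta\|_{\Lambda}^2$ by matrix Azuma adds another $4\kappa_{W,h}^2\|\theta\|_2^2\sqrt{N\log(\cdot)}$. Plugging either into $\frac{m_e\kappa^2}{\lambda+\nu N_e^-}\|\theta\|_{\Lambda}^2$ with $m_e\asymp N_e^-$ gives only $O(\sqrt{N_e^-})$ per epoch, not $\widetilde O\bigl((B_{\mathrm{lin}}^{\mathrm{ridge}})^2 d_{\mathrm{lin}}\bigr)$.

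The fallback does not rescue this. Corollary~\ref{cor:counting_budget} on the epoch block gives $2m_e\epsilon^2+\dim\cdot B^2$. Your argument can only certify the premise \eqref{eq:count_prev_small} with $\epsilon^2\ge 2\rho_{h,e}$, and $\rho_{h,e}$ is itself of order $\kappa^2\Psi_{\mathrm{dual}}^2 d_{\mathrm{lin}}/\nu$ (the squared operator gap summed over a fold of size $\asymp N^-$). Hence that term is $\gtrsim m_e d_{\mathrm{lin}}$, and $\epsilon=1/\sqrt K$ cannot be certified at all. The paper's proof obtains item~2 through exactly this counting step, with $\epsilon_{h,e}^2:=C\beta_{h,e}^{\mathrm{lin}}$ and after treating $\mathfrak F_e$ as certified on $\mathcal E_e$ itself. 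It does not account for the $2m_e\epsilon_{h,e}^2$ term either, so it cannot serve as your safety net. Your observation that the validating fold precedes deployment, so that a coverage transfer is unavoidable, is the right diagnosis.

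The missing idea is to keep the fold argument at the empirical Gram level, so that no $\sqrt{m}$ uniform-deviation term ever enters the radius. Let $G$ be the empirical witness Gram matrix of the fold that validates $f^e$; note that the fold score controls the $G$-norm, not the $\Lambda_{W,h,e}^-$-norm. Let $\Delta\vartheta(f):=\widehat\vartheta(f_{h+1})-\vartheta_h^\star(f_{h+1})$ for that fold's frozen ridge refit. By Assumption~\ref{asmp:linear_closure} and \eqref{eq:ridge_operator_gap}, the raw score is exactly $\widehat{\mathrm{BV}}(f)=\|\theta(f)-\Delta\vartheta(f)\|_G^2$. On the ridge and Gram events, pathwise and uniformly in $f$,
\[
\|\Delta\vartheta(f)\|_G^2\le\kappa^2 m_{\mathrm{fold}}(\beta^{\mathrm{ridge}})^2/(\lambda+\nu N^-)=:\rho=\widetilde O(\kappa^2\Psi_{\mathrm{dual}}^2 d_{\mathrm{lin}}/\nu).
\]
Hence $\beta:=\rho$ retains $f^\dagger$ (since $\theta(f^\dagger)=0$), and every admitted $f$ has $\|\theta(f)\|_G\le 2\sqrt\beta$. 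Now take a Gram lower bound on the fold itself, via the matrix-Azuma argument of Proposition~\ref{prop:predictive_excitation_budget} applied to the fold's episodes. Combined with $\EE_\mu[\delta^2]\le\kappa^2\|\theta\|_2^2$ for any $\mu$, it bounds the epoch budget by $(m_e/m_{\mathrm{fold}})(\kappa^2/\nu)(\lambda R_W^2+4\beta)$, which is $\widetilde O(d_{\mathrm{lin}})$ up to structural constants.

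Alternatively, validate with the self-normalized score $\|\widehat\theta(f)\|_{\Lambda_{W,h,e}^-}\le\beta_{h,e}^{\mathrm{ridge}}$, as in Proposition~\ref{prop:pureonline_linear_calibration}(iii). This gives $\sum_{k\in\mathcal E_e}\EE_{\mu_h^k}[(\delta_h^k)^2]\le 4m_e\kappa^2(\beta^{\mathrm{ridge}})^2/(\lambda+\nu N_e^-)$, which is of order $(\kappa_{\mathrm{lin}}\Psi_{\mathrm{dual}}/\sqrt{\nu_{\min}})^2 d_{\mathrm{lin}}$ up to logarithms, matching \eqref{eq:epochwise_linear_stage_budget}. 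With either repair, your items~1 and~3 go through essentially as you describe.
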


\begin{proof}
Fix a stage
\(
h
\)
and an epoch
\(
e
\).
Let
\(
\mathcal H_e^-
\)
denote the sigma-field generated by all samples collected before epoch
\(
e
\).
By construction of the epochwise / cross-fit scheme, the pre-epoch dual refit and the induced
plug-in witness family are
\(
\mathcal H_e^-
\)-measurable because they are computed only from pre-epoch data. For every
\(
k\in\mathcal E_e
\),
let
\(
\widehat{\mathrm{BV}}_{h,e,k},
\mathrm{BV}_{h,e,k}^{\mathrm{plug}},
\mathrm{BV}_{h,e,k}^{\mathrm{exact}}
\)
denote the epoch-local backward-validation statistics obtained by restricting the sums in
\eqref{eq:bv_dev}--\eqref{eq:bv_exact_bound} to the fresh prefix
\(
\{j\in\mathcal E_e:j<k\}
\)
and by using the frozen epoch-\(
e
\) plug-in operator.

Conditional on
\(
\mathcal H_e^-
\),
the only randomness in these epoch-local statistics comes from fresh within-epoch samples. The
proof of Lemma~\ref{lem:bv_dev} can therefore be replayed after restricting the sample index set to
the fresh epoch block. Concretely, fix
\(
k\in\mathcal E_e
\)
with nonempty fresh prefix
\(
\mathcal J_{e,k}:=\{j\in\mathcal E_e:j<k\}
\),
and let
\(
\{w^1,\ldots,w^M\}
\)
be an
\(
\eta_{W,e}
\)-net of the frozen epoch-\(
e
\) witness class under
\(
\|\cdot\|_\infty
\).
For a fixed net witness
\(
w^m
\)
and each
\(
j\in\mathcal J_{e,k}
\),
define
\begin{equation*}
Y_j^{(m,k)}
:=
\frac{
\bigl(w^m(X_h^j)\bigr)^2-\EE_{\mu_h^j}\sbr{(w^m)^2}
}{
|\mathcal J_{e,k}|
},
\end{equation*}
where
\(
X_h^j=(S_h^j,A_h^j,B_h^j)
\)
is the realized stage-\(
h
\) tuple in episode
\(
j
\).
Because the frozen witness family is
\(
\mathcal H_e^-
\)-measurable and the stage-\(
h
\) sample of episode
\(
j
\)
is drawn from
\(
\mu_h^j
\),
we have
\(
\EE[Y_j^{(m,k)}\mid \mathcal H_{j-1}]=0
\)
and
\(
|Y_j^{(m,k)}|\le B_F^2/|\mathcal J_{e,k}|
\).
Azuma--Hoeffding therefore gives
\begin{equation*}
\PP\!\left(
\left|
\sum_{j\in\mathcal J_{e,k}} Y_j^{(m,k)}
\right|>t
\;\middle|\;
\mathcal H_e^-
\right)
\le
2\exp\!\left(
-\frac{|\mathcal J_{e,k}|\,t^2}{2B_F^4}
\right).
\end{equation*}
Union bounding over the net witnesses, and then over the at most
\(
m_e
\)
choices of
\(
k\in\mathcal E_e
\),
shows that with the stated confidence allocation,
\begin{equation*}
\max_{k\in\mathcal E_e}
\max_{m\in[M]}
\left|
\frac{1}{|\mathcal J_{e,k}|}
\sum_{j\in\mathcal J_{e,k}}
\bigl(w^m(X_h^j)\bigr)^2
-
\frac{1}{|\mathcal J_{e,k}|}
\sum_{j\in\mathcal J_{e,k}}
\EE_{\mu_h^j}\sbr{(w^m)^2}
\right|
\lesssim
B_F^2\sqrt{\frac{\log(Mm_e/\delta)}{m_e}}.
\end{equation*}
For an arbitrary frozen witness
\(
w
\),
choose a net point
\(
\bar w
\)
with
\(
\|w-\bar w\|_\infty\le \eta_{W,e}
\).
Exactly as in Lemma~\ref{lem:bv_dev},
\(
\|w^2-\bar w^2\|_\infty\le 2B_F\eta_{W,e}
\),
so both the empirical averages and the population averages differ by at most
\(
2B_F\eta_{W,e}
\).
Taking
\(
\eta_{W,e}\asymp m_e^{-1/2}
\)
and using Corollary~\ref{cor:linear_cover_inst} to bound
\(
\log M
\)
by
\(
\widetilde O\!\bigl(d_{\mathrm{lin}}\log(2+\kappa_{\mathrm{lin}}^2R_{\mathrm{lin}}^2K)\bigr)
\)
yields an event
\(
\Omega_{h,e}^{\mathrm{BV}}
\)
on which, simultaneously for all
\(
k\in\mathcal E_e
\)
and
\(
f\in\Fcal
\),
\begin{equation*}
\bigl|
\widehat{\mathrm{BV}}_{h,e,k}(f)-\mathrm{BV}_{h,e,k}^{\mathrm{plug}}(f)
\bigr|
\le
\bar\xi_{h,e}^{\mathrm{lin}},
\qquad
\bar\xi_{h,e}^{\mathrm{lin}}
=
\widetilde O\!\left(
B_F^2
\sqrt{
\frac{
d_{\mathrm{lin}}
\log\!\bigl(2+\kappa_{\mathrm{lin}}^2R_{\mathrm{lin}}^2K\bigr)
}{
m_e
}
}
\right).
\end{equation*}

Next, the pre-epoch dual fit is based on
\(
N_e^-
\)
episodes. Proposition~\ref{prop:ridge_selfnorm} and Corollary~\ref{cor:ridge_gram_operator} give an
event
\(
\Omega_{h,e}^{\mathrm{op}}
\)
on which, simultaneously for all
\(
k\in\mathcal E_e
\)
and
\(
f_{h+1}\in\Fcal_{h+1}
\),
\begin{equation*}
\left\|
\widehat{\Tcal}_{h,f_{h+1}}^{\sigma,(e)}f_{h+1}
- \Tcal_h^\sigma f_{h+1}
\right\|_{1,\mu_h^k}
\le
\bar\rho_{h,e}^{\mathrm{lin}},
\qquad
\bar\rho_{h,e}^{\mathrm{lin}}
=
\widetilde O\!\left(
\frac{\kappa_{\mathrm{lin}}\Psi_{\mathrm{dual}}}{\sqrt{\nu_{\min}}}
\sqrt{
\frac{
d_{\mathrm{lin}}
\log\!\bigl(2+\kappa_{\mathrm{lin}}^2R_{\mathrm{lin}}^2K\bigr)
}{
N_e^-
}
}
\right).
\end{equation*}
Because the epoch schedule is geometric,
\(
m_e\asymp N_e^-
\),
so these two radii are of the same order up to constants and logarithms.
Allocate confidence
\(
\delta/(2HM)
\)
to each of the events
\(
\Omega_{h,e}^{\mathrm{BV}}
\)
and
\(
\Omega_{h,e}^{\mathrm{op}}
\);
the logarithmic factors in the two radii absorb this choice, and a union bound over all
\(
h\in[H]
\)
and
\(
e\in[M]
\)
produces a global event of probability at least
\(
1-\delta
\)
on which all displays below hold simultaneously.

Choose
\(
\beta_{h,e}^{\mathrm{lin}}
\)
to satisfy
\(
\beta_{h,e}^{\mathrm{lin}}
\ge
\bar\xi_{h,e}^{\mathrm{lin}}+\bar\rho_{h,e}^{\mathrm{lin}}
\).
On
\(
\Omega_{h,e}^{\mathrm{BV}}\cap \Omega_{h,e}^{\mathrm{op}}
\),
every deployed
\(
f\in\mathfrak F_e
\)
satisfies the epoch-local confidence inequality
\begin{equation*}
\widehat{\mathrm{BV}}_{h,e,k}(f)\le \beta_{h,e}^{\mathrm{lin}}
\qquad
\text{for every }k\in\mathcal E_e,
\end{equation*}
because
\(
\mathfrak F_e
\)
is exactly the confidence set certified for deployment on epoch
\(
e
\)
using the fresh block
\(
\mathcal E_e
\).
Proposition~\ref{prop:backward_validation} applied to the epoch-local statistics implies that every
candidate admitted by the epoch-\(
e
\) confidence update, and in particular every deployed function on epoch
\(
e
\),
satisfies
\begin{equation*}
\mathrm{BV}_{h,e,k}^{\mathrm{exact}}(f)
\le
2\beta_{h,e}^{\mathrm{lin}}
+ 2\bar\xi_{h,e}^{\mathrm{lin}}
+ 2\bar\rho_{h,e}^{\mathrm{lin}}
\lesssim
\beta_{h,e}^{\mathrm{lin}}
\qquad
\text{for all }k\in\mathcal E_e.
\end{equation*}
The same epoch-local application of Proposition~\ref{prop:ref_retention} retains the Bellman-consistent
reference function, so
\(
f^\dagger\in\mathfrak F_e
\),
which proves item~1.

The previous display is exactly the backward-validation premise
\eqref{eq:count_prev_small} on the epoch-restricted deployed sequence, with a threshold of order
\(
\beta_{h,e}^{\mathrm{lin}}
\).
Equivalently, for a constant
\(
C>0
\),
we may set
\(
\epsilon_{h,e}^2:=C\beta_{h,e}^{\mathrm{lin}}
\)
and obtain
\(
\sum_{j\in\mathcal E_e:\,j<k}\EE_{\mu_h^j}\sbr{(\delta_h^k)^2}\le \epsilon_{h,e}^2
\)
for every deployed
\(
k\in\mathcal E_e
\).
Corollary~\ref{cor:linear_be2} bounds the square-loss robust MG-BE dimension by
\(
\widetilde O(d_{\mathrm{lin}})
\),
and the linear envelope bound is
\(
B_{\mathrm{lin}}^{\mathrm{ridge}}
\).
Therefore Corollary~\ref{cor:counting_budget} applied on the restricted epoch block gives
\begin{equation*}
\sum_{k\in\mathcal E_e}
\EE_{\mu_h^k}\sbr{\bigl(\delta_h^k\bigr)^2}
\le
\widetilde O\!\left(
\bigl(B_{\mathrm{lin}}^{\mathrm{ridge}}\bigr)^2 d_{\mathrm{lin}}
\log\!\bigl(2+\kappa_{\mathrm{lin}}^2R_{\mathrm{lin}}^2K\bigr)
\right),
\end{equation*}
which is exactly \eqref{eq:epochwise_linear_stage_budget} and proves item~2.

Summing the last display over
\(
H
\)
stages and over the
\(
M=O(\log K)
\)
epochs of the geometric schedule proves \eqref{eq:epochwise_linear_residual_budget}. Finally, on
\(
\Omega_{h,e}^{\mathrm{op}}
\)
we have
\(
\widehat\delta_h^k=\delta_h^k-e_h^k
\)
with
\(
\|e_h^k\|_{1,\mu_h^k}\le \bar\rho_{h,e}^{\mathrm{lin}}
\)
for all
\(
k\in\mathcal E_e
\),
so the plug-in residual budget has the same order by the same
\(
(a+b)^2\le 2a^2+2b^2
\)
algebra used earlier. This is exactly Assumption~\ref{asmp:res_appendix} for the deployed
epochwise sequence, proving item~3.
\end{proof}

\begin{theorem}[Formal epochwise / cross-fit linear theorem]
\label{thm:linear_formal_appendix}
Assume support control, stable stage-game extraction, and the epochwise / cross-fit variant just
described. Suppose further that Assumptions~\ref{asmp:ref_opt},
\ref{asmp:linear_closure}, and \ref{asmp:linear_ridge_dual} hold, and that the explicit excitation
schedule satisfies
\eqref{eq:explicit_mix_rule}, \eqref{eq:feature_excitation_design}, and the warm-start condition
\eqref{eq:feature_gram_warm}. In particular, by Proposition~\ref{prop:tv_linear_closure} and
Corollary~\ref{cor:tv_linear_residual}, robust minimax completeness and residual inclusion are
concretely verified for TV-linear-mixture models with augmented linear features, while
Proposition~\ref{prop:feature_gram_hp} and Corollary~\ref{cor:ridge_gram_operator} replace the
previous occupancy-ratio constant by a feature-excitation / self-normalized operator budget. Let
\begin{align*}
d_{\mathrm{lin}}
&:=
\max_{h\in[H]}\cbr{d_{F,h},d_{G,h},d_{W,h}}, \\
\kappa_{\mathrm{lin}}
&:=
\max_{h\in[H]}\cbr{\kappa_{F,h},\kappa_{G,h},\kappa_{W,h}}, \\
R_{\mathrm{lin}}
&:=
\max_{h\in[H]}\cbr{R_{F,h},R_{G,h},R_{W,h}}, \\
B_F
&:=
\max_{h\in[H]} B_{F,h}, \\
\Psi_{\mathrm{dual}}
&:=
\max_{h\in[H]}
\cbr{
R_{G,h}
+
\sqrt{\lambda}S_{G,h}
+
L_{Y,h}
+
(\sqrt{\lambda}+\kappa_{W,h})L_{\star,h}
}, \\
\nu_{\min}
&:=
\min_{h\in[H]} \frac{\gamma_{\min}\omega_h}{2}.
\end{align*}
Also define
\begin{equation*}
B_{\mathrm{lin}}^{\mathrm{ridge}}
:=
B_F
+
\frac{\kappa_{\mathrm{lin}}\Psi_{\mathrm{dual}}}{\sqrt{\nu_{\min}}}.
\end{equation*}
Then the target-policy regret sequence generated by the epochwise / cross-fit adaptive
exploiter-mix algorithm satisfies
\begin{align}
\mathrm{Reg}(K)
&\leq
\frac{1}{1-p}
\Biggl[
\widetilde O\!\left(
H B_F
\sqrt{
d_{\mathrm{lin}}K
\log\!\bigl(2+\kappa_{\mathrm{lin}}^2R_{\mathrm{lin}}^2K\bigr)
}
\right)
\label{eq:linear_formal_main}
\\
&\qquad
+
\widetilde O\!\left(
H \frac{\kappa_{\mathrm{lin}}\Psi_{\mathrm{dual}}}{\sqrt{\nu_{\min}}}
\sqrt{
d_{\mathrm{lin}}K
\log\!\bigl(2+\kappa_{\mathrm{lin}}^2R_{\mathrm{lin}}^2K\bigr)
}
\right)
\notag\\
&\qquad
+
K\epsilon_{\mathrm{appr}}
+
HK\epsilon_{\mathrm{eq}}
+
pHK
\Biggr].
\notag
\end{align}
\end{theorem}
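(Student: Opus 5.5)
The plan is to assemble the bound from three already-proved blocks: the deterministic reduction (Theorem~\ref{thm:deterministic_reduction}), applied to the executed epochwise sequence; the epochwise calibration Proposition~\ref{prop:epochwise_linear_bv}; and the ridge operator transfer Corollary~\ref{cor:ridge_gram_operator}. Proposition~\ref{prop:mix_regret} then lifts the executed-sequence bound to the target pairs.

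First I fix a global good event of probability at least $1-\delta$. It is the intersection of three events, each allotted $\delta/3$: the Gram event of Proposition~\ref{prop:feature_gram_hp} (available by the warm-start condition \eqref{eq:feature_gram_warm}), the uniform ridge event of Proposition~\ref{prop:ridge_selfnorm}, and the BV/operator events of Proposition~\ref{prop:epochwise_linear_bv}. On this event the following hold.

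\begin{itemize}
\item Item~1 of Proposition~\ref{prop:epochwise_linear_bv} retains $f^\dagger$ in every $\mathfrak F_e$. Corollary~\ref{cor:opt_from_conf} then gives Assumption~\ref{asmp:opt_appendix} with slack $\epsilon_{\mathrm{appr}}$.
\item Item~3 gives Assumption~\ref{asmp:res_appendix} with $\Gamma_{\mathrm{res}}^{\mathrm{ep\text{-}lin}}(K)=\widetilde O(H(B_{\mathrm{lin}}^{\mathrm{ridge}})^2 d_{\mathrm{lin}}\log(2+\kappa_{\mathrm{lin}}^2R_{\mathrm{lin}}^2K))$.
\item Corollary~\ref{cor:ridge_gram_operator} gives Assumption~\ref{asmp:op_appendix} with $\epsilon_{\mathrm{dual}}=0$ and $\alpha_{h,k}=\bar\rho^{\mathrm{ridge}}_{h,e}(\delta)$ for $k\in\mathcal E_e$.
\end{itemize}

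The operator term is controlled by summing over epochs. Since $\bar\rho^{\mathrm{ridge}}_{h,e}\lesssim \kappa_{\mathrm{lin}}\Psi_{\mathrm{dual}}\sqrt{d_{\mathrm{lin}}\log(\cdot)}/\sqrt{\nu_{\min}N_e^-}$ and the schedule is geometric with $m_e\asymp N_e^-$, we get
\[
\sum_{e}\sum_{k\in\mathcal E_e}\alpha_{h,k}\lesssim \sum_e \sqrt{m_e}\,\kappa_{\mathrm{lin}}\Psi_{\mathrm{dual}}\sqrt{d_{\mathrm{lin}}\log(\cdot)/\nu_{\min}}.
\]
Because $\sum_e\sqrt{m_e}=O(\sqrt K)$ for geometric epochs, summing over $H$ stages yields the second $\widetilde O$ term in \eqref{eq:linear_formal_main}.

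For the residual term, $\sqrt{KH\,\Gamma_{\mathrm{res}}}$ expands to $\widetilde O\bigl(HB_{\mathrm{lin}}^{\mathrm{ridge}}\sqrt{d_{\mathrm{lin}}K\log(\cdot)}\bigr)$. Splitting $B_{\mathrm{lin}}^{\mathrm{ridge}}=B_F+\kappa_{\mathrm{lin}}\Psi_{\mathrm{dual}}/\sqrt{\nu_{\min}}$ gives the first term plus another contribution of the second term's order. The stage-game term $HK\epsilon_{\mathrm{eq}}$ and the term $K\epsilon_{\mathrm{appr}}$ come straight from Theorem~\ref{thm:deterministic_reduction}. Since all these calculations use the executed occupancies $\widetilde\mu_h^k$ (Corollary~\ref{cor:adaptive_routeA}), this bounds $\widetilde{\mathrm{Reg}}(K)$. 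Proposition~\ref{prop:mix_regret} converts it into the target regret, producing the factor $1/(1-p)$ and the additive $pHK$.

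The main obstacle is the first epoch or epochs, where $N_e^-$ is small or zero. There the warm-start condition and $\bar\rho$ are undefined or vacuous. I would handle this by charging those episodes at most $H$ each; under the geometric schedule this is $O(H)$ times a warm-start count. The statement as given omits this burn-in term, which suggests the theorem implicitly assumes \eqref{eq:feature_gram_warm} holds for every epoch, i.e.\ it absorbs an initial epoch into the $\widetilde O$. The burn-in cost is made explicit only in Corollary~\ref{cor:linear_formal_burnin}. I would state this reading explicitly.

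A secondary point is consistency of occupancies. Proposition~\ref{prop:epochwise_linear_bv} is written with $\mu_h^k$, and I would invoke Proposition~\ref{prop:adaptive_validity} to read it on the executed $\widetilde\mu_h^k$.
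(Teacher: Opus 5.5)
Your proposal follows essentially the same route as the paper's proof: Proposition~\ref{prop:epochwise_linear_bv} for optimism and the residual budget, Corollary~\ref{cor:ridge_gram_operator} summed over the geometric schedule for the operator budget (the paper uses $\sum_e m_e/\sqrt{N_e^-}\lesssim\sqrt K$, which is equivalent to your $\sum_e\sqrt{m_e}$), then Theorem~\ref{thm:deterministic_reduction}, and finally Proposition~\ref{prop:mix_regret}. Your burn-in remark is well taken, since \eqref{eq:feature_gram_warm} fails for the earliest epochs (trivially at $e=1$, where $N_1^-=0$) and the paper's proof silently skips them, with the cost $c_{\mathrm{ep}}HN_{\mathrm{burn}}(\delta)$ appearing only in Corollary~\ref{cor:linear_formal_burnin}; note, however, that Proposition~\ref{prop:adaptive_validity} transfers the statistical lemmas to the nominal data-collection occupancies $\widetilde\eta_h^k$, not to the worst-case occupancies $\widetilde\mu_h^k$ used in the telescope, so it does not by itself close the occupancy mismatch you raise (the paper also leaves this implicit).
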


Theorem~\ref{thm:linear_formal_appendix} is therefore closed relative to
Assumption~\ref{asmp:linear_ridge_dual}; Proposition~\ref{prop:affine_ridge_specialization}
supplies a primitive sufficient condition via affine continuation coupling, but TV-linear-mixture
closure alone does not imply that operator-learning hypothesis.

\begin{proof}
Proposition~\ref{prop:epochwise_linear_bv} verifies both approximate optimism and the residual
budget for the epochwise deployed sequence. In particular, Assumption~\ref{asmp:res_appendix} holds
with
\(
\Gamma_{\mathrm{res}}(K)=\Gamma_{\mathrm{res}}^{\mathrm{ep\text{-}lin}}(K)
\),
so Theorem~\ref{thm:deterministic_reduction} contributes the residual term
\begin{equation}
\sqrt{KH\,\Gamma_{\mathrm{res}}^{\mathrm{ep\text{-}lin}}(K)}
\leq
\widetilde O\!\left(
H B_{\mathrm{lin}}^{\mathrm{ridge}}
\sqrt{
d_{\mathrm{lin}}K
\log\!\bigl(2+\kappa_{\mathrm{lin}}^2R_{\mathrm{lin}}^2K\bigr)
}
\right),
\label{eq:linear_formal_be}
\end{equation}
where the extra epoching factor is hidden in
\(
\widetilde O(\cdot)
\).

For the operator-learning term, Corollary~\ref{cor:ridge_gram_operator} gives a
self-normalized epochwise radius of order
\(
\kappa_{\mathrm{lin}}\Psi_{\mathrm{dual}}
\sqrt{d_{\mathrm{lin}}/( \nu_{\min}N_e^-)}
\)
up to logarithmic factors. Summing the resulting epochwise operator budgets over a geometric schedule
gives
\begin{equation}
\sum_{k=1}^K\sum_{h=1}^H \alpha_{h,k}
\leq
\widetilde O\!\left(
H \frac{\kappa_{\mathrm{lin}}\Psi_{\mathrm{dual}}}{\sqrt{\nu_{\min}}}
\sqrt{
d_{\mathrm{lin}}K
\log\!\bigl(2+\kappa_{\mathrm{lin}}^2R_{\mathrm{lin}}^2K\bigr)
}
\right)
\label{eq:linear_formal_operator}
\end{equation}
because geometric epochs still satisfy the same
\(
\sum_e m_e/\sqrt{N_e^-}\lesssim \sqrt K
\)
 summability as the pure online sequence.

Applying Proposition~\ref{prop:epochwise_linear_bv}, Theorem~\ref{thm:deterministic_reduction}, and
then Proposition~\ref{prop:mix_regret} proves the executed-pair bound and finally transfers it to
the target policy sequence with the
\(
1/(1-p)
\)
 multiplicative factor and
\(
pHK/(1-p)
\)
 additive exploiter cost. Substituting \eqref{eq:linear_formal_be} and
\eqref{eq:linear_formal_operator} yields \eqref{eq:linear_formal_main}.
\end{proof}

\begin{corollary}[Warm-start-free epochwise linear theorem with burn-in]
\label{cor:linear_formal_burnin}
Assume the setting of Theorem~\ref{thm:linear_formal_appendix}, except that the explicit
warm-start condition \eqref{eq:feature_gram_warm} is replaced by the following predictable
excitation budget: for every stage
\(
h\in[H]
\)
and epoch
\(
e\in[M]
\),
\begin{equation}
\Omega_{h,e}^{\mathrm{pred}}
\ge
\underline\nu_h N_e^-,
\label{eq:predictive_linear_growth}
\end{equation}
where
\(
\Omega_{h,e}^{\mathrm{pred}}
\)
is defined in \eqref{eq:predictive_excitation_budget} and
\(
\underline\nu_h>0
\)
is stage dependent. Define
\begin{equation*}
N_{\mathrm{burn}}(\delta)
:=
\max_{h\in[H]}
\left\lceil
\frac{
64\kappa_{W,h}^4
\log\!\bigl(2d_{W,h}HM/\delta\bigr)
}{
\underline\nu_h^2
}
\right\rceil,
\qquad
\underline\nu_{\min}
:=
\min_{h\in[H]}\frac{\underline\nu_h}{2}.
\end{equation*}
Let
\(
c_{\mathrm{ep}}
\)
be the epoch-schedule constant such that the cumulative number of deployed episodes before the
first epoch with
\(
N_e^-\ge n
\)
is at most
\(
c_{\mathrm{ep}}n
\)
for every
\(
n\ge 1
\).
Then the target-policy regret satisfies the same bound as
Theorem~\ref{thm:linear_formal_appendix}, with
\(
\nu_{\min}
\)
replaced by
\(
\underline\nu_{\min}
\),
plus the additive burn-in term
\begin{equation}
c_{\mathrm{ep}} H N_{\mathrm{burn}}(\delta).
\label{eq:linear_burnin_penalty}
\end{equation}
\end{corollary}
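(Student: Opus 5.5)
The proof splits the epoch sequence at the first epoch whose pre-epoch sample size is large enough to run the warm-started argument, and charges everything before that epoch crudely. Let $e_\star$ be the first epoch with $N_{e_\star}^-\ge N_{\mathrm{burn}}(\delta)$. By the definition of $c_{\mathrm{ep}}$, the episodes deployed in epochs $e<e_\star$ number at most $c_{\mathrm{ep}}N_{\mathrm{burn}}(\delta)$. All robust values lie in $[0,H]$, so each such episode contributes at most $H$ to $\mathrm{Reg}(K)$. This gives exactly the additive penalty \eqref{eq:linear_burnin_penalty}, with no statistical argument required.

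For epochs $e\ge e_\star$, I will rebuild the Gram event that Theorem~\ref{thm:linear_formal_appendix} obtained from \eqref{eq:feature_gram_warm}. The premise \eqref{eq:predictive_linear_growth} feeds into Proposition~\ref{prop:predictive_excitation_budget}. On its event, of probability $1-\delta$ uniformly over $h,e$, we get
\[
\Lambda_{W,h,e}^-\succeq\bigl(\lambda+\underline\nu_hN_e^--4\kappa_{W,h}^2\sqrt{N_e^-\log(2d_{W,h}HM/\delta)}\bigr)I_{d_{W,h}}.
\]
Whenever $N_e^-\ge 64\kappa_{W,h}^4\log(2d_{W,h}HM/\delta)/\underline\nu_h^2$, the deviation term is at most $\underline\nu_hN_e^-/2$. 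This is the same arithmetic as in Proposition~\ref{prop:feature_gram_hp}, using $\sqrt{N_e^-}\ge 8\kappa_{W,h}^2\sqrt{\log(\cdot)}/\underline\nu_h$. Hence $\Omega^{\mathrm{Gram}}_{h,e}(\underline\nu_h/2)$ holds for every $h$ and every $e\ge e_\star$. Corollary~\ref{cor:predictive_ridge_operator} then gives the operator radius $\kappa_{W,h}\beta^{\mathrm{ridge}}_{h,e}(\delta)/\sqrt{\lambda+\underline\nu_{\min}N_e^-}$. This is exactly $\bar\rho^{\mathrm{ridge}}_{h,e}$ from Corollary~\ref{cor:ridge_gram_operator}, with $\nu_h$ replaced by a quantity at least $\underline\nu_{\min}$.

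The remaining step is to rerun the proof of Theorem~\ref{thm:linear_formal_appendix} with the sums taken only over epochs $e\ge e_\star$. Proposition~\ref{prop:epochwise_linear_bv} uses the warm start only through the operator event $\Omega^{\mathrm{op}}_{h,e}$, which is now supplied by the previous paragraph. Optimism retention and the epoch-local residual budget \eqref{eq:epochwise_linear_stage_budget} therefore hold for those epochs, with $B^{\mathrm{ridge}}_{\mathrm{lin}}$ built from $\underline\nu_{\min}$. Theorem~\ref{thm:deterministic_reduction} applies to the restricted episode set because its proof is episode-by-episode summation. The geometric summation $\sum_{e\ge e_\star}m_e/\sqrt{N_e^-}\lesssim\sqrt K$ bounds the operator term, and Proposition~\ref{prop:mix_regret} supplies the $1/(1-p)$ transfer. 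Adding the burn-in penalty from the first step yields the claim.

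The step I expect to need the most care is the probability bookkeeping. The union over the predictable-Gram event, the ridge event of Proposition~\ref{prop:ridge_selfnorm}, and the BV events must total $\delta$. The intersection of the Gram event with the deterministic condition $N_e^-\ge N_{\mathrm{burn}}$ must also give the event that Proposition~\ref{prop:epochwise_linear_bv} requires. This has to be checked without reusing the confidence budget already assigned to \eqref{eq:feature_gram_warm}; I would split $\delta$ into thirds, and the resulting constant changes are absorbed into the $\log$ factors hidden by $\widetilde O$.
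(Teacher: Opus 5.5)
Your proposal is correct and follows the paper's own proof essentially step for step. It uses the same split at the first epoch with $N_e^-\ge N_{\mathrm{burn}}(\delta)$ and the same arithmetic turning Proposition~\ref{prop:predictive_excitation_budget} and \eqref{eq:predictive_linear_growth} into $\Lambda_{W,h,e}^-\succeq(\lambda+\underline\nu_hN_e^-/2)I_{d_{W,h}}$. It then substitutes Corollary~\ref{cor:predictive_ridge_operator} for Corollary~\ref{cor:ridge_gram_operator}, reruns Theorem~\ref{thm:linear_formal_appendix} otherwise unchanged, and charges $H$ to each of the at most $c_{\mathrm{ep}}N_{\mathrm{burn}}(\delta)$ pre-warm episodes.

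The paper does no explicit $\delta$-splitting here. If you do split, the logarithm inside the explicit threshold $N_{\mathrm{burn}}(\delta)$ is not hidden by $\widetilde O$. This is harmless: for $N_e^-\ge N_{\mathrm{burn}}(\delta)$, the level-$\delta/3$ deviation in \eqref{eq:predictive_gram_hp} is still at most about $0.8\,\underline\nu_hN_e^-$. So only an absolute constant in the Gram coefficient, and hence in $\underline\nu_{\min}$, changes, not a log factor.
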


\begin{proof}
By \eqref{eq:predictive_linear_growth} and Proposition~\ref{prop:predictive_excitation_budget},
every epoch with
\(
N_e^-\ge N_{\mathrm{burn}}(\delta)
\)
satisfies
\begin{equation*}
\Lambda_{W,h,e}^-
\succeq
\left(
\lambda + \frac{\underline\nu_h N_e^-}{2}
\right)I_{d_{W,h}}
\end{equation*}
simultaneously for all stages
\(
h\in[H]
\),
because the deviation term in \eqref{eq:predictive_gram_hp} is then at most
\(
\underline\nu_h N_e^-/2
\).
Hence Corollary~\ref{cor:predictive_ridge_operator} recovers the same operator-transfer rate as
Corollary~\ref{cor:ridge_gram_operator}, but with
\(
\nu_{\min}
\)
replaced by
\(
\underline\nu_{\min}
\).
From that point onward, the proof of Theorem~\ref{thm:linear_formal_appendix} is unchanged.

Before the first warm epoch, the total number of deployed episodes is at most
\(
c_{\mathrm{ep}}N_{\mathrm{burn}}(\delta)
\)
by definition of
\(
c_{\mathrm{ep}}
\).
Each episode incurs regret at most
\(
H
\)
because the horizon is
\(
H
\)
and rewards lie in
\(
[0,1]
\).
Therefore the pre-warm contribution is bounded by
\eqref{eq:linear_burnin_penalty}, which yields the claimed warm-start-free result.
\end{proof}

\begin{proposition}[Pure-online self-normalized linear calibration]
\label{prop:pureonline_linear_calibration}
Assume support control, Assumptions~\ref{asmp:ref_opt}, \ref{asmp:linear_closure}, and
\ref{asmp:linear_ridge_dual}. Consider the singleton-epoch schedule
\(
\mathcal E_k=\{k\}
\)
for
\(
k\in[K]
\),
so
\(
M=K
\)
and
\(
N_k^-=k-1
\).
For every stage
\(
h\in[H]
\)
and episode
\(
k\ge 2
\),
define
\begin{equation*}
\Lambda_{W,h,k-1}
:=
\lambda I_{d_{W,h}}
+
\sum_{j=1}^{k-1}
\phi_{W,h}(X_h^j)\phi_{W,h}(X_h^j)^\top
\end{equation*}
and
\begin{equation*}
\begin{aligned}
\beta_{h,k}^{\mathrm{on}}(\delta)
:=\;&
R_{G,h}
\sqrt{
2d_{W,h}
\log\!\left(
1+\frac{\kappa_{W,h}^2(k-1)}{\lambda}
\right)
+
2\log\!\left(
\frac{4HK\,\mathcal N_F(\eta_{F,k}^{\mathrm{on}})}{\delta}
\right)
}
\\
&\quad
+ \sqrt{\lambda}\,S_{G,h}
+ L_{Y,h}
+ (\sqrt{\lambda}+\kappa_{W,h})L_{\star,h},
\end{aligned}
\end{equation*}
where
\(
\eta_{F,k}^{\mathrm{on}}:=((k-1)\vee 1)^{-1/2}
\).
Assume the pure-online algorithm refits the ridge dual operator after every episode, defines the
plug-in residual coordinates by
\begin{equation*}
\widehat\delta_{h,k}(f;x)
=
\langle \widehat\theta_{h,k}(f),\phi_{W,h}(x)\rangle,
\end{equation*}
and deploys only functions in the self-normalized confidence set
\begin{equation}
\mathfrak F_{\mathrm{on}}^{(k-1)}
:=
\cbr{
f\in\Fcal:
\|\widehat\theta_{h,k}(f)\|_{\Lambda_{W,h,k-1}}
\le
\beta_{h,k}^{\mathrm{on}}(\delta)
\ \text{for all }h\in[H]
}.
\label{eq:pureonline_confidence}
\end{equation}
Suppose furthermore that for every stage
\(
h\in[H]
\)
and episode
\(
k\ge 2
\),
\begin{equation}
\Omega_{h,k}^{\mathrm{pred}}
:=
\lambda_{\min}\!\left(
\sum_{j=1}^{k-1}\underline Y_j
\right)
\ge
\underline\nu_h (k-1),
\label{eq:pureonline_predictive_growth}
\end{equation}
where
\(
\underline Y_j
\)
is the predictable lower bound from Proposition~\ref{prop:predictive_excitation_budget} and
\(
\underline\nu_h>0
\).
Define
\begin{equation}
N_{\mathrm{burn}}^{\mathrm{on}}(\delta)
:=
\max_{h\in[H]}
\left\lceil
\frac{
64\kappa_{W,h}^4
\log\!\bigl(4d_{W,h}HK/\delta\bigr)
}{
\underline\nu_h^2
}
\right\rceil,
\qquad
\underline\nu_{\min}
:=
\min_{h\in[H]}\frac{\underline\nu_h}{2}.
\label{eq:pureonline_burnin}
\end{equation}
Then with probability at least
\(
1-\delta
\),
simultaneously for all episodes
\(
k\ge N_{\mathrm{burn}}^{\mathrm{on}}(\delta)+2
\),
the following hold.
\begin{enumerate}[label=(\roman*),leftmargin=*]
\item
\(
f^\dagger\in \mathfrak F_{\mathrm{on}}^{(k-1)}
\),
hence the optimistic choice over
\(
\mathfrak F_{\mathrm{on}}^{(k-1)}
\)
satisfies Assumption~\ref{asmp:opt_appendix}.
\item For every stage
\(
h\in[H]
\),
\begin{equation}
\left\|
\widehat{\Tcal}_{h,f^{(k)}}^{\sigma,(k)}f_{h+1}^{(k)}
-
\Tcal_h^\sigma f_{h+1}^{(k)}
\right\|_{1,\widetilde\mu_h^k}
\le
\alpha_{h,k}^{\mathrm{on}}(\delta)
:=
\frac{
\kappa_{W,h}\beta_{h,k}^{\mathrm{on}}(\delta)
}{
\sqrt{\lambda+\underline\nu_h(k-1)/2}
}.
\label{eq:pureonline_operator_budget}
\end{equation}
\item For every stage
\(
h\in[H]
\),
\begin{equation}
\EE_{\widetilde\mu_h^k}\sbr{(\delta_h^k)^2}
\le
\frac{
8\kappa_{W,h}^2\bigl(\beta_{h,k}^{\mathrm{on}}(\delta)\bigr)^2
}{
\underline\nu_h(k-1)
}.
\label{eq:pureonline_residual_budget}
\end{equation}
\end{enumerate}
Consequently,
\begin{equation}
\Gamma_{\mathrm{res}}^{\mathrm{on}}(K,\delta)
:=
\sum_{h=1}^H
\sum_{k=N_{\mathrm{burn}}^{\mathrm{on}}(\delta)+2}^{K}
\frac{
8\kappa_{W,h}^2\bigl(\beta_{h,k}^{\mathrm{on}}(\delta)\bigr)^2
}{
\underline\nu_h(k-1)
}
\label{eq:pureonline_gamma_res}
\end{equation}
is a valid exact residual budget for the post-burn pure-online deployed sequence.
\end{proposition}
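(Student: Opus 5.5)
The plan is to specialize the epochwise machinery to singleton epochs $\mathcal E_k=\{k\}$ and work on one global good event, the intersection of two events. The first is a self-normalized ridge event from Proposition~\ref{prop:ridge_selfnorm}, rerun with $M=K$, $N_k^-=k-1$, confidence $\delta/2$, and net scale $\eta_{F,k}^{\mathrm{on}}$. This gives $\|\widehat\vartheta_{h,k}(f_{h+1})-\vartheta_h^\star(f_{h+1})\|_{\Lambda_{W,h,k-1}}\le\beta_{h,k}^{\mathrm{on}}(\delta)$ uniformly over $h$, $k$, and $f_{h+1}\in\Fcal_{h+1}$; the factor $4HK$ inside the logarithm is exactly the $2HM$ of \eqref{eq:ridge_beta} with $\delta\to\delta/2$. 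The second is a Gram event from Proposition~\ref{prop:predictive_excitation_budget} at confidence $\delta/2$. Combined with \eqref{eq:pureonline_predictive_growth}, it yields $\Lambda_{W,h,k-1}\succeq(\lambda+\underline\nu_h(k-1)-4\kappa_{W,h}^2\sqrt{(k-1)\log(4d_{W,h}HK/\delta)})I$. For $k-1\ge N_{\mathrm{burn}}^{\mathrm{on}}(\delta)$, the deviation term is at most $\underline\nu_h(k-1)/2$, exactly as in the proof of Corollary~\ref{cor:linear_formal_burnin}. A union bound then gives probability at least $1-\delta$.

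\textbf{Item (i).} Since $f^\dagger$ is a Bellman fixed point (Assumption~\ref{asmp:ref_opt}), its exact residual vanishes. Hence the plug-in residual equals minus the operator gap, which by \eqref{eq:ridge_operator_gap} has coordinates $\widehat\theta_{h,k}(f^\dagger)=-(\widehat\vartheta_{h,k}-\vartheta_h^\star)(f^\dagger_{h+1})$. I am assuming the residual coordinates are aligned with the witness features, which Assumption~\ref{asmp:linear_closure} makes consistent. The ridge event then bounds the $\Lambda_{W,h,k-1}$-norm by $\beta^{\mathrm{on}}_{h,k}$, so $f^\dagger\in\mathfrak F_{\mathrm{on}}^{(k-1)}$. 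Optimism follows exactly as in Corollary~\ref{cor:opt_from_conf}.

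\textbf{Item (ii).} This is Corollary~\ref{cor:gram_operator_transfer} applied with the parameter $\widehat\vartheta_{h,k}-\vartheta_h^\star$ and the post-burn Gram lower bound $\lambda+\underline\nu_h(k-1)/2$. Because the ridge event is uniform in $f_{h+1}$, it applies to the data-dependent choice $f^{(k)}$, as in Corollary~\ref{cor:predictive_ridge_operator}.

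\textbf{Item (iii).} Write $\delta_h^k=\widehat\delta_h^k+e_h^k$. Both terms are linear in $\phi_{W,h}$, with coordinates $\widehat\theta_{h,k}(f^{(k)})$ and $\widehat\vartheta-\vartheta^\star$, and each has $\Lambda_{W,h,k-1}$-norm at most $\beta^{\mathrm{on}}_{h,k}$: the first by membership of $f^{(k)}$ in the confidence set, the second by the ridge event. So the exact residual coordinate vector has norm at most $2\beta^{\mathrm{on}}_{h,k}$. Applying Proposition~\ref{prop:feature_gram}(ii) with the Gram lower bound gives
\[
\EE_{\widetilde\mu_h^k}[(\delta_h^k)^2]\le \frac{\kappa_{W,h}^2\,4(\beta^{\mathrm{on}}_{h,k})^2}{\lambda+\underline\nu_h(k-1)/2}\le\frac{8\kappa_{W,h}^2(\beta^{\mathrm{on}}_{h,k})^2}{\underline\nu_h(k-1)}.
\]
Summing over $h$ and over post-burn $k$ then shows that \eqref{eq:pureonline_gamma_res} is a valid residual budget in the sense of Assumption~\ref{asmp:res_appendix}.

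\textbf{Main obstacle.} The delicate point is measurability. Prop.~\ref{prop:feature_gram}(ii) requires the bound to hold for the current occupancy even though $f^{(k)}$ and $\Lambda_{W,h,k-1}$ depend on the same past data. I expect this to be harmless because the bound is deterministic given the Gram event, namely $\|\theta\|_2^2\le\|\theta\|_\Lambda^2/\lambda_{\min}(\Lambda)$, and no concentration over the current sample is used. The other check is that the Gram and ridge union bounds over $k\le K$ are correctly allocated, which is what the $4HK$ and $4d_{W,h}HK$ factors are chosen to accommodate.
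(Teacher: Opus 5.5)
Your proposal is correct and follows the paper's proof essentially step for step. It uses the same intersection of the ridge self-normalized event and the predictable-Gram event at confidence $\delta/2$ each, and the same post-burn bound $\Lambda_{W,h,k-1}\succeq(\lambda+\underline\nu_h(k-1)/2)I_{d_{W,h}}$. For item (i) it uses the same identification $\widehat\theta_{h,k}(f^\dagger)=-(\widehat\vartheta-\vartheta_h^\star)(f^\dagger_{h+1})$, and for item (ii) Corollary~\ref{cor:predictive_ridge_operator}. For item (iii) it uses the same $2\beta_{h,k}^{\mathrm{on}}(\delta)$ triangle-inequality bound followed by the deterministic step $\|\theta\|_2^2\le\|\theta\|_{\Lambda}^2/\lambda_{\min}(\Lambda)$, which is also why your measurability concern is harmless.
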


\begin{proof}
Apply Proposition~\ref{prop:ridge_selfnorm} with the singleton schedule
\(
\mathcal E_k=\{k\}
\),
so
\(
M=K
\)
and
\(
N_k^-=k-1
\),
using confidence level
\(
\delta/2
\).
This yields, with probability at least
\(
1-\delta/2
\),
simultaneously for all stages
\(
h\in[H]
\),
episodes
\(
k\ge 2
\),
and continuation functions
\(
f_{h+1}\in\Fcal_{h+1}
\),
\begin{equation}
\left\|
\widehat\vartheta_{h,k-1}(f_{h+1})-\vartheta_h^\star(f_{h+1})
\right\|_{\Lambda_{W,h,k-1}}
\le
\beta_{h,k}^{\mathrm{on}}(\delta),
\label{eq:pureonline_ridge_event}
\end{equation}
where
\(
\widehat\vartheta_{h,k-1}
\)
is the ridge refit on the prefix
\(
\{1,\ldots,k-1\}
\).
Likewise, Proposition~\ref{prop:predictive_excitation_budget} with the same singleton schedule and
confidence level
\(
\delta/2
\)
gives, with probability at least
\(
1-\delta/2
\),
\begin{equation*}
\Lambda_{W,h,k-1}
\succeq
\Bigl(
\lambda
+
\Omega_{h,k}^{\mathrm{pred}}
-
4\kappa_{W,h}^2
\sqrt{
(k-1)\log\!\bigl(4d_{W,h}HK/\delta\bigr)
}
\Bigr)I_{d_{W,h}}.
\end{equation*}
On the intersection of these two events, if
\(
k\ge N_{\mathrm{burn}}^{\mathrm{on}}(\delta)+2
\),
then \eqref{eq:pureonline_predictive_growth} and \eqref{eq:pureonline_burnin} imply
\begin{equation}
\Lambda_{W,h,k-1}
\succeq
\left(
\lambda + \frac{\underline\nu_h(k-1)}{2}
\right)I_{d_{W,h}}.
\label{eq:pureonline_gram_lower}
\end{equation}

We first prove
\((i)\).
By Assumption~\ref{asmp:ref_opt},
the reference function satisfies
\(
\delta_h(f^\dagger)\equiv 0
\)
for every stage
\(
h
\).
Hence, by the operator-gap representation \eqref{eq:ridge_operator_gap},
\begin{equation*}
\widehat\delta_{h,k}(f^\dagger;x)
=
-
\Bigl(
\widehat{\Tcal}_{h,f^\dagger}^{\sigma,(k)}f_{h+1}^\dagger
-
\Tcal_h^\sigma f_{h+1}^\dagger
\Bigr)(x)
=
-
\left\langle
\widehat\vartheta_{h,k-1}(f_{h+1}^\dagger)-\vartheta_h^\star(f_{h+1}^\dagger),
\phi_{W,h}(x)
\right\rangle.
\end{equation*}
Therefore the plug-in residual coordinate of
\(
f^\dagger
\)
is exactly
\begin{equation*}
\widehat\theta_{h,k}(f^\dagger)
=
-
\Bigl(
\widehat\vartheta_{h,k-1}(f_{h+1}^\dagger)-\vartheta_h^\star(f_{h+1}^\dagger)
\Bigr).
\end{equation*}
Taking
\(
\Lambda_{W,h,k-1}
\)
-norms and using \eqref{eq:pureonline_ridge_event} yields
\begin{equation*}
\|\widehat\theta_{h,k}(f^\dagger)\|_{\Lambda_{W,h,k-1}}
\le
\beta_{h,k}^{\mathrm{on}}(\delta),
\end{equation*}
so
\(
f^\dagger\in\mathfrak F_{\mathrm{on}}^{(k-1)}
\)
by \eqref{eq:pureonline_confidence}. Since the algorithm chooses
\(
f^{(k)}
\)
optimistically over
\(
\mathfrak F_{\mathrm{on}}^{(k-1)}
\),
the same argument as in Corollary~\ref{cor:opt_from_conf} implies Assumption~\ref{asmp:opt_appendix}.

For
\((ii)\),
Corollary~\ref{cor:predictive_ridge_operator} specialized to the singleton schedule gives
\begin{equation*}
\left\|
\widehat{\Tcal}_{h,f^{(k)}}^{\sigma,(k)}f_{h+1}^{(k)}
-
\Tcal_h^\sigma f_{h+1}^{(k)}
\right\|_{1,\widetilde\mu_h^k}
\le
\frac{
\kappa_{W,h}\beta_{h,k}^{\mathrm{on}}(\delta)
}{
\sqrt{\lambda+\underline\nu_h(k-1)/2}
},
\end{equation*}
which is exactly \eqref{eq:pureonline_operator_budget}.

For
\((iii)\),
let
\(
\theta_h^k
\)
denote the exact residual parameter of the deployed function
\(
f^{(k)}
\),
so by Assumption~\ref{asmp:linear_closure},
\begin{equation*}
\delta_h^k(x)=\langle \theta_h^k,\phi_{W,h}(x)\rangle.
\end{equation*}
Pointwise,
\(
\delta_h^k=\widehat\delta_{h,k}(f^{(k)})+e_h^k
\),
and both terms are represented in the same witness feature map. Hence
\begin{equation*}
\theta_h^k
=
\widehat\theta_{h,k}(f^{(k)})
+
\widehat\vartheta_{h,k-1}(f_{h+1}^{(k)})-\vartheta_h^\star(f_{h+1}^{(k)}).
\end{equation*}
Because
\(
f^{(k)}\in\mathfrak F_{\mathrm{on}}^{(k-1)}
\)
and \eqref{eq:pureonline_ridge_event} holds,
\begin{equation*}
\|\theta_h^k\|_{\Lambda_{W,h,k-1}}
\le
\|\widehat\theta_{h,k}(f^{(k)})\|_{\Lambda_{W,h,k-1}}
+
\left\|
\widehat\vartheta_{h,k-1}(f_{h+1}^{(k)})-\vartheta_h^\star(f_{h+1}^{(k)})
\right\|_{\Lambda_{W,h,k-1}}
\le
2\beta_{h,k}^{\mathrm{on}}(\delta).
\end{equation*}
Using
\(
\sup_x\|\phi_{W,h}(x)\|_2\le \kappa_{W,h}
\)
and \eqref{eq:pureonline_gram_lower},
\begin{align*}
\EE_{\widetilde\mu_h^k}\sbr{(\delta_h^k)^2}
&=
\EE_{\widetilde\mu_h^k}\sbr{\langle \theta_h^k,\phi_{W,h}\rangle^2}
\le
\kappa_{W,h}^2\|\theta_h^k\|_2^2
\\
&\le
\frac{
\kappa_{W,h}^2\|\theta_h^k\|_{\Lambda_{W,h,k-1}}^2
}{
\lambda+\underline\nu_h(k-1)/2
}
\le
\frac{
4\kappa_{W,h}^2\bigl(\beta_{h,k}^{\mathrm{on}}(\delta)\bigr)^2
}{
\lambda+\underline\nu_h(k-1)/2
}
\\
&\le
\frac{
8\kappa_{W,h}^2\bigl(\beta_{h,k}^{\mathrm{on}}(\delta)\bigr)^2
}{
\underline\nu_h(k-1)
},
\end{align*}
which proves \eqref{eq:pureonline_residual_budget}. Summing this bound over all
\(
h
\)
and
\(
k\ge N_{\mathrm{burn}}^{\mathrm{on}}(\delta)+2
\)
gives \eqref{eq:pureonline_gamma_res}.
\end{proof}

\begin{theorem}[Formal pure-online self-normalized linear / finite-rank kernel theorem]
\label{thm:pureonline_linear_kernel}
Assume support control, stable stage-game extraction, and the pure-online self-normalized variant
of Proposition~\ref{prop:pureonline_linear_calibration}. Suppose further that
Assumptions~\ref{asmp:ref_opt}, \ref{asmp:linear_closure}, and
\ref{asmp:linear_ridge_dual} hold, and that the predictable growth condition
\eqref{eq:pureonline_predictive_growth} is satisfied. Define
\begin{align*}
d_{\mathrm{lin}}
&:=
\max_{h\in[H]}\cbr{d_{F,h},d_{G,h},d_{W,h}}, \\
\kappa_{\mathrm{lin}}
&:=
\max_{h\in[H]}\cbr{\kappa_{F,h},\kappa_{G,h},\kappa_{W,h}}, \\
R_{\mathrm{lin}}
&:=
\max_{h\in[H]}\cbr{R_{F,h},R_{G,h},R_{W,h}}, \\
\Psi_{\mathrm{dual}}
&:=
\max_{h\in[H]}
\cbr{
R_{G,h}
+
\sqrt{\lambda}S_{G,h}
+
L_{Y,h}
+
(\sqrt{\lambda}+\kappa_{W,h})L_{\star,h}
}.
\end{align*}
Then the target-policy regret sequence generated by the pure-online self-normalized algorithm
satisfies
\begin{equation}
\begin{aligned}
\mathrm{Reg}(K)
\le\;&
H\bigl(N_{\mathrm{burn}}^{\mathrm{on}}(\delta)+1\bigr)
\\
&\quad
+
\frac{1}{1-p}
\Biggl[
\widetilde O\!\left(
H
\kappa_{\mathrm{lin}}\Psi_{\mathrm{dual}}/\sqrt{\underline\nu_{\min}}
\sqrt{
d_{\mathrm{lin}}K
\log\!\bigl(2+\kappa_{\mathrm{lin}}^2R_{\mathrm{lin}}^2K\bigr)
}
\right)
\\
&\qquad
+
K\epsilon_{\mathrm{appr}}
+
HK\epsilon_{\mathrm{eq}}
+
pHK
\Biggr].
\end{aligned}
\label{eq:pureonline_linear_main}
\end{equation}
The same statement applies to finite-rank kernel classes after lifting them into their explicit
finite feature coordinates and interpreting
\(
d_{\mathrm{lin}},
\kappa_{\mathrm{lin}},
R_{\mathrm{lin}}
\)
in those coordinates.
\end{theorem}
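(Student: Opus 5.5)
The proof combines Proposition~\ref{prop:pureonline_linear_calibration} with the deterministic reduction of Theorem~\ref{thm:deterministic_reduction}, run on the executed sequence, and then applies the exploiter-mix conversion of Proposition~\ref{prop:mix_regret}.

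\textbf{Setup and burn-in.} Work on the probability-$(1-\delta)$ event of Proposition~\ref{prop:pureonline_linear_calibration}. Split the episodes into burn-in episodes $k\le N_{\mathrm{burn}}^{\mathrm{on}}(\delta)+1$ and post-burn episodes. Rewards lie in $[0,1]$, so every robust value lies in $[0,H]$ and each burn-in episode contributes at most $H$ to the target regret. This gives the term $H(N_{\mathrm{burn}}^{\mathrm{on}}(\delta)+1)$ outside the $1/(1-p)$ bracket. Burn-in is charged directly to the target sequence, which is legitimate because each per-episode target difference is at most $H$.

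\textbf{Executed regret after burn-in.} Item (i) supplies approximate optimism (Assumption~\ref{asmp:opt_appendix}). Lemma~\ref{lem:bellman_telescope}, applied to the executed occupancies $\widetilde\mu_h^k$ as in Corollary~\ref{cor:adaptive_routeA}, then bounds each executed per-episode gap by
\[
\epsilon_{\mathrm{appr}}+H\epsilon_{\mathrm{eq}}+\sum_h\EE_{\widetilde\mu_h^k}[\delta_h^k].
\]
Item (iii) bounds the exact residual directly, so I bypass the $\widehat\delta/e$ split and the separate operator budget of item (ii). By Jensen,
\[
\EE_{\widetilde\mu_h^k}[\delta_h^k]\le \sqrt{8}\,\kappa_{W,h}\,\beta^{\mathrm{on}}_{h,k}(\delta)\big/\sqrt{\underline\nu_h(k-1)}.
\]
The radius satisfies $\beta^{\mathrm{on}}_{h,k}(\delta)\le \Psi_{\mathrm{dual}}\sqrt{2d_{W,h}\log(1+\kappa^2K/\lambda)+2\log(4HK\mathcal N_F/\delta)}$, up to the additive constants absorbed in $\Psi_{\mathrm{dual}}$. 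Proposition~\ref{prop:linear_cover} gives $\log\mathcal N_F(\eta^{\mathrm{on}}_{F,k})\le d_{F,h}\log(1+2R_{F,h}\kappa_{F,h}\sqrt K)$, so $\beta^{\mathrm{on}}_{h,k}=\widetilde O(\Psi_{\mathrm{dual}}\sqrt{d_{\mathrm{lin}}\log(2+\kappa_{\mathrm{lin}}^2R_{\mathrm{lin}}^2K)})$ uniformly in $k$. Using $\sum_{k\le K}(k-1)^{-1/2}\le 2\sqrt K$ and summing over $h$ gives
\[
\widetilde O\bigl(H\kappa_{\mathrm{lin}}\Psi_{\mathrm{dual}}\sqrt{d_{\mathrm{lin}}K\log(\cdot)}/\sqrt{\underline\nu_{\min}}\bigr),
\]
where $\underline\nu_h=2\underline\nu_{\min}$ at worst changes constants. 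If one prefers to route through Theorem~\ref{thm:deterministic_reduction}, Cauchy--Schwarz on $\Gamma^{\mathrm{on}}_{\mathrm{res}}$ from \eqref{eq:pureonline_gamma_res} gives the same order: $\sum_k 1/(k-1)\lesssim\log K$ yields $\sqrt{KH\cdot H\kappa^2\beta^2\log K/\underline\nu}$. Item (ii) sums to the same order, so including it costs nothing.

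\textbf{Transfer to the target sequence.} Apply Proposition~\ref{prop:mix_regret} on the post-burn episodes. Its proof is per-episode, so it restricts to any index set, and it introduces the factor $1/(1-p)$ and the additive $pHK$. The remaining terms are $K\epsilon_{\mathrm{appr}}+HK\epsilon_{\mathrm{eq}}$ inside the bracket. The finite-rank kernel claim follows verbatim by reading $\phi_{W,h}$ as the lifted feature map, as in Corollary~\ref{cor:kernel_be2}.

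\textbf{Main obstacle.} The delicate step is the executed-versus-target bookkeeping. Item (iii) controls residuals under $\widetilde\mu_h^k$, the executed occupancy under the worst-case selector. The telescope must use exactly that measure for the executed pair, and the Gram lower bound must hold for whichever of the three branches is executed. I expect this to go through because \eqref{eq:gram_predictive_bound} bounds $\EE_\mu[w_\theta^2]$ by $\kappa^2\|\theta\|_2^2$ for any measure $\mu$. The bound is therefore occupancy-free, and the only genuine issue is confirming that the history-measurability conditions of Proposition~\ref{prop:adaptive_validity} hold for the per-episode refits.
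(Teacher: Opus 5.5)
Your proposal is correct and follows the paper's proof. It charges the first $N_{\mathrm{burn}}^{\mathrm{on}}(\delta)+1$ episodes at $H$ each, uses Proposition~\ref{prop:pureonline_linear_calibration} for optimism and residual control on the post-burn executed sequence, sums using $\beta^{\mathrm{on}}_{h,k}(\delta)=\widetilde O\bigl(\Psi_{\mathrm{dual}}\sqrt{d_{\mathrm{lin}}\log(2+\kappa_{\mathrm{lin}}^2R_{\mathrm{lin}}^2K)}\bigr)$, and transfers to the target sequence with Proposition~\ref{prop:mix_regret}. The only local difference is that you feed item~(iii) directly into Lemma~\ref{lem:bellman_telescope} via per-episode Jensen, whereas the paper plugs $\Gamma^{\mathrm{on}}_{\mathrm{res}}(K,\delta)$ and the item~(ii) radii into Theorem~\ref{thm:deterministic_reduction}; your version is marginally cleaner, because item~(iii) bounds the exact residual $\delta_h^k$ rather than the plug-in residual $\widehat\delta_h^k$ used in Assumption~\ref{asmp:res_appendix}, and it saves a $\sqrt{\log K}$ factor that $\widetilde O(\cdot)$ absorbs anyway.
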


\begin{proof}
Let
\(
K_0:=N_{\mathrm{burn}}^{\mathrm{on}}(\delta)+1
\).
The first
\(
K_0
\)
episodes contribute at most
\(
HK_0
\)
to regret because each episode has horizon
\(
H
\)
and rewards lie in
\(
[0,1]
\).

For the suffix
\(
\{K_0+1,\ldots,K\}
\),
Proposition~\ref{prop:pureonline_linear_calibration} verifies approximate optimism and gives the
exact residual budget
\(
\Gamma_{\mathrm{res}}^{\mathrm{on}}(K,\delta)
\)
together with the operator radii
\(
\alpha_{h,k}^{\mathrm{on}}(\delta)
\).
Since
\(
\beta_{h,k}^{\mathrm{on}}(\delta)
=
\widetilde O\!\bigl(
\Psi_{\mathrm{dual}}
\sqrt{
d_{\mathrm{lin}}
\log\!\bigl(2+\kappa_{\mathrm{lin}}^2R_{\mathrm{lin}}^2k\bigr)
}
\bigr)
\),
\eqref{eq:pureonline_gamma_res} implies
\begin{equation*}
\Gamma_{\mathrm{res}}^{\mathrm{on}}(K,\delta)
\le
\widetilde O\!\left(
H
\frac{\kappa_{\mathrm{lin}}^2\Psi_{\mathrm{dual}}^2}{\underline\nu_{\min}}
d_{\mathrm{lin}}
\log\!\bigl(2+\kappa_{\mathrm{lin}}^2R_{\mathrm{lin}}^2K\bigr)
\right),
\end{equation*}
where the extra logarithmic factor coming from
\(
\sum_{k>K_0}1/(k-1)
\)
is absorbed by
\(
\widetilde O(\cdot)
\).
Therefore the residual contribution over the suffix is
\begin{equation}
\sqrt{
KH\,\Gamma_{\mathrm{res}}^{\mathrm{on}}(K,\delta)
}
\le
\widetilde O\!\left(
H
\frac{\kappa_{\mathrm{lin}}\Psi_{\mathrm{dual}}}{\sqrt{\underline\nu_{\min}}}
\sqrt{
d_{\mathrm{lin}}K
\log\!\bigl(2+\kappa_{\mathrm{lin}}^2R_{\mathrm{lin}}^2K\bigr)
}
\right).
\label{eq:pureonline_residual_term}
\end{equation}

Likewise, using \eqref{eq:pureonline_operator_budget} and the same bound on
\(
\beta_{h,k}^{\mathrm{on}}(\delta)
\),
\begin{align}
\sum_{k=K_0+1}^K\sum_{h=1}^H \alpha_{h,k}^{\mathrm{on}}(\delta)
&\le
\widetilde O\!\left(
H
\frac{\kappa_{\mathrm{lin}}\Psi_{\mathrm{dual}}}{\sqrt{\underline\nu_{\min}}}
\sqrt{
d_{\mathrm{lin}}
}
\sum_{k=K_0+1}^K \frac{1}{\sqrt{k-1}}
\right)
\\
&\le
\widetilde O\!\left(
H
\frac{\kappa_{\mathrm{lin}}\Psi_{\mathrm{dual}}}{\sqrt{\underline\nu_{\min}}}
\sqrt{
d_{\mathrm{lin}}K
\log\!\bigl(2+\kappa_{\mathrm{lin}}^2R_{\mathrm{lin}}^2K\bigr)
}
\right).
\label{eq:pureonline_operator_term}
\end{align}

Applying the proof of Theorem~\ref{thm:deterministic_reduction} to the post-burn executed sequence,
and then Proposition~\ref{prop:mix_regret} to transfer back to the target-policy sequence, gives
\eqref{eq:pureonline_linear_main} after combining
\eqref{eq:pureonline_residual_term} and \eqref{eq:pureonline_operator_term}. The finite-rank kernel
claim is immediate because an explicit finite-rank kernel lift is a finite-dimensional linear class
in its feature coordinates, so the same argument applies verbatim after replacing the linear
features by the lifted kernel features.
\end{proof}

\begin{theorem}[Formal structured projected theorem under finite-dimensional lifts]
\label{thm:structured_projected_appendix}
Assume the setting of Theorem~\ref{thm:gfa_projected}. Suppose in addition that for every stage
\(
h\in[H]
\),
the continuation class, the dual class, and the projected residual class admit finite-dimensional
feature lifts:
\begin{align*}
\Fcal_{h+1}
&\subseteq
\cbr{
x\mapsto \langle \alpha,\varphi_{F,h+1}(x)\rangle :
\alpha\in\RR^{d_{F,h}},\ \|\alpha\|_2\le R_{F,h}
},
\\
\Gcal_h
&\subseteq
\cbr{
x\mapsto \Gamma_h(x)\vartheta :
\vartheta\in\RR^{d_{G,h}},\ \|\vartheta\|_2\le R_{G,h}
},
\\
\Pscr_h^{\mathsf P}
&\subseteq
\cbr{
x\mapsto \langle \theta,\varphi_{W,h}(x)\rangle :
\theta\in\RR^{d_{W,h}},\ \|\theta\|_2\le R_{W,h}^{\mathsf P}
},
\end{align*}
with feature-norm bounds
\(
\sup_x\|\varphi_{F,h+1}(x)\|_2\le \kappa_{F,h},
\sup_x\|\Gamma_h(x)\|_{\mathrm{op}}\le \kappa_{G,h},
\sup_x\|\varphi_{W,h}(x)\|_2\le \kappa_{W,h}
\),
and projected residual envelopes
\(
\sup_{\psi\in\Pscr_h^{\mathsf P}}\|\psi\|_\infty\le B_{W,h}^{\mathsf P}
\).
Assume moreover that:
\begin{enumerate}[label=(\roman*),leftmargin=*]
\item the coefficient-space continuation model satisfies the affine coupling of
Proposition~\ref{prop:affine_ridge_specialization}, with derived constants
\(
A_{Y,h}^{\mathsf P}
\)
and
\(
A_{\star,h}^{\mathsf P}
\);
\item the explicit mixing rule \eqref{eq:explicit_mix_rule} holds and the witness feature map
satisfies the occupancy-averaged excitation condition \eqref{eq:avg_excitation_cond} with constant
\(
\bar\omega_h
\);
\item each
\(
\mathsf P_h
\)
is
\(
L_1(\widetilde\mu_h^k)
\)-nonexpansive in the sense of \eqref{eq:projector_nonexpansive};
\item the stagewise projection bias obeys \eqref{eq:gfa_projected_bias};
\item the projected operator gap satisfies the decomposition
\eqref{eq:operator_gap_remainder}--\eqref{eq:operator_gap_remainder_budget} with remainder level
\(
\epsilon_{\mathrm{op,proj}}
\);
\item the projected backward-validation calibration assumption \eqref{eq:gfa_projected_bv} holds for
the epochwise / cross-fit sequence.
\end{enumerate}
Define
\begin{align*}
d_{\mathrm{proj}}
&:=
\max_{h\in[H]}\cbr{d_{F,h},d_{G,h},d_{W,h}},
\\
\kappa_{\mathrm{proj}}
&:=
\max_{h\in[H]}\cbr{\kappa_{F,h},\kappa_{G,h},\kappa_{W,h}},
\\
R_{\mathrm{proj}}
&:=
\max_{h\in[H]}\cbr{R_{F,h},R_{G,h},R_{W,h}^{\mathsf P}},
\\
B_{W,\mathrm{proj}}^{\mathsf P}
&:=
\max_{h\in[H]} B_{W,h}^{\mathsf P},
\\
\Psi_{\mathrm{dual}}^{\mathsf P}
&:=
\max_{h\in[H]}
\cbr{
R_{G,h}
+
\sqrt{\lambda}\bigl(\|\vartheta_{h,0}^\star\|_2 + A_{\star,h}^{\mathsf P}R_{F,h}\bigr)
+
A_{Y,h}^{\mathsf P}
+
(\sqrt{\lambda}+\kappa_{W,h})A_{\star,h}^{\mathsf P}
},
\\
\underline\nu_{\mathrm{proj}}
&:=
\min_{h\in[H]} \frac{\gamma_{\min}\bar\omega_h}{2}.
\end{align*}
Let
\begin{equation*}
N_{\mathrm{burn}}^{\mathsf P}(\delta)
:=
\max_{h\in[H]}
\left\lceil
\frac{
64\kappa_{W,h}^4
\log\!\bigl(2d_{W,h}HM/\delta\bigr)
}{
(\gamma_{\min}\bar\omega_h)^2
}
\right\rceil.
\end{equation*}
Then the target-policy regret satisfies
\begin{equation}
\begin{aligned}
\mathrm{Reg}(K)
\le\;&
H
+
c_{\mathrm{ep}}HN_{\mathrm{burn}}^{\mathsf P}(\delta)
\\
&\quad
+
\frac{1}{1-p}
\widetilde O\!\left(
H\left(
B_{W,\mathrm{proj}}^{\mathsf P}
+
\frac{\kappa_{\mathrm{proj}}\Psi_{\mathrm{dual}}^{\mathsf P}}{\sqrt{\underline\nu_{\mathrm{proj}}}}
\right)
\sqrt{
d_{\mathrm{proj}}K
\log\!\bigl(2+\kappa_{\mathrm{proj}}^2R_{\mathrm{proj}}^2K\bigr)
}
\right)
\\
&\quad
+
\frac{
K\epsilon_{\mathrm{appr}}
+
HK\epsilon_{\mathrm{eq}}
+
HK\epsilon_{\mathrm{proj}}
+
HK\epsilon_{\mathrm{op,proj}}
+
HK\epsilon_{\mathrm{dual}}
+
pHK
}{1-p}.
\end{aligned}
\label{eq:structured_projected_bound}
\end{equation}
\end{theorem}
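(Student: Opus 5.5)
The plan is to instantiate Theorem~\ref{thm:gfa_projected} (equivalently Corollary~\ref{cor:appendix_projected_main}) and to bound its two statistical terms by importing the linear machinery behind Theorem~\ref{thm:linear_formal_appendix} and Corollary~\ref{cor:linear_formal_burnin}. The argument splits into three pieces: a burn-in prefix, a residual-validation term, and an operator term; the deterministic projected reduction and the exploiter-mix transfer then assemble them.

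\textbf{Burn-in and excitation.} By hypothesis (ii) and Proposition~\ref{prop:avg_excitation_budget}, the predictable excitation budget satisfies $\Omega^{\mathrm{pred}}_{h,e}\ge \gamma_{\min}\bar\omega_h N_e^-$. This is exactly the premise \eqref{eq:predictive_linear_growth} with $\underline\nu_h=\gamma_{\min}\bar\omega_h$. Proposition~\ref{prop:predictive_excitation_budget} then yields $\Lambda^-_{W,h,e}\succeq(\lambda+\underline\nu_{\mathrm{proj}}N_e^-)I$ on the high-probability event, for every epoch with $N_e^-\ge N^{\mathsf P}_{\mathrm{burn}}(\delta)$. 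The epochs before that threshold contain at most $c_{\mathrm{ep}}N^{\mathsf P}_{\mathrm{burn}}$ episodes, each of regret at most $H$; this gives the additive $c_{\mathrm{ep}}HN^{\mathsf P}_{\mathrm{burn}}$ term. The extra $H$ accounts for the first episode, as in Theorem~\ref{thm:gfa_projected}.

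\textbf{Operator term.} By hypothesis (i) and Proposition~\ref{prop:affine_ridge_specialization}, Assumption~\ref{asmp:linear_ridge_dual} holds with $S_{G,h}=\|\vartheta^\star_{h,0}\|_2+A^{\mathsf P}_{\star,h}R_{F,h}$, $L_{Y,h}=A^{\mathsf P}_{Y,h}$ and $L_{\star,h}=A^{\mathsf P}_{\star,h}$. Consequently the constant in Proposition~\ref{prop:ridge_selfnorm} is $\Psi^{\mathsf P}_{\mathrm{dual}}$. Corollary~\ref{cor:predictive_ridge_operator} gives a post-burn-in bound on the unprojected gap term $\zeta_{h,e}$ of order $\bar\rho_{h,e}\lesssim \kappa_{\mathrm{proj}}\Psi^{\mathsf P}_{\mathrm{dual}}\sqrt{d_{\mathrm{proj}}\log(\cdot)/(\underline\nu_{\mathrm{proj}}N_e^-)}$. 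I would take the covering logs from Corollary~\ref{cor:linear_cover_inst}.

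Proposition~\ref{prop:projected_operator_transfer}, using hypotheses (iii) and (v), then controls the projected gap by $\bar\rho_{h,e}+\epsilon_{\mathrm{op,proj}}$. Summing over a geometric schedule with $\sum_e m_e/\sqrt{N_e^-}\lesssim\sqrt K$ gives $R^{\mathsf P}_{\mathrm{dual}}(K,\delta)=\widetilde O(H\kappa_{\mathrm{proj}}\Psi^{\mathsf P}_{\mathrm{dual}}\sqrt{d_{\mathrm{proj}}K/\underline\nu_{\mathrm{proj}}})$ plus $HK\epsilon_{\mathrm{op,proj}}$. I would keep $HK\epsilon_{\mathrm{dual}}$ explicit, as in Theorem~\ref{thm:gfa_projected}.

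\textbf{Residual term.} Hypothesis (vi) supplies \eqref{eq:gfa_projected_bv} epochwise. Corollary~\ref{cor:linear_be2}, applied to the lift of $\Pscr^{\mathsf P}_h$, bounds $d^{\mathsf P}_{\mathrm{GFA}}(\epsilon_{\mathrm{BV}})\le \widetilde O(d_{\mathrm{proj}}\log(2+\kappa^2_{\mathrm{proj}}R^2_{\mathrm{proj}}K))$. Corollary~\ref{cor:counting_budget}, run per epoch as in Proposition~\ref{prop:epochwise_linear_bv}, gives a per-epoch residual budget of order $(B^{\mathsf P}_{W,\mathrm{proj}})^2d_{\mathrm{proj}}\log(\cdot)$. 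The $\log K$ factor from summing over the $M=O(\log K)$ epochs is absorbed into $\widetilde O$. Cauchy--Schwarz in Theorem~\ref{thm:deterministic_reduction_projected} then gives $\widetilde O(HB^{\mathsf P}_{W,\mathrm{proj}}\sqrt{d_{\mathrm{proj}}K\log(\cdot)})$. Hypothesis (iv) is the bias budget \eqref{eq:projection_bias_budget}, which contributes $HK\epsilon_{\mathrm{proj}}$.

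Optimism follows from Corollary~\ref{cor:opt_from_conf_projected}, since hypothesis (a) of Theorem~\ref{thm:gfa_projected} is assumed. Proposition~\ref{prop:mix_regret} then supplies the factor $1/(1-p)$ and the term $pHK$.

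\textbf{Main obstacle.} The delicate step is reconciling the epoch-local calibration level $\epsilon_{\mathrm{BV}}^2$ with the counting lemma, so that $HK\epsilon^2_{\mathrm{BV}}$ does not dominate. My first attempt is to choose the epoch-dependent level $\epsilon^2_{h,e}\asymp\beta_{h,e}$, with $\beta_{h,e}$ of order $1/m_e$ up to logs. Then $\sum_e m_e\epsilon^2_{h,e}$ is only logarithmic, mirroring the treatment in Proposition~\ref{prop:epochwise_linear_bv} rather than using a single uniform $\epsilon_{\mathrm{BV}}$. Union-bounding the ridge, Gram and validation events at level $\delta/(3HM)$ closes the argument.
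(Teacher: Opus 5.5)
Your route is the paper's own. Its proof plugs the linear square-loss DE bound of Proposition~\ref{prop:linear_de2}, applied to the lifted $\Pscr_h^{\mathsf P}$, into Theorem~\ref{thm:gfa_projected}. It converts $L_{Y,h},L_{\star,h}$ through Proposition~\ref{prop:affine_ridge_specialization} and gets $\Omega^{\mathrm{pred}}_{h,e}\ge\gamma_{\min}\bar\omega_hN_e^-$ from Proposition~\ref{prop:avg_excitation_budget}. It then uses Corollary~\ref{cor:predictive_ridge_operator} after burn-in and Proposition~\ref{prop:projected_operator_transfer} for the operator bookkeeping, and borrows the burn-in accounting of Corollary~\ref{cor:linear_formal_burnin}. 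Your burn-in, operator, optimism and exploiter-mix steps match this. The step that fails is your resolution of what you call the main obstacle.

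\textbf{The calibration level is an input, not a choice.} Hypothesis (vi) is \eqref{eq:gfa_projected_bv}: a single level $\epsilon_{\mathrm{BV}}$ for the sum over all $j<k$. Nothing in the hypotheses supplies epoch-dependent thresholds $\epsilon_{h,e}^2\asymp 1/m_e$ on epoch-restricted sums.

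\textbf{Such thresholds cannot be derived with the paper's tools either.} The validation and transfer radii in Proposition~\ref{prop:epochwise_linear_bv} are $\widetilde O(m_e^{-1/2})$, not $\widetilde O(m_e^{-1})$. With thresholds of that order, $\sum_e m_e\epsilon_{h,e}^2\asymp\sum_e\sqrt{m_e}\asymp\sqrt K$. That leaves a $K^{3/4}$ residual term rather than a logarithmic budget. Squared-residual sums shrinking like $1/m_e$ are exactly the rate that Proposition~\ref{prop:online_bv_gap} says $1/\sqrt N$-type concentration does not deliver.

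\textbf{The fix is simpler.} The bound \eqref{eq:structured_projected_bound} contains no $\epsilon_{\mathrm{BV}}$, so hypothesis (vi) must be invoked at $\epsilon_{\mathrm{BV}}\asymp K^{-1/2}$. This is the same scale-$1/\sqrt K$ convention used for $d_{\mathrm{rob\text{-}MG}}$ in the main text, and it is what the paper's proof does implicitly. With this choice:
\begin{itemize}
\item Theorem~\ref{thm:gfa_projected} gives $2HK\epsilon_{\mathrm{BV}}^2=2H$ directly.
\item The dimension bound \eqref{eq:structured_projected_dimension} at this scale gives $d^{\mathsf P}_{\mathrm{GFA}}(K^{-1/2})\le 4d_{\mathrm{proj}}\log_2\bigl(2+\kappa_{\mathrm{proj}}^2R_{\mathrm{proj}}^2K\bigr)$, which is exactly the logarithm in the stated bound.
\item One global application of Corollary~\ref{cor:counting_budget} suffices. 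Running it per epoch only multiplies the dimension term by $M$.
\end{itemize}
With this replacement for the epoch-dependent levels, your argument goes through.
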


\begin{proof}
Proposition~\ref{prop:linear_de2} applied to the projected witness feature lift gives
\begin{equation}
d_{\mathrm{GFA}}^{\mathsf P}(\epsilon_{\mathrm{BV}})
\le
4\max_{h\in[H]}
\left[
d_{W,h}
\log_2\!\left(
2+\frac{\kappa_{W,h}^2 (R_{W,h}^{\mathsf P})^2}{\epsilon_{\mathrm{BV}}^2}
\right)
\right].
\label{eq:structured_projected_dimension}
\end{equation}
Proposition~\ref{prop:affine_ridge_specialization} turns
\(
L_{Y,h},L_{\star,h}
\)
into the coefficient-space operator norms
\(
A_{Y,h}^{\mathsf P},A_{\star,h}^{\mathsf P}
\).
Proposition~\ref{prop:avg_excitation_budget} yields the predictable budget
\(
\Omega_{h,e}^{\mathrm{pred}}
\ge
\gamma_{\min}\bar\omega_h N_e^-
\),
so after the burn-in threshold
\(
N_{\mathrm{burn}}^{\mathsf P}(\delta)
\)
Corollary~\ref{cor:predictive_ridge_operator} gives a stagewise self-normalized operator radius of
order
\begin{equation*}
\widetilde O\!\left(
\frac{\kappa_{\mathrm{proj}}\Psi_{\mathrm{dual}}^{\mathsf P}}{\sqrt{\underline\nu_{\mathrm{proj}}}}
\sqrt{
\frac{
d_{\mathrm{proj}}\log(2+\kappa_{\mathrm{proj}}^2R_{\mathrm{proj}}^2K)
}{
N_e^-
}
}
\right).
\end{equation*}
Proposition~\ref{prop:projected_operator_transfer} then adds the approximation remainder
\(
\epsilon_{\mathrm{op,proj}}
\),
while \eqref{eq:gfa_projected_bias} contributes the projection-bias term
\(
\epsilon_{\mathrm{proj}}
\).
Applying Theorem~\ref{thm:gfa_projected} with the dimension bound
\eqref{eq:structured_projected_dimension}, the operator budget above, and the burn-in argument from
Corollary~\ref{cor:linear_formal_burnin} proves \eqref{eq:structured_projected_bound}.
\end{proof}

\begin{theorem}[Formal RKHS spectral-truncation projected theorem]
\label{thm:rkhs_projected_appendix}
Assume the setting of Theorem~\ref{thm:gfa_projected}. For every stage
\(
h\in[H]
\),
let
\(
\Hcal_{F,h+1},
\Hcal_{G,h},
\Hcal_{W,h}
\)
be separable RKHSs on the corresponding stagewise domains, with Mercer decompositions
\begin{align*}
k_{F,h+1}(x,x')
&=
\sum_{\ell\ge 1}
\tau_{F,h+1,\ell}\,
e_{F,h+1,\ell}(x)e_{F,h+1,\ell}(x'),
\\
k_{G,h}(x,x')
&=
\sum_{\ell\ge 1}
\tau_{G,h,\ell}\,
e_{G,h,\ell}(x)e_{G,h,\ell}(x'),
\\
k_{W,h}(x,x')
&=
\sum_{\ell\ge 1}
\tau_{W,h,\ell}\,
e_{W,h,\ell}(x)e_{W,h,\ell}(x'),
\end{align*}
relative to reference measures
\(
\rho_{F,h+1},
\rho_{G,h},
\rho_{W,h}
\),
with nonincreasing eigenvalues.
For an integer truncation level
\(
m\ge 1
\),
define the rank-\(m\) Mercer feature lifts
\begin{align*}
\varphi_{F,h+1}^{(m)}(x)
&:=
\bigl(
\sqrt{\tau_{F,h+1,1}}\,e_{F,h+1,1}(x),\ldots,
\sqrt{\tau_{F,h+1,m}}\,e_{F,h+1,m}(x)
\bigr),
\\
\varphi_{G,h}^{(m)}(x)
&:=
\bigl(
\sqrt{\tau_{G,h,1}}\,e_{G,h,1}(x),\ldots,
\sqrt{\tau_{G,h,m}}\,e_{G,h,m}(x)
\bigr),
\\
\varphi_{W,h}^{(m)}(x)
&:=
\bigl(
\sqrt{\tau_{W,h,1}}\,e_{W,h,1}(x),\ldots,
\sqrt{\tau_{W,h,m}}\,e_{W,h,m}(x)
\bigr),
\end{align*}
and let
\(
\Pi_{F,h+1}^{(m)},
\Pi_{G,h}^{(m)},
\Pi_{W,h}^{(m)}
\)
be the corresponding orthogonal truncation operators onto the spans of the first
\(
m
\)
coordinates.
Assume:
\begin{enumerate}[label=(\roman*),leftmargin=*]
\item
\(
\Fcal_{h+1}\subseteq \{f\in\Hcal_{F,h+1}:\|f\|_{\Hcal_{F,h+1}}\le R_{F,h}\}
\),
\(
\Gcal_h\subseteq \{g\in\Hcal_{G,h}:\|g\|_{\Hcal_{G,h}}\le R_{G,h}\}
\),
and
\(
\Pscr_h^{\mathsf P}\subseteq \{\psi\in\Hcal_{W,h}:\|\psi\|_{\Hcal_{W,h}}\le R_{W,h}^{\mathsf P}\}
\);
\item the projected optimistic dual robust Bellman algorithm is run on the truncated classes
\begin{equation*}
\Fcal_{h+1}^{(m)}:=\Pi_{F,h+1}^{(m)}\Fcal_{h+1},
\qquad
\Gcal_h^{(m)}:=\Pi_{G,h}^{(m)}\Gcal_h,
\qquad
\Pscr_h^{\mathsf P,(m)}:=\Pi_{W,h}^{(m)}\Pscr_h^{\mathsf P};
\end{equation*}
\item after this truncation, the six structured hypotheses of
Theorem~\ref{thm:structured_projected_appendix} hold with truncation-dependent quantities
\(
A_{Y,h}^{\mathsf P,(m)},
A_{\star,h}^{\mathsf P,(m)},
B_{W,h}^{\mathsf P,(m)},
\bar\omega_h^{(m)},
\epsilon_{\mathrm{proj}}(m),
\epsilon_{\mathrm{op,proj}}(m)
\),
and with feature envelopes
\begin{equation*}
\kappa_{F,h}^{(m)}
:=
\sup_x\|\varphi_{F,h+1}^{(m)}(x)\|_2,
\qquad
\kappa_{G,h}^{(m)}
:=
\sup_x\|\varphi_{G,h}^{(m)}(x)\|_2,
\qquad
\kappa_{W,h}^{(m)}
:=
\sup_x\|\varphi_{W,h}^{(m)}(x)\|_2.
\end{equation*}
\end{enumerate}
Define
\begin{align*}
\kappa_{\mathrm{RKHS}}^{(m)}
&:=
\max_{h\in[H]}
\cbr{
\kappa_{F,h}^{(m)},
\kappa_{G,h}^{(m)},
\kappa_{W,h}^{(m)}
},
\\
R_{\mathrm{RKHS}}^{(m)}
&:=
\max_{h\in[H]}
\cbr{
R_{F,h},
R_{G,h},
R_{W,h}^{\mathsf P}
},
\\
B_{W,\mathrm{RKHS}}^{\mathsf P,(m)}
&:=
\max_{h\in[H]}B_{W,h}^{\mathsf P,(m)},
\\
\Psi_{\mathrm{dual}}^{\mathsf P,(m)}
&:=
\max_{h\in[H]}
\cbr{
R_{G,h}
+
\sqrt{\lambda}\bigl(\|\vartheta_{h,0}^\star\|_2 + A_{\star,h}^{\mathsf P,(m)}R_{F,h}\bigr)
+
A_{Y,h}^{\mathsf P,(m)}
+
(\sqrt{\lambda}+\kappa_{W,h}^{(m)})A_{\star,h}^{\mathsf P,(m)}
},
\\
\underline\nu_{\mathrm{RKHS}}^{(m)}
&:=
\min_{h\in[H]}
\frac{\gamma_{\min}\bar\omega_h^{(m)}}{2},
\\
N_{\mathrm{burn}}^{\mathrm{RKHS}}
&:=
\max_{h\in[H]}
\\
&\quad
\left\lceil
\frac{
64(\kappa_{W,h}^{(m)})^4
}{
(\gamma_{\min}\bar\omega_h^{(m)})^2
}
\log\!\bigl(2mHM/\delta\bigr)
\right\rceil.
\end{align*}
Then the target-policy regret of the rank-\(m\) projected sequence satisfies
\begin{equation}
\begin{aligned}
\mathrm{Reg}(K)
\le\;&
H
+
c_{\mathrm{ep}}H N_{\mathrm{burn}}^{\mathrm{RKHS}}
\\
&\quad
+
\frac{1}{1-p}
\widetilde O\!\Bigl(
H\Bigl(
B_{W,\mathrm{RKHS}}^{\mathsf P,(m)}
+
\kappa_{\mathrm{RKHS}}^{(m)}\Psi_{\mathrm{dual}}^{\mathsf P,(m)}
/
\sqrt{\underline\nu_{\mathrm{RKHS}}^{(m)}}
\Bigr)
\\
&\qquad\cdot\,
\sqrt{
mK
\log\!\bigl(
2+(\kappa_{\mathrm{RKHS}}^{(m)})^2(R_{\mathrm{RKHS}}^{(m)})^2K
\bigr)
}
\Bigr)
\\
&\quad
+
\frac{
K\epsilon_{\mathrm{appr}}
+
HK\epsilon_{\mathrm{eq}}
+
HK\epsilon_{\mathrm{proj}}(m)
+
HK\epsilon_{\mathrm{op,proj}}(m)
+
HK\epsilon_{\mathrm{dual}}
+
pHK
}{1-p}.
\end{aligned}
\label{eq:rkhs_projected_bound}
\end{equation}
\end{theorem}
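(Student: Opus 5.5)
The plan is to reduce the statement to Theorem~\ref{thm:structured_projected_appendix}, applied with the rank-$m$ Mercer lifts as the finite-dimensional feature maps. The truncated classes $\Fcal_{h+1}^{(m)}$, $\Gcal_h^{(m)}$ and $\Pscr_h^{\mathsf P,(m)}$ are linear classes in $\varphi^{(m)}$ coordinates, of dimension $m$. This is the same lifting step used in Corollary~\ref{cor:kernel_cover_inst} and Corollary~\ref{cor:kernel_be2}.

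\textbf{Step 1 (coefficient norms).} Write each $f\in\Hcal_{F,h+1}$ as $f=\sum_\ell c_\ell\sqrt{\tau_\ell}\,e_\ell$, so that $\|f\|_{\Hcal}^2=\sum_\ell c_\ell^2$. Truncation then gives $\Pi^{(m)}f=\langle c_{1:m},\varphi^{(m)}\rangle$ with $\|c_{1:m}\|_2\le\|f\|_{\Hcal}$. Hence assumption (i) yields linear inclusions with radii $R_{F,h}$, $R_{G,h}$ and $R_{W,h}^{\mathsf P}$, and the envelopes are $\kappa^{(m)}_{\cdot,h}$. Therefore $d_{\mathrm{proj}}=m$, $\kappa_{\mathrm{proj}}=\kappa_{\mathrm{RKHS}}^{(m)}$ and $R_{\mathrm{proj}}=R_{\mathrm{RKHS}}^{(m)}$. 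The dual class needs the operator form $\Gamma_h(x)\vartheta$; I would take $\Gamma_h(x)$ to be the block map built from $\varphi_{G,h}^{(m)}$ for the pair $(g_\eta,g_\nu)$, whose operator norm is bounded by $\kappa_{G,h}^{(m)}$.

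\textbf{Step 2 (hypotheses of Theorem~\ref{thm:structured_projected_appendix}).} Assumption (iii) supplies hypotheses (i)--(vi) with the superscript-$(m)$ constants. Item (i) gives $\Psi_{\mathrm{dual}}^{\mathsf P,(m)}$ through Proposition~\ref{prop:affine_ridge_specialization}. Item (ii) gives $\underline\nu_{\mathrm{RKHS}}^{(m)}$ through Proposition~\ref{prop:avg_excitation_budget}. Items (iii)--(v) handle the projector and the bias/remainder split, with bias terms $\epsilon_{\mathrm{proj}}(m)$ and $\epsilon_{\mathrm{op,proj}}(m)$. Item (vi) is the calibration input. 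The burn-in threshold has $\log(2d_{W,h}HM/\delta)$ with $d_{W,h}=m$, which is exactly $N_{\mathrm{burn}}^{\mathrm{RKHS}}$. Substituting into \eqref{eq:structured_projected_bound} gives \eqref{eq:rkhs_projected_bound} term by term.

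\textbf{Main obstacle.} The delicate point is that the matrix-Azuma and self-normalized constants use the truncated envelope $\kappa^{(m)}_{W,h}\le\sup_x k_{W,h}(x,x)^{1/2}$, not an $m$-dependent blow-up. I would verify this via $\|\varphi^{(m)}(x)\|_2^2=\sum_{\ell\le m}\tau_\ell e_\ell(x)^2\le k(x,x)$.

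A second issue is that the Mercer basis is orthonormal in $L_2(\rho)$ rather than with respect to the executed occupancies. I would avoid needing this: the occupancy-dependent quantities, namely excitation and nonexpansiveness, are assumed directly in (iii), so no change of measure between $\rho$ and $\widetilde\mu_h^k$ is required. The spectral-tail interpretation of $\epsilon_{\mathrm{proj}}(m)$ is only a remark, not part of the proof.
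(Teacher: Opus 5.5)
Your proposal is correct and follows the paper's proof, which is exactly this one-step reduction: the truncated classes are rank-$m$ linear lifts, so Theorem~\ref{thm:structured_projected_appendix} applies with $d_{F,h}=d_{G,h}=d_{W,h}=m$, the envelopes $\kappa^{(m)}_{\cdot,h}$, and the truncation-dependent constants, and substituting these gives \eqref{eq:rkhs_projected_bound}. Your extra checks (the coefficient radii via the Mercer basis, and $\|\varphi^{(m)}(x)\|_2^2\le k(x,x)$) are harmless but not needed, since the bound is stated directly in terms of $\kappa^{(m)}$; also, a block lift for the pair $(g_\eta,g_\nu)$ gives $d_{G,h}=2m$ rather than $m$, which only changes constants inside $\widetilde O$.
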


\begin{proof}
For every stage
\(
h
\),
the truncated classes
\(
\Fcal_{h+1}^{(m)},
\Gcal_h^{(m)},
\Pscr_h^{\mathsf P,(m)}
\)
are finite-dimensional linear lifts of dimension at most
\(
m
\),
with feature maps
\(
\varphi_{F,h+1}^{(m)},
\varphi_{G,h}^{(m)},
\varphi_{W,h}^{(m)}
\).
Hence Theorem~\ref{thm:structured_projected_appendix} applies with
\(
d_{F,h}=d_{G,h}=d_{W,h}=m
\),
feature envelopes
\(
\kappa_{F,h}^{(m)},\kappa_{G,h}^{(m)},\kappa_{W,h}^{(m)}
\),
and the truncation-dependent structural constants from the statement. Substituting these
identifications into \eqref{eq:structured_projected_bound} yields
\eqref{eq:rkhs_projected_bound}.
\end{proof}

\paragraph{Eigenvalue-decay RKHS projected consequence.}
Assume the setting of Theorem~\ref{thm:rkhs_projected_appendix}. Suppose further that there exist
constants
\(
C_{F,h},C_{G,h},C_{W,h}>0
\)
and exponents
\(
r_{F,h},r_{G,h},r_{W,h}>0
\)
such that
\begin{equation}
\begin{aligned}
\tau_{F,h,\ell}
&\le
C_{F,h}\ell^{-1-2r_{F,h}},
\\
\tau_{G,h,\ell}
&\le
C_{G,h}\ell^{-1-2r_{G,h}},
\\
\tau_{W,h,\ell}
&\le
C_{W,h}\ell^{-1-2r_{W,h}}
\qquad
\text{for all }\ell\ge 1.
\end{aligned}
\label{eq:rkhs_eig_decay}
\end{equation}
Let
\begin{equation*}
r_{\min}
:=
\min_{h\in[H]}
\cbr{
r_{F,h},r_{G,h},r_{W,h}
}.
\end{equation*}
Assume in addition that there exist constants
\(
\bar\kappa_{\mathrm{RKHS}},
\bar\Psi_{\mathrm{dual}},
\underline\nu_{\mathrm{RKHS}},
\bar B_{W,\mathrm{RKHS}}^{\mathsf P},
C_{\mathrm{tail}}>0
\)
such that for every truncation level
\(
m\ge 1
\),
\begin{align}
\kappa_{\mathrm{RKHS}}^{(m)}
&\le \bar\kappa_{\mathrm{RKHS}},
&
\Psi_{\mathrm{dual}}^{\mathsf P,(m)}
&\le \bar\Psi_{\mathrm{dual}},
\label{eq:rkhs_uniform_struct}
\\
\underline\nu_{\mathrm{RKHS}}^{(m)}
&\ge \underline\nu_{\mathrm{RKHS}},
&
B_{W,\mathrm{RKHS}}^{\mathsf P,(m)}
&\le \bar B_{W,\mathrm{RKHS}}^{\mathsf P},
\\
\epsilon_{\mathrm{proj}}(m)+\epsilon_{\mathrm{op,proj}}(m)
&\le
C_{\mathrm{tail}}
\max_{h\in[H]}
\\
&\quad
\Biggl(
R_{F,h}\sqrt{\sum_{\ell>m}\tau_{F,h,\ell}}
+
R_{G,h}\sqrt{\sum_{\ell>m}\tau_{G,h,\ell}}
\\
&\qquad
+
R_{W,h}^{\mathsf P}\sqrt{\sum_{\ell>m}\tau_{W,h,\ell}}
\Biggr).
\label{eq:rkhs_tail_assumption}
\end{align}
Then for every
\(
m\ge 1
\),
\begin{equation}
\begin{aligned}
\mathrm{Reg}(K)
\le\;&
H
+
c_{\mathrm{ep}}H N_{\mathrm{burn}}^{\mathrm{RKHS}}
\\
&\quad
+
\frac{1}{1-p}
\widetilde O\!\left(
\begin{aligned}
&
H\left(
\bar B_{W,\mathrm{RKHS}}^{\mathsf P}
+
\frac{\bar\kappa_{\mathrm{RKHS}}\bar\Psi_{\mathrm{dual}}}{\sqrt{\underline\nu_{\mathrm{RKHS}}}}
\right)
\sqrt{mK}
\\
&\qquad\times
\sqrt{
\log\!\bigl(
2+\bar\kappa_{\mathrm{RKHS}}^2(R_{\mathrm{RKHS}}^{(m)})^2K
\bigr)
}
\end{aligned}
\right)
\\
&\quad
+
\frac{
K\epsilon_{\mathrm{appr}}
+
HK\epsilon_{\mathrm{eq}}
+
HK C_{\mathrm{spec}}\,m^{-r_{\min}}
+
HK\epsilon_{\mathrm{dual}}
+
pHK
}{1-p},
\end{aligned}
\label{eq:rkhs_eig_bound}
\end{equation}
where
\begin{equation*}
\begin{aligned}
C_{\mathrm{spec}}
:=&\;
C_{\mathrm{tail}}
\max_{h\in[H]}
\Biggl(
R_{F,h}\sqrt{\frac{C_{F,h}}{2r_{F,h}}}
+
R_{G,h}\sqrt{\frac{C_{G,h}}{2r_{G,h}}}
\\
&\qquad\qquad\qquad
+
R_{W,h}^{\mathsf P}\sqrt{\frac{C_{W,h}}{2r_{W,h}}}
\Biggr).
\end{aligned}
\end{equation*}
By the integral test, for every
\(
r>0
\)
and
\(
m\ge 1
\),
\begin{equation*}
\sum_{\ell>m}\ell^{-1-2r}
\le
\int_m^\infty x^{-1-2r}\,dx
=
\frac{m^{-2r}}{2r}.
\end{equation*}
Applying this to \eqref{eq:rkhs_eig_decay} gives
\begin{align*}
\sqrt{\sum_{\ell>m}\tau_{F,h,\ell}}
&\le
\sqrt{\frac{C_{F,h}}{2r_{F,h}}}\,m^{-r_{F,h}},
\\
\sqrt{\sum_{\ell>m}\tau_{G,h,\ell}}
&\le
\sqrt{\frac{C_{G,h}}{2r_{G,h}}}\,m^{-r_{G,h}},
\\
\sqrt{\sum_{\ell>m}\tau_{W,h,\ell}}
&\le
\sqrt{\frac{C_{W,h}}{2r_{W,h}}}\,m^{-r_{W,h}}.
\end{align*}
Since each exponent is at least
\(
r_{\min}
\),
\eqref{eq:rkhs_tail_assumption} implies
\begin{equation*}
\epsilon_{\mathrm{proj}}(m)+\epsilon_{\mathrm{op,proj}}(m)
\le
C_{\mathrm{spec}}\,m^{-r_{\min}}.
\end{equation*}
Substituting this and the uniform bounds \eqref{eq:rkhs_uniform_struct} into
\eqref{eq:rkhs_projected_bound} proves \eqref{eq:rkhs_eig_bound}.

\paragraph{Finite-rank kernel projected extension with burn-in.}
Assume the setting of Theorem~\ref{thm:structured_projected_appendix}, and suppose the
finite-dimensional lifts there are induced by explicit finite-rank kernel feature maps of ranks
\(
m_{F,h},m_{G,h},m_{W,h}
\).
Let
\begin{align*}
m_{\ker}
&:=
\max_{h\in[H]}\cbr{m_{F,h},m_{G,h},m_{W,h}},
\\
\kappa_{\ker}
&:=
\max_{h\in[H]}\cbr{\kappa_{F,h},\kappa_{G,h},\kappa_{W,h}},
\\
R_{\ker}
&:=
\max_{h\in[H]}\cbr{R_{F,h},R_{G,h},R_{W,h}^{\mathsf P}},
\\
B_{W,\ker}^{\mathsf P}
&:=
\max_{h\in[H]} B_{W,h}^{\mathsf P},
\\
\Psi_{\mathrm{dual}}^{\ker}
&:=
\max_{h\in[H]}
\cbr{
R_{G,h}
+
\sqrt{\lambda}\bigl(\|\vartheta_{h,0}^\star\|_2 + A_{\star,h}^{\mathsf P}R_{F,h}\bigr)
+
A_{Y,h}^{\mathsf P}
+
(\sqrt{\lambda}+\kappa_{W,h})A_{\star,h}^{\mathsf P}
},
\\
\underline\nu_{\ker}
&:=
\min_{h\in[H]} \frac{\gamma_{\min}\bar\omega_h}{2}.
\end{align*}
Then
\begin{equation}
\begin{aligned}
\mathrm{Reg}(K)
\le\;&
H
+
c_{\mathrm{ep}}HN_{\mathrm{burn}}^{\mathsf P}(\delta)
\\
&\quad
+
\frac{1}{1-p}
\widetilde O\!\left(
H\left(
B_{W,\ker}^{\mathsf P}
+
\frac{\kappa_{\ker}\Psi_{\mathrm{dual}}^{\ker}}{\sqrt{\underline\nu_{\ker}}}
\right)
\sqrt{
m_{\ker}K
\log\!\bigl(2+\kappa_{\ker}^2R_{\ker}^2K\bigr)
}
\right)
\\
&\quad
+
\frac{
K\epsilon_{\mathrm{appr}}
+
HK\epsilon_{\mathrm{eq}}
+
HK\epsilon_{\mathrm{proj}}
+
HK\epsilon_{\mathrm{op,proj}}
+
HK\epsilon_{\mathrm{dual}}
+
pHK
}{1-p}.
\end{aligned}
\label{eq:kernel_projected_bound}
\end{equation}
Finite-rank kernel classes are finite-dimensional linear classes in feature coordinates, so
Theorem~\ref{thm:structured_projected_appendix} applies with
\(
d_{F,h}=m_{F,h}
\),
\(
d_{G,h}=m_{G,h}
\),
and
\(
d_{W,h}=m_{W,h}
\).
Substituting these identifications into \eqref{eq:structured_projected_bound} proves
\eqref{eq:kernel_projected_bound}.

\begin{proposition}[Compactly clipped $\phi$-dual losses are bounded and Lipschitz]
\label{prop:phi_clipped_dual}
Fix a stage
\(
h
\)
and consider the scalar dual integrand
\begin{equation*}
\mathfrak l_{\phi,h}(V,\eta,\nu)
:=
\eta\sigma-\nu+\eta\phi^\star\!\left(\frac{\nu-V}{\eta}\right),
\qquad
(V,\eta,\nu)\in[0,H]\times[\underline\eta_h,\overline\eta_h]\times[-\overline\nu_h,\overline\nu_h],
\end{equation*}
with
\(
0<\underline\eta_h\le \overline\eta_h
\).
Define
\begin{equation*}
M_h:=H+\overline\nu_h,
\qquad
I_h:=
\left[
-\frac{M_h}{\underline\eta_h},
\frac{M_h}{\underline\eta_h}
\right].
\end{equation*}
Assume
\(
\phi^\star
\)
is continuously differentiable on
\(
I_h
\),
and let
\begin{equation*}
B_{\phi^\star,h}:=\sup_{u\in I_h}|\phi^\star(u)|,
\quad
L_{\phi^\star,h}:=\sup_{u\in I_h}|(\phi^\star)'(u)|,
\quad
M_{\phi^\star,h}:=\sup_{u\in I_h}|u(\phi^\star)'(u)|.
\end{equation*}
Then:
\begin{enumerate}[label=(\roman*),leftmargin=*]
\item for every
\(
(V,\eta,\nu)\in[0,H]\times[\underline\eta_h,\overline\eta_h]\times[-\overline\nu_h,\overline\nu_h]
\),
\begin{equation}
\bigl|\mathfrak l_{\phi,h}(V,\eta,\nu)\bigr|
\le
\overline\eta_h\sigma+\overline\nu_h+\overline\eta_h B_{\phi^\star,h}
=:
B_{\phi,h}^{\mathrm{clip}};
\label{eq:phi_clipped_bound}
\end{equation}
\item for any two triples
\(
(V,\eta,\nu)
\)
and
\(
(\widetilde V,\widetilde\eta,\widetilde\nu)
\)
in the same domain,
\begin{equation}
\begin{aligned}
\bigl|
\mathfrak l_{\phi,h}(V,\eta,\nu)
-\mathfrak l_{\phi,h}(\widetilde V,\widetilde\eta,\widetilde\nu)
\bigr|
\le\;&
L_{\phi^\star,h}|V-\widetilde V|
+
\bigl(\sigma+B_{\phi^\star,h}+M_{\phi^\star,h}\bigr)|\eta-\widetilde\eta|
\\
&+
\bigl(1+L_{\phi^\star,h}\bigr)|\nu-\widetilde\nu|.
\end{aligned}
\label{eq:phi_clipped_lipschitz}
\end{equation}
\end{enumerate}
Consequently, any stagewise dual class whose
\(
(\eta,\nu)
\)
outputs are clipped into those intervals satisfies the boundedness and pointwise Lipschitz
requirements of Assumptions~\ref{asmp:bounded_dual} and \ref{asmp:lipschitz_dual} with explicit
geometry-dependent constants. In particular, the proposition applies to KL and
\(
\chi^2
\)
ambiguity sets because their conjugates are continuously differentiable on every compact interval.
\end{proposition}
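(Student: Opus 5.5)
This is a direct calculus argument on the scalar integrand, which I would do in three steps.

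For (i), the domain constraints give $|\nu-V|\le H+\overline\nu_h=M_h$ and $\eta\ge\underline\eta_h$. So the argument $u:=(\nu-V)/\eta$ always lies in $I_h$. The triangle inequality then bounds the three summands $\eta\sigma$, $-\nu$, and $\eta\phi^\star(u)$ by $\overline\eta_h\sigma$, $\overline\nu_h$, and $\overline\eta_h B_{\phi^\star,h}$. This gives \eqref{eq:phi_clipped_bound} at once.

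For (ii), I would pass between the two triples one coordinate at a time, using the telescoping
\[
\mathfrak l(V,\eta,\nu)-\mathfrak l(\widetilde V,\widetilde\eta,\widetilde\nu)
=[\mathfrak l(V,\eta,\nu)-\mathfrak l(\widetilde V,\eta,\nu)]
+[\mathfrak l(\widetilde V,\eta,\nu)-\mathfrak l(\widetilde V,\widetilde\eta,\nu)]
+[\mathfrak l(\widetilde V,\widetilde\eta,\nu)-\mathfrak l(\widetilde V,\widetilde\eta,\widetilde\nu)].
\]
Every intermediate triple stays in the product domain, so its argument $u$ lies in $I_h$. Each bracket is then bounded by the mean value theorem applied to the corresponding partial derivative along the segment joining the two points. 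The domain is a box and hence convex, so these segments stay inside it.

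The partial derivatives follow from $u=(\nu-V)/\eta$. In $V$, $\partial_V\mathfrak l=-(\phi^\star)'(u)$, with modulus at most $L_{\phi^\star,h}$. In $\nu$, $\partial_\nu\mathfrak l=-1+(\phi^\star)'(u)$, with modulus at most $1+L_{\phi^\star,h}$. In $\eta$, $\partial_\eta\mathfrak l=\sigma+\phi^\star(u)-u(\phi^\star)'(u)$, using $\partial_\eta u=-u/\eta$. Its modulus is at most $\sigma+B_{\phi^\star,h}+M_{\phi^\star,h}$. Summing the three brackets gives \eqref{eq:phi_clipped_lipschitz}.

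The $\eta$-derivative is the only step needing care. The danger is a factor $1/\eta$ that would blow up near $\underline\eta_h$. The cancellation $\eta\cdot(\phi^\star)'(u)\cdot(-u/\eta)$ removes it, and this is why the statement introduces $M_{\phi^\star,h}$. Continuous differentiability of $\phi^\star$ on the compact interval $I_h$ makes all three suprema finite.

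The consequences follow from these two bounds. Suppose the dual outputs $g=(g_\eta,g_\nu)$ are clipped into the stated intervals and $V=V_{f,h+1}\in[0,H]$. Then (i) gives Assumption~\ref{asmp:bounded_dual} with $B_\phi(\sigma)=B^{\mathrm{clip}}_{\phi,h}$. Evaluating (ii) pointwise gives Assumption~\ref{asmp:lipschitz_dual}, with $d_{\Gcal,h}$ taken as the sup-norm on the outputs $(g_\eta,g_\nu)$. For $d_{\Fcal,h+1}$ I would use the sup-norm, since Lemma~\ref{lem:minimax_lipschitz} gives $|V_{f,h+1}-V_{\tilde f,h+1}|\le\|f_{h+1}-\tilde f_{h+1}\|_\infty$. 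This yields $L_{\ell,h}=\max\{L_{\phi^\star,h},\,\sigma+B_{\phi^\star,h}+M_{\phi^\star,h},\,1+L_{\phi^\star,h}\}$.

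Finally, for KL, $\phi^\star(u)=e^u-1$, and for $\chi^2$ the conjugate is a smooth quadratic expression in $u$, up to the usual truncation convention. Both are $C^1$ on compact intervals, so the proposition applies to them.
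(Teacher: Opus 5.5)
Your proposal is correct and follows the paper's own proof essentially step for step. It makes the same observation that $u=(\nu-V)/\eta$ always lies in $I_h$, computes the same three partial derivatives (including the cancellation of the $1/\eta$ factor in $\partial_\eta$), and applies the mean-value theorem coordinatewise; your explicit telescoping through intermediate triples is a more careful rendering of the paper's ``coordinatewise along the segment'' step.

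One bookkeeping remark on the consequence. Your choice $L_{\ell,h}=\max\{L_{\phi^\star,h},\,\sigma+B_{\phi^\star,h}+M_{\phi^\star,h},\,1+L_{\phi^\star,h}\}$ is valid if $d_{\Gcal,h}(g,\tilde g)$ is the sum $\|g_\eta-\tilde g_\eta\|_\infty+\|g_\nu-\tilde g_\nu\|_\infty$. If you instead mean the maximum of these two sup-norms, the coefficient on $d_{\Gcal,h}$ must be the sum $(\sigma+B_{\phi^\star,h}+M_{\phi^\star,h})+(1+L_{\phi^\star,h})$.
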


\begin{proof}
For every admissible
\(
(V,\eta,\nu)
\),
the quantity
\(
z:=(\nu-V)/\eta
\)
satisfies
\(
|z|\le M_h/\underline\eta_h
\),
so
\(
z\in I_h
\).
This implies
\begin{equation*}
\bigl|\mathfrak l_{\phi,h}(V,\eta,\nu)\bigr|
\le
\eta\sigma+|\nu|+\eta|\phi^\star(z)|
\le
\overline\eta_h\sigma+\overline\nu_h+\overline\eta_h B_{\phi^\star,h},
\end{equation*}
which proves \eqref{eq:phi_clipped_bound}.

To prove \eqref{eq:phi_clipped_lipschitz}, define
\(
F(V,\eta,\nu):=\mathfrak l_{\phi,h}(V,\eta,\nu)
\)
on the compact convex set
\(
[0,H]\times[\underline\eta_h,\overline\eta_h]\times[-\overline\nu_h,\overline\nu_h]
\).
By the chain rule, for every point in this set,
\begin{align*}
\partial_V F(V,\eta,\nu)
&=
- (\phi^\star)'\!\left(\frac{\nu-V}{\eta}\right),
\\
\partial_\nu F(V,\eta,\nu)
&=
-1+(\phi^\star)'\!\left(\frac{\nu-V}{\eta}\right),
\\
\partial_\eta F(V,\eta,\nu)
&=
\sigma+\phi^\star\!\left(\frac{\nu-V}{\eta}\right)
-\frac{\nu-V}{\eta}(\phi^\star)'\!\left(\frac{\nu-V}{\eta}\right).
\end{align*}
Hence
\begin{equation*}
|\partial_V F|\le L_{\phi^\star,h},
\qquad
|\partial_\nu F|\le 1+L_{\phi^\star,h},
\qquad
|\partial_\eta F|
\le
\sigma+B_{\phi^\star,h}+M_{\phi^\star,h}
\end{equation*}
uniformly on the whole domain. Applying the mean-value theorem coordinatewise along the segment
joining
\(
(V,\eta,\nu)
\)
and
\(
(\widetilde V,\widetilde\eta,\widetilde\nu)
\)
gives \eqref{eq:phi_clipped_lipschitz}. The final claim follows immediately from the definitions of
Assumptions~\ref{asmp:bounded_dual} and \ref{asmp:lipschitz_dual}. For KL,
\(
\phi^\star(u)=e^u-1
\);
for
\(
\chi^2
\),
the conjugate is continuously differentiable and piecewise quadratic/affine; in both cases the
required suprema over
\(
I_h
\)
are finite.
\end{proof}

\paragraph{KL/$\chi^2$ projected finite-rank kernel consequence.}
Assume the setting of Theorem~\ref{thm:structured_projected_appendix}. Suppose further that each
stagewise ambiguity set is either a KL ball or a
\(
\chi^2
\)
ball, and that the continuation, dual, and projected residual classes are represented by
finite-rank kernel feature maps
\begin{align*}
\Fcal_{h+1}
&\subseteq
\cbr{
x\mapsto \langle \alpha,\varphi_{F,h+1}(x)\rangle :
\alpha\in\RR^{m_{F,h}},\ \|\alpha\|_2\le R_{F,h}
},
\\
\Gcal_h
&\subseteq
\cbr{
x\mapsto
\Bigl(
\langle \vartheta_\eta,\varphi_{\eta,h}(x)\rangle,
\langle \vartheta_\nu,\varphi_{\nu,h}(x)\rangle
\Bigr)
:
\vartheta_\eta\in\RR^{m_{\eta,h}},
\vartheta_\nu\in\RR^{m_{\nu,h}}
},
\\
\Pscr_h^{\mathsf P}
&\subseteq
\cbr{
x\mapsto \langle \theta,\varphi_{W,h}(x)\rangle :
\theta\in\RR^{m_{W,h}},\ \|\theta\|_2\le R_{W,h}^{\mathsf P}
},
\end{align*}
with pointwise clipping
\begin{equation}
\langle \vartheta_\eta,\varphi_{\eta,h}(x)\rangle\in[\underline\eta_h,\overline\eta_h],
\qquad
\langle \vartheta_\nu,\varphi_{\nu,h}(x)\rangle\in[-\overline\nu_h,\overline\nu_h]
\label{eq:phi_kernel_clipping}
\end{equation}
for every admissible
\(
x
\),
and with all remaining assumptions of
Theorem~\ref{thm:structured_projected_appendix} unchanged.
Define
\begin{equation*}
m_{\phi,\ker}
:=
\max_{h\in[H]}
\cbr{
m_{F,h},
m_{\eta,h}+m_{\nu,h},
m_{W,h}
}.
\end{equation*}
Then the regret bound \eqref{eq:structured_projected_bound} holds with
\(
d_{\mathrm{proj}}
\)
replaced by
\(
m_{\phi,\ker}
\),
and the explicit projected-bias and projected-operator terms
\(
\epsilon_{\mathrm{proj}}
\)
and
\(
\epsilon_{\mathrm{op,proj}}
\)
unchanged.
The clipping condition \eqref{eq:phi_kernel_clipping} and Proposition~\ref{prop:phi_clipped_dual}
show that the KL /
\(
\chi^2
\)
dual losses satisfy the boundedness and pointwise Lipschitz requirements used by the projected
general theorem. The stacked dual feature map
\begin{equation*}
\Gamma_h(x)
:=
\begin{bmatrix}
\varphi_{\eta,h}(x)^\top & 0 \\
0 & \varphi_{\nu,h}(x)^\top
\end{bmatrix}
\end{equation*}
has feature dimension
\(
m_{\eta,h}+m_{\nu,h}
\),
so Theorem~\ref{thm:structured_projected_appendix} applies with
\(
d_{G,h}=m_{\eta,h}+m_{\nu,h}
\),
\(
d_{F,h}=m_{F,h}
\),
and
\(
d_{W,h}=m_{W,h}
\).
Substituting these identifications into \eqref{eq:structured_projected_bound} proves the claim.

\section{Future Directions}
\label{sec:appendix_future}

The appendix establishes a structured theory package with internally verified statistical budgets
under the stated assumptions, including tabular, epochwise / cross-fit linear, modified pure-online
self-normalized linear / finite-rank kernel, and projected structured results. The remaining
directions are therefore comparatively localized. The most natural ones are:
\begin{enumerate}[leftmargin=*]
\item extending the current structured verifications beyond TV-linear-mixture and finite-dimensional
projected settings, especially to richer uncertainty geometries and broader function classes;
\item closing the original one-update-per-episode backward-validation broad-function route, for which
Proposition~\ref{prop:online_bv_gap} isolates the present obstacle;
\item weakening the remaining coverage / excitation conditions and sharpening the current linear and
kernel rates.
\end{enumerate}

\paragraph{Learned simulators and error propagation.}
The operator-error terms also suggest studying robust planning with learned transition simulators.
Such an extension would need to quantify both numerical sampling error and the effects of training
on model-generated transitions. Relevant analyses in generative modeling include the local
truncation error and stability analysis of BELM diffusion samplers \citep{wang2024belm}, and the
study of recursive-training collapse and mitigation through real-data augmentation in Reflow
\citep{zhu2026reflow}. Applying these insights to transition modeling would require new bounds
connecting simulator errors to the robust Bellman operator while accounting for strategic coverage.

\section{Broader Impacts}
\label{sec:appendix_broader}

This work is foundational: it studies robust zero-sum Markov games as a formal model for learning
under adversarial or distributional deployment shift. A potential positive impact is improved
robustness calibration in safety-critical competitive settings, where nominally strong policies can
fail under modest misspecification. A potential negative impact is that the same tools could be
used to design stronger adaptive strategies in adversarial domains. We therefore view the present
results primarily as analytical guidance for robust evaluation and controlled experimentation rather
than as a stand-alone argument for unrestricted deployment.

\section{Additional Experimental Details}
\label{sec:appendix_experiments}

This appendix gives the experimental details for the theorem-aligned benchmark suite reported in
the main text. The current submission focuses on this suite because it most directly matches the
paper's theoretical hierarchy and permits exact or near-exact deployment evaluation.

\subsection{Benchmarks}

\paragraph{Linear-mixture robust Markov game.}
The main large-state benchmark is a zero-sum robust Markov game with linear-mixture nominal
dynamics, linear rewards, and moderate action sets. We report results for feature dimension
\(
d\in\{8,16,32\}
\).
The \(d=8\) setting is used for the main comparison and for a dedicated
\(\beta\times\tau\)
resweep, while the \(d\in\{16,32\}\) settings are used to test whether the same qualitative
tradeoff persists as the approximation problem becomes harder.

\paragraph{Exact tabular games.}
The tabular suite contains two small stochastic zero-sum games,
\texttt{tabular\_attack\_defense} and \texttt{tabular\_queue\_guard}. These environments are small
enough that the relevant robust backups can be computed exactly, so they serve as controlled
validation benchmarks for the qualitative behavior of the practical method.

\paragraph{Support-shift benchmarks.}
We report two support-shift benchmarks. The original \texttt{support\_shift} benchmark acts mainly
as a calibration test, since its shift does not strongly separate the learned policies. The
redesigned \texttt{support\_shift} benchmark perturbs the aggressive deployment branch and is used
to test whether ODRB changes the deployed strategy in a robustness-favorable direction.

\subsection{Methods}

\paragraph{Compared methods.}
The primary comparisons are the full ODRB practical variant, the matched non-robust baseline
\textsc{Sigma0}, and a no-exploiter control. In the main text and tables, \textsc{Sigma0}
corresponds to the same practical implementation with the robust term disabled, while the
no-exploiter control removes the exploiter-assisted data-collection component.

\paragraph{Model selection in the linear suite.}
For \texttt{linear\_mixture}, we distinguish between proxy-tuned and deployment-selected ODRB
operating points. The proxy-tuned point is selected by the backup-based worst-case proxy, while the
deployment-selected point is an \emph{oracle diagnostic} selected after evaluating exact deployment
worst-case return within the low-gap regime. The latter is not a training-time selection rule and
is reported only to expose the proxy--deployment mismatch. To study this distinction, we run an
additional
\(d=8\)
\(\beta\times\tau\)
resweep over
\(
\beta\in\{0.20,0.30,0.40,0.50\},
\qquad
\tau\in\{0,0.01,0.02\}.
\)

\paragraph{Support-shift ablations.}
For the redesigned \texttt{support\_shift} benchmark, we sweep
\(
\beta\in\{0.10,0.20,0.30\}
\)
and record the induced stage-\(0\) action masses. This allows us to test not only whether ODRB
improves deployment robustness, but also whether it changes the learned strategy from
policies concentrated on the aggressive action toward a more conservative deployment policy.

\subsection{Metrics}

\paragraph{Primary metrics.}
The main reported metrics are nominal return, backup-based proxy worst-case return, exact
deployment worst-case return, and approximate Nash gap. The proxy metric matches the internal
robust Bellman evaluation used by the practical method, whereas the deployment metric evaluates the
learned policy directly in the perturbed environment family.

\paragraph{Auxiliary diagnostics.}
In the linear suite we additionally report occupancy entropy as a strategic-coverage diagnostic. In
the redesigned support-shift benchmark we report stage-\(0\) safe versus aggressive action mass in
order to diagnose whether robustness gains arise from a meaningful policy change rather than only
from a change in value estimation.

\subsection{Statistical Protocol}

\paragraph{Seeds and aggregation.}
Unless otherwise noted, the main \(d=8\) linear comparison and the \(d=8\)
\(\beta\times\tau\)
resweep use \(10\) seeds. The \(d\in\{16,32\}\) linear sweep and redesigned support-shift
ablations use \(5\) seeds. All tables report means with \(95\%\) confidence intervals when the
metric is aggregated across seeds. The primary \(d=8\) comparison is paired by environment seed.
We report the mean paired deployment-return difference, its paired-\(t\) 95\% confidence interval,
and a two-sided exact sign-flip test; the two ODRB-versus-\textsc{Sigma0} tests are corrected by
Holm's procedure. These paired tests were added after reviewer feedback and should be read as
small-sample evidence rather than a broad performance claim.

\paragraph{Scope of reported results.}
The main text emphasizes the strongest and most decision-relevant findings. The appendix records the
full theorem-aligned result tables, including the dimension sweep, the deployment-driven linear
resweep, the tabular sanity suite, and the redesigned support-shift ablation.

\subsection{Compute Resources}
\label{sec:appendix_compute_resources}

\paragraph{Hardware and implementation.}
All reported experiments are lightweight synthetic experiments. They were run on an internal lab
workstation/server with a multi-core CPU and four NVIDIA RTX 2080 Ti GPUs with \(11\)GB GPU memory
each. The implementation is NumPy/SciPy based and CPU-bound; the GPUs were available on the
machine but were not used by the reported runs. Each job is a single Python process. The reported
raw seed-level JSON files occupy about \(7.9\)MB, and the full collection of exploratory and
reported seed-level outputs occupies about \(11.1\)MB.

\paragraph{Per-run and total compute.}
Table~\ref{tab:appendix_compute_resources} summarizes the compute needed to reproduce the reported
experimental tables. The full reproduction manifest contains \(389\) single-seed jobs. Individual
tabular and support-shift runs typically finish in under one minute, while the \(5000\)-episode
linear-mixture runs typically finish in under two minutes on the CPU workstation described above.
Running the full manifest sequentially therefore requires only a few CPU-hours; in our server runs,
the main reported queues completed within a few hours of wall-clock time.

\begin{table}[t]
\centering
\small
\caption{Compute resources for the reported experiment groups. Runtime ranges are approximate
single-job wall-clock times on the internal CPU workstation used for the paper.}
\label{tab:appendix_compute_resources}
\begin{tabular}{lrrrr}
\toprule
Experiment group & Jobs & Episodes/run & Seeds & Time/run \\
\midrule
Tabular sanity suite & 54 & 1500 & 3 & \(<1\) min \\
Original support-shift calibration & 30 & 2000 & 3 & \(<1\) min \\
Linear-mixture dimension sweep & 45 & 5000 & 5 & \(<2\) min \\
Linear-mixture \(d=8\) seed extension & 15 & 5000 & 5 & \(<2\) min \\
Redesigned support-shift sweep & 125 & 2000 & 5 & \(<1\) min \\
Deployment-driven \(\beta\times\tau\) resweep & 120 & 5000 & 10 & \(<2\) min \\
\midrule
Total reported reproduction manifest & 389 & -- & -- & few CPU-hours \\
\bottomrule
\end{tabular}
\end{table}

\paragraph{Additional exploratory compute.}
The full research project required more compute than the final reported experiment manifest. In
addition to the \(389\) seed-level runs used to regenerate the reported tables, we ran preliminary
debugging, calibration, and failed-setting diagnostics while correcting the zero-sum backup and
tuning the support-shift benchmarks. These exploratory runs were of the same small synthetic scale
as the reported runs; the complete local archive contains \(576\) seed-level JSON outputs. The
supplementary material includes the exact reproduction manifest and the raw JSON outputs used for
the reported tables.

\subsection{Additional Results for Theorem-Aligned Benchmarks}

We report final results for the theorem-aligned benchmark suite described in the main text. Unless
otherwise noted, the main \(d=8\) linear-mixture comparison and the \(d=8\)
\(\beta\times\tau\)
resweep use \(10\) seeds, while the \(d\in\{16,32\}\) sweep and the redesigned support-shift
ablation use \(5\) seeds.

\paragraph{Linear-mixture main comparison.}
\begin{table}[t]
\centering
\small
\caption{Linear-mixture main comparison at \(d=8\) with \(10\) seeds. ODRB rows are labeled by
\((\beta,\tau)\), where \(\beta\) is the robustness weight and \(\tau\) is the validation
threshold.}
\label{tab:appendix_linear_main}
\begin{tabular}{lrrrr}
\toprule
Method & Seeds & Proxy & Deploy & Proxy gap \\
\midrule
\textsc{Sigma0} & 10 & \(0.230 \pm 0.088\) & \(0.230 \pm 0.088\) & \(0.0080 \pm 0.0047\) \\
\textsc{ODRB} \((\beta=0.30,\tau=0.02)\) & 10 & \(0.204 \pm 0.069\) & \(0.239 \pm 0.089\) & \(0.0072 \pm 0.0074\) \\
\textsc{ODRB} \((\beta=0.20,\tau=0.02)\) & 10 & \(0.162 \pm 0.057\) & \(0.236 \pm 0.089\) & \(0.0069 \pm 0.0066\) \\
\textsc{NoEx} \((\beta=0.30)\) & 10 & \(0.254 \pm 0.102\) & \(0.288 \pm 0.115\) & \(0.0644 \pm 0.0687\) \\
\bottomrule
\end{tabular}
\normalsize
\end{table}
At \(d=8\), the ordering depends on the evaluation metric. Under the backup-based proxy,
\textsc{Sigma0} is the strongest low-gap baseline. Under exact deployment worst-case evaluation,
the paired comparison of \textsc{ODRB}\((0.30,0.02)\) against \textsc{Sigma0} gives mean
difference \(0.0094\), paired-\(t\) 95\% confidence interval \([0.0007,0.0182]\), and exact
sign-flip Holm-adjusted \(p=0.0469\). For \textsc{ODRB}\((0.20,0.02)\), the corresponding values
are \(0.0065\), \([-0.0017,0.0148]\), and \(p=0.0977\); this second comparison is therefore
inconclusive. Both ODRB settings preserve a similarly small approximate gap.
The no-exploiter control attains the highest return, but only by moving to a substantially
higher-gap regime.

\begin{table}[t]
\centering
\small
\caption{Paired seed-level deployment-return differences, ODRB minus \textsc{Sigma0}, for the
main \(d=8\) comparison.}
\label{tab:appendix_paired_differences}
\begin{tabular}{crr}
\toprule
Seed & ODRB \((0.30,0.02)\) & ODRB \((0.20,0.02)\) \\
\midrule
0 & 0.002070 & 0.002015 \\
1 & 0.024937 & 0.008538 \\
2 & 0.031841 & 0.032091 \\
3 & 0.020964 & 0.016377 \\
4 & -0.001782 & -0.001730 \\
5 & -0.000085 & -0.001470 \\
6 & 0.005018 & -0.005283 \\
7 & 0.010971 & 0.014524 \\
8 & 0.000000 & 0.000000 \\
9 & 0.000299 & 0.000102 \\
\bottomrule
\end{tabular}
\end{table}

\paragraph{Linear-mixture dimension sweep.}
\begin{table}[t]
\centering
\small
\caption{Linear-mixture dimension sweep across \(d\in\{8,16,32\}\).}
\label{tab:appendix_linear_dims}
\begin{tabular}{llrrr}
\toprule
Dim & Method & Proxy & Deploy & Gap \\
\midrule
8 & \textsc{Sigma0} & \(0.230 \pm 0.088\) & \(0.230 \pm 0.088\) & \(0.0080 \pm 0.0047\) \\
8 & \textsc{ODRB} \((\beta=0.30,\tau=0.02)\) & \(0.204 \pm 0.069\) & \(0.239 \pm 0.089\) & \(0.0072 \pm 0.0074\) \\
8 & \textsc{NoEx} \((\beta=0.30)\) & \(0.254 \pm 0.102\) & \(0.288 \pm 0.115\) & \(0.0644 \pm 0.0687\) \\
16 & \textsc{Sigma0} & \(0.0556 \pm 0.0313\) & \(0.0556 \pm 0.0313\) & \(0.0166 \pm 0.0115\) \\
16 & \textsc{ODRB} \((\beta=0.30,\tau=0.02)\) & \(0.0532 \pm 0.0336\) & \(0.0532 \pm 0.0336\) & \(0.0093 \pm 0.0074\) \\
16 & \textsc{NoEx} \((\beta=0.30)\) & \(0.105 \pm 0.074\) & \(0.105 \pm 0.074\) & \(0.0604 \pm 0.0638\) \\
32 & \textsc{Sigma0} & \(0.0686 \pm 0.0426\) & \(0.0686 \pm 0.0426\) & \(0.0102 \pm 0.0111\) \\
32 & \textsc{ODRB} \((\beta=0.30,\tau=0.02)\) & \(0.0757 \pm 0.0488\) & \(0.0757 \pm 0.0488\) & \(0.0088 \pm 0.0088\) \\
32 & \textsc{NoEx} \((\beta=0.30)\) & \(0.0720 \pm 0.0558\) & \(0.0720 \pm 0.0558\) & \(0.0160 \pm 0.0128\) \\
\bottomrule
\end{tabular}
\normalsize
\end{table}
Across dimensions, ODRB remains competitive but not uniformly dominant. The mean return is close
to \textsc{Sigma0} at \(d=16\), and is higher in the five-seed \(d=32\) sample; both settings have
a small reported gap. These small dimension sweeps are descriptive rather than confirmatory. The
sharpest proxy--deployment mismatch appears at \(d=8\), where the choice of operating point is most
consequential.

\paragraph{Deployment-driven \(\beta\times\tau\) resweep at \(d=8\).}
\begin{table}[t]
\centering
\small
\caption{Deployment-driven \(d=8\) ODRB resweep over \((\beta,\tau)\).}
\label{tab:appendix_linear_resweep}
\begin{tabular}{lrrrr}
\toprule
Setting & Proxy & Deploy & Gap & Entropy \\
\midrule
\textsc{ODRB} \((\beta=0.20,\tau=0.01)\) & \(0.168 \pm 0.063\) & \(0.245 \pm 0.092\) & \(0.0120 \pm 0.0164\) & \(4.585 \pm 0.229\) \\
\textsc{ODRB} \((\beta=0.20,\tau=0.02)\) & \(0.162 \pm 0.057\) & \(0.236 \pm 0.089\) & \(0.0068 \pm 0.0065\) & \(5.067 \pm 0.197\) \\
\textsc{ODRB} \((\beta=0.30,\tau=0.02)\) & \(0.204 \pm 0.069\) & \(0.239 \pm 0.089\) & \(0.0073 \pm 0.0074\) & \(5.076 \pm 0.192\) \\
\textsc{ODRB} \((\beta=0.30,\tau=0)\) & \(0.254 \pm 0.102\) & \(0.288 \pm 0.115\) & \(0.0624 \pm 0.0669\) & \(3.781 \pm 0.209\) \\
\textsc{ODRB} \((\beta=0.50,\tau=0.02)\) & \(0.232 \pm 0.091\) & \(0.236 \pm 0.094\) & \(0.0126 \pm 0.0148\) & \(5.022 \pm 0.191\) \\
\bottomrule
\end{tabular}
\normalsize
\end{table}
The resweep exposes a clean tradeoff. Setting \(\tau=0\) raises deployment return but pushes ODRB
into the same high-gap regime as the no-exploiter control and sharply lowers occupancy entropy.
Positive \(\tau\) restores the low-gap regime and substantially increases entropy, suggesting
broader strategic coverage. Within this calibrated region, \((0.20,0.02)\) is an oracle-selected
low-gap diagnostic point, whereas \((0.30,0.02)\) has the larger sample mean deployment return at
the cost of a slightly larger gap. Exact deployment return was not used as a training-time
selection criterion.

\paragraph{Tabular sanity suite.}
The tabular benchmarks provide controlled diagnostics in a setting where worst-case backups can be
computed exactly. On \texttt{tabular\_attack\_defense}, the tested ODRB setting reaches worst-case
return \(0.660\) at \(\beta=0.20\), below \(0.822\) for \textsc{Sigma0}; hence this benchmark does
not show an ODRB advantage. On \texttt{tabular\_queue\_guard}, \textsc{Sigma0} attains nominal return
\(4.077\) but falls to worst-case return \(2.008\), whereas \textsc{ODRB} at \(\beta=0.20\)
changes only from \(1.154\) to \(1.080\). These exact small-game results are consistent with the
view that robustness calibration can trade away nominal value, but do not establish uniform
improvement over \textsc{Sigma0}.

\paragraph{Support-shift benchmarks.}
The original \texttt{support\_shift} benchmark is best interpreted as a calibration test rather
than a separator: aggregated over all overlap levels, ODRB attains worst-case return
\(0.4561 \pm 0.0038\) and \textsc{Sigma0} attains \(0.4562 \pm 0.0038\). By contrast, the
redesigned deployment-oriented benchmark is genuinely discriminative:
\begin{table}[t]
\centering
\small
\caption{Redesigned support-shift benchmark: deployment worst-case return across overlap levels.}
\label{tab:appendix_support_shift}
\begin{tabular}{lrrrrr}
\toprule
Method & \(m=0.05\) & \(m=0.08\) & \(m=0.10\) & \(m=0.12\) & \(m=0.15\) \\
\midrule
\textsc{Sigma0} & 0.5365 & 0.5618 & 0.4983 & 0.5049 & 0.4600 \\
\textsc{ODRB} \(\beta=0.10\) & 0.5952 & 0.5940 & 0.5742 & 0.5924 & 0.5912 \\
\textsc{ODRB} \(\beta=0.20\) & 0.5835 & 0.5940 & 0.5742 & 0.5924 & 0.5912 \\
\textsc{ODRB} \(\beta=0.30\) & 0.5600 & 0.5779 & 0.5552 & 0.5486 & 0.5912 \\
\textsc{NoEx} \(\beta=0.20\) & 0.5739 & 0.6000 & 0.6000 & 0.6000 & 0.6000 \\
\bottomrule
\end{tabular}
\normalsize
\end{table}
ODRB with \(\beta\in\{0.10,0.20\}\) consistently exceeds \textsc{Sigma0} on exact deployment
worst-case return throughout the tested overlap band. For example, at \(m=0.10\), ODRB attains
\(0.5742\) while \textsc{Sigma0} attains \(0.4983\).

\paragraph{Policy diagnostics.}
The redesigned support-shift benchmark also clarifies why the methods separate. At
\(m=0.05\), \(0.10\), and \(0.15\), \textsc{Sigma0} places most of its stage-\(0\) mass on the
aggressive action in all
\(5/5\)
seeds. By contrast, \textsc{ODRB} with \(\beta=0.10\) places most of its stage-\(0\) mass on the
safe action in \(5/5\) seeds at
\(m=0.05\), \(0.08\), \(0.12\), and \(0.15\), and in \(4/5\) seeds at \(m=0.10\). The
\(\beta=0.20\) setting shows the same safe-biased pattern with slightly more mixing. This
diagnostic supports the intended mechanism story: the redesigned benchmark does not merely reward a
different value estimator, but actually induces a different deployed strategy.

\paragraph{Interpretation.}
Taken together, the theorem-aligned suite supports a qualified but positive empirical conclusion.
The strongest evidence comes from the redesigned deployment-oriented support-shift benchmark, where
ODRB consistently improves deployment robustness and changes the learned policy in the intended
safe direction. The linear-mixture results are more nuanced: ODRB is competitive across dimensions
and admits deployment-favorable low-gap operating points, but it does not uniformly dominate the
matched \textsc{Sigma0} baseline under the backup proxy. Overall, the experiments illustrate
robustness calibration, strategic coverage, and the return--gap trade-off; they do not establish
uniform improvement over non-robust self-play.

\end{document}